\definecolor{linkColor}{HTML}{E74C3C}
\definecolor{pearcomp}{HTML}{B97E29}
\definecolor{citeColor}{HTML}{2980B9}
\definecolor{urlColor}{HTML}{1D2DEC}
\definecolor{conjColor}{HTML}{9ab569}
\newtheoremstyle{break}
  {\topsep}{\topsep}%
  {\itshape}{}%
  {\bfseries}{}%
  {\newline}{}%
\newtheorem{question}{Question}
\tikzset{
  invisible/.style={opacity=0},
  visible on/.style={alt={#1{}{invisible}}},
  alt/.code args={<#1>#2#3}{%
    \alt<#1>{\pgfkeysalso{#2}}{\pgfkeysalso{#3}}
  },
}
\newtheorem{definition}{\textbf{Definition}}
\newtheorem{lemma}{\textbf{Lemma}}
\newtheorem{theorem}{\textbf{Theorem}}
\newtheorem*{insight*}{\textbf{Observation}}
\newtheorem{prop}{\textbf{Proposition}}
\newtheorem*{lemmai*}{\textbf{Lemma (informal)}}
\newtheorem{remark}{\textbf{Remark}}
\newtheorem{conjecture}{Conjecture}
\newcommand{\cX}{\mathcal{X}}
\newcommand{\cL}{\mathcal{L}}
\newcommand{\cS}{\mathcal{S}}
\newcommand{\cT}{\mathcal{T}}
\newcommand{\cC}{\mathcal{C}}
\newcommand{\cR}{\mathcal{R}}
\newcommand{\cP}{\mathcal{P}}
\newcommand{\cM}{\mathsf{MDP}}
\newcommand{\cA}{\mathcal{A}}
\newcommand{\cB}{\mathcal{B}}
\newcommand{\cE}{\mathcal{E}}
\newcommand{\cD}{\mathcal{D}}
\newcommand{\cO}{\mathcal{O}}
\renewcommand{\cite}[1]{\citep{#1}}
\definecolor{cm}{RGB}{0,0,200}
\definecolor{purple}{RGB}{200,0,200}
\newcommand{\vast}{\bBigg@{2.5}}
\newcommand{\Vast}{\bBigg@{5}}
\DeclareMathOperator{\E}{\mathbb{E}}
\DeclareMathOperator*{\R}{\mathbb{R}}
\DeclareMathOperator{\prob}{\mathbb{P}}
\DeclareMathOperator*{\argmax}{arg\,max}
\DeclareMathOperator*{\ind}{\mathbbm{1}}
\DeclareMathOperator{\Binomial}{Binomial}
\newcommand\blfootnote[1]{%
  \begingroup
  \renewcommand\thefootnote{}\footnote{#1}%
  \addtocounter{footnote}{-1}%
  \endgroup
}
\title{Bridging Offline Reinforcement Learning and Imitation Learning: \\A Tale of Pessimism\footnote{Part of the paper has been published at Neurips 2021.}}
\author{Paria Rashidinejad$^\dagger$ \quad
Banghua Zhu$^\dagger$ \quad
Cong Ma$^{\diamond}$ \quad
Jiantao Jiao$^{\dagger, \ddagger}$ \quad
Stuart Russell$^{\dagger}$ \blfootnote{Emails: \texttt{\{paria.rashidinejad,banghua,jiantao,russell\}@berkeley.edu, congm@uchicago.edu}}\\ { }\\
$^\dagger$ Department of Electrical Engineering and Computer Sciences, UC Berkeley\\
$^\ddagger$ Department of Statistics, UC Berkeley\\ { } \\
$\diamond$ Department of Statistics, University of Chicago
}
\date{\today}
\begin{document}

\maketitle

\begin{abstract}
Offline (or batch) reinforcement learning (RL) algorithms seek to learn an optimal policy from a fixed dataset without active data collection. Based on the {composition} of the offline dataset, two main categories of methods are used: imitation learning which is suitable for expert datasets and vanilla offline RL which often requires uniform coverage datasets. From a practical standpoint, datasets often deviate from these two extremes and the exact data composition is usually unknown a priori. To bridge this gap, we present a new offline RL framework that smoothly interpolates between the two extremes of data composition, hence unifying imitation learning and vanilla offline RL. The new framework is centered around a weak version of the concentrability coefficient that measures the deviation from the behavior policy to the expert policy alone. 

Under this new framework, we further investigate the question on algorithm design: can one develop an algorithm that achieves a minimax optimal rate and also {adapts to unknown data composition}? To address this question, we consider a lower confidence bound (LCB) algorithm developed based on pessimism in the face of uncertainty in offline RL. We study finite-sample properties of LCB as well as information-theoretic limits in three settings: multi-armed bandits, contextual bandits, and Markov decision processes (MDPs). Our analysis reveals surprising facts about optimality rates. In particular, in both contextual bandits and RL, LCB achieves a faster rate of $1/N$ for nearly-expert datasets compared to the usual rate of $1/\sqrt{N}$ in offline RL, where $N$ is the number of samples in the batch dataset. In the case of contextual bandits with at least two contexts, we prove that LCB is adaptively optimal for the entire data composition range, achieving a smooth transition from imitation learning to offline RL. We further show that LCB is {almost} adaptively optimal in MDPs.
\end{abstract}

{
  \hypersetup{linkcolor=black}
  \tableofcontents
}

\section{Introduction}

Reinforcement learning (RL) algorithms have recently achieved tremendous empirical success including beating Go champions \cite{silver2016mastering,silver2017mastering} and surpassing professionals in Atari games \cite{mnih2013playing,mnih2015human}, to name a few. Most success stories, however, are in the realm of online RL in which active data collection is necessary. This online paradigm falls short of leveraging previously-collected datasets and dealing with scenarios where online exploration is not possible~\cite{fu2020d4rl}. To tackle these issues, offline (or batch) reinforcement learning \cite{lange2012batch,levine2020offline} arises in which the agent aims at achieving competence by exploiting a batch dataset without access to online exploration. This paradigm is useful in a diverse array of application domains such as healthcare \cite{wang2018supervised,gottesman2019guidelines, nie2020learning}, autonomous driving \cite{yurtsever2020survey,bojarski2016end,pan2017agile}, and recommendation systems \cite{strehl2010learning,garcin2014offline,thomas2017predictive}.

The key component of offline RL is a pre-collected dataset from an unknown stochastic environment. Broadly speaking, there exist two types of \emph{data composition} for which offline RL algorithms have shown promising empirical and theoretical success; see Figure~\ref{fig:spectrum_offline_RL} for an illustration.

\begin{itemize}[leftmargin=*]
  \item \textbf{Expert data.} One end of the spectrum includes datasets collected by following an expert policy. For such datasets, imitation learning algorithms (e.g., behavior cloning~\cite{ross2010efficient}) are shown to be effective in achieving a small sub-optimality competing with the expert policy. In particular, it is recently shown in the work~\citet{rajaraman2020toward} that the behavior cloning algorithm achieves the minimal sub-optimality $1/N$ in episodic Markov decision processes, where $N$ is the total number of samples in the expert dataset. 
  \item \textbf{Uniform coverage data.} On the other end of the spectrum lies the datasets with uniform coverage. More specifically, such datasets are collected with an aim to cover \textit{all} states and actions, even the states never visited or actions never taken by satisfactory policies. Most vanilla offline RL algorithms are only suited in this region and are shown to diverge for \textit{narrower} datasets \cite{fu2020d4rl, koh2020wilds}, such as those collected via human demonstrations or hand-crafted policies, both empirically \cite{fujimoto2019off,kumar2019stabilizing} and theoretically \cite{agarwal2020optimality, du2020good}. In this regime, a widely-adopted requirement is the \emph{uniformly bounded concentrability coefficient} which assumes that the ratio of the state-action occupancy density induced by \textit{any policy} and the data distribution is bounded uniformly over all states and actions \cite{munos2007performance,farahmand2010error,chen2019information,xie2020batch}. Another common assumption is uniformly lower bounded data distribution on all states and actions~\cite{sidford2018near, agarwal2020model}, which ensures all states and actions are visited with sufficient probabilities. Algorithms developed for this regime 
  are demonstrated to achieve a $1/\sqrt{N}$ sub-optimality competing with the optimal policy; see for example the papers~\citet{yin2020near,hao2020sparse,uehara2021finite}.
\end{itemize}

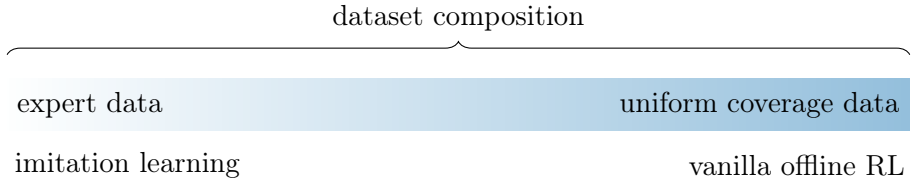
\begin{figure}[t]
    \centering
    \scalebox{1}{
    \begin{tikzpicture}
    \node[rectangle, draw=none, fill, left color=white, right color=citeColor!50, minimum width=12cm, minimum height=0.7cm] at (0,0) { };
    \draw [decorate,decoration={brace,amplitude=5pt,raise=4ex}]
    (-6,0) -- (6,0) node[midway,yshift=3em]{dataset composition};
    \node[text = black] (n1) at (-4.9,0) {expert data};
    \node[text = black] (n1) at (-4.4,-0.8) {imitation learning};
    \node[text = black] (n2) at (4,0) {uniform coverage data};
    \node[text = black] (n1) at (4.5,-0.8) {vanilla offline RL};
    \end{tikzpicture}}
    \caption{Dataset composition range for offline RL problems. On one end, we have expert data for which imitation learning algorithms are well-suited. On the other end, we have uniform exploratory data for which vanilla offline RL algorithms can be used.}
    \label{fig:spectrum_offline_RL}
\end{figure}

\subsection{Motivating questions}
Clearly, both of these two extremes impose strong assumptions on the dataset: at one extreme, we hope for a solely expert-driven dataset; at the other extreme, we require the dataset to cover every, even sub-optimal, actions. In practice, there are numerous scenarios where the dataset deviates from these two extremes, which has motivated the development of new offline RL benchmark datasets with different data compositions~\cite{fu2020d4rl,koh2020wilds}. With this need in mind, the first and foremost question is regarding offline RL formulations: 

\begin{question}[{\textbf{Formulation}}]
  {Can we propose an offline RL framework that accommodates the entire data composition range?}
\end{question}

We answer this question affirmatively by proposing a new formulation for offline RL that smoothly interpolates between two regimes: expert data and data with uniform coverage. More specifically, we characterize the data composition in terms of the ratio between the state-action occupancy density of an optimal policy\footnote{In fact, our developments can accommodate arbitrary competing policies, however, we restrict ourselves to the optimal policy for ease of presentation.} and that of the behavior distribution which we denote by $C^\star$; see Definition~\ref{def:concentrability} for a precise formulation.
In words, $C^\star$ can be viewed as a measure of the deviation between the behavior distribution and the distribution induced by the optimal policy. The case with $C^\star = 1$ recovers the setting with expert data since, by the definition of $C^\star$, the behavior policy is identical to the optimal policy. In contrast, when $C^\star > 1$, the dataset is no longer purely expert-driven: it could contain ``spurious'' samples---states and actions that are not visited by the optimal policy. As a further example, when the dataset has uniform coverage, say the behavior probability is lower bounded by $\mu_{\min}$ over all states and actions, it is straightforward to check that the new concentrability coefficient is also upper bounded by $\mu_{\min}^{-1}$.

Assuming a finite $C^\star$ is the weakest concentrability requirement \cite{scherrer2014approximate,geist2017bellman,xie2020batch} that is currently enjoyed only by some online algorithms such as CPI \cite{kakade2002approximately}. $C^\star$ imposes a much weaker assumption in contrast to other concentrability requirements which involve taking a maximum over all policies; see \citet{scherrer2014approximate} for a hierarchy of different concentrability definitions. 
We would like to immediately point out that existing works on offline RL either do not specify the dependency of sub-optimality on data coverage \cite{jin2020pessimism, yu2020mopo}, or  do not have a batch data coverage assumption that accommodates the entire data spectrum including the expert datasets \cite{yin2021near,kidambi2020morel}.

\medskip 
With this formulation in mind, a natural next step is designing offline RL algorithms that handle various data compositions, i.e., for all $C^\star \geq 1$. Recently, efforts have been made toward reducing the offline dataset requirements based on a shared intuition: the agent should act conservatively and avoid states and actions less covered in the offline dataset. Based on this intuition, a variety of offline RL algorithms are proposed that achieve promising empirical results. Examples include model-based methods that learn pessimistic MDPs \cite{yu2020mopo,kidambi2020morel, yu2021combo}, model-free methods that reduce the Q-functions on unseen state-action pairs \cite{liu2020provably, kumar2020conservative,agarwal2020optimistic}, and policy-based methods that minimize the divergence between the learned policy and the behavior policy \cite{kumar2019stabilizing,nachum2020reinforcement, fujimoto2019off, nadjahi2019safe,laroche2019safe,peng2019advantage,siegel2020keep,ghasemipour2020emaq}.

However, it is observed empirically that existing policy-based methods perform better when the dataset is nearly expert-driven (toward the left of data spectrum in Figure~\ref{fig:spectrum_offline_RL}) whereas existing model-based methods perform better when the dataset is randomly-collected (toward the right of data spectrum in Figure~\ref{fig:spectrum_offline_RL}) \cite{yu2020mopo, buckman2020importance}. It remains unclear whether a single algorithm exists that performs well regardless of data composition---an important challenge from a practical perspective~\cite{neurips_tutorial, fu2020d4rl, koh2020wilds}. More importantly, the knowledge of the dataset composition may not be available a priori to assist in selecting the right algorithm.
This motivates the second question on the algorithm design:

\begin{question}[{\textbf{Adaptive algorithm design}}]
    {Can we design algorithms that can achieve minimal sub-optimality when facing different dataset compositions (i.e., different $C^\star$)? Furthermore, can this be achieved in an adaptive manner, i.e., without knowing $C^\star$ beforehand?}
\end{question}

To answer the second question, we analyze a \textit{pessimistic} variant of a value-based method in which we first form a lower confidence bound (LCB) for the value function of a policy using the batch data and then seek to find a policy that maximizes the LCB. A similar algorithm design has appeared in the recent work~\citet{jin2020pessimism}. It turns out that such a simple algorithm---fully agnostic to the data composition---is able to achieve  \emph{almost} optimal performance in multi-armed bandits and Markov decision processes, and optimally solve the offline learning problem in contextual bandits. See the section below for a summary of our theoretical results.

\begin{table}[!b]
\centering
\caption{A summary of our theoretical results with all the log factors ignored.} 
\label{tab:results_summary}
\scalebox{0.9}{
\begin{tabular}{@{} l c c l @{}}
\toprule
\textbf{Multi-armed bandits} &
  $C^\star \in [1,2)$ &
  \multicolumn{2}{c}{$C^\star \in [2,\infty)$} \\ [0.5ex] \hline \hline \\ [-0.5ex]
Algorithm \ref{alg:MAB-LCB} (MAB-LCB) sub-optimality &
  \multirow{2}{*}{$\sqrt{\frac{C^\star}{N}}$} &
  \multicolumn{2}{c}{\multirow{2}{*}{$\sqrt{\frac{C^\star}{N}}$}} \\[0.5ex]
(Theorem \ref{thm:MAB_LCB_upper}) &
   &
  \multicolumn{2}{c}{} \\[0.5ex]
Information-theoretic lower bound &
  \multirow{2}{*}{{$\exp\left(-(2-C^\star)\cdot\log\left(\frac{2}{C^\star-1}\right)\cdot N \right)$}} &
  \multicolumn{2}{c}{\multirow{2}{*}{$\sqrt{\frac{C^\star}{N}}$}} \\[0.5ex]
(Theorem \ref{thm.bandit_lower_bound}) &
   &
  \multicolumn{2}{c}{} \\[0.5ex]
Most played arm &
  \multirow{2}{*}{$\exp\left(-N\cdot\mathsf{KL}\left(\mathrm{Bern}\left(\frac{1}{2}\right) \| \mathrm{Bern}\left(\frac{1}{C^\star}\right)\right)\right)$} &
  \multicolumn{2}{c}{\multirow{2}{*}{N/A}} \\[0.5ex]
(Proposition \ref{thm.imitation_bandit_bound}) &
   &
  \multicolumn{2}{c}{} \\ \toprule
\textbf{Contextual bandits} &
  \multicolumn{3}{c}{$C^\star \in [1, \infty)$} \\[0.5ex] \hline \hline \\ [-0.5ex]
Algorithm \ref{alg:CB-LCB} (CB-LCB) sub-optimality &
  \multicolumn{3}{c}{\multirow{2}{*}{$\sqrt{\frac{S(C^\star-1)}{N}} + \frac{S}{N}$}} \\[0.5ex]
(Theorem \ref{thm:LCB_CB_upper_bound}) &
  \multicolumn{3}{c}{} \\[0.5ex]
Information-theoretic lower bound &
  \multicolumn{3}{c}{\multirow{2}{*}{$\sqrt{\frac{S(C^\star-1)}{N}} + \frac{S}{N}$}} \\[0.5ex]
(Theorem \ref{theorem:lower_bound_offline_CB}) &
  \multicolumn{3}{c}{} \\ \toprule
\textbf{Markov decision processes} &
  $C^\star \in [1, 1+ 1/N )$ &
  \multicolumn{2}{c}{$C^\star \in [ 1+ 1/N, \infty )$} \\[0.5ex] \hline \hline \\ [-1.5ex]
Algorithm \ref{alg:OVI-LCB-DS} (VI-LCB) sub-optimality &
  {\multirow{2}{*}{$ \frac{S}{(1-\gamma)^4N}$}} &
  \multicolumn{2}{l}{\multirow{2}{*}{$\sqrt{\frac{SC^\star}{(1-\gamma)^5 N}} $}} \\[0.5ex]
(Theorem \ref{thm:MDP_upperbound_hoffding}) &
  \multicolumn{1}{l}{} &
  \multicolumn{2}{l}{} \\[0.5ex]
Information-theoretic lower bound &
  \multirow{2}{*}{$\sqrt{\frac{S(C^\star-1)}{(1-\gamma)^3N}} + \frac{S}{(1-\gamma)^2N} $} &
  \multicolumn{2}{l}{\multirow{2}{*}{$ \sqrt{\frac{S(C^\star-1)}{(1-\gamma)^3N}} + \frac{S}{(1-\gamma)^2N}$}} \\[0.5ex]
(Theorem \ref{thm:MDP_lower}) &
   &
  \multicolumn{2}{l}{} \\ \bottomrule
\end{tabular}}
\end{table}

\subsection{Main results}\label{sec:results_summary}
In this subsection, we give a preview of our theoretical results; see Table~\ref{tab:results_summary} for a summary.
Under the new framework defined via $C^\star$, we instantiate the LCB approach to three different decision-making problems with increasing complexity: 
(1) multi-armed bandits, (2) contextual bandits, and (3)  infinite-horizon discounted Markov decision processes. 
We will divide our discussions on the main results accordingly. 
Throughout the discussion, $N$ denotes the number of samples in the batch data, $S$ denotes the number of states, and we ignore the log factors.

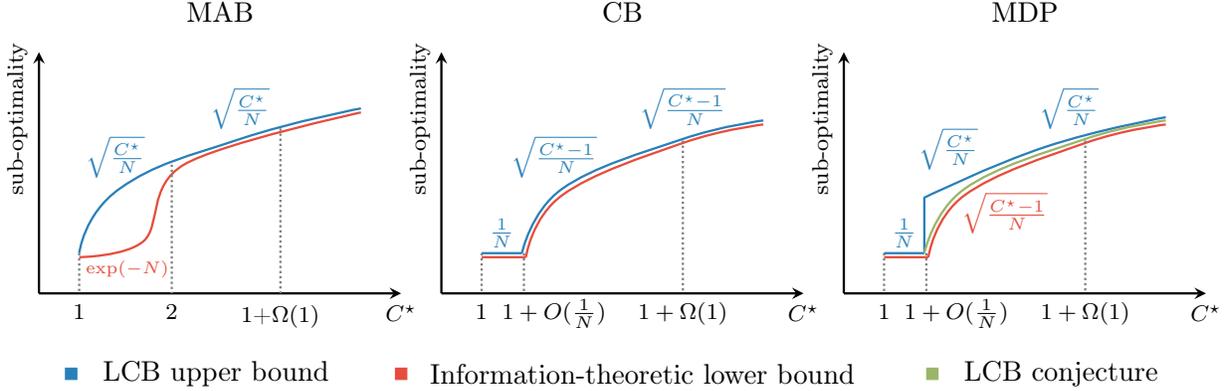
\begin{figure}[t]
    \centering
    \scalebox{1.07}{
    \begin{tikzpicture}[observed/.style={circle, draw=black, fill=black!10, thick, minimum size=10mm},
    hidden/.style={circle, draw=black, thick, minimum size=10mm},
    squarednode/.style={rectangle, draw=red!60, fill=red!5, very thick, minimum size=10mm},
    treenode/.style={rectangle, draw=none, thick, minimum size=10mm},
    rootnode/.style={rectangle, draw=none, thick, minimum size=10mm},
    squarednode/.style={rectangle, draw=none, fill=citeColor!20, very thick, minimum size=7mm},]
    \node at (2.25,3.5) {\small MAB};
    \draw[draw, ->, >=stealth, line width=0.8pt] (0, 0) -- (4.5, 0) node[ below] {\scriptsize $C^\star$};
    \draw[draw, ->, >=stealth, line width=0.8pt] (0, 0) -- (0, 3) node[above, rotate=90, xshift=-5ex] {\scriptsize sub-optimality};
    \draw[thick, citeColor] plot [smooth,tension=0.8] coordinates{(0.5,0.5) (1,1.3) (2.5,1.92) (4,2.3)};
    \draw[thick, linkColor] plot [smooth,tension=0.6] coordinates{(0.5,0.45) (1.3,0.65) (1.8,1.6) (4,2.25)};
    \node[linkColor] at (1.1,0.3) {\tiny $\exp(-N)$};
    \node[citeColor] at (0.95,1.7) {\scriptsize $\sqrt{\frac{C^\star}{N}}$};
    \draw[densely dotted, line width=0.8pt, gray] (0.5,0.5)--(0.5,0) node[black,below] {\scriptsize 1};
    \draw[densely dotted, line width=0.8pt, gray] (3,2.05)--(3,0) node[black,below] {\scriptsize 1+$\Omega(1)$};
    \node[citeColor] at (2.5,2.3) {\scriptsize $\sqrt{\frac{C^\star}{N}}$};
    \draw[densely dotted, line width=0.8pt, gray] (1.65,1.6)--(1.65,0) node[black,below] {\scriptsize 2};
    \node at (7.25,3.5) {\small CB};
    \draw[ ->, >=stealth, line width=0.8pt] (5, 0) -- (9.5, 0) node[below] {\scriptsize $C^\star$};
    \draw[ ->, >=stealth, line width=0.8pt] (5, 0) -- (5, 3) node[above, rotate=90, xshift=-5ex] {\scriptsize sub-optimality};
    \draw[line width=0.8pt,citeColor] (5.5,0.5) -- (6,0.5);
    \draw[line width=0.8pt,linkColor] (5.5,0.45) -- (6.05,0.45);
    \draw[line width=0.8pt,citeColor] plot [smooth,tension=0.7] coordinates{(6,0.5) (6.5,1.3) (8,1.92) (9,2.15)};
    \draw[line width=0.8pt,linkColor] plot [smooth,tension=0.7] coordinates{(6.05,0.45) (6.5,1.22) (8,1.87) (9,2.1)};
    \draw[densely dotted, line width=0.8pt, gray] (5.5,0.5)--(5.5,0) node[black, below] {\scriptsize 1};
    \draw[densely dotted, line width=0.8pt, gray] (6.025,0.5)--(6.025,0);
    \node at(6.4,-0.25) {\scriptsize $1+O(\frac{1}{N})$};
    \node at(8,-0.25) {\scriptsize $1+\Omega(1)$};
    \draw[densely dotted, line width=0.8pt, gray] (8,1.9)--(8,0);
    \node[citeColor] at (8,2.3) {\scriptsize
    $\sqrt{\frac{C^\star-1}{N}}$};
    \node[citeColor] at (6.4,1.7) {\scriptsize
    $\sqrt{\frac{C^\star-1}{N}}$};
    \node[citeColor] at (5.75,0.75) {\scriptsize $\frac{1}{N}$};
    \node at (12.25,3.5) {\small MDP};
    \draw[densely dotted, line width=0.8pt, gray] (10.5,0.5)--(10.5,0) node[black, below] {\scriptsize 1};
    \draw[->, >=stealth, line width=0.8pt] (10, 0) -- (14.5, 0) node[below] {\scriptsize $C^\star$};
    \draw[ ->, >=stealth, line width=0.8pt] (10, 0) -- (10, 3) node[above, rotate=90, xshift=-5ex] {\scriptsize sub-optimality};
    \draw[line width=0.8pt,citeColor] (10.5,0.5) -- (11,0.5);
    \draw[line width=0.8pt,citeColor] (11,0.5)--(11,1.2);
    \draw[thick, citeColor] plot [smooth,tension=0.8] coordinates{(11,1.19) (12.5,1.82) (14,2.19)};
    \draw[line width=0.8pt,linkColor] (10.5,0.45) -- (11.05,0.45);
    \draw[line width=0.8pt,conjColor] plot [smooth,tension=0.7] coordinates{(11,0.5) (11.5,1.3) (13,1.92) (14,2.15)};
    \draw[line width=0.8pt,linkColor] plot [smooth,tension=0.7] coordinates{(11.05,0.45) (11.5,1.22) (13,1.87) (14,2.1)};
    \node[citeColor] at (12.8,2.3) {\scriptsize $\sqrt{\frac{C^\star}{N}}$};
    \node[citeColor] at (11.3,1.8) {\scriptsize $\sqrt{\frac{C^\star}{N}}$};
    \node[linkColor] at (12,1) {\scriptsize $\sqrt{\frac{C^\star-1}{N}}$};
    \node[citeColor] at (10.75,0.75) {\scriptsize $\frac{1}{N}$};
    \node at(13,-0.25) {\scriptsize $1+\Omega(1)$};
    \draw[densely dotted, line width=0.8pt, gray] (13,1.9)--(13,0);
    \draw[densely dotted, line width=0.8pt, gray] (11.025,0.5)--(11.025,0);
    \node at(11.4,-0.25) {\scriptsize $1+O(\frac{1}{N})$};
    \draw [citeColor] plot [only marks, mark=square*] coordinates {(0.4,-1)};
    \node at (2.2,-1) {\small LCB upper bound};
    \draw [linkColor] plot [only marks, mark=square*] coordinates {(4.5,-1)};
    \node at (7.5,-1) {\small Information-theoretic lower bound};
    \draw [conjColor] plot [only marks, mark=square*] coordinates {(11.1,-1)};
    \node at (12.7,-1) {\small LCB conjecture};
    \end{tikzpicture}}
    \caption{The sub-optimality upper bounds and information-theoretic lower bounds for the LCB-based algorithms in MAB, CB with at least two contexts, and MDP settings. In all setting, it is assumed that the knowledge of $C^\star$ is not available to the LCB algorithm.}
    \label{fig:results_summary}
\end{figure}

\paragraph{Multi-armed bandits.} To address the offline learning problem in multi-armed bandits, LCB starts by forming a lower confidence bound---using the batch data---on the mean reward associated with each action and proceeds to select the one with the largest LCB. We show in Theorem~\ref{thm:MAB_LCB_upper} that LCB achieves a $\sqrt{C^\star/ N}$ sub-optimality competing with the optimal action for all $C^\star \geq 1$. It turns out that LCB is adaptively optimal in the regime $C^\star \in [2, \infty)$ in the sense that it achieves the minimal sub-optimality $\sqrt{C^\star / N}$ without the knowledge of the $C^\star$; see Theorem~\ref{thm.bandit_lower_bound}. We then turn to the case with $C^\star \in [1,2)$, in which the optimal action is pulled with more than probability $1/2$. In this regime, it is discovered that the optimal rate has an exponential dependence on $N$, i.e., $e^{-N}$, and is achieved by the naive algorithm of selecting the most played arm (cf.~Theorem~\ref{thm.imitation_bandit_bound}). To complete the picture, we also prove in~Theorem~\ref{thm:LCB_lower_bandit} that LCB cannot be adaptively optimal for all ranges of $C^\star\geq 1$ if the knowledge of $C^\star$ range is not available.

At first glance, it may seem that LCB for offline RL mirrors upper confidence bound (UCB) for online RL by simply flipping the sign of the bonus. However, our results reveal that the story in the offline setting is much more subtle than that in the online case. Contrary to UCB that achieves optimal regret in multi-armed bandits \cite{bubeck2011pure}, LCB is provably \textit{not} adaptively optimal for solving offline bandit problems under the $C^\star$ framework.

\paragraph{Contextual bandits.} The LCB algorithm for contextual bandits shares a similar design to that for multi-armed bandits. However, the  performance upper and lower bounds are more intricate and interesting when we consider contextual bandits with at least two states. 
With regards to the upper bound, we show in~Theorem~\ref{thm:LCB_CB_upper_bound} that LCB exhibits two different behaviors depending on the data composition $C^\star$. When $C^\star \geq 1 + S/N$, LCB enjoys a $\sqrt{S(C^\star -1) / N}$ sub-optimality, whereas when $C^\star \in [1, 1 + S/N)$, LCB achieves a sub-optimality with the rate $S/N$; see Figure~\ref{fig:results_summary}(b) for an illustration. The latter regime ($C^\star \approx 1$) is akin to the imitation learning case where the batch data is close to the expert data. LCB matches the performance of behavior cloning for the extreme case $C^\star = 1$. In addition, in the former regime ($C^\star \geq 1 + S/N$), the performance upper bound depends on the data composition through $C^\star -1$, instead of $C^\star$. This allows the rate of sub-optimality to smoothly transition from $1/ N$ to $1/\sqrt{N}$ as $C^\star$ increases. 
More importantly, both rates are shown to be minimax optimal in Theorem~\ref{thm:CB_IL_lower}, hence confirming the adaptive optimality of LCB for solving offline contextual bandits---in stark contrast to the bandit case. 
On the other hand, this showcases the advantage of the $C^\star$ framework as it provably interpolates the imitation learning regime and the (non-expert) offline RL regime. 

On a technical front, to achieve a tight dependency on $C^\star - 1$, a careful decomposition of the sub-optimality is necessary. In Section~\ref{subsec:CB_upper_anlaysis}, we present the four levels of decomposition of the sub-optimality of LCB that allow us to accomplish the goal. The key message is this: the sub-optimality is incurred by both the value difference and the probability of choosing a sub-optimal action. A purely value-based analysis falls short of capturing the probability of selecting the wrong arm and yields  a $1/\sqrt{N}$ rate regardless of $C^\star$. In contrast, the decomposition laid out in Section~\ref{subsec:CB_upper_anlaysis} delineates the cases in which the value difference (or the probability of choosing wrong actions) plays a bigger role.

\paragraph{Markov decision processes.} We combine the LCB approach with the traditional value iteration algorithm to solve the offline Markov decision processes. Ignore the dependence on the effective horizon $1/ (1- \gamma)$ for a moment. Similar behaviors to contextual bandits emerge: when $C^\star \in [1, 1+ 1/N)$, LCB achieves an $S/ N$ sub-optimality, and when (say) $C^\star \geq 1.1$, LCB enjoys a $\sqrt{SC^\star/N}$ rate; see Theorem~\ref{thm:MDP_upperbound_hoffding}. Both are shown in Theorem~\ref{thm:MDP_lower} to be minimax optimal in their respective regimes of $C^\star$, up to a $1/(1-\gamma)^2$ factor in sample complexity. And this leaves us with an interesting middle ground, i.e., the case when $C^\star \in (1+1/N, 1.1)$. Our lower bound still has a dependence $C^\star - 1$ as opposed to $C^\star$ in this regime, and we conjecture that LCB is able to close the gap in this regime. 

\begin{conjecture}[Adaptive optimality of LCB, Informal]\label{cnj:discounted_MDP}
    The LCB approach, together with value iteration is adaptively optimal for solving offline MDPs for all ranges of $C^\star$.
\end{conjecture}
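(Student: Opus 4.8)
The plan is to close the only remaining gap, namely the intermediate regime $C^\star \in (1 + 1/N, 1.1)$ highlighted in Figure~\ref{fig:results_summary}(c): here Theorem~\ref{thm:MDP_upperbound_hoffding} certifies a $\sqrt{SC^\star/((1-\gamma)^5 N)}$ rate for VI-LCB, whereas the information-theoretic lower bound of Theorem~\ref{thm:MDP_lower} only forces $\sqrt{S(C^\star-1)/((1-\gamma)^3 N)} + S/((1-\gamma)^2 N)$. The target is therefore to prove that Algorithm~\ref{alg:OVI-LCB-DS} in fact attains a sub-optimality of order $\widetilde O\!\left(\sqrt{S(C^\star-1)/((1-\gamma)^{c} N)} + S/((1-\gamma)^{c'} N)\right)$, i.e. with a $C^\star - 1$ (not $C^\star$) dependence that holds uniformly across $C^\star \ge 1$ and degenerates to the $S/N$ imitation rate as $C^\star \downarrow 1$. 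The vehicle is to lift the fine-grained, four-level decomposition developed for contextual bandits in Section~\ref{subsec:CB_upper_anlaysis} to the temporal setting, since a purely value-based (bonus-summing) analysis is exactly what produces the loose $\sqrt{C^\star}$.

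Concretely, I would proceed as follows. First, invoke pessimism as in the proof of Theorem~\ref{thm:MDP_upperbound_hoffding} to express the sub-optimality as a discounted sum, along the optimal occupancy $d^{\pi^\star}$, of the confidence bonuses $b(s,a)$; with $N(s,a) \gtrsim N\mu(s,a)$ and $\mu(s,a^\star(s)) \gtrsim d^{\pi^\star}(s)/C^\star$ (Definition~\ref{def:concentrability}) this recovers the $\sqrt{SC^\star}$ bound. Second, refine this charge state by state: the bonus at a state $s$ contributes to the value loss only when the LCB ranking actually places a sub-optimal action above $a^\star(s)$, so I would replace the raw bonus by the product of the local value gap $\Delta(s)$ and the probability of such a mis-identification event, the latter a large-deviation event whose probability is controlled by the coverage $N\mu(s,a^\star(s))$ of the optimal action. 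Third --- the key counting step --- sum $d^{\pi^\star}(s)\,\Pr[\hat\pi(s)\ne \pi^\star(s)]\,\Delta(s)$ over $s$ subject to the $C^\star$ constraint: states at which the behavior policy already matches the optimal action contribute only exponentially small error (as in the $C^\star = 1$ behavior-cloning limit of Theorem~\ref{thm.imitation_bandit_bound}), and it is precisely the \emph{excess} behavior mass placed on sub-optimal actions --- whose total budget is proportional to $C^\star - 1$ --- that can confuse the ranking. This is what converts the $C^\star$ factor into $C^\star - 1$, exactly mirroring the contextual-bandit argument behind Theorem~\ref{thm:LCB_CB_upper_bound}.

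The hard part, and the reason this remains a conjecture rather than a theorem, is the temporal coupling absent from the bandit settings. In contextual bandits the per-context problems are statistically independent, so the mis-identification probabilities multiply cleanly against a fixed occupancy $\rho$; in an MDP, choosing a wrong action at an early state reshapes the downstream state-visitation distribution, the counts $N(s,a)$ used to form later-stage bonuses are themselves random and correlated with those earlier decisions, and the mis-identification events now concern the \emph{estimated value function} $\widehat V$ rather than an observed reward, so the tail bounds must hold uniformly over the (data-dependent) $\widehat V$. The crux is to propagate the ``only the excess mass confuses'' accounting through the value-iteration / Bellman recursion without reintroducing a factor of $C^\star$: I expect this to require a decoupling device such as a leave-one-out or reference-advantage (variance-reduced) construction that breaks the dependence between $\widehat V$ and the empirical transition kernel, analogous to the independence the contextual-bandit proof exploits for free.

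Finally, matching the $(1-\gamma)^3$ horizon dependence of Theorem~\ref{thm:MDP_lower} (rather than the $(1-\gamma)^5$ of the Hoeffding-based Theorem~\ref{thm:MDP_upperbound_hoffding}) would, as usual, call for Bernstein-type variance-aware bonuses together with a total-variance argument; since Conjecture~\ref{cnj:discounted_MDP} concerns adaptivity in $C^\star$, I would treat this horizon sharpening as a secondary refinement layered on top of the $C^\star - 1$ decomposition above.
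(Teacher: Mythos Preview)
The statement is a \emph{conjecture}; the paper does not prove it. Section~\ref{sec:conjecture} only explains the technical obstruction, and Appendix~\ref{app:episodic_example} verifies it on a single hand-built episodic instance ($H=3$, two states per level, two actions, deterministic rewards, and sample counts frozen at their means). Your high-level plan --- lift the four-level contextual-bandit decomposition, charge the loss to (value gap) $\times$ (mis-identification probability), and argue that only the $C^\star-1$ excess mass can confuse the ranking --- is exactly the program the paper pursues in that toy example. In that sense your outline is well aligned with the paper's thinking.

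Where your diagnosis of the hard step diverges from the paper's is instructive. You locate the difficulty in the statistical dependence between $\widehat V$ and the empirical kernel, and propose leave-one-out or reference-advantage decoupling. The paper's obstruction (spelled out in Section~\ref{sec:conjecture}) is more structural and persists even with independent samples: the LCB selection event at $s$ compares
\[
r_t(s,\pi^\star(s)) - \sqrt{L/N_t(s,\pi^\star(s))} + \gamma P^t_{s,\pi^\star(s)}\cdot V_{t-1}
\quad\text{vs.}\quad
r_t(s,a) - \sqrt{L/N_t(s,a)} + \gamma P^t_{s,a}\cdot V_{t-1},
\]
and the term $\gamma(P^t_{s,\pi^\star(s)} - P^t_{s,a})\cdot V_{t-1}$ means that $N_t(s,\pi^\star(s)) \gg N_t(s,a)$ no longer forces the comparison the right way --- the \emph{bonuses at the next level} propagate back through the transition difference as a deterministic bias. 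In the episodic example the paper handles this by showing, via occupancy bounds on $d^\star$ at the next level, that this back-propagated penalty contribution is itself $O(\sqrt{(C^\star-1)/N})$, so the correct threshold for the value-gap case split is $\rho(s)\,g(s) \gtrless \sqrt{(C^\star-1)L/N}$ (equation~\eqref{eq:gap_bound}) rather than anything tied to the raw current-level bonus. Your plan does not surface this mechanism, and the decoupling tools you invoke address fluctuations of $\widehat V$, not the bias from downstream penalties entering the current comparison through $P_{s,a} - P_{s,\pi^\star(s)}$. That is the missing ingredient relative to the paper's partial evidence.
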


We discuss the conjecture in detail in Section~\ref{sec:conjecture}, where we present an example showing that a variant of value iteration with LCB in the episodic case is able to achieve the optimal dependency on $C^\star$ and hence closing the gap between the upper and the lower bounds. A complete analysis of the LCB algorithm in the episodic MDP setting is presented in Appendix \ref{app:episodic_MDP}.

\paragraph{Notation.}\label{sec:notation}
We use calligraphy letters for sets and operators, e.g., $\cS, \cA$, and $\cT$. Given a set $\cS$, we write $|\cS|$ to represent the cardinality of $\cS$. Vectors are assumed to be column vectors except for the probability and measure vectors. The probability simplex over a set $\cS$ is denoted by $\Delta(\cS)$. For two $n$-dimensional vectors $x$ and $y$, we use $x \cdot y = x^\top y$ to denote
their inner product and $x\leq y$ to denote an element-wise inequality $x_i \leq y_i$ for all $i \in \{1, \dots, n\}$. We write $x \lesssim y$ when there exists a  constant $c>0$ such that $x \leq c y$. We use the notation $x \asymp y$ if constants $c_1, c_2 > 0$ exist such that $c_1 |x| \leq |y| \leq c_2 |x|$. We write $x \vee y$ to denote the supremum of $x$ and $y$. We write $f(x) = O(g(x))$ if   there exists some positive real number $M$ and some $x_0$   such that $|f(x)|\leq M g(x)$ for all $x\geq x_0$.  We use $\widetilde O(\cdot)$ to be the big-$O$ notation ignoring logarithmic factors. We write $f(x) = \Omega(g(x))$ if   there exists some positive real number $M$ and some $x_0$   such that $|f(x)|\geq  M g(x)$ for all $x\geq x_0$.

\section{Background and problem formulation}\label{sec:preliminaries}
We begin with reviewing some core concepts in Markov decision processes in Section~\ref{sec:MDP_model}.
Then we introduce the data collection model and the learning objective for offline RL in Section~\ref{sec:offline-RL}.
In the end, Section~\ref{sec:assumption} is devoted to the formalization and discussions of the weaker concentrability 
coefficient assumption that notably allows us to bridge offline RL with imitation learning.

\subsection{Markov decision processes} \label{sec:MDP_model}

\paragraph{Infinite-horizon discounted Markov decision processes.}
 We consider an infinite-horizon discounted Markov decision process (MDP) described by a tuple $M = (\cS, \cA, P, R, \rho, \gamma)$, where $\cS = \{1, \dots, S\}$ is a finite state space, $\cA = \{1, \dots, |\cA|\}$ is a finite action space, $P: \cS \times \cA \mapsto \Delta(\cS)$ is a probability transition matrix, $R: \cS \times \cA \mapsto \Delta([0,1])$ encodes a family of reward distributions with $r: \cS \times \cA \mapsto [0,1]$ as the expected reward function, $\rho: \cS \mapsto \Delta(\cS)$ is the initial state distribution, and $\gamma \in [0,1)$ is a discount factor. Upon executing action $a$ from state $s$, the agent receives a (random) reward  distributed according to $R(s,a)$ and transits to the next state $s'$ with probability $P(s'|s,a)$.

\paragraph{Policies and value functions.}
A stationary deterministic policy $\pi: \cS \mapsto \cA$ is a function that maps a state to an action. 
Correspondingly, the value function $V^\pi: \cS \mapsto \mathbb{R}$ of the policy $\pi$ is defined as the expected sum of discounted rewards starting at state $s$ and following policy $\pi$. More precisely, we have
\begin{align}\label{def:value_fn}
    V^\pi(s) \coloneqq \E \left[\sum_{t=0}^\infty \gamma^t r_t \;\middle|\; s_0 = s , a_t = \pi(s_t) \text{ for all } t\geq 0\right], \qquad \forall s \in \cS,
\end{align}
where the expectation is taken over the trajectory generated according to the transition kernel $s_{t+1} \sim P(\cdot \mid s_t, a_t)$ and reward distribution $r_t \sim R(\cdot \mid s_t, a_t)$. Similarly, the quality function (Q-function or action-value function) $Q^\pi: \cS \times \cA \rightarrow \mathbb{R}$ of policy $\pi$ is defined analogously:
\begin{align}\label{def:q_fn}
    Q^\pi(s,a) \coloneqq \E \left[\sum_{t=0}^\infty \gamma^t r_t \;\middle|\; s_0 = s, a_0 = a, a_t = \pi(s_t) \text{ for all } t\geq 1 \right] \qquad \forall s \in \cS, a \in \cA.
\end{align}
Denote 
\begin{equation}\label{defn:V-max}
    V_{\max} \coloneqq (1-\gamma)^{-1}.
\end{equation} 
It is easily seen that for any $(s,a)$, one has $0 \leq V^\pi(s) \leq V_{\max}$ and $0 \leq Q^\pi(s,a) \leq V_{\max}$.

Oftentimes, it is convenient to define a scalar summary of the performance of a policy $\pi$. 
This can be achieved by defining the expected value of a policy $\pi$:
\begin{align}\label{def:J_pi}
    J(\pi) \coloneqq \E_{s\sim \rho}[V^\pi(s)] = \sum_{s \in \cS} \rho(s)V^\pi(s).
\end{align}
It is well known that there exists a stationary deterministic policy $\pi^\star$ that simultaneously maximizes $V^\pi(s)$ for all $s \in \cS$, and hence maximizing the expected value $J(\pi)$; see e.g., \citet[Chapter 6.2.4]{puterman1990markov}. We use shorthands $V^\star \coloneqq V^{\pi^\star}$ and $Q^\star \coloneqq Q^{\pi^\star}$ to denote the optimal value function and the optimal Q-function.

\paragraph{Discounted occupancy measures.}
The (normalized) state discounted occupancy measures $d_\pi: \cS \mapsto [0,1]$ and state-action discounted occupancy measures $d^\pi: \cS \times \cA \mapsto [0,1]$ are respectively defined as
\begin{subequations}
\begin{alignat}{2}\label{def:state_occupancy}
    d_\pi(s) & \coloneqq (1-\gamma) \sum_{t=0}^\infty \gamma^t \prob_t(s_t = s; \pi), \qquad && \forall s \in \cS,\\ \label{def:state_action_occupancy}
    d^\pi(s,a) & \coloneqq (1-\gamma)\sum_{t=0}^\infty \gamma^t \prob_t(s_t = s, a_t = a; \pi), \qquad && \forall s \in \cS, a \in \cA,
\end{alignat}
\end{subequations}
where we overload notation and write $\prob_t(s_t = s; \pi)$ to denote the probability of visiting state $s_t = s$ (and similarly $s_t = s, a_t = a$) at step $t$ after executing policy $\pi$ and starting from $s_0 \sim \rho(\cdot)$.

\subsection{Offline data and offline RL}\label{sec:offline-RL}
\paragraph{Batch dataset.} The current paper focuses on offline RL, where the agent cannot interact with the MDP and instead is given a \textit{batch dataset} $\cD$ consisting of tuples $(s,a,r,s')$, where $r \sim R(s,a)$ and $s' \sim P(\cdot \mid s,a)$. For simplicity, we assume $(s,a)$ pairs are generated i.i.d.~according to a data distribution $\mu$ over the state-action space $\cS \times \cA$, which is \emph{unknown} to the agent.\footnote{The i.i.d. assumption is motivated by the data randomization performed in experience replay \cite{mnih2015human}.}
Throughout the paper, we denote by $N(s,a) \geq 0$ the number of times a pair $(s,a)$ is observed in $\cD$ and by $N = |\cD|$ the total number of samples.

\subsection{Assumptions on the dataset coverage} \label{sec:assumption} 
\begin{definition}[Single policy concentrability]\label{def:concentrability} Given a policy $\pi$, define $C^\pi$ to be the smallest constant that satisfies
\begin{align}
    \frac{{d}^\pi(s,a)}{\mu(s,a)} \leq C^\pi, \qquad \forall s \in \cS, a \in \cA.
\end{align}
\end{definition}
In words, $C^\pi$ characterizes the \textit{distribution shift} between the normalized occupancy measure induced by $\pi$ and data distribution $\mu$. For a stationary deterministic\footnote{Throughout the paper, when we talk about optimal policies, we restrict ourselves to deterministic stationary policies.} optimal policy, $C^\star \coloneqq C^{\pi^\star}$ is the ``best” \textit{concentrability coefficient} definition which is often much smaller than the widely-used uniform concentrability coefficient $C \coloneqq \max_{\pi} C^\pi$ which takes the maximum over all policies $\pi$. A small $C^\pi$ implies that data distribution covers $(s,a)$ pairs visited by policy $\pi$, whereas a small $C$ requires the coverage of $(s,a)$ visited by all policies.
Further discussion on different assumptions imposed on batch datasets in prior works is postponed to Section \ref{sec:related_work}.

\section{A warm-up: LCB in multi-armed bandits}\label{sec:bandit}

In this section, we focus on the simplest example of an MDP, the multi-armed bandit model (MAB), to motivate and explain the LCB approach. 
More specifically, the multi-armed bandit model is a special case of the MDP described in Section~\ref{sec:MDP_model} 
with $S = 1$ and $\gamma = 0$. 

In the MAB setting, the offline dataset $\cD$ is a set of tuples $\{(a_i, r_i)\}_{i=1}^{N}$ sampled independently from some joint distribution. Denote the marginal distribution of action $a_i$ as $\mu$. 
Let $r(a) \coloneqq \mathbb{E}[r_i \mid a_i = a]$ be the expectation of the reward distribution for action $a$. Competing with the optimal policy that chooses action $a^\star$, the data coverage assumption simplifies to 
\begin{align}\label{assumption:MAB_general}
    \frac{1}{\mu(a^\star)} \leq C^\star.
\end{align}
The goal of offline learning in MAB is to select an arm $\hat{a}$ that minimizes the expected sub-optimality
\begin{equation*}
    \E_\cD[J(\pi^\star) - J(\hat{\pi})] = \mathbb{E}_{\cD}[r(a^\star) - r(\hat{a})]. 
\end{equation*}
\subsection{Why does the empirical best arm fail?}
A natural choice for identifying the optimal action is to select the arm with the highest empirical mean reward. Mathematically,  for all $a \in \cA$, let $N(a) \coloneqq \sum_{i=1}^N \ind\{a_i = a\}$ and
\begin{equation*}
    \hat{r}(a)  \coloneqq\begin{dcases}
    0, & \text{if }N(a)=0,\\
    \frac{1}{N(a)}\sum_{i=1}^{N}r_{i} \ind \{a_{i} = a\}, & \text{otherwise}.
    \end{dcases}
\end{equation*}
The empirical best arm is then given by $\hat{a} \coloneqq \argmax_{a} \hat{r}(a)$. 

Though intuitive, the empirical best arm is quite \emph{sensitive} to the arms which have small observation counts $N(a)$: a less-explored sub-optimal arm might have high empirical mean just by chance (due to large variance) and overwhelm the true optimal arm. To see this, let us consider the following scenario. 

\paragraph{A failure instance for the empirical best arm.} Let $a^\star = 1$ be the optimal arm with a deterministic reward $1/2$. For the remaining sub-optimal arms, we set the reward distribution to be a Bernoulli distributions on $\{0,1\}$ with mean $1/4$.  
Consider even the benign case in which the optimal arm is drawn with dominant probability while the sub-optimal ones are sparsely drawn. 
Under such circumstances, there is a decent chance that one of the sub-optimal arms (say $a=2$) is drawn for very few times (say just one time) and unfortunately the observed reward is $1$, which renders $a = 2$ the empirical best arm. This clearly fails to achieve a low sub-optimality. 

Indeed, this intuition about the failure of the empirical best arm can be formalized in the following proposition. 
\begin{prop}[Failure of the empirical best arm]\label{lem:lower_bd_largest_empirical}
For any $\epsilon < 0.05$, $N\geq 500$, there exists a bandit problem with two arms  such that for $\hat a = \argmax_a \hat{r}(a)$, one has 
\begin{align*}
    \mathbb{E}_\cD[r(a^\star) - r(\hat a)]\geq  \epsilon.
\end{align*}
\end{prop}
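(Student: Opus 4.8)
The plan is to exhibit a single hard instance on which the empirical-best-arm rule is fooled with probability close to one, while arranging the reward gap so that each failure costs at least $\epsilon$. Concretely, I would take the optimal arm $a^\star = 1$ to have deterministic reward $p$, and introduce $K$ sub-optimal arms, each with a $\mathrm{Bern}(q)$ reward on $\{0,1\}$, where $0 < q < p < 1$. I would saturate the coverage budget by setting $\mu(a^\star) = 10/11$, so that $1/\mu(a^\star) = 1.1$, and spread the remaining mass $\delta \coloneqq 1/11$ uniformly over the sub-optimal arms, i.e. $\mu(a) = \delta/K$ for each sub-optimal $a$. The crucial feature is that $K$ is taken large, so that the sub-optimal probability mass is shattered across many \emph{under-explored} arms, each likely to be pulled only a handful of times.

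The key reduction is the inequality $\E_\cD[r(a^\star) - r(\hat a)] \geq (p - q)\,\prob(\hat a \neq a^\star)$, since whenever a sub-optimal arm is selected its true gap is exactly $p - q$; it therefore suffices to lower bound the failure probability $\prob(\hat a \neq a^\star)$. Because the optimal arm's empirical mean is deterministically $p < 1$, the rule is fooled as soon as some sub-optimal arm attains empirical mean strictly above $p$. A clean sufficient event is that some sub-optimal arm is pulled exactly once and returns reward $1$, giving it empirical mean $1 > p$. (If the optimal arm is never pulled then $\hat r(a^\star) = 0$ by convention, which only helps the failure event.) I would thus lower bound $\prob(\hat a \neq a^\star)$ by the probability of this event.

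For the probability computation I would exploit the large-$K$ limit. The number $M$ of sub-optimal pulls is $\mathrm{Bin}(N, \delta)$, and as $K \to \infty$ the chance that any sub-optimal arm is pulled more than once (a ``collision'') is $O\big(\tbinom{N}{2}\delta^2/K\big) \to 0$; on the collision-free event every sub-optimal arm that is pulled is pulled exactly once, so its empirical mean equals its single observed reward. Since each of the $N$ samples is independently ``sub-optimal with reward $1$'' with probability $\delta q$, the probability that no such sample occurs is $(1 - \delta q)^N$, whence $\prob(\hat a \neq a^\star) \geq 1 - (1 - \delta q)^N - o_K(1)$. For $N \geq 500$ and $\delta = 1/11$ one has $(1 - \delta q)^N \leq e^{-N\delta q}$, which is negligible for any fixed $q$ bounded away from $0$, so the failure probability can be driven to essentially $1$ by taking $K$ large. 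Since the per-failure gap $p - q$ may be placed anywhere in $(0, 1-q)$ while keeping $0 < q < p < 1$, the product $(p - q)\,\prob(\hat a \neq a^\star)$ comfortably exceeds any target below $0.3$; choosing $q$ a suitable constant and $p - q$ slightly above $\epsilon$ then gives $\E_\cD[r(a^\star) - r(\hat a)] \geq \epsilon$ for every $\epsilon < 0.3$ and every $N \geq 500$.

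The main obstacle is the dependency introduced by the multinomial allocation of the $N$ pulls across arms: the per-arm counts $N(a)$ and the events ``this arm has empirical mean $1$'' are coupled, so one cannot naively multiply independent per-arm probabilities. The large-$K$, collision-free reduction is exactly what decouples these events and collapses the analysis to the clean Binomial quantity $1 - (1 - \delta q)^N$; bounding the collision error and verifying that the chosen constants deliver at least $\epsilon$ uniformly over all $N \geq 500$ is the part that requires the most care. A Poissonization of the sample counts is an equivalent alternative that removes the multinomial coupling from the outset.
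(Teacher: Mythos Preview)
Your proposal is correct and follows essentially the same construction as the paper: a deterministic-reward optimal arm carrying most of the mass, together with many uniformly-rare Bernoulli sub-optimal arms, with the collision-free event doing the heavy lifting. The paper fixes $|\mathcal{A}| = N^3$ and separately controls the number of sub-optimal pulls via Chernoff before arguing that one of them shows a high reward, whereas your direct Binomial bound $1 - (1-\delta q)^N$ on the event ``some sub-optimal sample returns $1$'' folds these two steps into one; otherwise the arguments coincide.
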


\noindent It is worth pointing out that the above lower bound holds for any $\frac{1}{\mu(a^\star)}\leq C^\star$ with $C^\star - 1$ being a constant. See Appendix~\ref{app:proof_lower_bd_largest_empirical} for the proof of Proposition~\ref{lem:lower_bd_largest_empirical}. 

\medskip 
Proposition~\ref{lem:lower_bd_largest_empirical} reveals that even in the favorable case when $C^\star \approx 1$, returning the best empirical arm will have a constant error due to the high sensitivity to the less-explored sub-optimal arms. In contrast, the LCB approach, which we will introduce momentarily, will secure a sub-optimality of $\widetilde{O}(\sqrt{1/N})$ in this regime, hence reaching a drastic improvement over the vanilla empirical best arm approach. 

\subsection{LCB: The benefit of pessimism}
Revisiting the failure instance for the best empirical arm approach, one soon realizes that it is not sensible to put every action on an equal footing: for the arms that are pulled less often, one should tune down the belief on its empirical mean and be pessimistic on its true reward. Strategically, this principle of pessimism can be deployed with the help of a penalty function $b(a)$ that shrinks as the number of counts $N(a)$ increases. Instead of returning an arm maximizing the empirical reward, the pessimism principle leads us to the following approach: return
\begin{equation*}
    \hat{a} \in \argmax_{a} \; \hat{r}(a) - b(a).
\end{equation*}
Intuitively, one could view the right hand side $\hat{r}(a) - b(a)$ as a lower confidence bound (LCB) on the true mean reward $r(a)$. This LCB approach stands on the conservative side and seeks to find an arm with the largest lower confidence bound. 

Algorithm~\ref{alg:MAB-LCB} shows one instance of the LCB approach for MAB, in which the penalty function originiates from Hoeffding's inequality. We have the following performance guarantee for the LCB approach of Algorithm~\ref{alg:MAB-LCB}, whose proof can be found in Appendix~\ref{app:proof_MAB_LCB_upper}.

\begin{algorithm}[t]
\caption{LCB for multi-armed bandits}
\label{alg:MAB-LCB}
\begin{algorithmic}[1]
\State \textbf{Input:} Batch dataset $\cD = \{(a_i, r_i)\}_{i=1}^N$, and a confidence level $\delta \in (0,1)$.
\State Set $N(a) = \sum_{i=1}^N \ind\{a_i = a\}$ for all $a \in \cA$.
\For{$a \in \cA$}
\If{$N(a) = 0$} 
\State Set the empirical mean reward $\hat{r}(a) \mapsfrom 0$.
\State Set the penalty $b(a) \mapsfrom 1$.
\Else
\State Compute the empirical mean reward $\hat{r}(a) \mapsfrom \frac{1}{N(a)}\sum_{i=1}^{N}r_{i} \ind \{a_{i} = a\}$.
\State Compute the penalty $b(a) \mapsfrom  \sqrt{\frac{\log(2|\mathcal{A}|/\delta)}{2N(a)}}$.
\EndIf
\EndFor
\State \textbf{Return:} $\hat{a} = \argmax_a \hat{r}(a) - b(a)$.
\end{algorithmic}
\end{algorithm}

\begin{theorem}[LCB sub-optimality, MAB]\label{thm:MAB_LCB_upper}
Consider a multi-armed bandit and assume that 
\begin{align*}
    \frac{1}{\mu(a^\star)}\leq C^\star,
\end{align*}
for some $C^\star \geq 1$. Suppose that the sample size obeys $N \geq 8 C^\star \log N$. Setting $\delta = 1/N$, then action $\hat{a}$ returned by Algorithm~\ref{alg:MAB-LCB} obeys 
\begin{equation}\label{eq:expect-MAB}
    \E_\cD[r(a^\star) - r(\hat a)] \lesssim \min\left(1, \sqrt{\frac{C^\star\log(2N|\mathcal{A}|)}{N}}\right). 
\end{equation}
\end{theorem}
Applying the performance guarantee~\eqref{eq:expect-MAB} of LCB on the failure instance used in Proposition~\ref{lem:lower_bd_largest_empirical}, one sees that
LCB achieves a sub-optimality on the order of $\sqrt{(\log N)/ N}$, which clearly beats the best empirical arm. This demonstrates the benefit of pessimism over the vanilla approach. Intuitively, the LCB approach applies larger penalties to the actions that are observed only a few times.  Even if we have actions with huge fluctuations in their respective empirical rewards due to a small number of samples, the penalty term helps to rule them out. 

In fact, our proof yields a stronger high probability performance bound for $\hat{a}$ returned by Algorithm~\ref{alg:MAB-LCB}: for any $\delta \in (0,1)$, as long as $N \geq 8 C^\star \log(1/ \delta)$, we have with probability at least $1-2\delta$ that
\begin{equation}\label{eq:high-prob-MAB}
    r(a^\star) - r(\hat a)\leq\min\left(1, 2\sqrt{\frac{C^\star\log(2|\mathcal{A}|/\delta)}{N}}\right).
\end{equation}
Furthermore, for policy $\pi$ that selects a fixed action $a$, if $\frac{1}{\mu(a)}\leq C^\pi$ for some $C^\pi$, the same analysis gives the following guarantee: 
\begin{equation*}
    \E_\cD[\max\{0, r(a) - r(\hat a)\}] \lesssim \min\left(1, \sqrt{\frac{C^\pi\log(2N|\mathcal{A}|)}{N}}\right). 
\end{equation*}
This result shows that the LCB algorithm can compete with \textit{any covered} target policy that is not necessarily optimal, i.e., the output policy of the LCB algorithm performs nearly as well as the covered target policy.

\subsection{Is LCB optimal for solving offline multi-armed bandits?}
Given the performance upper bound~\eqref{eq:expect-MAB} of the LCB approach, it is a natural to ask whether LCB is optimal for solving the bandit problem using offline data. To address this question, we resort to the usual minimax criterion. Since we are dealing with lower bounds, without loss of generality, we assume that the expert always takes the optimal action. Consequently, we can define the following family of multi-armed bandits:
\begin{equation}
    \mathsf{MAB}(C^\star) = \{(\mu, R) \mid \frac{1}{\mu(a^\star)}\leq C^\star\}.
\end{equation}
$\mathsf{MAB}(C^\star)$ includes all possible pairs of behavior distribution $\mu$ and reward distribution $R$ such that the data coverage assumption $1/ \mu(a^\star) \leq C^\star$ holds. It is worth noting that the optimal action $a^\star$ implicitly depends on the reward distribution $R$. With this definition in place, we define the worst-case risk of any estimator $\hat{a}$ to be 
\begin{equation}
    \sup_{(\mu, R) \in \mathsf{MAB}(C^\star)} \mathbb{E}_\cD [ r(a^\star) - r(\hat{a})].
\end{equation}
 Here an estimator $\hat{a}$ is simply a measureable function of the data $\{(a_i, r_i)\}_{i=1}^{N}$ collected under the MAB instance $\mu$ and $R$. 

It turns out that LCB is optimal up to a logarithmic factor when $C^\star \geq 2$, as shown in the following theorem. 

\begin{theorem}[Information-theoretic limit, MAB]\label{thm.bandit_lower_bound}
For
$C^\star \geq 2$, one has
\begin{align}
    \inf_{\hat a}\sup_{(\mu, R) \in \mathsf{MAB}(C^\star)} \mathbb{E}_\cD [ r(a^\star) - r(\hat{a})] \gtrsim \min \left(1,  \sqrt{\tfrac{C^\star}{N}}\right). 
\end{align}
For $C^\star\in(1, 2)$, one has
\begin{align}\notag 
    \inf_{\hat a}\sup_{(\mu, R) \in \mathsf{MAB}(C^\star)} \mathbb{E}_\cD [ r(a^\star) - r(\hat{a})] \gtrsim  \exp\left(-(2-C^\star)\cdot\log\left(\frac{2}{C^\star-1}\right)\cdot N\right). 
\end{align}

\end{theorem}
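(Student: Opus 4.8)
The plan is to establish both regimes via Le Cam's two-point method: in each case I would construct a pair of bandit instances $(\mu_0,R_0),(\mu_1,R_1)\in\mathsf{MAB}(C^\star)$ whose optimal arms differ and which incur a reward gap $\Delta$, and then bound the minimax risk from below by $\tfrac{\Delta}{2}\bigl(1-\TV(P_0^{\otimes N},P_1^{\otimes N})\bigr)$, where $P_i$ denotes the per-sample joint law of $(a,r)$ under instance $i$. The two parts differ only in how the construction is forced to respect the coverage constraint $1/\mu(a^\star)\le C^\star$, and correspondingly in how I would control the testing error.

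For $C^\star\ge 2$ I would use two arms with a common sampling law $\mu=(1/C^\star,\,1-1/C^\star)$ and perturb the reward of the \emph{rarely drawn} arm only: in $P_0$ arm $1$ has mean $1/2+\Delta$ and arm $2$ has mean $1/2$ (so arm $1$ is optimal with $1/\mu(a^\star)=C^\star$), while in $P_1$ arm $1$ has mean $1/2-\Delta$ and arm $2$ keeps mean $1/2$ (so arm $2$ is optimal, with $1/\mu(a^\star)=C^\star/(C^\star-1)\le C^\star$ precisely because $C^\star\ge2$). Since the laws differ only on arm $1$, the per-sample KL is $\mu(1)\cdot\mathsf{KL}\bigl(\mathrm{Bern}(1/2+\Delta)\,\|\,\mathrm{Bern}(1/2-\Delta)\bigr)\asymp \Delta^2/C^\star$; taking $\Delta\asymp\sqrt{C^\star/N}$ makes $N$ times this quantity $O(1)$, so the testing error stays bounded away from zero and the risk is $\gtrsim\Delta\asymp\sqrt{C^\star/N}$. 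The truncation at $1$ is automatic since rewards lie in $[0,1]$.

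The regime $C^\star\in(1,2)$ is where the coverage constraint bites and produces the exponential rate. Here $1/\mu(a^\star)\le C^\star$ forces $\mu(a^\star)>1/2$, so the optimal arm must be the majority arm; consequently two instances with \emph{distinct} optimal arms can no longer share a sampling law, and I am compelled to \emph{swap} the frequencies, taking $\mu_0=(p,1-p)$ and $\mu_1=(1-p,p)$ with $p=1/C^\star$, together with a relabeling of the reward means (gap $\Delta$). The crucial effect is that the per-sample KL now acquires a frequency contribution $(2p-1)\log\tfrac{p}{1-p}$ that does \emph{not} vanish as $\Delta\to0$, so the two laws remain a constant KL apart no matter how small the reward gap is. Invoking the Bretagnolle--Huber inequality $1-\TV(P_0^{\otimes N},P_1^{\otimes N})\ge\tfrac12\exp\!\bigl(-N\,\mathsf{KL}(P_0\|P_1)\bigr)$ then yields a risk lower bound of the form $\Delta\cdot\exp\!\bigl(-N\,\mathsf{KL}(P_0\|P_1)\bigr)$, and optimizing the reward gap $\Delta$ against this exponent (balancing the linear prefactor against the reward part of the KL) collapses the bound to the claimed $\exp\bigl(-(2-C^\star)\log\tfrac{2}{C^\star-1}\,N\bigr)$.

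The main obstacle I anticipate is in this last step: pinning down the exponent. One has to choose the reward distributions (most naturally symmetric Bernoullis about $1/2$) and the gap $\Delta$ so that the total per-sample KL---the unavoidable frequency term plus the reward term---is controlled by $(2-C^\star)\log\tfrac{2}{C^\star-1}$, while keeping $\Delta$ large enough that the surviving prefactor does not degrade the stated rate; near the endpoint $C^\star\to2^-$ this trade-off is delicate because the reward gap must shrink. Everything else is routine: verifying that both constructed instances lie in $\mathsf{MAB}(C^\star)$ (a direct check of the coverage inequalities above), and passing from the two-point average risk to the worst-case supremum over $\mathsf{MAB}(C^\star)$ via the standard bound of the maximum by the average.
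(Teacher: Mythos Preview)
Your proposal is correct and follows essentially the same two-point construction as the paper in both regimes: for $C^\star\ge 2$ a shared behavior law with the rare arm's Bernoulli mean perturbed by $\pm\Delta$, and for $C^\star\in(1,2)$ the swapped frequencies $(p,1-p)$ versus $(1-p,p)$ with $p=1/C^\star$ together with relabeled reward means. For the obstacle you flag in the second regime, the paper sidesteps any optimization by taking the explicit reward gap $\Delta=(2-C^\star)/2$, which makes one of the two terms in the joint KL vanish identically and leaves the per-sample KL equal to $\tfrac{2-C^\star}{C^\star}\log\tfrac{3-C^\star}{C^\star-1}\le (2-C^\star)\log\tfrac{2}{C^\star-1}$, yielding the stated exponent directly.
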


\noindent See Appendix~\ref{app:proof_bandit_lower_bound} for the proof.

\subsection{Imitation learning in bandit: The most played arm achieves a better rate}
From the above analysis, we know that when $C^\star\geq 2$, the best possible expected sub-optimality is $\sqrt{{C^\star}/{N}}$, which is achieved by LCB. On the other hand, 
if we know that $1/\mu(a^\star)\leq C^\star$ where $ C^\star\in [1, 2)$, we can use imitation learning to further improve the rate. The algorithm for bandit is straightforward: pick the arm most frequently selected in dataset, i.e., $\hat a= \argmax_{a} N(a)$. The performance guarantee of the most played arm is stated in the following proposition. 
\begin{prop}[Sub-optimality of the most played arm]\label{thm.imitation_bandit_bound}
Assume that $\frac{1}{\mu(a^\star)}\leq C^\star$ for some $C^\star \in[1, 2)$. For $\hat a= \argmax_{a} N(a)$, we have
\begin{align}
     \mathbb{E}_\cD[r(a^\star) - r(\hat a)] \leq \exp\left(-N\cdot\mathsf{KL}\left(\mathrm{Bern}\left(\tfrac{1}{2}\right) \; \Big\| \; \mathrm{Bern}\left(\tfrac{1}{C^\star}\right)\right)\right).
\end{align}
\end{prop}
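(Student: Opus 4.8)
The plan is to reduce the sub-optimality to a misidentification probability and then control that probability by a large-deviation bound for a binomial lower tail. First I would use that all rewards lie in $[0,1]$ and that $r(a^\star) - r(\hat a) = 0$ whenever $\hat a = a^\star$, which gives the pointwise bound $r(a^\star) - r(\hat a) \leq \ind\{\hat a \neq a^\star\}$ and hence
\begin{align*}
\mathbb{E}_\cD[r(a^\star) - r(\hat a)] \leq \mathbb{P}(\hat a \neq a^\star).
\end{align*}
This replaces the reward gap entirely by the event that the most played arm differs from the optimal one, which is the crux of why the rate improves from polynomial to exponential.

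Next I would translate $\{\hat a \neq a^\star\}$ into a statement about $N(a^\star)$ alone. The key counting observation is that the counts of distinct arms are disjoint, so $N(a') + N(a^\star) \leq N$ for every $a' \neq a^\star$. If $\hat a \neq a^\star$, then the winner satisfies $N(\hat a) \geq N(a^\star)$, and combined with disjointness this forces $2N(a^\star) \leq N$, i.e. $N(a^\star) \leq N/2$. Equivalently, whenever $N(a^\star) > N/2$ the optimal arm is the strict plurality winner and $\hat a = a^\star$ irrespective of the tie-breaking rule. Therefore $\{\hat a \neq a^\star\} \subseteq \{N(a^\star) \leq N/2\}$ and it suffices to bound $\mathbb{P}(N(a^\star) \leq N/2)$.

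Now $N(a^\star) \sim \mathrm{Binomial}(N, \mu(a^\star))$ with $\mu(a^\star) \geq 1/C^\star > 1/2$ because $C^\star \in [1,2)$, so $N/2$ lies strictly below the mean $N\mu(a^\star)$ and this is a genuine lower-tail event. The Chernoff--Hoeffding bound for Bernoulli sums then yields
\begin{align*}
\mathbb{P}(N(a^\star) \leq N/2) \leq \exp\!\left(-N\cdot \mathsf{KL}\!\left(\mathrm{Bern}\!\left(\tfrac12\right) \,\Big\|\, \mathrm{Bern}(\mu(a^\star))\right)\right).
\end{align*}
The final step is a monotonicity argument to replace $\mu(a^\star)$ by the worst-case value $1/C^\star$. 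Writing $\mathsf{KL}(\mathrm{Bern}(1/2)\|\mathrm{Bern}(p)) = \tfrac12\log\frac{1}{4p(1-p)}$, the map $p \mapsto \mathsf{KL}(\mathrm{Bern}(1/2)\|\mathrm{Bern}(p))$ is increasing on $(1/2,1)$, so $\mu(a^\star) \geq 1/C^\star$ gives $\mathsf{KL}(\mathrm{Bern}(1/2)\|\mathrm{Bern}(\mu(a^\star))) \geq \mathsf{KL}(\mathrm{Bern}(1/2)\|\mathrm{Bern}(1/C^\star))$; chaining the three displays proves the claim.

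As for the main obstacle, there is no deep difficulty, so the points that most warrant care are the bookkeeping ones. The step that needs attention is the reduction $\{\hat a \neq a^\star\} \subseteq \{N(a^\star)\leq N/2\}$, making the argument robust to ties in the argmax, together with verifying that we are genuinely in the lower-tail regime $N/2 < N\mu(a^\star)$ so that the Chernoff bound applies with the stated $\mathsf{KL}$ exponent; the monotonicity of the Bernoulli KL in its second argument then converts the instance-dependent exponent into the uniform one stated in terms of $C^\star$.
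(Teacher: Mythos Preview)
Your proof is correct and follows essentially the same approach as the paper: reduce to the misidentification probability via the trivial reward bound, observe that $\{\hat a \neq a^\star\} \subseteq \{N(a^\star)\leq N/2\}$ because $N(\hat a)+N(a^\star)\leq N$, and then apply the Chernoff bound for the binomial lower tail. The only difference is that you make explicit the monotonicity step replacing $\mu(a^\star)$ by $1/C^\star$ in the KL exponent, whereas the paper absorbs this into its one-line application of Chernoff.
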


\noindent The proof is deferred to Appendix~\ref{app:proof_IL_bandit}. 

\medskip
When $C^\star\in[1, 2)$, 
one can see that the rate for the most played arm achieves an exponential dependence on $N$, whereas the upper bound for LCB is only $1/\sqrt{N}$. On the other hand, the most played arm algorithm completely fails when $C^\star>2$, while LCB still keeps the rate $1/\sqrt{N}$.

In terms of the dependence on $C^\star$,  the KL divergence above evaluates to $\log(C^\star/2) + \log(1/(C^\star-1))/2$ when the expert policy is optimal. One can see that as $C^\star\rightarrow 1$, the rate increases to the order of $1/(C^\star-1)^N$, which matches the lower bound in Theorem~\ref{thm.bandit_lower_bound} in terms of the dependence on $C^\star-1$. 

\subsection{Non-adaptivity of LCB}
One may ask whether LCB can achieve optimal rate under both cases of $C^\star\in[1, 2)$ and $C^\star\geq 2$. Unfortunately,
we show in the following theorem that  no matter how we set the parameter $\delta$ in Algorithm~\ref{alg:MAB-LCB},  LCB cannot be optimally adaptive in both regimes.
\begin{theorem}[Non-adaptivity of LCB, MAB]\label{thm:LCB_lower_bandit}
Let $C^\star =1.5$. There exists a two-armed bandit instance $(\mu_{0}, R_{0}) \in \mathsf{MAB}(C^\star)$ such that 
Algorithm~\ref{alg:MAB-LCB} with $L\coloneqq \sqrt{\log (2 |\cA| / \delta)/ 2 }$ satisfies
\begin{align*}
     \mathbb{E}_\cD[r(a^\star) - r(\hat a)] \gtrsim   \min\left(\frac{\sqrt{L}}{N}, \frac{1}{\sqrt{N}}\right)\cdot \exp(-32L).
\end{align*}
On the other hand, when $C^\star=6$, there exists $(\mu_{1}, R_{1}) \in \mathsf{MAB}(C^\star)$ such that
\begin{align*}
     \mathbb{E}_\cD[r(a^\star) - r(\hat a)] \gtrsim   \min\left(1, \sqrt{\frac{L}{N}}\right).
\end{align*}
\end{theorem}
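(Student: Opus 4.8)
```latex
\textbf{Proof proposal.} The theorem makes two claims corresponding to the two regimes $C^\star \in [1,2)$ and $C^\star \geq 2$. The key insight driving the whole argument is that Algorithm~\ref{alg:MAB-LCB} reduces to a deterministic comparison of penalized empirical means, and the penalty $b(a) = L/\sqrt{N(a)}$ depends on $L = \sqrt{\log(2|\cA|/\delta)/2}$, which is the single tunable parameter. The plan is to exhibit, for each of the two fixed values $C^\star = 1.5$ and $C^\star = 6$, a concrete two-armed instance that forces LCB into a bad tradeoff, where ``bad'' is measured against the optimal rates from Theorem~\ref{thm.bandit_lower_bound} and Proposition~\ref{thm.imitation_bandit_bound}. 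The first bound should be read as saying: if $L$ is chosen small enough to make the $\exp(-32L)$ factor benign (so LCB tries to behave like the most-played arm and exploit the fast rate), then it still only achieves a polynomial-in-$N$ rate rather than the exponential rate $(C^\star-1)^{-N}$ that is optimal at $C^\star=1.5$; the second bound says that the \emph{same} small choice of $L$ is fatal at $C^\star=6$, where a large penalty is needed to suppress rarely-pulled suboptimal arms.

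\textbf{First instance ($C^\star = 1.5$).} I would construct a two-armed bandit where $\mu(a^\star) = 2/3$ (so $1/\mu(a^\star) = 1.5$), the optimal arm $a^\star$ has a deterministic or low-variance reward, and the suboptimal arm has a reward gap $\Delta$ that is chosen on the order of the penalty scale. The central calculation is to analyze the event on which LCB errs, namely when $\hat r(a^\star) - b(a^\star) < \hat r(a_{\mathrm{sub}}) - b(a_{\mathrm{sub}})$. Because the optimal arm is drawn a constant fraction of the time, $N(a^\star) \approx 2N/3$ with high probability, so its penalty is $\approx L\sqrt{3/(2N)}$; the suboptimal arm is drawn the remaining $\approx N/3$ times, giving a comparable penalty. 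The dominant contribution to the error probability comes from the deviation of the empirical reward on the optimal arm, and I would set $\Delta \asymp L/\sqrt{N}$ so that a moderate (not exponentially rare) deviation flips the comparison. Multiplying the error probability by the gap $\Delta$ yields a sub-optimality of order $(\sqrt{L}/N)\exp(-cL)$; tracking the constant through the Gaussian/binomial tail estimate should produce exactly the stated $\min(\sqrt{L}/N, 1/\sqrt{N}) \cdot \exp(-32L)$, where the $\min$ and the specific constant $32$ arise from the worst-case balancing of $\Delta$ against the tail exponent.

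\textbf{Second instance ($C^\star = 6$).} Here I would take $\mu(a^\star) = 1/6$, so the optimal arm is pulled only $\approx N/6$ times while a suboptimal arm can be pulled more often, and set the suboptimal arm's reward to be a Bernoulli random variable whose empirical mean, combined with the small penalty $b$ induced by the fixed small $L$, can exceed the optimal arm's LCB with constant probability. This is essentially the failure mechanism already identified for the empirical-best-arm in Proposition~\ref{lem:lower_bd_largest_empirical}: when $L$ is too small the penalty cannot protect against a lucky suboptimal arm, and the analysis of that proposition (a constant-probability bad event times a constant gap) transfers directly to show an $\Omega(\min(1, \sqrt{L/N}))$ sub-optimality. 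The two bounds together show no single $L$ works: the first forces $L$ large (to recover the fast exponential rate one needs the penalty to dominate, but any fixed small $L$ gives only polynomial decay), while the second forces $L$ at least logarithmic in $N$ (as in Theorem~\ref{thm:MAB_LCB_upper}) to avoid a constant error, and these requirements are incompatible.

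\textbf{Main obstacle.} I expect the hard part to be the first instance: one must carefully couple the randomness in $N(a^\star)$, $N(a_{\mathrm{sub}})$, and the empirical rewards, and then lower-bound the error probability by a single sharp tail event rather than an aggregate union bound, since we need a \emph{lower} bound on the sub-optimality. Extracting the precise exponent $32L$ requires matching the reward gap $\Delta$ to the penalty and to the tail rate simultaneously, which is a delicate optimization; a loose tail estimate would give the right form but the wrong constant, so I would use a two-sided (anti-concentration) bound for the binomial deviation to pin it down.
```
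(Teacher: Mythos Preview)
Your high-level plan (two tailored two-arm instances, one per regime) matches the paper, but the mechanism you identify inside each instance is reversed relative to what actually works.

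\textbf{First instance ($C^\star=1.5$).} You say the dominant randomness is the deviation of the \emph{optimal} arm's empirical reward. The paper instead makes the optimal arm \emph{deterministic} (reward exactly $1/2$), so there is no deviation there at all; the sole random quantity is the sub-optimal arm's empirical mean $\hat r(a)$, which is $\mathsf{Bern}(1/2-g)$-based. The failure event is then ``$\hat r(a)$ is anomalously high,'' specifically $\hat r(a)\ge 1/2+\sqrt{L/N(a)}$, and the lower bound comes from an \emph{anti-concentration} (binomial lower tail) estimate of the form $\tfrac{1}{\sqrt{2k}}\exp(-k\cdot\mathsf{KL}(\cdot\|\cdot))$ applied conditionally on $N(a)=k$. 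Having a single source of randomness is what makes this clean; if you let the optimal arm fluctuate too, you would need a two-variable anti-concentration bound, which is messier and not what produces the $\exp(-32L)$ constant. Setting $g=\min\{1/3,\sqrt{L/(2N\mu(a))}\}$ then balances the gap against the tail exponent and yields the stated bound.

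\textbf{Second instance ($C^\star=6$).} Here you invert the roles: you make the sub-optimal arm Bernoulli and invoke the ``lucky rare arm'' mechanism from Proposition~\ref{lem:lower_bd_largest_empirical}. But with two arms and $\mu(a^\star)=1/6$, the sub-optimal arm receives $\approx 5N/6$ samples, so its empirical mean concentrates tightly---it cannot get lucky. The paper instead makes the \emph{optimal} arm $\mathsf{Bern}(1/2)$ and the sub-optimal arm deterministic at $1/2-g$. The mechanism is a \emph{penalty gap}: because $N(a^\star)\approx N/6 \ll N(a)\approx 5N/6$, one has $b(a^\star)-b(a)\gtrsim\sqrt{L/N}$, so choosing $g=\min\{1/2,\sqrt{L/(4N)}\}$ makes the event ``LCB picks $a$'' equivalent to $\hat r(a^\star)\le 1/2$, which holds with probability $\ge 1/2$. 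This is not the Proposition~\ref{lem:lower_bd_largest_empirical} mechanism (many arms, rare pulls) at all; it is the penalty over-correcting on an undersampled optimal arm.

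In short: for $C^\star=1.5$ the randomness must sit on the sub-optimal arm so you can apply a one-variable anti-concentration bound; for $C^\star=6$ the randomness sits on the optimal arm and the failure is driven by the penalty difference, not by a lucky sub-optimal draw. Swapping these, as your proposal does, would not yield the stated bounds.
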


\noindent The proof is deferred to Appendix~\ref{app:proof_LCB_lower_bandit}. 

\medskip
The theorem above can be understood in the following way: intuitively, a larger $L$ means that we put higher weight on the penalty of the arm instead of the empirical average of the arm. As $L\rightarrow \infty$, the LCB algorithm recovers the most played arm algorithm; while as $L\rightarrow 0$, the LCB algorithm recovers the empirical best  arm algorithm. 

When $C^\star\in(1, 2)$, we know from Theorem~\ref{thm.imitation_bandit_bound} that the most played arm achieves an exponential rate in $N$. In order to match the rate, we need to select $\delta$ such that $L\gtrsim N^\alpha$ for some $\alpha>0$. However, under this choice of $L$, the algorithm fails to achieve $1/\sqrt{N}$ rate when $C^\star\geq 6$, which can be done by setting $\delta = 1/N$ (and thus $L = \log(2|\mathcal{A}|N)$) according to Theorem~\ref{thm:MAB_LCB_upper}. This shows that it is impossible for LCB to achieve optimal rate under both cases of $C^\star\in(1, 2)$ and $C^\star\geq 2$ simultaneously.  

\section{LCB in contextual bandits}\label{sec:contextual_bandits}
In this section, we take the analysis one step further by studying offline learning in contextual bandits (CB). As we will see shortly, simply going beyond one state turns the tables in favor of the minimax optimality of LCB.

Formally, contextual bandits can be viewed as a special case of MDP described in Section \ref{sec:MDP_model} with $\gamma = 0$.
In CB setting, the batch dataset $\cD$ is a set of tuples $\{(s_i,a_i,r_i)\}_{i=1}^N$ sampled independently according to $(s_i,a_i) \sim \mu$, and $r_i \sim R(\cdot \mid s_i, a_i)$. Competing with an optimal policy, the data coverage assumption in the CB case simplifies to
\begin{align*}
    \max_s \frac{\rho(s)}{\mu(s,\pi^\star(s))} \leq C^\star. 
\end{align*}

The offline learning objective in CB turns into finding a policy $\hat{\pi}$ based on the batch dataset that minimizes the expected sub-optimality 
\begin{align*}
    \E_\cD[J(\pi^\star) - J(\hat{\pi})] = \E_{\cD, \rho}\left[r(s,\pi^\star(s)) - r(s,\hat{\pi}(s)) \right]. 
\end{align*}
\subsection{Algorithm and its performance guarantee}
The pessimism principle introduced in the MAB setting can be naturally extended to CB. First, the empirical expected reward is computed for all state-action pairs $(s,a) \in \cS \times \cA$ according to 
\begin{equation*}
    \hat{r}(s,a)  \coloneqq\begin{dcases}
    0, & \text{if }N(s,a)=0,\\
    \frac{1}{N(s,a)}\sum_{i=1}^{N}r_{i} \ind \{(s_i,a_i) = (s, a)\}, & \text{otherwise}.
    \end{dcases}
\end{equation*}
Pessimism is then applied through a penalty function $b(s,a)$ and for every state $s$ the algorithm returns
\begin{align*}
    \hat{\pi}(s) \in \argmax_a \hat{r}(s,a) - b(s,a).
\end{align*}
Algorithm \ref{alg:CB-LCB} generalizes the LCB instance given in Algorithm \ref{alg:MAB-LCB} to the CB setting.

\begin{algorithm}[t]
\caption{LCB for contextual bandits}
\label{alg:CB-LCB}
\begin{algorithmic}[1]
\State \textbf{Input:} Batch dataset $\cD = \{(s_{i},a_{i},r_{i})\}_{i=1}^{N}$, and confidence level $\delta$.
\State Set $N(s,a) = \sum_{i=1}^N \ind\{(s_i,a_i) = (s,a)\}$ for all $a \in \cA, s \in \cS$.
\For{$s \in \cS, a \in \cA$}
\If{$N(s,a) = 0$}
\State Compute the empirical reward $\hat{r}(s,a) \mapsfrom 0$.
\State Compute the penalty $b(s,a) = 1$.
\Else
\State Compute the empirical reward $\hat{r}(s,a) \mapsfrom \frac{1}{N(s,a)}\sum_{i=1}^{N}r_{i} \ind \{(s_{i},a_{i})=(s,a)\}$.
\State Compute the penalty $b(s,a) = \sqrt{\frac{2000\log(2S|\cA|/\delta)}{N(s,a)}}$.
\EndIf
\EndFor
\State \textbf{Return:} $\hat{\pi}(s) \in \argmax_{a} \hat{r}(s,a) - b(s,a)$ for each $s \in \cS$.
\end{algorithmic}
\end{algorithm}

The following theorem establishes an upper bound on the expected sub-optimality of the policy returned by Algorithm \ref{alg:CB-LCB}; see Appendix \ref{app:proof_LCB_CB_upper_bound} for a complete proof.
\begin{theorem}[LCB sub-optimality, CB]
\label{thm:LCB_CB_upper_bound} 
Consider a contextual bandit with $S \geq 2$ and assume that \[
\max_{s}\frac{\rho(s)}{\mu(s,\pi^\star(s))}\leq C^\star,
\] 
for some $C^\star \geq 1$. Setting $\delta = 1/N$, the policy $\hat{\pi}$ returned by Algorithm \ref{alg:CB-LCB} obeys
\begin{align*}
    \E_\cD[J(\pi^\star)-J(\hat{\pi})] \lesssim \min \left( 1, \widetilde{O} \left(\sqrt{\frac{S (C^\star-1) }{N}} + \frac{S}{N}\right) \right).
\end{align*}
\end{theorem}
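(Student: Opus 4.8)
The plan is to control the expected sub-optimality state by state and then aggregate. Since $J(\pi^\star)-J(\hat\pi)=\E_{s\sim\rho}[r(s,\pi^\star(s))-r(s,\hat\pi(s))]$, I would start from
\[
\E_\cD[J(\pi^\star)-J(\hat\pi)]=\sum_{s\in\cS}\rho(s)\,\E_\cD\big[r(s,\pi^\star(s))-r(s,\hat\pi(s))\big],
\]
and bound each per-state term $\Delta(s):=r(s,\pi^\star(s))-r(s,\hat\pi(s))\ge 0$. First I would set up the good event: writing $\iota:=\log(2S|\cA|N)$ for the choice $\delta=1/N$, Hoeffding's inequality and a union bound over the $S|\cA|$ pairs give, with probability at least $1-\delta$, that $|\hat r(s,a)-r(s,a)|\le b(s,a)$ simultaneously for all $(s,a)$ with $N(s,a)\ge 1$; the large constant $2000$ in $b$ leaves slack for the sharper tail estimates needed later. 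On this event $\hat r(s,a)-b(s,a)$ is a genuine lower confidence bound, and the failure event contributes at most $\delta\le 1/N$, which is dominated by the target rate.

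On the good event the standard pessimism argument gives the pointwise bound $\Delta(s)\le 2b(s,\pi^\star(s))$ whenever $N(s,\pi^\star(s))\ge 1$: since $\hat\pi$ maximizes $\hat r-b$, one has $\hat r(s,\hat\pi(s))-b(s,\hat\pi(s))\ge \hat r(s,\pi^\star(s))-b(s,\pi^\star(s))$, and chaining with the two-sided confidence bounds yields $r(s,\pi^\star(s))-r(s,\hat\pi(s))\le 2b(s,\pi^\star(s))$. Taking expectations, the estimate $\E_\cD\big[N(s,\pi^\star(s))^{-1/2}\ind\{N(s,\pi^\star(s))\ge 1\}\big]\lesssim 1/\sqrt{N\mu(s,\pi^\star(s))}$ together with the coverage bound $\mu(s,\pi^\star(s))\ge\rho(s)/C^\star$ and Cauchy--Schwarz ($\sum_s\sqrt{\rho(s)}\le\sqrt{S}$) would already deliver sub-optimality of order $\sqrt{SC^\star\iota/N}$. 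The states on which $\pi^\star$ is never observed are handled separately: their contribution is $\sum_s\rho(s)(1-\mu(s,\pi^\star(s)))^N\lesssim C^\star\sum_s\mu(s,\pi^\star(s))e^{-N\mu(s,\pi^\star(s))}\lesssim SC^\star/N$, and since $SC^\star/N=S/N+S(C^\star-1)/N\le S/N+\sqrt{S(C^\star-1)/N}$ after truncation at $1$, this already matches the claimed $S/N$ term.

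The crude argument above yields $\sqrt{SC^\star/N}$ rather than the advertised $\sqrt{S(C^\star-1)/N}$, and closing this gap is the crux. The key structural fact is that summing the coverage inequality gives $\sum_s\mu(s,\pi^\star(s))\ge 1/C^\star$, so the total behavior mass on sub-optimal actions, $\sum_s q(s)$ with $q(s):=\sum_{a\ne\pi^\star(s)}\mu(s,a)$, is at most $(C^\star-1)/C^\star\le C^\star-1$. The bound $\Delta(s)\le 2b(s,\pi^\star(s))$ is only active when a mistake occurs, i.e.\ $\hat\pi(s)\ne\pi^\star(s)$, and a mistake forces a sub-optimal arm to be drawn; moreover, because the penalty carries the large constant, a sub-optimal arm whose pessimistic value overturns that of $\pi^\star(s)$ must itself be sampled $\Omega(\iota)$ times, as otherwise its own penalty drives its lower confidence bound far below that of the well-sampled optimal arm. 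Thus mistakes are governed by $q(s)$, not by $\mu(s,\pi^\star(s))$ alone. I would therefore run the \emph{four-level decomposition}: (i) split over states; (ii) split on whether $\pi^\star(s)$ is adequately sampled; (iii) on the adequately-sampled event, write the sub-optimality as the product of the value difference (bounded by $b(s,\pi^\star(s))$) and the indicator of a wrong choice; and (iv) peel over the gap $\Delta_a=r(s,\pi^\star(s))-r(s,a)$, using Hoeffding to show large-gap arms essentially never win while small-gap arms contribute $O(b(s,\pi^\star(s)))$, and bounding the wrong-choice probability through $q(s)$. Aggregating with Cauchy--Schwarz then trades one factor of $C^\star$ for $\sum_s q(s)\le C^\star-1$ while producing the $\sqrt{S}$, giving $\sqrt{S(C^\star-1)/N}$, with the outer $\min(1,\cdot)$ coming from truncation.

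The main obstacle is precisely step (iv) together with the final aggregation: one must trade the value difference against the probability of selecting a sub-optimal arm so that the result is simultaneously tight in $S$, $C^\star-1$, and $N$. A purely value-based analysis discards the wrong-choice probability and is stuck at $\sqrt{SC^\star/N}$; the delicate point is to couple the random count $N(s,\pi^\star(s))$ controlling the confidence width with the event that a sub-optimal arm is both sampled enough to overturn the pessimistic comparison \emph{and} has a small enough gap to contribute, so that after summation $q(s)$ rather than $\mu(s,\pi^\star(s))$ drives the bound. Choosing the dyadic peeling of gaps and counts, controlling the cross terms, and verifying that the transition near $C^\star\approx 1+S/N$ between the $S/N$ and $\sqrt{S(C^\star-1)/N}$ regimes is absorbed, is where the bulk of the work lies; the large penalty constant is exactly what renders the large-gap arms negligible and keeps these estimates clean.
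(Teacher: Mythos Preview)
Your high-level architecture is right: the missing-mass term, the clean event, the observation $\sum_s q(s)\le (C^\star-1)/C^\star$, and the recognition that the crude $2b(s,\pi^\star(s))$ bound only gives $\sqrt{SC^\star/N}$ all match the paper. Where you diverge is in the mechanism for extracting $C^\star-1$, and your proposed route is both different from and more complicated than what the paper actually does.

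The paper's fourth level of decomposition is \emph{not} a peeling over value gaps $\Delta_a$. It is a partition of states according to whether $\mu(s,\pi^\star(s))\ge 10\,\bar\mu(s)$ or $\mu(s,\pi^\star(s))<10\,\bar\mu(s)$, where $\bar\mu(s)=\sum_{a\ne\pi^\star(s)}\mu(s,a)$. On states with $\mu(s,\pi^\star(s))\ge 10\,\bar\mu(s)$ (and $\rho(s)\ge 2C^\star L/N$, which is the paper's third level that you omit and which guarantees $\mu(s,\pi^\star(s))\ge 2L/N$), the penalty alone forces $\hat r(s,\pi^\star(s))-b(s,\pi^\star(s))>\hat r(s,a)-b(s,a)$ with probability $1-O(N^{-9})$ \emph{regardless of the value gap}: either $\mu(s,a)$ is tiny so $N(s,a)$ is small and the penalty dominates, or $\mu(s,a)$ is moderate and a two-sample Hoeffding bound on $\hat r(s,a)-\hat r(s,\pi^\star(s))$ against the penalty difference $b(s,a)-b(s,\pi^\star(s))\gtrsim\sqrt{L/N(s,a)}$ finishes it. No gap enters. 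On the complementary set $\mathcal S_3=\{s:\mu(s,\pi^\star(s))<10\,\bar\mu(s)\}$ one simply uses the crude value bound $2b(s,\pi^\star(s))$, but now $\sum_{s\in\mathcal S_3}\rho(s)\le 10C^\star\sum_s\bar\mu(s)\le 10(C^\star-1)$, so Cauchy--Schwarz against $\sqrt{\sum_{s\in\mathcal S_3}\rho(s)}$ directly yields the $\sqrt{S(C^\star-1)L/N}$ term.

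Your gap-peeling idea is in fact what the paper invokes for the much harder \emph{MDP} case (Section~\ref{sec:conjecture} and the episodic example), where the recursion $Q\leftarrow r-b+\gamma\hat P V$ means the penalty comparison no longer settles the matter and one genuinely needs to case on whether $Q^\star(s,\pi^\star(s))-Q^\star(s,a)$ is large. For contextual bandits this machinery is unnecessary, and your sketch of how the peeling would ``trade one factor of $C^\star$ for $\sum_s q(s)$'' is not obviously correct as stated: small-gap arms bounded only by $O(b(s,\pi^\star(s)))$ still need a mechanism to introduce $q(s)$ into the sum, and you have not specified one. The paper's state partition sidesteps this entirely.
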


It is interesting to note that the sub-optimality bound in Theorem \ref{thm:LCB_CB_upper_bound} consists of two terms. The first term is the usual statistical estimation rate of $1/\sqrt{N}$. The second term is due to \textit{missing mass}, which captures the suboptimality incurred in states for which an optimal arm is never observed in the batch dataset. More importantly, the dependency of the first term on data composition is $C^\star - 1$ instead of $C^\star$. When $C^\star$ is close to one, LCB enjoys a faster rate of $1/N$, reminiscent of the rates achieved by behavioral cloning in imitation learning, without the knowledge of $C^\star$ or the behavior policy. Furthermore, the convergence rate smoothly transitions from $1/N$ to $1/\sqrt{N}$ as $C^\star$ increases.

\subsection{Optimality of LCB for solving offline contextual bandits}
In this section, we establish an information-theoretic lower bound for the contextual bandit setup described above.
Define the following family of contextual bandits problems
\begin{equation*}
    \mathsf{CB}(C^\star) \coloneqq \{(\rho, \mu, R) \mid \max_{s}\frac{\rho(s)}{\mu(s,\pi^\star(s))} \leq C^\star \}.
\end{equation*}
Note that the optimal policy $\pi^\star$ implicitly depends on the reward distribution $R$.

Let $\hat \pi: \mathcal{S} \mapsto \mathcal{A}$ be an arbitrary estimator of the best arm $\pi(s)$ for any state $s$, which is a measurable function of the data $\{(s_i, a_i, r_i)\}_{i=1}^{N}$. The worst-case risk of $\hat{\pi}$ is defined as 
\begin{align*}
    \sup_{(\rho, \mu,R)\in\mathsf{CB}(C^\star)}  \E_\cD[J(\pi^{\star})-J(\hat{\pi})].
\end{align*}
We have the following minimax lower bound for offline learning in contextual bandits with $S \geq 2$; see Appendix~\ref{app:proof_CB_lower} for a proof. Note that the case of $S = 1$ is already addressed in Theorem \ref{thm.bandit_lower_bound}.
\begin{theorem}[Information-theoretic limit, CB] \label{theorem:lower_bound_offline_CB}
Assume that $S \geq 2$. For any $C^\star \geq 1$, one has
\begin{align*}
    \inf_{\hat \pi} \sup_{(\rho, \mu,R)\in\mathsf{CB}(C^\star)}  \E_\cD[J(\pi^{\star})-J(\hat{\pi})]\gtrsim \min\left(1, \sqrt{\frac{S(C^\star-1)}{N}} + \frac{S}{N}\right).
\end{align*}
\end{theorem}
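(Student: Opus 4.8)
The plan is to prove the two-term lower bound by exhibiting two separate hard families, one tuned to each term, and then invoking the elementary fact that $\max(a,b)\geq\tfrac12(a+b)$ for nonnegative $a,b$; since every instance I build lies in $\mathsf{CB}(C^\star)$, the worst-case risk dominates the risk of either family and hence their half-sum. Throughout I use a constant (or suitably chosen) reward gap together with Assouad's lemma applied across states, exploiting that the sub-optimality decomposes additively, $\E_\cD[J(\pi^\star)-J(\hat\pi)] = \sum_s \rho(s)\,\E_\cD[r(s,\pi^\star(s)) - r(s,\hat\pi(s))]$, so each state is an essentially independent two-point testing problem and the coordinatewise errors add up.

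For the estimation term $\sqrt{S(C^\star-1)/N}$, the key observation is that the coverage budget caps how many states can be made genuinely ambiguous. If in state $s$ I want to flip which of two arms is optimal \emph{purely} by changing rewards while keeping $\mu$ fixed (so the action counts carry no information about the answer), then both arms must be coverable as optimal, forcing $\mu(s,1),\mu(s,2)\geq \rho(s)/C^\star$ and hence $\mu(s)\geq 2\rho(s)/C^\star$. Since $\sum_s\mu(s)=1$, the total $\rho$-mass placed on ambiguous states cannot exceed $C^\star-1$ — this is precisely the origin of the $C^\star-1$ dependence. Concretely, for $C^\star\in(1,2)$ I put a single bulk state carrying $\rho$-mass $2-C^\star$ (well covered, never ambiguous, zero risk) and spread the remaining mass $C^\star-1$ uniformly over the other $S-1$ states, placing both of their arms at the threshold $\mu(s,a)=\rho(s)/C^\star$, which saturates the budget exactly. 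On each ambiguous state the two hypotheses differ only through a reward gap $\Delta$ on $n_s\asymp N(C^\star-1)/(SC^\star)$ samples, so the pairwise KL between neighboring instances is $\asymp n_s\Delta^2$; choosing $\Delta\asymp 1/\sqrt{n_s}$ keeps them indistinguishable, and Assouad's lemma then yields $\sum_s \rho(s)\Delta\asymp\sqrt{S(C^\star-1)/N}$. For $C^\star\geq 2$ I instead make all $S$ states ambiguous with $\rho$ uniform, dumping the leftover coverage onto a dummy never-optimal arm, which gives the matching $\sqrt{SC^\star/N}\asymp\sqrt{S(C^\star-1)/N}$.

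For the missing-mass term $S/N$, I use the pure expert instance with $C^\star=1$, which lies in $\mathsf{CB}(C^\star)$ for every $C^\star\geq 1$ by nestedness of the classes. Here the behavior policy equals the optimal policy, so on any visited state the learner observes the optimal arm and its reward, but an unvisited state leaves the learner guessing between two arms whose optimal identity is swapped across the two hypotheses. I place $S-1$ states at $\rho(s)=1/N$ — each unvisited with constant probability $e^{-N\rho(s)}\asymp 1$ and carrying a constant gap — together with one bulk state absorbing the remaining mass $1-(S-1)/N$. When such a state is unvisited the two hypotheses generate identically distributed data, so any estimator errs with probability at least $1/2$ and incurs the gap, giving total risk $\gtrsim \sum_s \rho(s)\,e^{-N\rho(s)}\asymp S/N$. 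Combining the two families and capping by $1$ (the trivial bound on the sub-optimality, which also covers the regime $N\lesssim S$ where one term already exceeds $1$) delivers $\gtrsim \min\bigl(1,\ \sqrt{S(C^\star-1)/N}+S/N\bigr)$.

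I expect the main obstacle to be the estimation-term construction, which must simultaneously (i) keep $\mu$ identical across all members of the family so that empirical action frequencies reveal nothing, (ii) respect $\max_s \rho(s)/\mu(s,\pi^\star(s))\leq C^\star$ under every relabeling of the optimal arms, and (iii) still expose $\Theta(S)$ usable ambiguous coordinates — feasible only because the coverage slack $C^\star-1$ is exactly the affordable $\rho$-mass of ambiguity. The remaining bookkeeping is to verify that the chosen $\Delta\asymp 1/\sqrt{n_s}$ stays below $1/2$ precisely in the regime $C^\star-1\gtrsim S/N$ where this term dominates, and that it dovetails cleanly with the missing-mass regime $C^\star-1\lesssim S/N$, so that the two constructions together cover the full range of $C^\star\geq 1$.
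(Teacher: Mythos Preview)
Your proposal is correct and follows essentially the same approach as the paper: the same case split on $C^\star$, the same hard-instance constructions (a never-ambiguous bulk state carrying mass $2-C^\star$ plus $\Theta(S)$ ambiguous states with $\mu(s,a)=\rho(s)/C^\star$ for $C^\star\in(1,2)$, and all states ambiguous for $C^\star\geq 2$), and the same imitation-learning missing-mass argument for the $S/N$ term. The only notable difference is that you invoke Assouad's lemma where the paper uses Fano's inequality with a Gilbert--Varshamov packing; these are interchangeable tools for hypercube-indexed families and yield the same rate.
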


Comparing Theorem~\ref{theorem:lower_bound_offline_CB} with Theorem~\ref{thm:LCB_CB_upper_bound}, one readily sees that the LCB approach enjoys a near-optimal rate in contextual bandits with $S \geq  2$, regardless of the data composition parameter $C^\star$. This is in stark contrast to the MAB case.

On a closer inspection, in the $C^\star \in [1,2)$ regime, there is a clear separation between the information-theoretic difficulties of offline learning in MAB, which has an exponential rate in $N$, and CB with at least 2 states, which has a $1/N$ rate. The reason behind this separation is the possibility of missing mass when $S \geq 2$. Informally, when there is only one state, the probability that an optimal action is never observed in the dataset decays exponentially. On the other hand, when there are more than one states, the probability that an optimal action is never observed for at least one state decays with the rate of $1/N$.
\medskip

Assume hypothetically that we are provided with the knowledge that $C^\star \in (1,2)$. Recall that with such a knowledge, the most played arm achieves a faster rate in the MAB setting. Under this circumstance, one might wonder whether simply picking the most played arm in every state also achieves a fast rate in the CB setting. Strikingly, the answer is negative as the following proposition shows that the most played arm fails to achieve a vanishing rate when $C^\star\in(1, 2)$. The proof of this theorem is deferred to Appendix~\ref{app:proof_CB_IL_lower}.
\begin{prop}[Failure of the most played arm, CB]\label{thm:CB_IL_lower}
For any $C^\star\in(1, 2)$, there exists a contextual bandit problem $(\rho, \mu,R)\in\mathsf{CB}(C^\star)$  such that for the policy $\hat \pi(s) = \argmax_{a} N(s, a)$,  
\begin{align*}  \lim_{N\rightarrow \infty}\E_\cD[J(\pi^{\star})-J(\hat{\pi})] \geq C^\star-1.
\end{align*}
\end{prop}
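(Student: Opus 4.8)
The plan is to exhibit a two–state instance (which suffices, since we only need $S\ge 2$) on which the plurality rule $\hat\pi(s)=\argmax_a N(s,a)$ is fooled into selecting a sub-optimal action in a state carrying $\rho$-mass arbitrarily close to $C^\star-1$. Concretely, I would take $\cS=\{1,2\}$ and $\cA=\{g,b\}$ with deterministic rewards $r(\cdot,g)=1$ and $r(\cdot,b)=0$, so that $\pi^\star(s)=g$ and the per-state value gap is $1$. Place the evaluation mass as $\rho(1)=C^\star-1-\eta$ and $\rho(2)=2-C^\star+\eta$ for a small parameter $\eta>0$. On the ``good'' state $2$, let the behavior distribution put all its mass on the optimal action, $\mu(2,g)=\rho(2)/C^\star$ and $\mu(2,b)=0$. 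On the ``bad'' state $1$, play the optimal action only at the minimal rate the coverage assumption permits, $\mu(1,g)=\rho(1)/C^\star$, and give the sub-optimal action a strictly larger mass $\mu(1,b)=\rho(1)/C^\star+m$. A short bookkeeping check shows that the total mass equals one precisely when $m=\eta/C^\star$, so the sub-optimal action in the bad state is played with a margin of $\Theta(\eta)$ over the optimal one.

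Next I would verify $(\rho,\mu,R)\in\mathsf{CB}(C^\star)$ by checking $\rho(s)/\mu(s,\pi^\star(s))\le C^\star$ for $s\in\{1,2\}$; both reduce to $\rho(s)/(\rho(s)/C^\star)=C^\star$ by construction, so the instance sits exactly on the boundary of the family. I would then split the sub-optimality state by state. Since the gap is $1$, and state $2$ only ever reveals the optimal action (so $\hat\pi(2)=g$ whenever state $2$ is sampled, while the event that it is never sampled has probability $(1-\mu(2))^N\to 0$), the only surviving contribution is $\E_\cD[J(\pi^\star)-J(\hat\pi)]=\rho(1)\,\mathbb{P}(\hat\pi(1)=b)+o(1)$, and the task reduces to lower-bounding the probability that the plurality rule selects $b$ in the bad state.

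Conditioned on the $n_1\sim\Binomial(N,\mu(1))$ visits to state $1$, the split between $g$ and $b$ is binomial with the conditional probability of $b$ equal to $\tfrac12+\Theta(\eta)$; here $\mu(1)=2\rho(1)/C^\star+m=\Theta(1)$, so $n_1=\Theta(N)$ with high probability. A Chernoff/CLT argument then gives $N(1,b)>N(1,g)$, hence $\hat\pi(1)=b$, with probability tending to one provided $N\eta^2\to\infty$. Letting the margin vanish slowly, for instance $\eta=\eta_N=N^{-1/4}$ (so that $N\eta_N^2=\sqrt N\to\infty$ while $\eta_N\to 0$), yields $\mathbb{P}(\hat\pi(1)=b)\to 1$ and $\rho(1)\to C^\star-1$, whence $\E_\cD[J(\pi^\star)-J(\hat\pi)]\to C^\star-1$.

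The hard part is a genuine feasibility tension rather than any calculation. The two budget constraints $\sum_s\rho(s)=1$ and $\sum_{s,a}\mu(s,a)=1$, combined with $\mu(s,\pi^\star(s))\ge\rho(s)/C^\star$, force the total $\rho$-mass of states in which the sub-optimal action can be made \emph{strictly} more frequent than the optimal one to be strictly below $C^\star-1$: putting a margin-$m$ surplus on $b$ in a bad state of mass $\rho$ costs data mass $2\rho/C^\star+m$, and summing against the mandatory $\rho(s)/C^\star$ reserved for the optimal action elsewhere leaves no room once $\rho(1)=C^\star-1$. Consequently no single fixed instance attains the limit $C^\star-1$ (at the boundary the bad state degenerates to a tie, which only yields $(C^\star-1)/2$), and one must let the dominance margin $m=\eta_N/C^\star$ shrink with $N$. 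The delicate point is to calibrate $\eta_N\to 0$ slowly enough that $\rho(1)\to C^\star-1$, yet fast enough ($\eta_N\gg N^{-1/2}$) that the margin stays statistically detectable so the plurality rule is fooled with probability approaching one; balancing these two requirements is exactly what pins the bound at $C^\star-1$.
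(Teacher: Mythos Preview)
Your construction coincides with the paper's: after relabeling, your instance with parameter $\eta$ is exactly the paper's instance with parameter $\epsilon$, namely $\rho(s_0)=C^\star-1-\epsilon$, $\mu(s_0,a_0)=(C^\star-1-\epsilon)/C^\star$, $\mu(s_0,a_1)=(C^\star-1)/C^\star$, and $\mu(s_1)=(2-C^\star+\epsilon)/C^\star$.

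Where you depart is the final limiting step. You correctly note that no \emph{fixed} instance attains the limit $C^\star-1$ exactly (at $\eta=0$ the two actions at the bad state are $\mu$-tied and plurality only errs with probability about $1/2$), and you then let $\eta=\eta_N=N^{-1/4}$ shrink with $N$. But this does not match the proposition's quantifier order, which fixes $(\rho,\mu,R)$ first and only then sends $N\to\infty$; an instance that drifts with $N$ is not what is being asked for. The paper's handling is simpler and standard: fix $\epsilon>0$; since $\mu(s_0,a_1)-\mu(s_0,a_0)=\epsilon/C^\star$ is a \emph{constant} margin, the strong law of large numbers alone gives $\hat\pi(s_0)=a_1$ with probability tending to $1$, hence $\lim_{N\to\infty}\E_\cD[J(\pi^\star)-J(\hat\pi)]=\rho(s_0)=C^\star-1-\epsilon$; then let $\epsilon\to 0$. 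Your Chernoff calibration $N\eta^2\to\infty$ is unnecessary once the instance is held fixed. Both the paper's ``$\epsilon\to 0$'' and your drifting-instance device reflect the same underlying point you identify --- the stated ``$\geq C^\star-1$'' is really a supremum over instances --- but the paper's route is the conventional shorthand and avoids the objection that $(\rho,\mu,R)$ varies with $N$.
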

We briefly describe the intuition here. Under concentrability assumption, we can move at most $C^\star-1$ mass from $d^\star$ to  sub-optimal actions. Thus we can design a specific contextual bandit instance such that a $C^\star-1$ fraction of the states pick wrong actions by choosing the most played arm instead. This shows that even when $C^\star \in (1, 2)$, the most played arm approach for CB does not have a decaying rate in $N$, whereas in the MAB case it converges exponentially fast.

\subsection{Architecture of the proof}\label{subsec:CB_upper_anlaysis}
We pause to lay out the main steps to prove the upper bound in Theorem~\ref{thm:LCB_CB_upper_bound}. 
It is worth pointing out that following the MAB sub-optimality analysis as detailed in Appendix~\ref{app:proof_MAB_LCB_upper} only yields a crude upper bound of $\sqrt{C^\star S/N}+S/N$ on the sub-optimality of $\hat \pi$. When $C^\star$ is close to one, i.e., when we have access to a nearly-expert dataset, such analysis only gives a $\sqrt{S/N}$ rate. This rate is clearly worse than the rate $S/N$ achieved by the imitation learning algorithms. Therefore, special considerations are required for analyzing the sub-optimality of LCB in contextual bandits in order to establish the tight dependence of $\sqrt{(C^\star-1) S/N} + S/N$ instead of $\sqrt{C^\star S/N}$.

We achieve this goal by directly analyzing the policy sub-optimality via a gradual decomposition of the sub-optimality of $\hat{\pi}$ as illustrated in Figure~\ref{fig:sub_opt_decomposition}. The decomposition steps are described below.

\begin{figure}[t]
    \centering
    \scalebox{0.95}{
    \begin{tikzpicture}[observed/.style={circle, draw=black, fill=black!10, thick, minimum size=10mm},
    hidden/.style={circle, draw=black, thick, minimum size=10mm},
    squarednode/.style={rectangle, draw=red!60, fill=red!5, very thick, minimum size=10mm},
    treenode/.style={rectangle, draw=none, thick, minimum size=10mm},
    rootnode/.style={rectangle, draw=none, thick, minimum size=10mm},
    squarednode/.style={rectangle, draw=none, fill=citeColor!20, very thick, minimum size=7mm},decoration={brace},]
    \def \x {0}
    \def \h {0.7}
    \def \l {0.5}
    \def \w {3}
    \def \r {2.5}
    \node[rootnode]   (n1)   at(\x-10pt,0) {$\E_{\cD}[J(\pi^\star) - J(\hat{\pi})]$};
    \node[treenode]     (n2)    at(\x+\w,0+\h)   {$N(s,\pi^\star(s)) = 0$};
    \node[citeColor]     (n3)    at(\x+\w+1.9, 0+\h)   {\large $\rightarrow T_1$};
    \node[treenode]     (n4)    at(\x+\w,0-\h)   {$N(s,\pi^\star (s)) \geq 1$};
    \draw [very thick,decorate,decoration={calligraphic brace,amplitude=10pt}] 
    ([yshift=0]n4.west) -- ([yshift=0]n2.west);
    \node[treenode]     (n5)    at(\x+2*\w-0.65,0)   {$\ind \{\cE^c\}$}; 
    \node[citeColor]       (n6)    at([xshift=12]n5.east)   {\large $\rightarrow T_2$};
    \node[treenode]     (n7)    at(\x+2*\w-0.7,0-2*\h)    {$\ind \{\cE\}$}; 
    \draw [very thick,decorate,decoration={calligraphic brace,amplitude=10pt}] 
    ([yshift=0]n7.west) -- ([yshift=0]n5.west);
    \node[treenode]     (n8)    at(\x+2*\w+1.4,0-\h) {$\rho(s) < \frac{2C^\star L}{N}$};  
    \node[citeColor]    (m8)    at([xshift=12]n8.east) {\large $\rightarrow T_3$};
    \node[treenode]     (n9)    at(\x+2*\w+1.4,0-3*\h)  {$\rho(s) \geq \frac{2C^\star L}{N}$};
    \draw [very thick,decorate,decoration={calligraphic brace,amplitude=10pt}] 
    ([yshift=0]n9.west) -- ([yshift=0]n8.west);
    \node[treenode]     (n14)    at(\x+3.35*\w+0.5,0-3*\h+0.5*\h) {$\mu(s,\pi^\star(s)) < 10   \overline{\mu}(s)$};  
    \node[citeColor]      (n15)    at(\x+3.35*\w+2.7,0-3*\h+0.5*\h)   {\large $\rightarrow T_4$};
    \node[treenode]     (n16)    at(\x+3.35*\w+0.5,0-3*\h-0.5*\h)  {$\mu(s,\pi^\star(s)) \geq 10  \overline{\mu}(s)$};
    \node[citeColor]       (m18)    at(\x+3.35*\w+2.7,0-3*\h-0.5*\h)   {\large $\rightarrow T_5$};
    \draw [very thick,decorate,decoration={calligraphic brace,amplitude=5pt}] 
    ([yshift=0]n16.west) -- ([yshift=0]n14.west);
    \end{tikzpicture}}
    \caption{Decomposition of the sub-optimality of the policy $\hat \pi$ returned by Algorithm \ref{alg:CB-LCB}.}
    \label{fig:sub_opt_decomposition}
\end{figure}

\paragraph{First level of decomposition.}
In the first level of decomposition, we separate the error based on whether $N(s,\pi^\star(s))$ is zero for a certain state $s$. 
When $N(s,\pi^\star(s)) = 0$, there is absolutely no basis for the LCB approach to figure out the correct action $\pi^\star(s)$. 
Fortunately, this type of error, incurred by \emph{missing mass}, can be bounded by 
\begin{align}
    T_1 \lesssim \frac{C^\star S}{N}.
\end{align}
From now on, we focus on the case in which the expert action $\pi^\star(s)$ is seen for every state $s$. 

\paragraph{Second level of decomposition.}
The second level of decomposition hinges on the following clean/good event:
\begin{equation}\label{eq:event-CB_main_text}
\mathcal{E}\coloneqq\{\forall s,a: \; |r(s,a)-\hat{r}(s,a)|\leq b(s,a)\}.    
\end{equation}
In words, the event $\cE$ captures the scenario in which the penalty function provides valid confidence bounds for every state-action pair.
Standard concentration arguments tell us that $\cE$ takes place with high probability, i.e.,  the term $T_2$ in the figure is no larger than $\delta$. By setting $\delta$ small, say $1/N$, we are allowed to concentrate on the case when $\cE$ holds. 

\paragraph{Third level of decomposition.}
The third level of decomposition relies on the observation that states with small weights (i.e., $\rho(s)$ is small) have negligible effects on the sub-optimality $J(\pi^\star) - J(\hat \pi)$. More specifically, the aggregated contribution $T_3$ from the states with $\rho(s) \lesssim \frac{C^\star L}{N}$ is upper bounded by
\begin{align}
    T_3 \lesssim \frac{C^\star SL}{N}.
\end{align}
This allows us to focus on the states with large weights. We record an immediate consequence of large $\rho(s)$ and the data coverage assumption, that is $\mu(s, \pi^\star(s)) \geq \rho(s) / C^\star \asymp L / N$.

\paragraph{Fourth level of decomposition.}
Now comes the most important part of the error decomposition, which is not present in the MAB analysis. 
We decompose the error based on whether the optimal action has a higher data probability $\mu(s,\pi^\star(s))$ than the total  probability of sub-optimal actions $\overline{\mu}(s) \coloneqq \sum_{a \neq \pi^\star(s)} \mu(s, a)$.
In particular, when $\mu(s,\pi^\star(s)) < 10 \overline{\mu}(s)$, we can repeat the analysis of MAB and show that 
\begin{align*}
    T_4 \lesssim \sqrt{\frac{S(C^\star - 1)L}{N}}.
\end{align*}
Here, the appearance of $C^\star -1$, as opposed to $C^\star$ is due to the restriction $\mu(s,\pi^\star(s)) < 10 \overline{\mu}(s)$. 
One can verify that $\mu(s,\pi^\star(s)) < 10 \overline{\mu}(s)$ together with the data coverage assumption ensures that 
\[
\sum_{s:\rho(s)\geq {2C^\star L} / {N}, \mu(s,\pi^\star(s)) < 10   \overline{\mu}(s)} \rho (s) \lesssim C^\star - 1.
\]
On the other hand, when $\mu(s,\pi^\star(s)) \geq 10 \overline{\mu}(s)$, i.e., when the optimal action is more likely to be seen in the dataset, the penalty function $b(s,\pi^\star(s))$ associated with the optimal action would be much smaller than those of the sub-optimal actions. Thanks to the LCB approach, the optimal action will be chosen with high probability, i.e., $T_5 \lesssim 1/ N^{10}$.

Putting the pieces together, we arrive at the desired rate $O(\sqrt{\frac{S(C^\star - 1)}{N}} + \frac{S}{N})$.

\section{LCB in Markov decision processes}\label{sec:LCB_MDP}
Now we are ready to instantiate the LCB principle to the full-fledged Markov decision process. 
We propose a variant of value iteration with LCB (VI-LCB) in Section~\ref{sec:VI-LCB} and present its performance guarantee in Section~\ref{sec:VI-LCB-upper}. 
Section \ref{sec:MDP-lower} is devoted to the information-theoretic lower bound for offline learning in MDPs, which leaves us with a regime in which 
it is currently unclear whether LCB for MDP is optimal or not. 
However, we conjecture that VI-LCB is optimal for all ranges of $C^\star$. We conclude our discussion in Section \ref{sec:conjecture} with an explanation about the technical difficulty of closing the gap and a preview to a simple episodic example where we manage to prove the optimality of LCB with a rather \emph{intricate} analysis.

\paragraph{Additional notation.} We present the algorithm and results in this section with the help of some matrix notation for MDPs. For a function $f: \cX \mapsto \R$, we overload the notation and write $f \in \R^{|\cS|}$ to denote a vector with elements $f(x)$, e.g., $V, Q,$ and $r$. We write $P \in \R^{S|\cA| \times S}$ to represent the probability transition matrix whose $(s,a)$-th row denoted by $P_{s,a}$ is a probability vector representing $P(\cdot \mid s,a)$. We use $P^\pi \in \R^{S|\cA| \times S|\cA|}$ to denote a transtion matrix induced by policy $\pi$ whose $(s,a) \times (s',a')$ element is equal to $P(s'|s,a) \pi(a'|s')$. We write $\rho^\pi \in \R^{S|\cA|}$ to denote the initial distribution induced by policy $\pi$ whose $(s,a)$ element is equal to $\rho(s) \pi(a|s)$. 

\subsection{Offline value iteration with LCB}\label{sec:VI-LCB}
Our algorithm design builds upon the classic value iteration algorithm. 
In essence, value iteration updates the value function $V \in \mathbb{R}^{S}$ using
\begin{align*}
    Q(s,a) &\mapsfrom r(s,a) + \gamma P_{s,a} \cdot V, \qquad \text{for all }(s,a), \\
    V(s) &\mapsfrom \max_{a} Q(s,a), \qquad \text{for all }s.
\end{align*}
Note, however, with offline data, we do not have access to the expected reward $r(s,a)$ and the true transition dynamics $P_{s,a}$. 
One can naturally replace them with the empirical counterparts $\hat{r}(s,a)$ and $\hat{P}_{s,a}$ estimated from offline data $\cD$, and arrive at the empirical value iteration:
\begin{align*}
    Q(s,a) &\mapsfrom \hat{r}(s,a) + \gamma \hat{P}_{s,a} \cdot V, \qquad \text{for all }(s,a), \\
    V(s) &\mapsfrom \max_{a} Q(s,a), \qquad \text{for all }s.
\end{align*}
Mimicking the algorithmic design for MABs and CBs, we can subtract a penalty function $b(s,a)$ from the $Q$ update as the finishing touch, which yields the value iteration algorithm with LCB:
\begin{align}\label{eq:base-VI-LCB}
    Q(s,a) &\mapsfrom \hat{r}(s,a) - b(s,a) + \gamma \hat{P}_{s,a} \cdot V, \qquad \text{for all }(s,a), \\
    V(s) &\mapsfrom \max_{a} Q(s,a), \qquad \text{for all }s.
\end{align}

\begin{algorithm}[t]
\caption{Offline value iteration with LCB (VI-LCB)}
\begin{algorithmic}[1]
\State \textbf{Inputs:} Batch dataset $\cD$, discount factor $\gamma$, and confidence level $\delta$.
\State Set $T \coloneqq \frac{\log N}{1-\gamma}$.
\State Randomly split $\cD$ into $T+1$ sets $\cD_t = \{(s_i, a_i, r_i, s_i')\}_{i=1}^m$ for $t \in \{0, 1, \dots, T\}$ with $m \coloneqq N / (T+1)$.
\State Set $m_0(s,a) \coloneqq \sum_{i=1}^m \ind \{(s_i, a_i) = (s,a)\}$ based on dataset $\cD_0$.
\State For all  $a \in \cA$ and $s \in \cS$, initialize $Q_0(s,a) = 0 $, $V_0(s) = 0$ and set $\pi_0(s)=\argmax_{a}m_0(s, a)$. 
\For{$t = 1, \dots, T$}
\State Initialize $r_t(s,a) = 0$ and set $P^t_{s,a}$ to be a random probability vector.
\State Set $m_t(s,a) \coloneqq \sum_{i=1}^m \ind \{(s_i, a_i) = (s,a)\}$ based on dataset $\cD_t$.
\State Compute penalty $b_t(s,a)$ for $L = 2000\log(2 (T+1)S|\cA|/\delta)$
\begin{align}\label{eq:reward_adjustment_hoeffding}
    b_t(s,a) \coloneqq V_{\max} \cdot \sqrt{\frac{L}{m_t(s,a) \vee 1}}.
\end{align}

\For{$(s,a) \in (\cS, \cA)$}
    \If{$m_t(s,a) \geq 1$}
        \State Set $P^t_{s,a}$ to be empirical transitions and $r_t(s,a)$ be empirical average of rewards.
    \EndIf
    \State Set $Q_{t}(s,a) \mapsfrom r_{t}(s,a) - b_t(s,a) + \gamma P_{s,a}^t \cdot V_{t-1}$.
\EndFor
\State Compute $V^{\text{mid}}_{t} \leftarrow \max_a Q_{t}(s,a)$ and $\pi^{\text{mid}}_{t}(s) \in \argmax_a Q_{t}(s,a)$.
\For{$s \in \cS$}
\label{line:imposing_monotonicity}\If{$V^{\text{mid}}_{t}(s) \leq V_{t-1}(s)$} $V_{t}(s) \leftarrow  V_{t-1}(s)$ and $ \pi_{t}(s) \leftarrow \pi_{t-1}(s)$.
\Else \; $V_{t}(s) \leftarrow V^{\text{mid}}_{t}(s)$ and $\pi_{t}(s) \leftarrow \pi^{\text{mid}}_{t}(s)$.
\EndIf 
\EndFor
\EndFor
\State \textbf{Return} $\hat{\pi} \coloneqq \pi_T$.
\end{algorithmic}
\label{alg:OVI-LCB-DS}
\end{algorithm}

Algorithm~\ref{alg:OVI-LCB-DS} uses the update rule~\eqref{eq:base-VI-LCB} as its key component as well as a few other tricks:

\begin{itemize}[leftmargin=*]
    \item Data splitting: Instead of using the full offline data $\cD = \{ (s_i, a_i, r_i, s_i')\}_{i=1}^N$ to form the empirical estimates $\hat{r}(s,a)$ and $\hat{P}_{s,a}$, Algorithm~\ref{alg:OVI-LCB-DS} deploys data splitting where each iteration~\eqref{eq:base-VI-LCB} uses different samples to perform the update. This procedure is not needed in practice, however it is helpful in alleviating the dependency issues in the analysis, resulting in the removal of an extra factor of $S$ in the sample complexity.
    
    \item Monotonic update: Unlike traditional value iteration methods, Algorithm~\ref{alg:OVI-LCB-DS} involves a monotonic improvement step, in which the value function $V$ and the policy $\pi$ are updated only when the corresponding value function is larger than that in the previous iteration. This extra step was first proposed in the work~\citet{sidford2018near} for reinforcement learning with access to a generative model. In a nutshell, the key benefit of the monotonic update is to shave a $1/(1-\gamma)$ factor in the sample complexity; we refer the interested reader to the original work~\citet{sidford2018near} for further discussions on this step.
\end{itemize}

\subsection{Performance guarantees of VI-LCB}\label{sec:VI-LCB-upper}

Now we turn to the performance guarantee for the VI-LCB algorithm (cf.~Algorithm~\ref{alg:OVI-LCB-DS}).
\begin{theorem}[LCB sub-optimality, MDP]\label{thm:MDP_upperbound_hoffding} 
Consider a Markov decision process and assume that 
\begin{align*}
    \max_{s,a} \frac{{d}^\star(s,a)}{\mu(s,a)} \leq C^\star. 
\end{align*}
Then, for all $C^\star \geq 1$, policy $\hat{\pi}$ returned by Algorithm \ref{alg:OVI-LCB-DS} with $\delta = 1/N$ achieves
\begin{align}
    \E_\cD \left[J(\pi^\star) - J(\hat{\pi})\right]\lesssim \min \left(\frac{1}{1-\gamma}, \sqrt{\frac{SC^\star}{(1-\gamma)^5 N}}\right).
    \label{eq:MDP-upper-bound-case-1}
\end{align}
In addition, if $1 \leq C^\star \leq 1+\frac{L\log(N)}{200(1-\gamma)N}$, we have a tighter performance upper bound
\begin{align}\label{eq:MDP-upper-bound-case-2}
    \E_\cD \left[J(\pi^\star) - J(\hat{\pi})\right] \lesssim \min \left(\frac{1}{1-\gamma}, \frac{S}{(1-\gamma)^4N} \right).
\end{align}
\end{theorem}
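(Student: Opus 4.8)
The plan is to run the standard pessimistic value-iteration analysis while carefully exploiting the data-splitting structure, and then to graft on a separate missing-mass argument to obtain the faster rate in the near-expert regime. First I would set up the good event. Because $\cD$ is split into fresh batches $\cD_0,\dots,\cD_T$ and iteration $t$ only touches $\cD_t$, the iterate $V_{t-1}$ is independent of $\cD_t$; conditioning on $V_{t-1}$, the quantity $r_t(s,a)+\gamma P^t_{s,a}\cdot V_{t-1}$ is an empirical average of $m_t(s,a)$ bounded terms with mean $r(s,a)+\gamma P_{s,a}\cdot V_{t-1}$. Hoeffding together with the choice $b_t(s,a)=V_{\max}\sqrt{L/(m_t(s,a)\vee 1)}$ and a union bound over $(s,a)$, $t$, and $\delta=1/N$ yields a good event $\cE$ of probability at least $1-\delta$ on which $|r_t(s,a)+\gamma P^t_{s,a}V_{t-1}-r(s,a)-\gamma P_{s,a}V_{t-1}|\le b_t(s,a)$ for all $s,a,t$. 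I would also record the routine count-concentration fact that $m_t(s,a)\gtrsim m\mu(s,a)$ whenever $m\mu(s,a)\gtrsim L$, so that $b_t(s,a)\lesssim V_{\max}\sqrt{L/(m\mu(s,a))}$ on the states that matter.

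Next, on $\cE$ I would establish the two pillars of the pessimism argument. For pessimism: subtracting $b_t$ makes each $Q_t$ an underestimate of the one-step look-ahead, and combined with the monotonic update of Line~\ref{line:imposing_monotonicity} this propagates to $V_T(s)\le V^{\hat\pi}(s)$ for all $s$ --- this is exactly where the monotonicity trick of \citet{sidford2018near} is used to avoid an extra horizon factor. For the penalty bound: comparing $Q_t(s,\pi^\star(s))$ against $Q^\star(s,\pi^\star(s))$ on $\cE$ gives the one-step recursion $V^\star(s)-V_t(s)\le \gamma P_{s,\pi^\star(s)}(V^\star-V_{t-1})+2b_t(s,\pi^\star(s))$; unrolling over the $T=\log N/(1-\gamma)$ iterations and using $\gamma^T\le 1/N$ to discard the truncation term yields $V^\star(s)-V_T(s)\lesssim \E_{\pi^\star}\!\big[\sum_{k\ge 0}\gamma^k\,2b(s_k,a_k)\mid s_0=s\big]+V_{\max}/N$. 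Combining the two pillars and averaging over $\rho$ reduces the sub-optimality to the occupancy-weighted penalty sum $\tfrac{1}{1-\gamma}\sum_{s,a}d^\star(s,a)\,b(s,a)$, up to the lower-order $V_{\max}/N$.

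For the first (general $C^\star$) bound I would then estimate this penalty sum directly. Plugging in $b(s,a)\lesssim V_{\max}\sqrt{L/(m\mu(s,a))}$ and applying Cauchy--Schwarz gives $\sum_{s,a}d^\star(s,a)/\sqrt{\mu(s,a)}\le \sqrt{C^\star}\sum_{s,a}\sqrt{d^\star(s,a)}\le \sqrt{C^\star S}$, where the first step uses $d^\star/\mu\le C^\star$ and the second uses that $d^\star$ is supported on at most $S$ pairs together with $\sum_{s,a}d^\star(s,a)=1$. Collecting the factor $V_{\max}=(1-\gamma)^{-1}$ from the penalty, the $(1-\gamma)^{-1}$ from the effective horizon, and $m=N/(T+1)\asymp N(1-\gamma)/\log N$ produces $\sqrt{SC^\star/((1-\gamma)^5N)}$ up to logarithmic factors, while the trivial $V_{\max}$ ceiling supplies the $\min$.

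The harder part is the near-expert bound~\eqref{eq:MDP-upper-bound-case-2}, where the generic estimate only yields the slow $1/\sqrt N$ rate (it carries $\sqrt{C^\star}$, not $\sqrt{C^\star-1}$) and one must instead isolate the \emph{missing-mass} error. When $C^\star\le 1+O(1/N)$ the concentrability constraint forces at least a $1/C^\star\ge 1-O(1/N)$ fraction of the data mass onto the optimal actions, so $\mu(s,\pi^\star(s))\ge d^\star(s,\pi^\star(s))/C^\star$ is large precisely on the high-occupancy states and the total off-optimal mass is at most $C^\star-1=O(1/N)$. I would decompose the sub-optimality according to whether $\pi^\star(s)$ is ever observed: the states where it is missed contribute at most $\sum_s d^\star(s)\,(1-\mu(s,\pi^\star(s)))^m$ times the relevant horizon factor, and bounding each summand by $\max_x x\,e^{-mx/C^\star}=C^\star/(em)$ over the at most $S$ occupied states, then substituting $m\asymp N(1-\gamma)/\log N$ and the appropriate powers of $V_{\max}$, collapses this to the claimed $S/((1-\gamma)^4N)$ rate; the complementary on-observed-states error is controlled by a contextual-bandit-style decomposition (as in Section~\ref{subsec:CB_upper_anlaysis}) that replaces $C^\star$ by the off-optimal mass $C^\star-1$, making it $O(\sqrt S/N)$ and hence dominated. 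The main obstacle I anticipate is exactly this bookkeeping that keeps the missing-mass term at a genuine $1/N$ rate: one must ensure that the monotonic-update policy never reverts to a bad action on a missed state, that the initialization $\pi_0$ and the per-iteration freshness of the split data do not reintroduce an extra horizon factor, and that the $C^\star-1$ mass bound is threaded correctly through the value-difference recursion.
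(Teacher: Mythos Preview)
Your treatment of the general $C^\star\ge 1$ bound is essentially the paper's proof: the clean event via data splitting and Hoeffding, the pessimism/monotonicity argument, the one-step recursion unrolled to an occupancy-weighted penalty sum, and Cauchy--Schwarz with $d^\star/\mu\le C^\star$ to obtain $\sqrt{SC^\star}$. This part is fine.

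The near-expert bound, however, has a conceptual gap. You propose to decompose by whether $\pi^\star(s)$ is observed and then handle the on-observed states by a ``contextual-bandit-style decomposition that replaces $C^\star$ by $C^\star-1$,'' with the $C^\star-1$ mass ``threaded correctly through the value-difference recursion.'' This does not work in the MDP setting, and the paper flags exactly this obstacle in Section~\ref{sec:conjecture}: the value-difference bound charges the penalty $b_t(s,\pi^\star(s))$ at every state along the $d^\star$-trajectory regardless of whether $\hat\pi(s)=\pi^\star(s)$ there, so restricting to observed optimal actions cannot convert the $\sqrt{C^\star}$ into $\sqrt{C^\star-1}$ or faster. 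The CB trick works only because the CB sub-optimality vanishes state-by-state whenever $\hat\pi(s)=\pi^\star(s)$; in the MDP case, errors propagate through $V_{t-1}$.

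The paper's fix is to \emph{abandon} the value-difference recursion for this regime and first apply a performance-difference reduction (Lemma~\ref{lemma:reduction_imitation}): $J(\pi^\star)-J(\hat\pi)\le V_{\max}^2\,\E_{s\sim d_{\pi^\star}}[\ind\{\hat\pi(s)\neq\pi^\star(s)\}]$. This turns the problem into bounding the \emph{policy disagreement} probability, which is then handled directly: on states with $d_{\pi^\star}(s)\gtrsim C^\star L/m$ one has $\mu(s,\pi^\star(s))\ge 2L/m$ while every suboptimal action has $\mu(s,a)\le 1-1/C^\star\le L/(200m)$, so a raw comparison of penalty sizes (not a value recursion) forces $Q_t(s,\pi^\star(s))>Q_t(s,a)$ with probability $1-O(N^{-9})$; the low-occupancy states and the missing-mass term each contribute $O(SC^\star L/((1-\gamma)^2 m))$. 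The initialization $\pi_0$ (most-played arm) and the monotonic update are analyzed separately to ensure no reversion to a bad action. Your closing paragraph anticipates these bookkeeping items, but the key step you are missing is the switch from the value-difference lemma to the policy-disagreement lemma before any of that bookkeeping can begin.
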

We will shortly provide a proof sketch of Theorem \ref{thm:MDP_upperbound_hoffding}; a complete proof is deferred to Appendix~\ref{app:proof_MDP_upperbound}. The upper bound shows that for all regime of $C^\star\geq 1$, we can guarantee a rate of $\widetilde O(\sqrt{SC^\star/((1-\gamma)^5N)})$, which is similar to the rate of contextual bandit when the $C^\star=1+\Omega(1)$ by taking $\gamma=0$. When $C =1+O(\log(N)/N)$, we can show a rate $S/((1-\gamma^4)N)$, which also recovers the result in contextual bandit case. However, in the regime of $C^\star\in[1+\Omega(\log(N)/N), 1+O(1)]$, contextual bandit gives $\sqrt{S(C^\star-1)/N}$, while we fail to give the same dependence on $C^\star$ in this case. We defer the further discussion on the sub-optimality of this regime to Section \ref{sec:conjecture}.

\begin{remark} Relaxation of the concentrability assumption is possible by allowing the ratio to hold only for a subset $\cC$ of state-action pairs and characterizing the sub-optimality incurred by $(s,a) \in \cC$ via a missing mass analysis dependent on a constant $\xi$ such that $\sum_{(s,a) \not \in \cC} d^\star(s,a) \leq \xi$.
\end{remark}

\paragraph{Proof sketch for Theorem \ref{thm:MDP_upperbound_hoffding}.}

For the general case of $C^\star\geq 1$, we first define the clean event of interest as below. 
\begin{align}\label{def:clean_event}
    \cE_{\text{MDP}} \coloneqq \left\{ \forall s, a, t: \; \; \middle| r(s,a) - r_t(s,a) + \gamma  \left( P_{s,a} - P^t_{s,a} \right) \cdot V_{t-1} \middle| \leq b_t(s,a)  \right\}.
\end{align}
In words, on the event $\cE_{\text{MDP}}$, the  penalty function $b_t(s,a)$ well captures the statistical fluctuations of the Q-function estimate $r_t(s,a) + \gamma P_{s,a}^{t}\cdot V_{t-1}$. The following lemma
shows that this event happens with high probability. The proof is postponed to Appendix \ref{app:proof_penalty_LCB}.
\begin{lemma}[Clean event probability, MDP]\label{lemma:penalty_LCB}
One has $\prob(\cE_{\text{MDP}}) \geq 1-\delta.$
\end{lemma}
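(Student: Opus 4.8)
The plan is to exploit the data-splitting structure of Algorithm~\ref{alg:OVI-LCB-DS}. The key observation is that iteration $t$ forms $r_t$ and $P^t_{s,a}$ using only the fresh subsample $\cD_t$, whereas $V_{t-1}$ is a deterministic function of $\cD_0, \dots, \cD_{t-1}$; hence $V_{t-1}$ is \emph{independent} of $\cD_t$. This independence is exactly what turns the quantity inside $\cE_{\text{MDP}}$ into an empirical average of i.i.d.\ bounded random variables with the correct conditional mean, and it is the crux of the entire argument.

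Concretely, I would fix a triple $(s,a,t)$ with $t \geq 1$ and condition on $\cD_0, \dots, \cD_{t-1}$, which fixes $V_{t-1}$. Writing $m = m_t(s,a)$ for the number of samples in $\cD_t$ landing on $(s,a)$, and conditioning further on $\{m = k\}$ for some $k \geq 1$, the corresponding pairs $(r_i, s_i')$ are i.i.d.\ with $r_i \sim R(s,a)$ and $s_i' \sim P(\cdot \mid s,a)$. Thus each $X_i \coloneqq r_i + \gamma V_{t-1}(s_i')$ has conditional mean $r(s,a) + \gamma P_{s,a}\cdot V_{t-1}$, and---using $r_i \in [0,1]$, $V_{t-1} \in [0, V_{\max}]$, and the identity $1 + \gamma V_{\max} = V_{\max}$---lies in $[0, V_{\max}]$. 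Since $r_t(s,a) + \gamma P^t_{s,a}\cdot V_{t-1}$ is precisely $\frac{1}{k}\sum_i X_i$, Hoeffding's inequality would yield
\[
\prob\!\left( \Big| r(s,a) - r_t(s,a) + \gamma\,(P_{s,a} - P^t_{s,a})\cdot V_{t-1} \Big| > V_{\max}\sqrt{\tfrac{L}{k}} \;\Big|\; m = k \right) \leq 2\exp(-2L),
\]
because with $\epsilon = V_{\max}\sqrt{L/k}$ and range $V_{\max}$ the exponent equals $2k\epsilon^2/V_{\max}^2 = 2L$.

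The count $m = 0$ must be handled separately, but it is trivial: then $r_t(s,a)=0$ and $P^t_{s,a}$ is an arbitrary probability vector, so the quantity has magnitude at most $1 + \gamma V_{\max} = V_{\max}$, while $b_t(s,a) = V_{\max}\sqrt{L} \geq V_{\max}$ because $L > 1$; the defining inequality then holds deterministically. Averaging over $m$ shows the bad event for one triple has probability at most $2\exp(-2L)$. A union bound over the $S|\cA|T$ triples with $t \in \{1,\dots,T\}$ would give $\prob(\cE_{\text{MDP}}^c) \leq 2 S|\cA|T\exp(-2L)$, and substituting $L = 2000\log(2(T+1)S|\cA|/\delta)$ drives this below $\delta$ with enormous slack (already $2L \geq \log(2S|\cA|T/\delta)$ suffices), completing the proof.

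I do not expect a serious obstacle. The only step demanding care is the conditioning argument certifying that, post-splitting, the summands $r_i + \gamma V_{t-1}(s_i')$ are genuinely i.i.d.\ with the stated conditional mean given the count $m_t(s,a)$. This is the one place where independence of $V_{t-1}$ from $\cD_t$ is essential: without data splitting, $V_{t-1}$ would depend on all samples, forcing a uniform concentration over all reachable value vectors (a covering-number argument) that would inflate both $L$ and the sample complexity---precisely the difficulty the splitting trick is engineered to sidestep.
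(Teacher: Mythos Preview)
Your proposal is correct and follows essentially the same approach as the paper's proof: exploit the independence of $V_{t-1}$ from $\cD_t$ granted by data splitting, apply Hoeffding's inequality conditionally on the count $m_t(s,a)$ to the i.i.d.\ bounded summands $r_i + \gamma V_{t-1}(s_i') \in [0, V_{\max}]$, handle the $m_t(s,a)=0$ case deterministically via $1+\gamma V_{\max}=V_{\max}\le b_t(s,a)$, and finish with a union bound over $(s,a,t)$. Your conditioning argument is laid out slightly more carefully than the paper's terse version, but the content is the same.
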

In the above lemma, concentration of $V_t$ is only needed instead of any value function $V$ such as required in the work \citet{yu2020mopo}. For the latter to hold, one needs to introduce another factor of $\sqrt{S}$ by taking a union bound. We avoid a union bound by exploiting the independence of $P^t_{s,a}$ and $V_t$ obtained by randomly splitting the dataset. This is key to obtaining an optimal dependency on the state size $S$. 

Under the clean event, we can show that the  monotonically increasing value function $V_t$  always lower bounds the value of the corresponding policy $\pi_t$, along with  a recursive inequality on the sub-optimality of $Q_{t+1}$ w.r.t. $Q^\star$ to penalty and sub-optimality of the previous step.
\begin{prop}[Contraction properties of Algorithm \ref{alg:OVI-LCB-DS}]\label{prop:monotonicity_data_splitting}  Let $\pi$ be an arbitrary policy. On the event $\mathcal{E}_{MDP}$, one has for all $s \in \cS, a \in \cA$, and $t \in \{1, \dots, T\}$:
\begin{align*}
    V_{t-1} \leq V_t \leq V^{\pi_t} \leq V^\star, \quad 
    \; Q_t \leq r + \gamma P V_{t-1}, \quad 
    \mathrm{ and } \quad Q^\pi - Q_{t} \leq \gamma P^\pi (Q^\pi - Q_{t-1}) + 2 b_t.
\end{align*}
\end{prop}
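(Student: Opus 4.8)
The plan is to establish the four assertions—monotonicity $V_{t-1}\le V_t$, pessimism $Q_t\le r+\gamma PV_{t-1}$, the sandwich $V_t\le V^{\pi_t}\le V^\star$, and the one-step recursion for $Q^\pi-Q_t$—in roughly that order, since the later claims lean on the earlier ones. Monotonicity is immediate from the update on Line~\ref{line:imposing_monotonicity}: by construction $V_t(s)=\max\{V_{t-1}(s),V^{\text{mid}}_t(s)\}\ge V_{t-1}(s)$ for every $s$. Pessimism is a direct read-off of the clean event: writing $Q_t(s,a)=r_t(s,a)-b_t(s,a)+\gamma P^t_{s,a}\cdot V_{t-1}$ and subtracting the target $r(s,a)+\gamma P_{s,a}\cdot V_{t-1}$, the difference equals $-b_t(s,a)-\big(r(s,a)-r_t(s,a)+\gamma(P_{s,a}-P^t_{s,a})\cdot V_{t-1}\big)$, and on $\cE_{\text{MDP}}$ the parenthesized term is at most $b_t(s,a)$ in magnitude, so the difference is nonpositive.

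The crux is $V_t\le V^{\pi_t}$; a naive one-step comparison fails because the monotonic update keeps $\pi_t(s)=\pi_{t-1}(s)$ on some states, so $\pi_t$ blends decisions taken at different iterations. I would instead prove by induction on $t$ the stronger self-consistency invariant $V_t\le r^{\pi_t}+\gamma\,\widetilde P^{\pi_t}V_t$, where $r^{\pi_t}$ and $\widetilde P^{\pi_t}$ are the reward vector and $S\times S$ state-transition matrix induced by $\pi_t$ (entries $r(s,\pi_t(s))$ and $P(s'\mid s,\pi_t(s))$). Given the invariant, applying $(I-\gamma\widetilde P^{\pi_t})^{-1}=\sum_{k\ge0}\gamma^k(\widetilde P^{\pi_t})^k$, whose entries are nonnegative, preserves the inequality and yields $V_t\le(I-\gamma\widetilde P^{\pi_t})^{-1}r^{\pi_t}=V^{\pi_t}$. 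For the induction I split each state by the update branch: where $V_t(s)=V^{\text{mid}}_t(s)=Q_t(s,\pi_t(s))$, pessimism together with $V_{t-1}\le V_t$ gives $V_t(s)\le r(s,\pi_t(s))+\gamma P_{s,\pi_t(s)}V_{t-1}\le r(s,\pi_t(s))+\gamma P_{s,\pi_t(s)}V_t$; where $V_t(s)=V_{t-1}(s)$ and $\pi_t(s)=\pi_{t-1}(s)$, the inductive hypothesis at $t-1$ combined with $V_{t-1}\le V_t$ delivers the same bound. The base case $V_0=0$ is trivial, and $V^{\pi_t}\le V^\star$ holds by optimality of $\pi^\star$.

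For the recursion I would expand $Q^\pi-Q_t$ and insert $\pm\gamma P_{s,a}V_{t-1}$ to isolate the clean-event term, obtaining $Q^\pi(s,a)-Q_t(s,a)=\big(r(s,a)-r_t(s,a)+\gamma(P_{s,a}-P^t_{s,a})V_{t-1}\big)+b_t(s,a)+\gamma P_{s,a}(V^\pi-V_{t-1})$, which is at most $2b_t(s,a)+\gamma P_{s,a}(V^\pi-V_{t-1})$ on $\cE_{\text{MDP}}$. The final step replaces $V^\pi-V_{t-1}$ by $Q^\pi-Q_{t-1}$: since the monotonic update guarantees $V_{t-1}(s')\ge\max_a Q_{t-1}(s',a)\ge Q_{t-1}(s',\pi(s'))$ while $V^\pi(s')=Q^\pi(s',\pi(s'))$, we get $P_{s,a}(V^\pi-V_{t-1})\le\big(P^\pi(Q^\pi-Q_{t-1})\big)(s,a)$, which assembles into $Q^\pi-Q_t\le\gamma P^\pi(Q^\pi-Q_{t-1})+2b_t$. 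I expect the main obstacle to be the bookkeeping around the monotonic update in the $V_t\le V^{\pi_t}$ step; choosing the self-consistency invariant and inverting $I-\gamma\widetilde P^{\pi_t}$, rather than attempting a direct comparison of mixed policies, is precisely what makes the induction close.
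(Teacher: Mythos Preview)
Your proposal is correct and follows essentially the same approach as the paper's proof: both establish $V_{t-1}\le V_t$ directly from the monotonic update, derive $Q_t\le r+\gamma PV_{t-1}$ from the clean event, prove $V_t\le V^{\pi_t}$ via the inductive self-consistency invariant $V_t\le \cT_{\pi_t}V_t$ (split by update branch, using the clean event on the ``mid'' branch and the inductive hypothesis plus $V_{t-1}\le V_t$ on the ``keep'' branch), and obtain the recursion by inserting $\pm\gamma P_{s,a}V_{t-1}$ and using $V_{t-1}\ge\max_a Q_{t-1}(\cdot,a)$. The only cosmetic difference is that you prove pessimism first and reuse it in the self-consistency step, whereas the paper re-derives that inequality inline; and you phrase the passage from $V_t\le\cT_{\pi_t}V_t$ to $V_t\le V^{\pi_t}$ via the nonnegative Neumann series $(I-\gamma\widetilde P^{\pi_t})^{-1}$ rather than invoking monotonicity of the Bellman operator---these are equivalent.
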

By recursively applying the last inequality, we can derive a value difference lemma. The following lemma relates the sub-optimality to the penalty term $b_t$, of which we have good control:
\begin{lemma} [Value difference for Algorithm \ref{alg:OVI-LCB-DS}] \label{lemma:value_difference} Let $\pi$ be an arbitrary policy. On the event $\cE_{\text{MDP}}$, one has for all $t \in \{1,\dots, T\}$
\begin{align*}
    J(\pi) - J(\pi_{t}) \leq \frac{\gamma^t}{1-\gamma} + 2 \sum_{i=1}^{t} \E_{\nu^\pi_{t-i}}[b_i(s,a)].
\end{align*}
Here, $\nu^\pi_k \coloneqq \gamma^k \rho^\pi (P^\pi)^k$ for $k \geq 0$.
\end{lemma}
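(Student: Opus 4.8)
The plan is to reduce the value suboptimality $J(\pi) - J(\pi_t)$ to a policy-weighted comparison of the Q-functions $Q^\pi$ and $Q_t$, and then simply unroll the recursive inequality furnished by Proposition~\ref{prop:monotonicity_data_splitting}. The two monotonicity facts from that proposition are what make the reduction go through, and they push in opposite directions: $V_t$ lower-bounds the value $V^{\pi_t}$ of the returned policy, while at the same time $V_t$ upper-bounds $Q_t$ evaluated along $\pi$.

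Concretely, the first step is to establish, on $\cE_{\text{MDP}}$, the bound
\begin{align*}
    J(\pi) - J(\pi_t) \leq \E_{s\sim\rho}\big[V^\pi(s) - V_t(s)\big] \leq \rho^\pi \cdot (Q^\pi - Q_t).
\end{align*}
The first inequality uses $V_t \leq V^{\pi_t}$ from Proposition~\ref{prop:monotonicity_data_splitting}, so that $J(\pi_t) = \E_{s\sim\rho}[V^{\pi_t}(s)] \geq \E_{s\sim\rho}[V_t(s)]$. The second uses that the maximization and monotonic update guarantee $V_t(s) = \max\{V_{t-1}(s), \max_a Q_t(s,a)\} \geq \sum_a \pi(a\mid s) Q_t(s,a)$ together with $V^\pi(s) = \sum_a \pi(a\mid s) Q^\pi(s,a)$, which after integrating against $\rho$ is exactly $\rho^\pi \cdot (Q^\pi - Q_t)$.

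The second step is to unroll the contraction. Writing $\Delta_t \coloneqq Q^\pi - Q_t$, Proposition~\ref{prop:monotonicity_data_splitting} gives the elementwise recursion $\Delta_t \leq \gamma P^\pi \Delta_{t-1} + 2 b_t$. Since $P^\pi$ is a nonnegative stochastic matrix it preserves elementwise inequalities, so iterating from step $t$ down to $0$ yields
\begin{align*}
    \Delta_t \leq (\gamma P^\pi)^t \Delta_0 + 2 \sum_{i=1}^{t} (\gamma P^\pi)^{t-i} b_i,
\end{align*}
where $\Delta_0 = Q^\pi - Q_0 = Q^\pi$ because $Q_0 = 0$. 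Left-multiplying by $\rho^\pi$ and combining with the first step, the residual term is $\rho^\pi (\gamma P^\pi)^t Q^\pi = \gamma^t\, \rho^\pi (P^\pi)^t Q^\pi \leq \gamma^t/(1-\gamma)$, since $\rho^\pi (P^\pi)^t$ is a probability vector and $Q^\pi \leq V_{\max}\ones = (1-\gamma)^{-1}\ones$. Each summand becomes $\rho^\pi (\gamma P^\pi)^{t-i} b_i = \gamma^{t-i}\rho^\pi (P^\pi)^{t-i} b_i = \E_{\nu^\pi_{t-i}}[b_i(s,a)]$ by the definition $\nu^\pi_k = \gamma^k \rho^\pi (P^\pi)^k$, which assembles into the claimed bound.

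The main obstacle is the first step, not the unrolling: one must correctly combine the two opposing inequalities $\E_{s\sim\rho}[V_t] \leq J(\pi_t)$ (so that the conservative estimate $V_t$ can replace $V^{\pi_t}$) and $V_t \geq \sum_a \pi(a\mid s) Q_t(s,a)$ (so that $V_t$ can be replaced by a $\pi$-weighted value of $Q_t$). Both rely on the monotonic/maximizing structure of Algorithm~\ref{alg:OVI-LCB-DS} and on the lower-bounding property established in Proposition~\ref{prop:monotonicity_data_splitting}; once the gap is expressed as $\rho^\pi\cdot(Q^\pi - Q_t)$, the remainder is a routine contraction argument together with the observation that the discounted propagation weights are precisely the measures $\nu^\pi_k$.
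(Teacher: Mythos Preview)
Your proposal is correct and follows essentially the same route as the paper: reduce $J(\pi)-J(\pi_t)$ to $\rho^\pi\cdot(Q^\pi-Q_t)$ using $V_t\leq V^{\pi_t}$ and $V_t(s)\geq\max_a Q_t(s,a)$, then unroll the recursion $Q^\pi-Q_t\leq\gamma P^\pi(Q^\pi-Q_{t-1})+2b_t$ and bound the residual via $Q^\pi\leq (1-\gamma)^{-1}\ones$. The paper phrases the first reduction through $V_t^{\mathrm{mid}}$ rather than directly through $V_t$, but this is the same inequality $V_t(s)\geq \max_a Q_t(s,a)\geq Q_t(s,\pi(s))$ in slightly different packaging.
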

The proof is provided in Appendix \ref{app:proof_value_difference}.
The value difference bound  has two terms: the first term is due to convergence error of value iteration and the second term is the error caused by subtracting penalties $b_i(s,a)$ in each iteration $i$ from the rewards. By plugging in $b_i$ and choosing $t$ appropriately we can get the desired performance guarantee.

For the case of $1 \leq C^\star \leq 1+\frac{L\log(N)}{200(1-\gamma)N}$, we adopt a similar decomposition as the contextual bandit analysis sketched in Section~\ref{subsec:CB_upper_anlaysis}. The only difference is that since $C^\star$ is small enough, we know that all the sub-optimal actions have very small mass in the $\mu$. Thus LCB enjoys a rate of $1/N$ as the imitation learning case.

\subsection{Information-theoretic lower bound for offline RL in MDPs}\label{sec:MDP-lower}
In this section, we focus on the statistical limits of offline learning in MDPs. 

Define the following family of MDPs
\begin{equation*}
    \mathsf{MDP}(C^\star) = \{(\rho, \mu, P, R) \mid \max_{s,a} \frac{{d}^\star(s,a)}{\mu(s,a)} \leq C^\star \}.
\end{equation*}
Note that here the normalized discounted occupancy measure ${d}^\star$ depends implicitly on the specification of the MDP, i.e., $\rho$, $P$, and $R$.

We have the following minimax lower bound for offline policy learning in MDPs, with the proof deferred to Appendix~\ref{app:proof_MDP_lower}.
\begin{theorem}[Information-theoretic limit, MDP]\label{thm:MDP_lower}
For any $C^\star\geq 1, \gamma\geq 0.5$, one has  
 \begin{align*}
\inf_{\hat \pi} \sup_{(\rho, \mu, P, R) \in \mathsf{MDP}(C^\star)}  \E_\cD[J(\pi^{\star})-J(\hat{\pi})]  \gtrsim \min\left(\frac{1}{1-\gamma}, \frac{S}{(1-\gamma)^2N} + \sqrt{\frac{S(C^\star-1)}{(1-\gamma)^3N}}\right).
 \end{align*}
\end{theorem}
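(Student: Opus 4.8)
The plan is to construct two families of hard MDP instances, one producing the $\sqrt{S(C^\star-1)/((1-\gamma)^3 N)}$ term and one producing the $S/((1-\gamma)^2 N)$ term, and to combine them by taking the maximum (equivalently, by a standard ``either-or'' reduction that shows both barriers are present simultaneously). For each family I would reduce the offline learning problem to a composite hypothesis-testing problem over a hard subfamily of $\mathsf{MDP}(C^\star)$, and then lower-bound the testing error by a two-point or multiple-point (Assouad / Fano / Le Cam) argument driven by the total-variation or KL distance between the data distributions the learner actually observes. The essential point throughout is that what the learner sees is an \iid batch of size $N$ drawn from $\mu$ (with rewards and transitions), so the information available about any single state is governed by $N\mu(s,\cdot)$, and the concentrability constraint $d^\star(s,a)/\mu(s,a)\le C^\star$ forces $\mu$ to place mass on the optimal actions at a rate tied to $d^\star$.

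\textbf{The $\sqrt{S(C^\star-1)}$ term.}
First I would build a ``hardest'' contextual-bandit-like core and lift it to the discounted setting. Take $S$ states each with two actions; in each state one action is slightly better than the other by a reward gap $\Delta$, and the direction of the gap is an unknown bit $\theta_s\in\{0,1\}$. I would set the behavior distribution so that at each state the optimal action is covered with probability proportional to $1/C^\star$ and the bad action absorbs the slack $C^\star-1$; this is exactly what makes the per-state information scale like $N(C^\star-1)/S$ rather than $N C^\star/S$. An Assouad-type argument then lower-bounds the expected sub-optimality by $S\cdot \Delta \cdot (\text{per-coordinate testing error})$, and choosing $\Delta\asymp\sqrt{S/(N(C^\star-1))}$ (after accounting for the $(1-\gamma)$ horizon amplification) balances the gap against the indistinguishability, yielding the claimed rate. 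To import the $1/(1-\gamma)$ factors I would embed this gadget into a chain or a self-looping construction so that a wrong action at a state costs value on the order of $1/(1-\gamma)$ and so that $d^\star$ concentrates appropriately; this embedding is where one earns the extra powers of $(1-\gamma)$ in the denominator (the effective horizon enters both the value gap and the occupancy normalization).

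\textbf{The $S/((1-\gamma)^2 N)$ (missing-mass) term.}
The second family captures the regime $C^\star\approx 1$ where the obstruction is not statistical noise but the event that an optimal action is simply never observed. Here I would keep $\mu$ nearly expert but arrange that at each state the optimal action is sampled only $O(N/S)$ times in expectation, so that with constant probability it is missing in a $\Theta(S/N)$ fraction of states; on any such state the learner is forced to guess and incurs a value loss of order $1/(1-\gamma)$ per unit of occupancy, with the initial distribution $\rho$ spread so that this contributes $\Theta(S/((1-\gamma)^2 N))$ to the expected sub-optimality. This is a Le Cam / direct-probability computation rather than a testing bound: the key estimate is that $\prob(N(s,\pi^\star(s))=0)$ is bounded below by a constant when the expected count is $O(1)$, mirroring the missing-mass intuition already used for the $T_1$ term in the upper-bound analysis.

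\textbf{Main obstacle.}
I expect the hard part to be the horizon embedding: getting the exact powers $(1-\gamma)^{-3}$ and $(1-\gamma)^{-2}$ (rather than looser powers) requires carefully designing the transition structure so that (i) the discounted occupancy $d^\star$ of the ``decision'' states is $\Theta(1-\gamma)$ or $\Theta(1)$ as needed, (ii) the per-mistake value loss scales correctly with the effective horizon, and (iii) the concentrability constraint is met with equality on the relevant state-action pairs so that no slack is wasted. Reconciling these three requirements simultaneously across both families—while keeping the two bad events independent enough that the $\min$/$\max$ combination is legitimate—is the delicate bookkeeping on which the tightness of the bound hinges; the information-theoretic tests themselves (Assouad and Le Cam) are standard once the instances are correctly calibrated.
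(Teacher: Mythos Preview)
Your high-level plan matches the paper's: prove the two terms via separate hard families and combine. However, the specific constructions and one key mechanism differ in ways worth flagging.

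For the $\sqrt{S(C^\star-1)/((1-\gamma)^3 N)}$ term, the paper does not use a reward-perturbation contextual-bandit core; it uses the Lattimore--Hutter four-state gadget ($s_0,s_1,s_\oplus,s_\ominus$) replicated $S/4$ times, with the unknown bit encoded in the \emph{transition} probability from $(s_1^j,a_2)$ to $s_\oplus^j$. The $(1-\gamma)$ powers then fall out of the gadget's absorbing structure (the per-mistake value loss is $\delta/(1-\gamma)$ and $d^\star(s_1^j)\asymp(1-\gamma)/S$), and the testing step is Fano via a Gilbert--Varshamov packing rather than Assouad. Your reward-based Assouad route can likely be made to work, but the paper's transition-based gadget is what cleanly produces the exact horizon exponents you correctly flag as the hard part.

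More importantly, your explanation of how $C^\star-1$ enters is not the paper's mechanism. You say the behavior policy's slack on the bad action makes the per-state information scale like $N(C^\star-1)/S$; in the paper's construction the information on the informative action always scales like $N\,d^\star(s_1)/C^\star$, and the $C^\star-1$ factor is obtained instead by putting initial mass $\rho_0(s_0^j)\propto (C^\star-1)/S$ on the hard replicas and dumping the remaining $2-C^\star$ mass on a single reward-free absorbing state $s_{-1}$. This simultaneously shrinks $d^\star(s_1^j)$ (hence the KL) and the per-state loss by the same $C^\star-1$ factor, which is what yields $\sqrt{S(C^\star-1)}$ after optimizing $\delta$. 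Your proposed mechanism does not obviously achieve this, since concentrability constrains $\mu$ only on the optimal action and leaves the sub-optimal mass free; without the $\rho$-level dummy-state trick you would recover $\sqrt{SC^\star}$ rather than $\sqrt{S(C^\star-1)}$.

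For the $S/((1-\gamma)^2 N)$ term, the paper does not use a bare missing-mass Le Cam computation but adapts the Rajaraman et al.\ imitation-learning lower bound (the ``bad absorbing state'' MDP with $\rho$ placing mass $1/(N+1)$ on most states) to the discounted setting, lower-bounding the Bayes risk via a stopping-time argument on the first visit to an unobserved state. Your direct-probability sketch captures the right intuition but would need this more careful construction to land the $(1-\gamma)^{-2}$ exponent.
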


\medskip

Several remarks are in order. 

\paragraph{Imitation learning and offline learning.}
It is interesting to note that similar to the lower bound for contextual bandits, the statistical limit involves two separate terms $\frac{S}{(1-\gamma)^2N}$ and $\sqrt{\frac{S(C^\star-1)}{(1-\gamma)^3N}}$. The first term captures the imitation learning regime under which a fast rate $1/ N$ is expected, while the second term deals with the large $C^\star$ regime with a parametric rate $1/ \sqrt{N}$. 
    More interestingly, the dependence on $C^\star$ appears to be $C^\star -1$, which is different from the performance upper bound of VI-LCB in Theorem~\ref{thm:MDP_upperbound_hoffding}. We will comment more on this in the coming section. 
    
\paragraph{Dependence on the effective horizon $1/(1-\gamma)$.}
Comparing the upper bound in Theorem~\ref{thm:MDP_upperbound_hoffding} with the lower bound in Theorem~\ref{thm:MDP_lower}, one sees that the sample complexity of VI-LCB for all regimes of $C^\pi$ is loose by an extra $1/(1-\gamma)^2$ factor in sample complexity. We believe that this extra factor can be shaved by replacing the Hoeffding-based penalty function to a Bernstein-based one and using variance reduction similar to the technique used in the work~\citet{sidford2018near}.

\subsection{What happens when $C^\star\in[1+\Omega(1/N), 1+O(1)]$?}\label{sec:conjecture}
Now we return to the discussion on the dependency on $C^\star$. Ignore the dependency on $1/(1-\gamma)$ for the moment. By comparing Theorems~\ref{thm:MDP_upperbound_hoffding} and~\ref{thm:MDP_lower}, one realizes that VI-LCB is optimal both when $C^\star \geq 1+ \Theta(1)$ and when $C^\star \leq 1 + \Theta(1/N)$.   
However, in the middling regime when $C^\star \in[1+\Omega(1/N), 1+O(1)]$, the upper and lower bounds differ in their dependency on $C^\star$. More specifically, the upper bound presented in Theorem~\ref{thm:MDP_upperbound_hoffding} is $\sqrt{SC^\star/N}$, while the lower bound in Theorem~\ref{thm:MDP_lower} is $S/N + \sqrt{S(C^\star-1)/N}$. 

\paragraph{Technical hurdle.}
We \emph{conjecture} that VI-LCB is optimal even this regime and the current gap is an artifact of our analysis. However, we would like to point out that, although we manage to close the gap in contextual bandits, the case with MDPs is significantly more challenging due to error propagation. Naively applying the decomposition in the contextual bandit case fails to achieve the $C^\star -1$ dependence in this regime. Take the term $T_{5}$ in Figure~\ref{fig:sub_opt_decomposition} as an example. For contextual bandits, given the selection rule is
\begin{equation}
    \hat{\pi}(s) \mapsfrom \argmax_{a} \hat{r}(s,a) - \sqrt{\frac{L}{N(s,a)}},
\end{equation}
it is straightforward to check that as long as the optimal action is taken with much higher probability than the sub-optimal ones, i.e., $\mu(s,\pi^\star(s)) \gg \sum_{a \neq \pi^\star(s)}\mu(s,a)$, the LCB approach will pick the right action regardless of the value gap $r(s,\pi^\star(s)) - r(s,a)$. 
In contrast, due to the recursive update $Q(s,a) \mapsfrom r_t(s,a) - \sqrt{\frac{L}{N_t(s,a)}} + \gamma P^t_{s,a} \cdot V_{t-1}$, LCB picks the right action if 
\begin{equation*}
    r_t(s,\pi^\star(s)) - \sqrt{\frac{L}{N_t(s,\pi^\star(s))}} + \gamma P^t_{s,\pi^\star(s)} \cdot V_{t-1} > r_t(s,a) - \sqrt{\frac{L}{N_t(s,a)}} + \gamma P^t_{s,a} \cdot V_{t-1},\quad  \text{for all }a \neq \pi^\star(s).
\end{equation*}
The presence of the value estimate from the previous step, i.e., $V_{t-1}$ (which is absent in CBs) drastically changes the picture: 
even if we know that $\mu(s,\pi^\star(s)) \gg \sum_{a \neq \pi^\star(s)}\mu(s,a)$ and hence $N_t(s,\pi^\star(s)) \gg N_t(s,a)$, the current analysis does not guarantee the above inequality to hold. It is likely that for the value gap $Q^\star (s, \pi^\star(s)) - Q^\star(s,a)$ to affect whether the LCB algorithm chooses the optimal action. How to study the interplay between the value gap and the policy chosen by LCB forms the main obstacle to obtaining tight performance guarantees when $C^\star\in[1+\Omega(1/N), 1+O(1)]$. 

\paragraph{A confirmation from an episodic MDP.} In Appendix \ref{app:episodic_example} we present an episodic example with the intention to demonstrate that (1) a variant of VI-LCB in the episodic case is able to achieve the optimal dependency on $C^\star$ and hence closing the gap between the upper and the lower bounds, and (2) the tight analysis of the sub-optimality is rather intricate and depends on a delicate decomposition based on the value gap $Q^\star (s, \pi^\star(s)) - Q^\star(s,a)$.

As a preview, we illustrate the episodic MDP with $H=3$ in Figure~\ref{fig:episodic_MDP_example}. 
It turns out that when tackling the term similar to $T_{5}$ in Figure~\ref{fig:sub_opt_decomposition}, a further decomposition based on the value gap is needed. In a nutshell, we decompose the error into two cases: (1) when $Q^\star (s, 1) - Q^\star(s,2)$ is large, and (2) when $Q^\star (s, 1) - Q^\star(s,2)$ is small. Intuitively, in the latter case, the contribution to the sub-optimality is well controlled, and in the former one, we manage to show that VI-LCB selects the right action with high probability. What is more interesting and suprising is that the right threshold for value gap is given by $\sqrt{(C^\star -1)/N}$. Ultimately, this allows us to achieve the optimal dependency on $C^\star$.
\begin{figure}[t]
    \centering
    \scalebox{1}{
    \begin{tikzpicture}[observed/.style={circle, draw=black, fill=black!10, thick, minimum size=10mm},
    hidden/.style={circle, draw=black, thick, minimum size=5mm},
    squarednode/.style={rectangle, draw=red!60, fill=red!5, very thick, minimum size=10mm},
    treenode/.style={rectangle, draw=none, thick, minimum size=10mm},
    rootnode/.style={rectangle, draw=none, thick, minimum size=10mm},
    squarednode/.style={rectangle, draw=none, fill=citeColor!20, very thick, minimum size=7mm},]
    \node[hidden] (s1) at (0,1.5) {$1$};
    \node[hidden] (s2) at (0,0) {$2$};
    \node[hidden] (s3) at (2,1.5) {$3$};
    \node[hidden] (s4) at (2,0) {$4$};
    \node[hidden] (s5) at (4,1.5) {$5$};
    \node[hidden] (s6) at (4,0) {$6$};
    \draw[->, thick, >=stealth] (s1.east) -- (s3.west);
    \draw[->, thick, >=stealth] (s1.south east) -- (s4.north west);
    \draw[->, thick, >=stealth] (s2.north east) -- (s3.south west);
    \draw[->, thick, >=stealth] (s3.east) -- (s5.west);
    \draw[->, thick, >=stealth] (s3.south east) -- (s6.north west);
    \draw[->, thick, >=stealth] (s4.north east) -- (s5.south west);
    \draw[->, thick, >=stealth] (s2.east) -- (s4.west);
    \draw[->, thick, >=stealth] (s4.east) -- (s6.west);
    \end{tikzpicture}}
    \caption{An episodic MDP with $H=3$, two states per level, and two actions $\cA = \{1,2\}$ available from every state. The rewards are assumed to be deterministic and bounded. Action 1 is assumed to be optimal in all states and that $\mu(s,1) \geq 9 \mu(s,2)$.}
    \label{fig:episodic_MDP_example}
\end{figure}
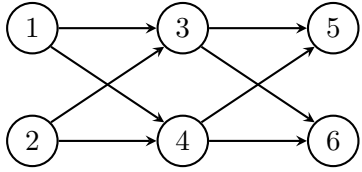

\section{Related work}\label{sec:related_work}
In this section we review additional related works. In Section~\ref{sec:batch_data_assumptions}, we discuss various assumptions on the batch dataset that have been proposed in the literature. In Section~\ref{sec:conservatism}, we review conservative methods in offline RL. We conclude this section by comparing existing lower bounds with the ones presented in this paper.

\subsection{Assumptions on batch dataset}\label{sec:batch_data_assumptions} 
One of the main challenges in offline RL is the insufficient coverage of the dataset caused by lack of online exploration \cite{wang2020statistical, zanette2020exponential,szepesvari2010algorithms} and in particular 
the \textit{distribution shift} in which the occupancy density of the behavior policy and the one induced by the learned policy are different. This effect can be characterized using concentrability coefficients \cite{munos2007performance} which impose bounds on the density ratio (importance weights).

Most concentrability requirements imposed in existing offline RL involve taking a supremum of the density ratio over all state-action pairs and all policies, i.e., $\max_\pi C^\pi$ \cite{scherrer2014approximate,chen2019information,jiang2019value,wang2019neural,liao2020batch,liu2019neural,zhang2020variational} and some definitions are more complex and stronger assuming a bounded ratio per time step \cite{szepesvari2005finite, munos2007performance, antos2008learning,farahmand2010error,antos2007fitted}. A more stringent definition originally proposed by \citet{munos2003error} also imposes exploratoriness on state marginals. This definition is recently used by \citet{xie2020batch} to develop an efficient offline RL algorithm with general function approximation and only realizability. The MABO algorithm proposed by \citet{xie2020batch} and the related algorithms by \citet{feng2019kernel} and \citet{uehara2020minimax} use a milder definition based on a \textit{weighted} norm of density ratios as opposed to the infinity norm. In contrast, to compete with an optimal policy, we only require coverage over states and actions visited by that policy, which is referred to as the ``best'' concentrability coefficient \cite{scherrer2014approximate,geist2017bellman,agarwal2020optimality,xie2020batch}.

Another related assumption is the uniformly lower bounded data distribution. For example, some works consider access to a generative model with an equal number of samples on all state-action pairs \cite{sidford2018near, sidford2018variance, agarwal2020model, li2020breaking}. As discussed before, this assumption is significantly stronger than assuming $C^\star$ is bounded. Furthermore, one can modify the analysis of the LCB algorithm to show optimal data composition dependency in this case as well. 

\subsection{Conservatism in offline RL} \label{sec:conservatism}
In practice, such high coverage assumptions on batch dataset also known as data diversity \cite{levine2020offline} often fail to hold \cite{gulcehre2020rl,agarwal2020optimistic,fu2020d4rl}. Several methods have recently emerged to address such strong data requirements. The first category involves policy regularizers or constraints to ensure closeness between the learned policy and the behavior policy  \cite{fujimoto2019off,wu2019behavior,jaques2019way, peng2019advantage, siegel2020keep, wang2020critic,kumar2019stabilizing,fujimoto2019benchmarking,ghasemipour2020emaq,nachum2019algaedice,Zhang2020GenDICE:,nachum2019dualdice,zhang2020gradientdice}. These methods are most suited when the batch dataset is nearly-expert \cite{wu2019behavior,fu2020d4rl} and sometimes require the knowledge of the behavior policy.

Another category includes the value-based methods. \citet{kumar2020conservative} propose conservative Q-learning through value regularization and demonstrate empirical success. \citet{liu2020provably} propose a variant of fitted Q-iteration with a conservative update called MSB-QI. This algorithm effectively requires the data distribution to be uniformly lower bounded on the state-action pairs visited by any competing policy. Moreover, the sub-optimality of MSB-QI has a $1/(1-\gamma)^4$ horizon dependency compared to ours which is $1/(1-\gamma)^{2.5}$.

The last category involves learning pessimistic models such as \citet{kidambi2020morel}, \citet{yu2020mopo} and \citet{yu2021combo} all of which demonstrate empirical success. From a theoretical perspective, the recent work \citet{jin2020pessimism} studies pessimism in offline RL in episodic MDPs and function approximation setting. The authors present upper and lower bounds for linear MDPs with a suboptimality gap of $dH$, where $d$ is the feature dimension and $H$ is the horizon. Specialized to the tabular case, this gap is equal to $SAH$, compared to ours which is only $H$. Furthermore, this work does not study the adaptivity of pessimism to data composition.

Another recent work by \citet{yin2021near} studies pessimism in tabular MDP setting and proves matching upper and lower bounds. However, their approach requires a uniform lower bound on the data distribution that traces an optimal policy. This assumption is stronger than ours; for example, it requires optimal actions to be included in the states not visited by an optimal policy. Furthermore, this characterization of data coverage does not recover the imitation learning setting: if the behavior policy is exactly equal to the optimal policy, data distribution lower bound can still be small.

\subsection{Information-theoretic lower bounds} \label{sec:literature_lower_bounds}
There exists a large body of literature providing information-theoretic lower bounds for RL under different settings; see e.g., \citet{dann2015sample,krishnamurthy2016pac,jiang2017contextual,jin2018q,azar2013minimax, ma2021minimax, lattimore2012pac, domingues2020episodic,duan2020minimax,zanette2020exponential,wang2020statistical}. In the generative model setting with uniform samples, \citet{azar2013minimax} proves a lower bound on value sub-optimality which is later extended to policy sub-optimality by \citet{sidford2018near}. For the offline RL setting, \citet{kidambi2020morel} prove a lower bound only considering the data and policy occupancy support mismatch without dependency on sample size. \citet{jin2020pessimism} gives a lower bound for linear MDP setting but which does not give a tight dependency on parameters when specialized to the tabular setting. In \citet{yin2020near,yin2021near}, a hard MDP is constructed with a dependency on the data distribution lower bound. In contrast, our lower bounds depend on $C^\star$, which has not been studied in the past, and holds for the entire data spectrum. In the imitation learning setting, \cite{xu2020error} considers discounted MDP setting and shows a lower bound on the performance of the behavior cloning algorithm. We instead present an information-theoretic lower bound for any algorithm for $C^\star = 1$ which is based on adapting the construction of \citet{rajaraman2020toward} to the discounted case.

\section{Discussion}
In this paper, we propose a new batch RL framework based on the single policy concentrability coefficient (e.g., $C^\star$) that smoothly interpolates the two extremes of data composition encountered in practice, namely the expert data and uniform coverage data. Under this new framework, we pursue the statistically optimal algorithms that can even be implemented without the knowledge of the exact data composition. More specifically, focusing on the lower confidence bound (LCB) approach inspired by the principle of pessimism, we find that LCB is adaptively minimax optimal for addressing the offline contextual bandit problems and the optimal rate naturally bridges the $1/N$ rate when data is close to following the expert policy and the $1/\sqrt{N}$ rate in the typical offline RL case. Here $N$ denotes the number of samples in the batch dataset. We also investigate the LCB approach in the offline multi-armed bandit problems and Markov decision processes. The message is somewhat mixed. For bandits, LCB is shown to be optimal for a wide range of data compositions, however, LCB without the knowledge of data composition, is provably non-adaptive in the near-expert data regime. When it comes to MDPs, we show that LCB is adaptively rate-optimal when $C^\star$ is extremely close to 1, and when $C^\star \geq 1 + $ constant. Contrary to bandits, we conjecture that LCB is optimal across the spectrum of data composition. 

Under the new framework, there exist numerous avenues for future study. Below, we single out a few of them. 
\begin{itemize}[leftmargin=*]
    \item \textbf{Closing the gap in MDPs.} It is certainly interesting to provide tighter upper bounds for LCB in the MDP case when $C \in (1 + \Omega(1/N), 1+o(1))$. This regime is of particular interest when we believe that a significant fraction of the data comes from the optimal policy. 
    \item \textbf{Improving the dependence on the effective horizon.}  There is a $1/(1-\gamma)^2$ gap in the sample complexity for solving infinite-horizon discounted MDPs. We believe that using a Bernstein-type penalty in conjunction with a variance reduction technique or data reuse across iterations may help address this issue.
    \item \textbf{Incorporating function approximation.} In this paper we focus on the tabular case only. It would be of particular interest and importance to extend the analysis and algorithms to function approximation setting.
    \item \textbf{Investigating other algorithms.} In this paper we study a conservative method based on lower confidence bound. Other conservative methods such as algorithms that use value regularization may also achieve adaptivity and/or minimax optimality.
\end{itemize}

\section*{Acknowledgements}
The authors are grateful to Nan Jiang, Aviral Kumar, Yao Liu,
and Zhaoran Wang for helpful discussions and suggestions. PR was partially supported by the Open Philanthropy Foundation and the Leverhulme Trust. BZ and JJ were partially supported by NSF Grants IIS-1901252, CCF-1909499, and DMS-2023505. 

\bibliographystyle{plainnat}
\bibliography{references}

\newpage

\appendix

\section{Proofs for multi-armed bandits}
In Section~\ref{app:proof_lower_bd_largest_empirical}, we prove Proposition~\ref{lem:lower_bd_largest_empirical} that demonstrates the failure of the best empirical arm when solving offline MABs.
Section~\ref{app:proof_MAB_LCB_upper} is devoted to the proof of Theorem~\ref{thm:MAB_LCB_upper}, which supplies the 
performance upper bound of the LCB approach. This upper bound is accompanied by a minimax lower bound given in Section~\ref{app:proof_bandit_lower_bound}. In the end, we provably show the lack of adaptivity of the LCB approach in Section~\ref{app:proof_IL_bandit}. 

\subsection{Proof of Proposition~\ref{lem:lower_bd_largest_empirical}}\label{app:proof_lower_bd_largest_empirical}
We start by introducing the bandit instance under consideration. Set $|\cA| = 2$, $a^\star = 1$, $\mu(1) = (N-1 )/ N$, and $\mu(2) =1/N$.  As for the reward distributions, for the optimal arm $a^\star = 1$, we let $R(1) = 2\epsilon$ almost surely. In contrast, for arm $2$ we set 
\begin{align*}
    R(2) = \begin{cases}
    2.1\epsilon, & \text{w.p. }0.5, \\ 
    0, & \text{w.p. } 0.5.
    \end{cases}
\end{align*}
It is easy to check that indeed $a^\star = 1$ is the optimal arm to choose. 
Our goal is to show that for this particular bandit problem, given $N$ offline data from $\mu$ and $R$, the empirical best arm $\hat{a}$ will perform poorly with 
high probability. 

To see this, consider the following event 
\begin{align*}
    \mathcal{E}_{1} \coloneqq \{ N(2) = 1 \}.
\end{align*}
We  have
\begin{align*}
    \mathbb{P}(\mathcal{E}_{1}) & = N\cdot \mu(1)^{N-1}\cdot \mu(2)= \left(1-1/N\right)^{N-1}.
\end{align*}
As long as $N$ is sufficiently large (say $N \geq 500$), we  have $ \mathbb{P}(\mathcal{E}_{1}) \geq 0.36$ for any $0 \leq n \leq N$, and thus  $\mathbb{P}(\mathcal{E}_{1})\geq 0.36$.

Now we are in position to develop a performance lower bound for the empirical best arm $\hat{a}$. 
By construction, we have $r(1) - r(2)= 0.95\epsilon$. 
Therefore the sub-optimality is given by 
\begin{align*}
    \mathbb{E}_\cD[r(a^\star) - r(\hat a)] & = 0.95\epsilon \cdot \mathbb{P}(\hat a \neq a^\star) \nonumber \\ 
    & \geq 0.95\epsilon \cdot  \mathbb{P}(\mathcal{E}_1\cap \hat r(2) = 2.1\epsilon) \nonumber \\
    & \geq 0.95\epsilon\cdot 0.18>0.1 \epsilon.
\end{align*}
Rescaling the value of $\epsilon$ finishes the proof.

\subsection{Proof of Theorem~\ref{thm:MAB_LCB_upper}}\label{app:proof_MAB_LCB_upper}

Before embarking on the main proof, we record two useful lemmas.
The first lemma sandwiches the true mean reward by the empirical one and the 
penalty function, which directly follows from Hoeffding's inequality and a union bound. For completeness, we provide the proof at the end of this subsection.

\begin{lemma}\label{lemma:hoeffding-MAB}
With probability at least $1-\delta$, we have
\begin{equation}\label{eq:bonus-sandwich-MAB}
    \hat{r}(a) - b(a) \leq r(a) \leq \hat{r}(a) + b(a), \quad \text{for all } 1\leq a\leq |\mathcal{A}|.
\end{equation}
\end{lemma}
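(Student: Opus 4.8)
The plan is to apply Hoeffding's inequality arm by arm, treat the zero-count arms separately, and finish with a union bound over $\mathcal{A}$. I would first dispose of the arms with $N(a) = 0$: for such an arm the algorithm sets $\hat{r}(a) = 0$ and $b(a) = 1$, so that $\hat{r}(a) - b(a) = -1$ and $\hat{r}(a) + b(a) = 1$. Since every reward lies in $[0,1]$, we have $r(a) \in [0,1] \subseteq [-1,1]$, and hence the sandwich inequality \eqref{eq:bonus-sandwich-MAB} holds \emph{deterministically} for these arms, with no probability budget spent on them.

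Next, fix an arm $a$ and condition on the realized action sequence $(a_1, \dots, a_N)$, which in particular fixes the count $N(a)$. The key subtlety is that $N(a)$ is itself random, so I condition on the event $\{N(a) = n\}$ for an arbitrary $n \geq 1$. Conditioned on the action sequence, the rewards $\{r_i : a_i = a\}$ are independent, each supported on $[0,1]$ with common mean $r(a)$; thus $\hat{r}(a)$ is an average of $n$ independent $[0,1]$-valued variables with mean $r(a)$, and Hoeffding's inequality yields
\[
    \mathbb{P}\big(|\hat{r}(a) - r(a)| > t \,\big|\, N(a) = n\big) \leq 2\exp(-2 n t^2).
\]
Substituting $t = b(a) = \sqrt{\log(2|\mathcal{A}|/\delta)/(2n)}$ makes $2 n t^2 = \log(2|\mathcal{A}|/\delta)$, so the conditional failure probability is at most $2\exp(-\log(2|\mathcal{A}|/\delta)) = \delta/|\mathcal{A}|$. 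The crucial feature is that this bound is the \emph{same} for every $n \geq 1$, precisely because the penalty carries the matching $1/N(a)$ scaling; integrating over the conditional law of $N(a)$ on $\{N(a) \geq 1\}$ therefore leaves the per-arm failure probability at most $\delta/|\mathcal{A}|$.

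Finally, a union bound over the $|\mathcal{A}|$ arms bounds the probability that \eqref{eq:bonus-sandwich-MAB} fails for some arm by $|\mathcal{A}| \cdot \delta/|\mathcal{A}| = \delta$, which establishes the claim. The only non-routine point is the handling of the random counts $N(a)$: one must verify that the conditional Hoeffding bound is uniform in $n$ so that it survives averaging over $N(a)$, and this is exactly why the penalty $b(a)$ is defined with the $N(a)$-dependence baked in. Once this is recognized, the rest is entirely standard.
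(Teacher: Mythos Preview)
Your proposal is correct and mirrors the paper's own proof almost exactly: handle the $N(a)=0$ case deterministically, apply Hoeffding's inequality conditionally on $N(a)=n$ to get a per-arm failure probability of $\delta/|\mathcal{A}|$ that is uniform in $n$, then take a union bound over arms. If anything, your version is slightly more explicit about conditioning on the full action sequence and about why the uniformity in $n$ is what allows one to pass from the conditional to the unconditional bound.
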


The second one is a simple consequence of the Chernoff bound for binomial random variables.
\begin{lemma}\label{lem:N-lower-bound-MAB}
With probability at least $1 - \exp(-N\mu(a^\star) / 8)$, one has
\begin{equation}\label{eq:N-lower-bound-MAB}
    N(a^\star) \geq \frac{1}{2}N\mu(a^\star). 
\end{equation}
\end{lemma}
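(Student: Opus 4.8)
The plan is to recognize $N(a^\star)$ as a binomial random variable and invoke a standard multiplicative Chernoff bound on its lower tail. By the i.i.d.\ data-generation assumption, the indicators $\ind\{a_i = a^\star\}$ for $i = 1, \dots, N$ are independent Bernoulli random variables, each with success probability $\mu(a^\star)$. Hence $N(a^\star) = \sum_{i=1}^N \ind\{a_i = a^\star\}$ is distributed as $\mathrm{Binomial}(N, \mu(a^\star))$ with mean $\mathbb{E}[N(a^\star)] = N\mu(a^\star)$.

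First I would recall the lower-tail multiplicative Chernoff inequality: for a sum $X$ of independent $\{0,1\}$-valued random variables with mean $m = \mathbb{E}[X]$ and any $\eta \in (0,1)$, one has
\[
    \mathbb{P}\big(X \leq (1-\eta) m\big) \leq \exp\!\left(-\frac{\eta^2 m}{2}\right).
\]
Applying this with $X = N(a^\star)$, $m = N\mu(a^\star)$, and $\eta = 1/2$ yields
\[
    \mathbb{P}\!\left(N(a^\star) \leq \tfrac{1}{2} N\mu(a^\star)\right) \leq \exp\!\left(-\frac{N\mu(a^\star)}{8}\right).
\]
Taking complements gives exactly the claimed bound \eqref{eq:N-lower-bound-MAB}.

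There is essentially no obstacle here, as the statement is a routine concentration fact. The only points requiring care are (i) invoking the i.i.d.\ assumption on the data so that $N(a^\star)$ is genuinely binomial, rather than merely having the correct mean, and (ii) matching the constant in the exponent, which forces the deviation parameter to be $\eta = 1/2$ so that $\eta^2/2 = 1/8$. If one preferred a self-contained derivation, the same bound follows from the exponential Markov inequality $\mathbb{P}(N(a^\star) \leq t) \leq e^{\lambda t}\,\mathbb{E}[e^{-\lambda N(a^\star)}] = e^{\lambda t}\big(1 - \mu(a^\star) + \mu(a^\star) e^{-\lambda}\big)^N$ optimized over $\lambda > 0$, but citing the standard Chernoff bound is cleaner and suffices for its single use in establishing~\eqref{eq:N-lower-bound-MAB}.
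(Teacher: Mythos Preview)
Your proof is correct and matches the paper's approach exactly: the paper simply states that the lemma is ``a simple consequence of the Chernoff bound for binomial random variables,'' and you have filled in precisely those details with the multiplicative lower-tail Chernoff bound at $\eta = 1/2$.
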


Denote by $\mathcal{E}$ the event that both relations~\eqref{eq:bonus-sandwich-MAB} and~\eqref{eq:N-lower-bound-MAB} hold.
Conditioned on $\mathcal{E}$, one has 
\[
r(a^\star)\leq\hat{r}(a^\star)+b(a^\star)=\hat{r}(a^\star)-b(a^\star)+2b(a^\star).
\]
In view of the definition of $\hat{a}$, we have $\hat{r}(a^\star)-b(a^\star)\leq\hat{r}(\hat{a})-b(\hat{a})$,
and hence 
\[
r(a^\star)\leq\hat{r}(\hat{a})-b(\hat{a})+2b(a^\star)\leq r(\hat{a})+2b(a^\star),
\]
where the last inequality holds under the event $\mathcal{E}$ (in particular the bound~\eqref{eq:bonus-sandwich-MAB} on $\hat{a}$). Now
we are left with the term $b(a^\star)$. It suffices to lower bound $N(a^\star)$. Note that the event $\mathcal{E}$ (cf.~the lower bound~\eqref{eq:N-lower-bound-MAB})
ensures that 
\[
N(a^\star)\geq\frac{1}{2}N\mu(a^\star)\geq \frac{N}{2C^\star}>0.
\]
As a result, we conclude 
\[
b(a^\star)=\sqrt{\frac{\log(2|\mathcal{A}|/\delta)}{2N(a^\star)}}\leq\sqrt{\frac{\log(2|\mathcal{A}|/\delta)}{N\mu(a^\star)}},
\]
which further implies 
\begin{equation}
r(a^\star)\leq r(\hat{a})+2\sqrt{\frac{\log(2|\mathcal{A}|/\delta)}{N\mu(a^\star)}}\label{eq:clean-event-consequence-MAB}
\end{equation}
whenever~the event $\mathcal{E}$ holds. 
It is easy to check that under the assumption $N \geq 8 C^\star \log(1/\delta)$, we have $\mathbb{P}(\mathcal{E}) \geq 1 - 2\delta $. This finishes the proof of the high probability claim. 

In the end, we can compute the expected
sub-optimality as 
\begin{align*}
\mathbb{E}_\cD[r(a^\star)-r(\hat{a})] & =\mathbb{E}_\cD[\left(r(a^\star)-r(\hat{a})\right)1\{\mathcal{E}\}]+\mathbb{E}_\cD[\left(r(a^\star)-r(\hat{a})\right)1\{\mathcal{E}^{c}\}]\\
 & \leq2\sqrt{\frac{\log(2|\mathcal{A}|/\delta)}{N\mu(a^\star)}}\mathbb{P}(\mathcal{E})+\mathbb{P}(\mathcal{E}^{c}).
\end{align*}
Here the inequality uses the bound~\eqref{eq:clean-event-consequence-MAB} and
the fact that $r(a^\star)-r(\hat{a})\leq1$. 
We continue bounding the sub-optimality by
\begin{equation*}
    \mathbb{E}_\cD[r(a^\star)-r(\hat{a})] \leq 2\sqrt{\frac{\log(2|\mathcal{A}|/\delta)}{N\mu(a^\star)}} + 2\delta \leq 2\sqrt{\frac{C^\star\log(2|\mathcal{A}|/\delta)}{N}} + 2\delta.
\end{equation*}
Here the last relation uses $\mu(a^\star) \geq 1 / C^\star$. Taking $\delta=1/N$ completes the proof.

\begin{proof}[Proof of Lemma~\ref{lemma:hoeffding-MAB}]
Consider a fixed action $a$. If $N(a)$ = 0, one trivially has $\hat{r}(a) - b(a) = -1 \leq r(a) \leq \hat{r}(a) + b(a) = 1$. 
When $N(a) > 0$, applying Hoeffding's inequality, one sees that 
\begin{equation*}
    \mathbb{P}\left( \left|\hat{r}(a)-r(a)\right|\geq\sqrt{\frac{\log(2|\mathcal{A}| / \delta)}{2N(a)}} \Large\mid N(a) \right)\leq \frac{\delta}{|\mathcal{A}|}.
\end{equation*}
Since this claim holds for all possible $N(a)$, we have for any fixed action $a$
\begin{equation*}
    \mathbb{P}\left( \left|\hat{r}(a)-r(a)\right|\geq b(a)\right)\leq \frac{\delta}{|\mathcal{A}|}.
\end{equation*}
A further union bound over the action space yields the advertised claim. 
\end{proof}

\subsection{Proof of Theorem~\ref{thm.bandit_lower_bound}}\label{app:proof_bandit_lower_bound}
We separate the proof into two cases: $C^\star \geq 2$ and $C^\star \in (1,2)$. 
For both cases, our lower bound proof relies on the classic Le Cam's two-point method \cite{yu1997assouad, le2012asymptotic}. In essence, we construct two MAB instances in the family $\mathsf{MAB}(C^\star)$ with different optimal rewards that are difficult to distinguish given the offline dataset.

\paragraph{The case of $C^\star \geq 2$.}
We consider a simple two-armed bandit. For the behavior policy, we set 
$\mu(2) = 1/C^\star$ and $\mu(1) = 1 - 1/C^\star$. Since we are constructing lower bound instances, it suffices to consider Bernoulli distributions supported on $\{0,1\}$. In particular, we consider the following two possible sets for the Bernoulli means 
\begin{align*}
    f_{1} = (\frac{1}{2}, \frac{1}{2}-\delta); \qquad    f_{2} = (\frac{1}{2}, \frac{1}{2}+\delta),
\end{align*}
with $\delta \in [0,1/4]$. Indeed, $(\mu, f_{1}), (\mu,f_{2}) \in \mathrm{MAB}(C^\star)$ with the proviso that $C^\star \geq 2$.
Denote the loss/sub-optimality of an estimator $\hat{a}$ to be 
\begin{equation}
    \mathcal{L}(\hat{a};f) \coloneqq  r(a^\star) - r(\hat{a}),
\end{equation}
where the optimal action $a^\star$ implicitly depends on the reward distribution $f$. Clearly, for any estimator $\hat{a}$, we have 
\begin{align*}
    \mathcal{L}(\hat{a};f_1) +\mathcal{L}(\hat{a};f_2) \geq \delta.     
\end{align*}
Therefore Le Cam's method tells us that 
\begin{align*}
    \inf_{\hat a}\sup_{(\mu, R) \in \mathsf{MAB}(C^\star)} \mathbb{E}_\cD [ r(a^\star) - r(\hat{a})] \geq \inf_{\hat{a}} \sup_{f\in {f_1, f_2}} \; \mathbb{E}_{\cD} [\mathcal{L}(\hat{a};f)] \geq \frac{\delta}{4}\cdot \exp(-\mathsf{KL}(\mathbb{P}_{\mu \otimes f_{1}} \| \mathbb{P}_{\mu \otimes f_{2}})).
\end{align*}
Here $\mathsf{KL}(\mathbb{P}_{\mu \otimes f_{1}} \| \mathbb{P}_{\mu \otimes f_{2}})$ denotes the KL divergence between the two MAB instances with $N$ samples. 
Direct calculations yield 
\begin{align*}
\mathsf{KL}(\mathbb{P}_{\mu \otimes f_{1}} \| \mathbb{P}_{\mu \otimes f_{2}})  \leq \frac{N \mathsf{KL}(\mathbb{P}_{f_{1}} \| \mathbb{P}_{f_{2}})}{C^\star} 
 \leq \frac{N(2\delta)^2}{C^\star(1/4-\delta^2)} 
 \leq 200N\delta^2/C^\star.
\end{align*}
Here we use the fact that for two Bernoulli distribution, $\mathsf{KL}(\mathrm{Bern}(p) \| \mathrm{Bern}(q))\leq (p-q)^2/[q(1-q)]$ and that $\delta \in [0,1/4]$. Taking
\begin{equation*}
    \delta = \min \left\{\frac{1}{4}, \sqrt{\frac{C^\star}{N}}\right\}
\end{equation*}
yields the desired lower bound for $C^\star \geq 2$.

\paragraph{The case of $C^\star\in(1, 2)$.}
Recall that when $C^\star \geq 2$, we construct the same behavior distribution $\mu$ for two different reward distributions $f_1, f_2$. In contrast, in the case of $C^\star\in[1, 2)$, we construct instances that are different in both the reward distributions as well as the behavior distribution. More specifically, let $\mu_1(1) = 1/C^\star$, $\mu_1(2) = 1-1/C^\star$, $f_1 = (\frac{1}{2}+\delta, \frac{1}{2})$ for some $\delta>0$ which will be specified later. Similarly, we let $\mu_2(1) = 1-1/C^\star$, $\mu_2(2) = 1/C^\star$, $f_2 = (\frac{1}{2}, \frac{1}{2}+\delta)$. It is straightforward to check that  
$(\mu_1, f_{1}), (\mu_2,f_{2}) \in \mathsf{MAB}(C^\star)$. 
Clearly, for any estimator $\hat{a}$, we have 
\begin{align*}
    \mathcal{L}(\hat{a};f_1) +\mathcal{L}(\hat{a};f_2) \geq \delta.     
\end{align*}
Again, applying Le Cam's method, we have
\begin{align}
    \inf_{\hat a}\sup_{(\mu, R) \in \mathsf{MAB}(C^\star)} \mathbb{E}_\cD [ r(a^\star) - r(\hat{a})] \geq \frac{\delta}{4}\cdot \exp(-\mathsf{KL}(\mathbb{P}_{\mu_1 \otimes f_{1}} \| \mathbb{P}_{\mu_2 \otimes f_{2}})).
\end{align}
Note that 
\begin{align*}
\mathsf{KL}(\mathbb{P}_{\mu_1 \otimes f_{1}} \| \mathbb{P}_{\mu_2 \otimes f_{2}}) & \leq N\cdot \Big(\frac{\frac{1}{2}+\delta}{C^\star} \log(\frac{1+2\delta}{C^\star-1}) + \frac{\frac{1}{2}-\delta}{C^\star} \log(\frac{1-2\delta}{C^\star-1})  \\
& \quad + \frac{1-\frac{1}{C^\star}}{2}\log(\frac{C^\star-1}{1+2\delta}) + \frac{1-\frac{1}{C^\star}}{2}\log(\frac{C^\star-1}{1-2\delta})\Big) \\ 
& = N \cdot \left(\left(\frac{1+\delta}{C^\star}-\frac{1}{2}\right)\log\left(\frac{1+2\delta}{C^\star-1}\right) + \left(\frac{1-\delta}{C^\star}-\frac{1}{2}\right)\log\left(\frac{1-2\delta}{C^\star-1}\right)\right).
\end{align*}
Taking $\delta = \frac{2-C^\star}{2}$, we get $\mathsf{KL}(\mathbb{P}_{\mu_1 \otimes f_{1}} \| \mathbb{P}_{\mu_2 \otimes f_{2}})\leq N\cdot \frac{2-C^\star}{C^\star}\cdot \log\left(\frac{2}{C^\star-1}\right)$. Thus we know that 
\begin{align}
    \inf_{\hat a}\sup_{(\mu, R) \in \mathsf{MAB}(C^\star)} \mathbb{E}_\cD [ r(a^\star) - r(\hat{a})] \gtrsim \exp\left(-(2-C^\star)\cdot\log\left(\frac{2}{C^\star-1}\right)\cdot N\right).
\end{align}
This finishes the proof of the lower bound for $C^\star\in (1, 2)$. 

\subsection{Proof of Proposition~\ref{thm.imitation_bandit_bound}}\label{app:proof_IL_bandit}
To begin with, we have $\mathbb{E}[r(a^\star) - r(\hat a)] \leq \mathbb{P}(\hat a \neq a^\star)$, where we have used the fact that the rewards are bounded between 0 and 1. Thus it is sufficient to control
$\mathbb{P}(\hat a \neq a^\star)$, which obeys
\begin{equation*}
    \mathbb{P}(\hat a \neq a^\star) =\mathbb{P}(\exists a\neq a^\star, N(a)\geq N({a^\star})) \leq   \mathbb{P}(N-N({a^\star})\geq N({a^\star})) =  \mathbb{P}(N({a^\star})\leq \frac{N}{2}).
\end{equation*}
Applying the Chernoff bound for binomial random variables yields 
\begin{equation*}
\mathbb{P}(N({a^\star})\leq \frac{N}{2}) \leq \exp\left(-N\cdot\mathsf{KL}\left(\mathrm{Bern}\left(\tfrac{1}{2}\right) \; \Big\| \; \mathrm{Bern}\left(\tfrac{1}{C^\star}\right)\right)\right).    
\end{equation*}
Taking the previous steps collectively to arrive at the desired conclusion.

\subsection{Proof of Theorem~\ref{thm:LCB_lower_bandit}}\label{app:proof_LCB_lower_bandit}

We prove the case when $C^\star = 1.5$ and when $C^\star = 6$ separately.

\paragraph{The case when $C^\star=1.5$.}
We begin by introducing the MAB problem. 
\subparagraph{The bandit instance.} Consider a two-armed bandit problem with the optimal arm denoted by $a^\star$ and the sub-optimal arm $a$. We set $\mu(a^\star) = 1/{C^\star}$, and $\mu(a) = 1 - 1/{C^\star}$ in accordance with the requirement $1 / \mu(a^\star) \leq C^\star$. We consider the following reward distributions: the optimal arm $a^\star$ has a deterministic reward equal to 1/2 whereas the sub-optimal arm has a reward distribution of $\mathsf{Bern}(1/2-g)$ for some $g\in(0, 1/3)$, which will be specified momentarily. It is straightforward to check that the arm $a^\star$ is indeed optimal and the MAB problem $(\mu, R)$ belongs to $\mathsf{MAB}(C^\star)$. 

\subparagraph{Lower bounding the performance of LCB.} 
For the two-armed bandit problem introduced above, we have
\begin{align}
    \mathbb{E}_\cD[r(a^\star) - r(\hat a) ]&= g \cdot \mathbb{P}(\text{LCB chooses arm }a) \nonumber \\
    &= g \sum_{k=0}^{N} \mathbb{P}(\text{LCB chooses arm }a \mid N(a) = k) \mathbb{P} (N(a) = k) \nonumber \\
    &\geq g \sum_{k=N\mu(a) / 2}^{2 N \mu(a)} \mathbb{P}(\text{LCB chooses arm }a \mid N(a) = k) \mathbb{P} (N(a) = k),
    \label{eq:LCB-lower-bound-bandit-first}
\end{align}
where we restrict ourselves to the event 
\begin{equation*}
    \mathcal{E} \coloneqq \{\frac{1}{2}N \mu(a) \leq N(a) \leq 2 N \mu(a)\}.
\end{equation*}
It turns out that when $1 \leq  k \leq 2 N \mu(a)$, one has 
\begin{equation}\label{eq:LCB-lower-bound-bandit}
    \mathbb{P}(\text{LCB chooses arm }a \mid N(a) = k) \geq \frac{1}{\sqrt{4 N \mu(a)}} \cdot \exp\left(-\frac{(g \sqrt{2 N \mu(a)}+\sqrt{L})^2}{\frac{1}{4}-g^2}\right). 
\end{equation}
Combine inequalities~\eqref{eq:LCB-lower-bound-bandit-first} and~\eqref{eq:LCB-lower-bound-bandit} to obtain
\begin{align*}
    \mathbb{E}_\cD[r(a^\star) - r(\hat a) ] \geq g \frac{1}{\sqrt{4 N \mu(a)}} \cdot \exp\left(-\frac{(g \sqrt{2 N \mu(a)}+\sqrt{L})^2}{\frac{1}{4}-g^2}\right)  \mathbb{P}(\mathcal{E}). 
\end{align*}
Setting $g = \min \{1/3, \sqrt{L / (2 N \mu(a))}\}$ yields
\begin{align*}
    \mathbb{E}_\cD[r(a^\star) - r(\hat a) ] &\geq  \frac{\min\left(\sqrt{L / (2 N \mu(a))}, \frac{1}{3}\right)}{\sqrt{4N\mu(a)}} \cdot \exp\left(-32L\right)  \mathbb{P}(\mathcal{E}) \\
    &\geq \min\left(\frac{\sqrt{L}}{8N\mu (a)}, \frac{1}{12\sqrt{N\mu(a)}}\right) \cdot \exp\left(-32L\right),
\end{align*}
where the last inequality uses Chernoff's bound, i.e., $\mathbb{P}(\mathcal{E}) \geq 1 - 2\exp(-N\mu(a) / 8) \geq \frac{1}{2}$. Substituting the definition of $L$ and $\mu(a)$ completes the proof. 

\begin{proof}[Proof of the lower bound~\eqref{eq:LCB-lower-bound-bandit}]
By the definition of LCB, we have
\begin{align*}
   \mathbb{P}(\text{LCB chooses arm }a \mid N(a) = k) & =  \mathbb{P}\left( 1/2 - \sqrt{L/N(a^\star)}\leq \hat r(a) - \sqrt{L/N(a)}\mid N(a) = k\right) \\
   & \geq \mathbb{P}\left(  \hat r(a)  \geq  1/2 +  \sqrt{L/N(a)}\mid N(a) = k\right)  \\ 
   & \geq \frac{1}{\sqrt{2k}} \cdot \exp\left(-k\cdot \mathsf{KL}\left(\frac{1}{2}-\sqrt{\frac{L}{k}} \Big\| \frac{1}{2}+g\right)\right) \\
   & \geq \frac{1}{\sqrt{2k}} \cdot \exp\left(-\frac{k(g+\sqrt{\frac{L}{k}})^2}{\frac{1}{4}-g^2}\right).
\end{align*}
Here, the penultimate inequality comes from a lower bound for Binomial tails~\cite{robert1990ash} and the last inequality uses the elementary fact that  $\mathsf{KL}(p \| q)\leq (p-q)^2/q(1-q)$. 
One can easily see that the probability lower bound is decreasing in $k$ and hence when $N(a)=k \leq 2 N \mu(a)$, we have
\begin{equation*}
    \mathbb{P}(\text{LCB chooses the arm }a \mid N(a) = k) \geq \frac{1}{\sqrt{4 N \mu(a)}} \cdot \exp\left(-\frac{(g \sqrt{2 N \mu(a)}+\sqrt{L})^2}{\frac{1}{4}-g^2}\right).
\end{equation*}
This completes the proof. 
\end{proof}

\paragraph{The case when $C^\star= 6$.}
We now prove the lower bound for the case of $C^\star=6$.
\subparagraph{The bandit instance.} Consider a two-armed bandit problem with $\mu(a^\star) = \frac{1}{C^\star}$ for the optimal arm and $\mu(a) = 1 - \frac{1}{C^\star}$ for the sub-optimal arm, which satisfies the concentrability requirement. We set the following reward distributions: the optimal arm $a^\star$ is distributed according to $\mathsf{Bern}(1/2)$ and the sub-optimal arm has a deterministic reward equal to $1/2-g$ for some $g \in (0,1/2)$, which will be specified momentarily. It is immediate that $a^\star$ is optimal in this construction and that the MAB problem $(\mu, R)$ belongs to $\mathsf{MAB}(C^\star)$.

\subparagraph{Lower bounding the performance of LCB.} 
Similar arguments as before give
\begin{align}
    \mathbb{E}_\cD[r(a^\star) - r(\hat a) ] \geq g \sum_{k=N\mu(a^\star) / 2}^{2 N \mu(a^\star)} \mathbb{P}(\text{LCB chooses arm }a \mid N(a^\star) = k) \mathbb{P} (N(a^\star) = k),
    \label{eq:LCB-lower-bound-bandit-first-C-2}
\end{align}
where we restrict ourselves to the event (with abuse of notation)
\begin{equation*}
    \mathcal{E} \coloneqq \{\frac{1}{2}N \mu(a^\star) \leq N(a^\star) \leq 2 N \mu(a^\star)\}.
\end{equation*}
By the definition of LCB, when $C^\star= 6$ and $\frac{1}{2}N \mu(a^\star)\leq  k \leq 2 N \mu(a^\star)\leq \frac{1}{3}N$, one has 
\begin{align*}
   \mathbb{P}(\text{LCB chooses arm }a \mid N(a^\star) = k) & =  \mathbb{P}\left( \hat{r}(a^\star) - \sqrt{L/N(a^\star)}\leq   \frac{1}{2}-g - \sqrt{L/N(a)}\mid N(a^\star) = k\right) \\
   & = \mathbb{P}\left(  \hat r(a^\star)  \leq  1/2 -g +\sqrt{L/k} -   \sqrt{L/(N-k)}\mid N(a^\star) = k\right)  \\ 
    & \geq \mathbb{P}\left(  \hat r(a^\star)  \leq  1/2 -g +\sqrt{3L/N} -   \sqrt{3L/(2N)}\mid N(a^\star) = k\right)  \\ 
     & > \mathbb{P}\left(  \hat r(a^\star)  \leq  1/2 -g +\sqrt{\frac{L}{4N}} \mid N(a^\star) = k\right).
\end{align*}
We set $g = \min\{\sqrt{L / (4N)}, 1/2\}$. 
Under this choice of $g$, we always have
\begin{equation}\label{eq:LCB-lower-bound-bandit-C-2}
    \mathbb{P}(\text{LCB chooses arm }a \mid N(a^\star) = k) \geq \frac{1}{2}.
\end{equation}
Combine the inequalities~\eqref{eq:LCB-lower-bound-bandit-first-C-2} and~\eqref{eq:LCB-lower-bound-bandit-C-2} to obtain
\begin{align*}
    \mathbb{E}_\cD[r(a^\star) - r(\hat a) ] \geq g \cdot \frac{1}{2}\cdot \mathbb{P}(\mathcal{E}) \geq \frac{\min(1, \sqrt{L/N})}{8}.
\end{align*} 

\section{Proofs for contextual bandits}
In Section~\ref{app:proof_LCB_CB_upper_bound}, we prove the sub-optimality guarantee of the LCB approach for contextual bandits stated in Theorem~\ref{thm:LCB_CB_upper_bound}. 
In Section~\ref{app:proof_CB_lower} we prove Theorem~\ref{theorem:lower_bound_offline_CB}---a minimax lower bound for contextual bandits. In the end, we prove the failure of the most played arm approach in Section~\ref{app:proof_CB_IL_lower}.

\subsection{Proof of Theorem \ref{thm:LCB_CB_upper_bound}}\label{app:proof_LCB_CB_upper_bound}

We prove a stronger version of Theorem~\ref{thm:LCB_CB_upper_bound}: Fix a deterministic expert policy $\pi$ that is not necessarily optimal. We 
assume that \[
\max_{s}\frac{\rho(s)}{\mu(s,\pi(s))}\leq C^\pi.
\] Setting $\delta = 1/N$, the policy $\hat{\pi}$ returned by Algorithm \ref{alg:CB-LCB} obeys
\begin{align*}
    \E_\cD[J(\pi)-J(\hat{\pi})] \lesssim \min \left( 1, \widetilde{O} \left(\sqrt{\frac{S (C^\pi-1) }{N}} + \frac{S}{N}\right) \right).
\end{align*}
The statement in Theorem~\ref{thm:LCB_CB_upper_bound} can be recovered when we take $\pi=\pi^\star$.

We begin with defining a good event 
\begin{equation}\label{eq:event-CB}
\mathcal{E}\coloneqq\{\forall s,a: \; |r(s,a)-\hat{r}(s,a)|\leq b(s,a)\},
\end{equation}
on which the penalty function $b(s,a)$ provides a valid upper bound on
the reward estimation error $r(s,a)-\hat{r}(s,a)$. With this definition
in place, we state a key decomposition of the sub-optimality of the
LCB method: 
\begin{align*}
&\mathbb{E}_\cD\left[\sum_{s}\rho(s)\left[r(s,\pi(s))-r(s,\hat{\pi}(s))\right]\right] \\
&\quad =\mathbb{E}_\cD\left[\sum_{s}\rho(s)\left[r(s,\pi(s))-r(s,\hat{\pi}(s))\right]\mathbbm{1}\{N(s,\pi(s))=0\}\right]\eqqcolon T_{1}\\
 & \quad \quad+\mathbb{E}_\cD\left[\sum_{s}\rho(s)\left[r(s,\pi(s))-r(s,\hat{\pi}(s))\right]\mathbbm{1}\{N(s,\pi(s))\geq1\}\mathbbm{1}\{\mathcal{E}\}\right]\coloneqq T_{2}\\
 & \quad \quad+\mathbb{E}_\cD\left[\sum_{s}\rho(s)\left[r(s,\pi(s))-r(s,\hat{\pi}(s))\right]\mathbbm{1}\{N(s,\pi(s))\geq1\}\mathbbm{1}\{\mathcal{E}^{c}\}\right]\coloneqq T_{3}.
\end{align*}
In words, the term $T_{1}$ corresponds to the error induced by missing
mass, i.e., when the expert action  $\pi(s)$ is not seen
in the data $\mathcal{D}$. The second term $T_{2}$ denotes the error
when the good event $\mathcal{E}$ takes place. The last term $T_{3}$ denotes the sub-optimality incurred under the complement event $\mathcal{E}^{c}$.

To avoid cluttered notation, we denote $L\coloneqq 2000\sqrt{2\log (S|\mathcal{A}|N)}$ such that 
$b(s,a) = \sqrt{L / N(s,a)}$ when $N(s,a) \geq 1$. 
These three error terms obey the following upper bounds, whose proofs
are provided in subsequent subsections:
\begin{subequations}
\begin{align}
T_{1} & \leq\frac{4SC^{\pi}}{9N};\label{eq:missing-mass-CB}\\
T_{2} & \lesssim \frac{SC^{\pi}}{N}L+\sqrt{\frac{S(C^{\pi}-1)}{N}L}+\frac{1}{N^{9}};\label{eq:hard-term-CB}\\
T_{3} & \leq\frac{1}{N}.\label{eq:event-CB-proof}
\end{align}
\end{subequations}
Combining the above three bounds together with the fact that  $\E_\cD[J(\pi)-J(\hat{\pi})]\leq 1$ yields that
\begin{align*}
     \E_\cD[J(\pi)-J(\hat{\pi})] \lesssim \min \left( 1, \widetilde{O} \left(\sqrt{\frac{S (C^\pi-1) }{N}} + \frac{SC^\pi}{N}\right) \right).
\end{align*}
Note that if $C^\pi\geq 2$, the first term $\sqrt{\frac{S (C^\pi-1) }{N}}$ always dominates. Conversely, if $C^\pi<2$, we can omit the extra $C^\pi$ in the second term $\frac{SC^\pi}{N}$. This gives
the desired claim in Theorem~\ref{thm:LCB_CB_upper_bound}.

\subsubsection{Proof of the bound~\eqref{eq:missing-mass-CB} on $T_{1}$}

Since $r(s,\pi(s))-r(s,\hat{\pi}(s))\leq1$ for any $\hat{\pi}(s)$,
one has 
\begin{align*}
T_{1} & \leq\mathbb{E}_\cD\left[\sum_{s}\rho(s)\mathbbm{1}\{N(s,\pi(s))=0\}\right] =\sum_{s}\rho(s)\mathbb{P}(N(s,\pi(s))=0)\\
 & =\sum_{s}\rho(s)(1-\mu(s,\pi(s)))^{N}.
\end{align*}
Recall the assumption that $\max_{s}\frac{\rho(s)}{\mu(s,\pi(s))}\leq C^{\pi}$.
We can continue the upper bound of $T_{1}$ to obtain 
\[
T_{1}\leq\sum_{s}C^{\pi}\mu(s,\pi(s))(1-\mu(s,\pi(s)))^{N}\leq\sum_{s}C^{\pi}\frac{4}{9N}=\frac{4}{9N}SC^{\pi}.
\]
Here, the last inequality holds since $\max_{x\in[0,1]}x(1-x)^{N}\leq4/(9N)$. 

\subsubsection{Proof of the bound~\eqref{eq:hard-term-CB} on $T_{2}$}
For any state $s \in \cS$, define the total mass on sub-optimal actions to be
\[
\bar{\mu}(s)\coloneqq\sum_{a:a\neq\pi(s)}\mu(s,a).
\]
We can then partition the state space into the following three disjoint
sets: \begin{subequations}
\begin{align}
\mathcal{S}_{1} & \coloneqq\left\{ s\mid\rho(s)<\frac{2 C^{\pi}L}{N}\right\} ,\label{eq:defn-S-1-CB}\\
\mathcal{S}_{2} & \coloneqq\left\{ s\mid\rho(s)\geq\frac{2 C^{\pi}L}{N},\mu(s,\pi(s))\geq10\bar{\mu}(s)\right\},\label{eq:defn-S-2-CB}\\
\mathcal{S}_{3} & \coloneqq\left\{ s\mid \rho(s)\geq\frac{2 C^{\pi}L}{N},\mu(s,\pi(s))<10\bar{\mu}(s)\right\} .\label{eq:defn-S-3-CB}
\end{align}
\end{subequations}
The set $\mathcal{S}_{1}$ includes the
states that are ``less important'' in evaluating the performance of LCB.
The set $\mathcal{S}_{2}$ captures the states for which the expert
action $\pi(s)$ is drawn more frequently under the behavior
distribution $\mu$. 

With this partition at hand, we can decompose
the term $T_{2}$ accordingly:
\begin{align*}
T_{2} & =\sum_{s\in\mathcal{S}_{1}}\rho(s)\mathbb{E}_\cD\left[\left[r(s,\pi(s))-r(s,\hat{\pi}(s))\right]\mathbbm{1}\{N(s,\pi(s))\geq1\}\mathbbm{1}\{\mathcal{E}\}\right]\eqqcolon T_{2,1}\\
 & \quad+\sum_{s\in\mathcal{S}_{2}}\rho(s)\mathbb{E}_\cD\left[\left[r(s,\pi(s))-r(s,\hat{\pi}(s))\right]\mathbbm{1}\{N(s,\pi(s))\geq1\}\mathbbm{1}\{\mathcal{E}\}\right]\eqqcolon T_{2,2}\\
 & \quad+\sum_{s\in\mathcal{S}_{3}}\rho(s)\mathbb{E}_\cD\left[\left[r(s,\pi(s))-r(s,\hat{\pi}(s))\right]\mathbbm{1}\{N(s,\pi(s))\geq1\}\mathbbm{1}\{\mathcal{E}\}\right]\eqqcolon T_{2,3}.
\end{align*}
The proof is completed by observing the following three upper bounds:
\[
T_{2,1}\leq\frac{2SC^{\pi}L}{N};\qquad T_{2,2}\lesssim\frac{1}{N^{9}};\qquad T_{2,3}\lesssim \sqrt{\frac{C^{\pi}SL}{N} \min\{1, 10(C^\pi - 1)\}} \lesssim \sqrt{\frac{(C^{\pi}-1)SL}{N} }.
\]

\paragraph{Proof of the bound on $T_{2,1}$. }

We again use the basic fact that 
\[
\left[r(s,\pi(s))-r(s,\hat{\pi}(s))\right]\mathbbm{1}\{N(s,\pi(s))\geq1\}\mathbbm{1}\{\mathcal{E}\}\leq1
\]
to reach
\[
T_{2,1}\leq\sum_{s\in\mathcal{S}_{1}}\rho(s)\leq\frac{2SC^{\pi}L}{N},
\]
where the last inequality hinges on the definition~\eqref{eq:defn-S-1-CB}
of $\mathcal{S}_{1}$, namely for any $s\in\mathcal{S}_{1}$, one
has $\rho(s)<\frac{2 C^{\pi}L}{N}$. 

\paragraph{Proof of the bound on $T_{2,2}$. }
Fix a state $s \in \cS_{2}$, we define the following two sets of actions:
\begin{align*}
    \cA_1(s) & \coloneqq \{ a \mid r(s, a) < r(s, \pi(s)), \mu(s,a) \leq L/(200N) \},\\
    \cA_2(s) & \coloneqq \{ a \mid r(s, a) < r(s, \pi(s)), \mu(s,a) > L/(200N) \}.
\end{align*}
Further define $A(s, a)$ to be the event that $ \hat r(s, \pi(s))-b(s, \pi(s))<\hat r(s,a) - b(s,a)$. Clearly one has $ r(s,\pi(s))-r(s,\hat{\pi}(s))\leq \ind \{\cup_{a \in \cA_1(s) \cup \cA_{2}(s)}A(s,a)\}$. Consequently, we can write the following decomposition:
\begin{align*}
    & \E_\cD\left[\left[r(s,\pi(s))-r(s,\hat{\pi}(s))\right] \ind \{N(s,\pi(s))\geq1\}\ind\{\mathcal{E}\}\right]\\
     &\quad \leq  \mathbb{P}(\exists a, r(s, a) < r(s, \pi(s)), A(s,a), N(s,\pi(s))\geq1)\\
      &\quad \leq \mathbb{P}(\exists a\in \mathcal{A}_1(s), A(s,a), N(s,\pi(s))\geq1) \eqqcolon p_1(s)\\
    & \quad \quad +\mathbb{P}(\exists a\in \mathcal{A}_2(s), A(s,a), N(s,\pi(s))\geq1) \eqqcolon p_2(s) .
\end{align*}
As a result, $T_{2,2}$ obeys 
\begin{align}\label{eq:T_4_prime_decomp_CB}
    T_{2, 2} \leq \sum_{s\in\mathcal{S}_2} \rho(s) p_1(s) + \sum_{s\in\mathcal{S}_2} \rho(s) p_2(s),
\end{align}
which satisfy the bounds
\begin{equation*}
    \sum_{s\in\mathcal{S}_2} \rho(s) p_1(s) \lesssim \frac{1}{N^{10}},\quad \text{and}\quad \sum_{s\in\mathcal{S}_2} \rho(s) p_2(s) \lesssim \frac{1}{N^9}.
\end{equation*}
Taking these two bounds collectively leads us to the desired conclusion. In what follows, we focus on the proving the aforementioned two bounds.

\subparagraph{Proof of the bound on $\sum_{s\in\mathcal{S}_2} \rho(s) p_1(s)$.}

Fix a state $s\in\mathcal{S}_{2}$. In view of the data coverage assumption, one has 
\begin{align}\label{eq:CB-prob-lower}
    \mu(s, \pi(s)) \geq \frac{\rho(s)}{C^\pi}  \geq  \frac{2 L}{N}.
\end{align}
In contrast, for any $a \in \mathcal{A}_1(s)$, we have  
\begin{align}\label{eq:CB-prob-upper}
    \mu(s,a) \leq   \frac{L}{200N}.
\end{align}
Therefore one has $\mu(s, \pi(s)) \gg \mu(s,a)$ for any 
non-expert action $a$. As a result, the optimal action is selected more frequently than the sub-optimal ones.
It turns out that under such circumstances, the LCB algorithm picks the right action with high probability. We make this intuition precise below. 

The bounds~\eqref{eq:CB-prob-lower} and~\eqref{eq:CB-prob-upper} together with Chernoff's bound give
\begin{align*}
    \prob \left( N(s,a) \leq \frac{5L}{200} \right) & \geq 1 -  \exp \left(-\frac{L}{200} \right); \\
    \prob \left(  N(s,\pi(s)) > L \right) & \geq 1 -  \exp \left(-\frac{L}{4} \right). 
\end{align*}
These allow us to obtain an upper bound for the function $\hat r - b$ evaluated at sub-optimal actions and a lower bound on $\hat r(s,\pi(s)) - b(s, \pi(s))$. More precisely, if $N(s, a)=0$, we know that $\hat r(s, a) = -1$; when $1\leq N(s,a) \leq \frac{5L}{200}$, we have 
\begin{align*}
   \hat r(s,a) - b(s, a)    & \leq 1 -  \sqrt{\frac{L}{5L/200}} \leq -5.
\end{align*}
Now we turn to lower bounding the function $\hat r - b$ evaluated at the optimal action. When $N(s,\pi(s)) > L$, one has
\begin{align*}
    \hat r(s,\pi(s)) - b(s,\pi(s))  
    > - \sqrt{\frac{L}{N(s, \pi(s))}} = -1.
\end{align*}

To conclude, if both $N(s,a) \leq \frac{5L}{200}$ and $N(s,\pi(s)) \geq L$ hold, we must have $\hat r(s, a) - b(s, a)< \hat r(s, \pi(s)) - b(s, \pi(s))$. 
Therefore  we can deduce that 
\begin{align*}
 \sum_{s\in\cS_2}\rho(s)p_1(s) & = \sum_{s\in\cS_2}\rho(s)\mathbb{P}(\exists a\in \mathcal{A}_1(s), A(s,a), N(s,\pi(s))\geq1)\\
 & \leq  (|\mathcal{A}|-1)\exp\left(-\frac{L}{200}\right)+\exp\left(-\frac{1}{4}  L\right) \\
 & \leq |\mathcal{A}|\exp\left(-\frac{L}{200}\right) \\
 & \lesssim \frac{1}{N^{10}}.
\end{align*}
The last inequality comes from the choice of $L = 2000\log(2S|\mathcal{A}|N)$.
\subparagraph{Proof of the bound on $\sum_{s \in \cS_{2}} \rho(s) p_2(s)$.}
Before embarking on the proof of $\sum_{s\in\mathcal{S}_2} \rho(s) p_2(s) \lesssim \lesssim \frac{1}{N^9}$, it is helpful to pause and gather a few useful properties of $(s,a)$ with $s\in\mathcal{S}_2$, $a\in\mathcal{A}_2(s)$:
\begin{enumerate}
    \item $\rho(s)\geq  \frac{2C^\pi L}{N}$ and hence $\mu(s,\pi(s)) \geq  \frac{2L}{N}$ by the definition of $C^\pi$;
    \item $\frac{L}{200N}\leq\mu(s, a)\leq\frac{1}{10}\mu(s,\pi(s))$;
    \item $\sum_{a\in \mathcal{A}_2} \mu(s, a)\leq \frac{1}{10}\mu(s, \pi(s))$;
    \item $|\cA_2(s)| \leq 200 N / L$. 
\end{enumerate}
In addition, we define a high probability event on which the sample sizes $N(s,a)$ concentrate around their respective means $N\mu(s,a)$:
\begin{align*}
    \mathcal{E}_{2}(s) \coloneqq &  \Bigg\{\frac{1}{2}N \mu(s,\pi(s))\leq N(s,\pi(s))\leq2N\mu(s,\pi(s)), \\
    & \quad \forall a\in\mathcal{A}_2(s), \frac{1}{2}N \mu(s, a)\leq N(s,a)\leq2N\mu(s, a) \Bigg\},
\end{align*}
which---in view of the Chernoff bound and the union bound---obeys
\begin{equation}\label{eq:action-event-CB}
    \mathbb{P}(\mathcal{E}_2(s)) \geq 1 - 1 /N^9.
\end{equation}

With these preparations in place, we can derive
\begin{align*}
    p_2(s) & =  \mathbb{P}(\exists a\in \mathcal{A}_2, A(s,a), N(s,\pi(s))\geq1)\\ 
    & \leq\mathbb{P}(\mathcal{E}_2^c(s))+ \mathbb{P}(\exists a\in \mathcal{A}_2, A(s,a), N(s,\pi(s))\geq1, \mathcal{E}_2(s))\\
    & \leq \mathbb{P}(\mathcal{E}_2^c(s))+  \sum_{a\in\mathcal{A}_2} \mathbb{P}(A(s, a), N(s,\pi(s))\geq1, \mathcal{E}_2(s)) \\
    & \lesssim \frac{1}{N^9} + \frac{|\mathcal{A}_2|}{N^{10}}   \lesssim \frac{1}{N^9},
\end{align*}
where the last line arises from the bound
\begin{equation}\label{eq:error-prob-CB}
    \mathbb{P}(A(s, a), N(s,\pi(s))\geq1, \mathcal{E}_2(s)) \lesssim \frac{1}{N^{10}},
\end{equation}
and the cardinality upper bound $|\cA_2(s)| \lesssim N$. 
This completes the bound on $\sum_{s \in \cS_{2}} p(s)$. 

\begin{proof}[Proof of the bound~\eqref{eq:error-prob-CB}]
On the event $\cE_2(s)$, one must have $N(s,a) \geq 1$ and $N(s,\pi(s)) \geq 1$. Therefore, we can define 
\begin{align*}
    \epsilon\coloneqq\sqrt{\frac{L}{N(s,a)}}-\sqrt{\frac{L}{N(s,\pi(s))}} \quad \text{and} \quad \Delta= r(s,\pi(s)) - r(s,a),
\end{align*}
and obtain the following bound on the conditional probability
\begin{align*}
 & \mathbb{P}\left(\hat{r}(s,a)-\sqrt{\frac{L}{N(s,a)}}\geq\hat{r}(s,\pi(s))-\sqrt{\frac{L}{N(s,\pi(s))}}\;\middle| \; N(s, \pi(s)), N(s,a), \mathcal{E}_2\right)\\
 &\quad \leq  \exp\left(-2\frac{N(s,a)N(s,\pi(s))(\epsilon+\Delta)^{2}}{N(s,a)+N(s,\pi(s))} \;\middle| \; N(s, \pi(s)), N(s,a), \mathcal{E}_2 \right),
\end{align*}
where the inequality arises from Lemma~\ref{lemma:hoeffding_on_difference_empirical_average}. 
Note that under event $\cE_2(s)$ and the property $\mu(s, a)\leq \frac{1}{10}\mu(s, \pi(s))$, we have $N(s,\pi(s))\geq 4N(s,a)$ and thus $\epsilon\geq\frac{1}{2}\sqrt{\frac{L}{N(s,a)}}$. 
This allows us to further upper bound the probability as
\begin{align*}
 & \mathbb{P}\left(\hat{r}(s,a)-\sqrt{\frac{L}{N(s,a)}}\geq\hat{r}(s,\pi(s))-\sqrt{\frac{L}{N(s,\pi(s))}}\;\middle| \; N(s, \pi(s)), N(s,a), \mathcal{E}_2\right)\\
     &\quad \leq \exp\left(-N(s,a)(\epsilon+\Delta)^{2}\right)\\
 &\quad \leq \exp\left(-\left(\frac{1}{2}\sqrt{L}+ \sqrt{N(s,a)}\Delta\right)^{2}\right)\\
 &\quad \leq \exp\left(-\frac{1}{4}L\right)
 \lesssim  \frac{1}{N^{10}},
\end{align*}
under the choice of $L = 2000\log(2S|\mathcal{A}|N)$.
Since this upper bound holds for any configuration of $N(s,a)$ and $N(s,\pi(s))$, one has the desired claim. 
\end{proof}

\paragraph{Proof of the bound on $T_{2,3}$. }

On the good event $\mathcal{E}$, we know that 
\begin{align*}
r(s,\pi(s))-r(s,\hat{\pi}(s)) & \leq r(s,\pi(s))-\left[\hat{r}(s,\hat{\pi}(s))-b(s,\hat{\pi}(s))\right]\\
 & \leq r(s,\pi(s))-\left[\hat{r}(s,\pi(s))-b(s,\pi(s))\right]\\
 & \leq2b(s,\pi(s)).
\end{align*}
Here the middle line arises from the definition of the LCB algorithm,
i.e., $\hat{\pi}(s)\in\arg\max_{a}\hat{r}(s,a)-b(s,a)$ for each $s$.
Substitute this upper bound into the definition of $T_2$ to obtain
\begin{align*}
T_{2,3} & \leq2\sum_{s\in\mathcal{S}_{3}}\rho(s)\mathbb{E}_\cD\left[b(s,\pi(s))\mathbbm{1}\{N(s,\pi(s))\geq1\}\mathbbm{1}\{\mathcal{E}\}\right]\\
 & =2\sum_{s\in\mathcal{S}_{3}}\rho(s)\mathbb{E}_\cD\left[\sqrt{\frac{L}{N(s,\pi(s))}}\mathbbm{1}\{N(s,\pi(s))\geq1\}\mathbbm{1}\{\mathcal{E}\}\right]\\
 & \leq2\sqrt{L}\sum_{s\in\mathcal{S}_{3}}\rho(s)\mathbb{E}_\cD\left[\sqrt{\frac{1}{N(s,\pi(s))\vee1}}\mathbbm{1}\{N(s,\pi(s))\geq1\}\right],
\end{align*}
where we have used the definition of $b(s,a)$. Lemma~\ref{lemma:binomial_inverse_moment_bound}
tells us that there exists a universal constant $c>0$ such that 
\[
\mathbb{E}_\cD\left[\sqrt{\frac{1}{N(s,\pi(s))\vee1}}\mathbbm{1}\{N(s,\pi(s))\geq1\}\right]\leq\frac{c}{\sqrt{N\mu(s,\pi(s))}}.
\]
As a result, we reach the conclusion that 
\[
T_{2,3}\leq2\sqrt{L}\sum_{s\in\mathcal{S}_{3}}\rho(s)\frac{c}{\sqrt{N\mu(s,\pi(s))}}.
\]
In view of the assumption $\max_{s}\rho(s)/\mu(s,\pi(s))\le C^{\pi}$,
one further has 
\begin{align*}
T_{2,3} & \leq2c\sqrt{\frac{C^{\pi}L}{N}}\sum_{s\in\mathcal{S}_{3}}\sqrt{\rho(s)}\leq2c\sqrt{\frac{C^{\pi}L}{N}}\sqrt{S}\sqrt{\sum_{s\in\mathcal{S}_{3}}\rho(s)},
\end{align*}
with the last inequality arising from Cauchy-Schwarz's inequality. The desired
bound on $T_{2,3}$ follows from the following simple fact regarding
$\sum_{s\in\mathcal{S}_{3}}\rho(s)$:
\begin{equation}\label{eq:prob-upper-bound-CB}
\sum_{s\in\mathcal{S}_{3}}\rho(s)\leq\min\left\{ 1,10(C^{\pi}-1)\right\} .    
\end{equation}

\begin{proof}[Proof of the inequality~\eqref{eq:prob-upper-bound-CB}]
The upper bound $1$ is trivial to see. To achieve the other upper bound, we first use the assumption $\max_{s}\rho(s)/\mu(s,\pi(s))\le C^{\pi}$ to see
\begin{equation*}
    \sum_{s\in\mathcal{S}_{3}}\rho(s)\leq \sum_{s\in\mathcal{S}_{3}} C^\pi \mu(s,\pi(s)) \leq 10 C^\pi \sum_{s\in\mathcal{S}_{3}}  \bar{\mu}(s).
\end{equation*}
Here the last relation follows from the definition of $\mathcal{S}_3$. 
Note that 
\begin{equation*}
    \sum_{s\in\mathcal{S}_{3}}  \bar{\mu}(s) \leq \sum_{s}  \bar{\mu}(s) = 1 -\sum_{s} \mu(s,\pi(s)) \leq 1 - \frac{1}{C^\pi}, 
\end{equation*}
where we have reused the assumption $\max_{s}\rho(s)/\mu(s,\pi(s))\le C^{\pi}$. 
Taking the previous two inequalities collectively yields the final claim. 
\end{proof}

\subsubsection{Proof of the bound~\eqref{eq:event-CB-proof} on $T_{3}$}

It is not hard to see that 
\[
\sum_{s}\rho(s)\left[r(s,\pi(s))-r(s,\hat{\pi}(s))\right]\mathbbm{1}\{N(s,\pi(s))\geq1\}\leq1,
\]
which further implies 

\[
T_{3}\leq\mathbb{E}_\cD\left[\mathbbm{1}\{\mathcal{E}^{c}\}\right]=\mathbb{P}(\mathcal{E}^{c}).
\]
It then boils down to upper bounding the probability $\mathbb{P}(\mathcal{E}^{c})$.
The proof is similar in spirit to that of Lemma~\ref{lemma:hoeffding-MAB}.

Fix a state-action pair $(s,a)$. If $N(s,a)=0$, one clearly has
$-1=\hat{r}(s,a)-b(s,a)\leq r(s,a)\leq\hat{r}(s,a)+b(s,a)=1$. Therefore
we concentrate on the case when $N(s,a)\geq1$. Apply the Hoeffding's
inequality to see that for any $\delta_{1}\in(0,1)$, one has
\[
\mathbb{P}\left(\left|\hat{r}(s,a)-r(s,a)\right|\geq\sqrt{\frac{\log(2/\delta_{1})}{2N(s,a)}}\mid N(s,a)\right)\leq\delta_{1}.
\]
In particular, setting $\delta_{1}=\delta/(S|\mathcal{A}|)$
yields 
\begin{equation}
\mathbb{P}\left(\left|\hat{r}(s,a)-r(s,a)\right|\geq \sqrt{\frac{\log(2S|\mathcal{A}|/\delta)}{2N(s,a)}}\mid N(s,a)\right)\leq\frac{\delta}{S|\mathcal{A}|},\label{eq:hoeffding-CB}
\end{equation}
Recall that $b(s,a)$ is defined such that when $N(s,a)\geq1$,
\[
b(s,a)=\sqrt{\frac{2000\log(2S|\mathcal{A}|/\delta)}{N(s,a)}}.
\]
Since the inequality~\eqref{eq:hoeffding-CB} holds for any $N(s,a)$,
we have for any fixed $(s,a)$, 
\[
\mathbb{P}\left(\left|\hat{r}(s,a)-r(s,a)\right|\geq b(s,a)\right)\leq\frac{\delta}{S|\mathcal{A}|}.
\]
Taking a union bound over $\mathcal{S}\times\mathcal{A}$ leads to
the conclusion that $\mathbb{P}(\mathcal{E}^{c})\leq\delta$, and
hence $T_{3}\leq\delta$. Taking $\delta=1/N$ gives the advertised result.

\subsection{Proof of Theorem \ref{theorem:lower_bound_offline_CB}}\label{app:proof_CB_lower}

We prove the lower bound differently for the following regimes: $C^{\star}=1$, $C^{\star}\geq2$, and $C^{\star}\in(1,2)$.
When $C^{\star}=1$, the offline RL problem reduces to the imitation
learning problem in contextual bandits, whose lower bound has been
shown in the paper~\citet{rajaraman2020toward}. When $C^{\star}\in(1,2)$ or $C^{\star}\geq2$, we generalize the lower bound given for the multi-armed
bandits with different choices of initial distributions. In what follows,
we detail the proofs for each regime.

\paragraph{The case when $C^{\star}=1$. }

When $C^{\star}=1$, one has $d^{\star}(s,a)=\mu(s,a)$ for any $(s,a)$
pair. This recovers the imitation learning problem, where the rewards are also included in the dataset. Thus the lower
bound proved in Lemma~\ref{lem:IL_LB_MDP_lower} 
is applicable, which comes from a modified version of Theorem 6 in the paper~\citet{rajaraman2020toward}: 
\begin{equation}
\inf_{\hat{\pi}}\sup_{(\rho,\mu,R)\in\mathsf{CB}(1)}\mathbb{E}_\cD[J(\pi^{\star})-J(\hat{\pi})]\gtrsim\min\left(1,\frac{S}{N}\right).\label{eq:CB-lower-bound-C-1}
\end{equation}

\paragraph{The case when $C^{\star}\protect\geq2$. }

Fix a contextual bandit instance $(\rho,\mu,R)$, define the loss/sub-optimality of an estimated policy $\pi$ to be
\[
\mathcal{L}(\pi;(\rho,\mu,R))\coloneqq J(\pi^{\star})-J(\hat{\pi}).
\]
We intend to show that when $C^{\star}\geq2$, 
\begin{equation}
\inf_{\hat{\pi}}\sup_{(\rho,\mu,R)\in\mathsf{CB}(C^{\star})}\mathbb{E}_{\mu \otimes R}[\mathcal{L}(\pi;(\rho,\mu,R))]\gtrsim\min\left(1,\sqrt{\frac{SC^{\star}}{N}}\right).\label{eq:CB-lower-bound-C-2}
\end{equation}
Our proof follows the standard recipe of proving minimax lower bounds,
namely, we first construct a family of hard contextual bandit instances,
and then apply Fano's inequality to obtain the desired lower bound. 

\subparagraph{Construction of hard instances. }
Consider a CB with state space $\cS\coloneqq\{1,2,\ldots,S\}$.
Set the initial distribution $\rho_{0}(s)=1/S$ for any $s\in\mathcal{S}$. 
Each state $s \in \cS$ is associated with two actions $a_{1}$ and $a_{2}$.
The behavior distribution for each $s,a$ is specified below 
\[
\mu_{0}(s,a_{1})=\frac{1}{S}-\frac{1}{SC^{\star}}\qquad\text{and}\qquad\mu_{0}(s,a_{2})=\frac{1}{SC^{\star}}.
\]
It is easy to check that for any reward distribution $R$, one has $(\rho_{0},\mu_{0},R)\in\mathsf{CB}(C^{\star}).$ It remains to construct a set of reward distributions that are nearly
indistinguishable from the data. To achieve this goal, we leverage the Gilbert-Varshamov lemma (cf.~Lemma~\ref{lem:V_G}) to obtain a set
$\mathcal{V}\subseteq\{-1,1\}^{S}$ that obeys (1) $|\mathcal{V}|\ge\exp(S/8)$
and (2) $\|\bm{v}_{1}-\bm{v}_{2}\|_{1}\geq S/2$ for any $\bm{v}_{1}\bm{v}_{2}\in\mathcal{V}$
with $\bm{v}_{1}\neq\bm{v}_{2}$. With this set $\mathcal{V}$ in
place, we can continue to construct the following set of Bernoulli
reward distributions
\[
\mathcal{R}\coloneqq\left\{ \left\{ \mathsf{Bern}\left(\frac{1}{2}\right),\mathsf{Bern}\left(\frac{1}{2}+v_{s}\delta\right)\right\} _{s\in\mathcal{S}}\mid\bm{v}\in\mathcal{V}\right\} .
\]
Here $\delta\in(0,1/3)$ is a parameter that will be specified later.
Each element $\bm{v}\in\mathcal{V}$ is mapped to a reward
distribution such that for the state $s$, the reward distribution
associated with $(s,a_{2})$ is $\mathsf{Bern}(\frac{1}{2}+v_{s}\delta)$.
In view of the second property of the set $\mathcal{V}$, one has
for any policy $\pi$ and any two different reward distributions $R_{1},R_{2}\in\mathcal{R}$,
\[
\mathcal{L}(\pi;(\rho_{0},\mu_{0},R_{1}))+\mathcal{L}(\pi;(\rho_{0},\mu_{0},R_{2}))\geq\frac{\delta}{4}.
\]

\subparagraph{Application of Fano's inequality. }

Now we are ready to apply Fano's inequality, that is 
\[
\inf_{\hat{\pi}}\sup_{(\rho_{0},\mu_{0},R)\mid R\in\mathcal{R}}\mathbb{E}_{\mu_0 \otimes R}[\mathcal{L}(\pi;(\rho_{0},\mu_{0},R))]\geq\frac{\delta}{8}\left(1-\frac{N\max_{i\neq j}\mathsf{KL}\left(\mu\otimes R_{i}\|\mu\otimes R_{j}\right)+\log2}{\log|\mathcal{R}|}\right).
\]
It then remains to control $\max_{i\neq j}\mathsf{KL}\left(\mu\otimes R_{i}\|\mu\otimes R_{j}\right)$
and $\log|\mathcal{R}|$. For the latter quantity, we have 
\[
\log|\mathcal{R}|=\log|\mathcal{V}|\geq S/8,
\]
where the inequality comes from the first property of the set $\mathcal{V}$.
With regards to the KL divergence, one has
\[
\max_{i\neq j}\mathsf{KL}\left(\mu\otimes R_{i}\|\mu\otimes R_{j}\right)\leq S\cdot\frac{1}{SC^{\star}}\cdot16\delta^{2}=\frac{16\delta^{2}}{C^{\star}}.
\]
As a result, we conclude that as long as 
\[
\frac{200N\delta^{2}}{SC^{\star}}\leq1,
\]
one has
\[
\inf_{\hat{\pi}}\sup_{(\rho_{0},\mu_{0},R)\mid R\in\mathcal{R}}\mathcal{L}(\pi;(\rho_{0},\mu_{0},R))\gtrsim\delta.
\]
To finish the proof, we can set $\delta=\sqrt{\frac{SC^{\star}}{200N}}$
when $\sqrt{\frac{SC^{\star}}{200N}}<\frac{1}{3}$, and $\delta=\frac{1}{3}$
otherwise. This yields the desired lower bound~\eqref{eq:CB-lower-bound-C-2}. 

\paragraph{The case when $C^{\star}\in(1,2)$. }

We intend to show that 
\begin{equation}
\inf_{\hat{\pi}}\sup_{(\rho,\mu,R)\in\mathsf{CB}(C^{\star})}\mathbb{E}[\mathcal{L}(\pi;(\rho, \mu, R))]\gtrsim\min\left(C^{\star}-1,\sqrt{\frac{S(C^{\star}-1)}{N}}\right).\label{eq:CB-lower-bound-C-1-2}
\end{equation}
The proof is similar to that of the previous case, with the difference
lying in the construction of $\rho_{0}$ and $\mu_{0}$.

\subparagraph{Construction of hard instances. }
Consider a CB with state space $\cS\coloneqq\{0,1,2,\ldots,S\}$ and action space $\cA \coloneqq \{ a_1, a_2\}$.
Set the initial distribution $\rho_{0}(s)=(C^{\star}-1)/S$
for any $1\leq s\leq S$ and $\rho_{0}(0)=2-C^{\star}$. Each
state $1\leq s\le S$ is associated with two actions $a_{1}$ and
$a_{2}$ such that 
\[
\mu_{0}(s,a_{1})=\mu_{0}(s,a_{2})=\frac{C^{\star}-1}{SC^{\star}}.
\]
In contrast, for $s=0$, one has a single action $a_{1}$ with $\mu_{0}(0,a_{1})=\frac{2-C^{\star}}{C^{\star}}$.
Similar to the above case, we have for any reward distribution $R$, that $(\rho_{0},\mu_{0},R)\in\mathsf{CB}(C^{\star}).$

We deploy essentially the same family $\mathcal{R}$ of reward distributions
as before with an additional reward of $R(0,a_{1})\equiv0$ on state $s = 0$. As a result,
one can show that for any policy $\pi$ and any two different reward
distributions $R_{1},R_{2}\in\mathcal{R}$, 
\[
\mathcal{L}(\pi;(\rho_{0},\mu_{0},R_{1}))+\mathcal{L}(\pi;(\rho_{0},\mu_{0},R_{2}))\geq\frac{\delta}{4}(C^{\star}-1).
\]

\subparagraph{Application of Fano's inequality. }

Fano's inequality tells us that 
\[
\inf_{\hat{\pi}}\sup_{(\rho_{0},\mu_{0},R)\mid R\in\mathcal{R}}\mathbb{E}[\mathcal{L}(\pi;(\rho_{0},\mu_{0},R))]\geq\frac{\delta(C^\star-1)}{8}\left(1-\frac{N\max_{i\neq j}\mathsf{KL}\left(\mu\otimes R_{i}\|\mu\otimes R_{j}\right)+\log2}{S/8}\right).
\]
In the current case, we have 
\[
\max_{i\neq j}\mathsf{KL}\left(\mu\otimes R_{i}\|\mu\otimes R_{j}\right)\leq S\cdot\frac{C^{\star}-1}{SC^{\star}}\cdot16\delta^{2}=\frac{16(C^{\star}-1)}{C^{\star}}\delta^{2}.
\]
As before, setting 
\[
\delta=\min\left(\sqrt{\frac{SC^{\star}}{200(C^{\star}-1)N}},\frac{1}{3}\right)
\]
yields the lower bound 
\[
\inf_{\hat{\pi}}\sup_{(\rho_{0},\mu_{0},R)\mid R\in\mathcal{R}}\mathbb{E}[\mathcal{L}(\pi;(\rho_{0},\mu_{0},R))]\gtrsim\min\left(C^{\star}-1,\sqrt{\frac{SC^{\star}(C^{\star}-1)}{N}}\right)\gtrsim\min\left(C^{\star}-1,\sqrt{\frac{S(C^{\star}-1)}{N}}\right).
\]

\paragraph{Putting the pieces together. }
We are now in position to summarize and simplify the three established
lower bounds~\eqref{eq:CB-lower-bound-C-1}, \eqref{eq:CB-lower-bound-C-2},
and \eqref{eq:CB-lower-bound-C-1-2}. 

When $C^{\star}=1$, the claim in Theorem~\ref{theorem:lower_bound_offline_CB} is identical to the
bound~\eqref{eq:CB-lower-bound-C-1}. 

When $C^{\star}\geq2$, we have from the bound~{\eqref{eq:CB-lower-bound-C-2}
that }
\[
\inf_{\hat{\pi}}\sup_{(\rho,\mu,R)\in\mathsf{CB}(C^{\star})}\mathbb{E}[\mathcal{L}(\pi;(\rho,\mu,R))]\gtrsim\min\left(1,\sqrt{\frac{SC^{\star}}{N}}\right)\asymp\min\left(1,\sqrt{\frac{S(C^{\star}-1)}{N}}\right).
\]
Further notice that 
\[
\sqrt{\frac{S(C^{\star}-1)}{N}}\geq\sqrt{\frac{S}{N}}\geq\min\left(1,\frac{S}{N}\right).
\]
The claimed lower bound in Theorem~\ref{theorem:lower_bound_offline_CB}
arises.

In the end, when $C^{\star}\in(1,2)$, we know from the bounds~\eqref{eq:CB-lower-bound-C-1}
and \eqref{eq:CB-lower-bound-C-1-2} that 
\[
\inf_{\hat{\pi}}\sup_{(\rho,\mu,R)\in\mathsf{CB}(C^{\star})}\mathbb{E}[\mathcal{L}(\pi;(\rho,\mu,R))]\gtrsim\max\left\{ \min\left(1,\frac{S}{N}\right),\min\left(C^{\star}-1,\sqrt{\frac{S(C^{\star}-1)}{N}}\right)\right\} .
\]
Elementary calculations reveal that 
\[
\max\left\{ \min\left(1,\frac{S}{N}\right),\min\left(C^{\star}-1,\sqrt{\frac{S(C^{\star}-1)}{N}}\right)\right\} \asymp\min\left(1,\sqrt{\frac{S(C^{\star}-1)}{N}}+\frac{S}{N}\right),
\]
which completes the proof.

\subsection{Proof of Proposition~\ref{thm:CB_IL_lower}}\label{app:proof_CB_IL_lower}

We design the hard instance with state space  $\{s_0, s_1\}$ and action space $\{a_0, a_1\}$. Only under state $(s_0, a_0)$ we can possibly get non-zero reward, and all other state-action pairs give $0$ rewards. We set $d^\star(s_0) = d^\star(s_0, a_0) = C^\star-1-\epsilon$, $d^\star(s_1) = 2-C^\star+\epsilon$ for some small $\epsilon>0$. The constraints introduced by concentrability are $\mu(s_0, a_0)\geq (C^\star-1-\epsilon)/C^\star$, $\mu(s_1)\geq (2-C^\star+\epsilon)/C^\star$.

We set $\mu(s_0, a_0)= (C^\star-1-\epsilon)/C^\star$, $\mu(s_0, a_1)=(C^\star-1)/C^\star$, $\mu(s_1) = (2-C^\star+\epsilon)/C^\star$. One can verify that $d^\star, \mu$ are valid probability distributions and the concentrability assumption still holds. 

In this case, since $\mu(s_0, a_0)<\mu(s_0, a_1) $, the algorithm fails to identify the optimal arm $a_0$ as $N\rightarrow \infty$. This incurs the following expected sub-optimality 
\begin{align*}
\lim_{N\rightarrow \infty}\E_\cD[J(\pi^{\star})-J(\hat{\pi})] =  d^\star(s_0) \geq C^\star-1-\epsilon. 
\end{align*}
Setting $\epsilon\rightarrow 0$ gives us the conclusion.

\section{Proofs for MDPs}
We begin by presenting several Bellman equations for discounted MDPs, which is followed by the proof of Lemma~\ref{lemma:penalty_LCB}. We then prove general properties of Algorithm \ref{alg:OVI-LCB-DS} under the clean event \eqref{def:clean_event}. These include the contraction properties given in Proposition \ref{prop:monotonicity_data_splitting} as well as the value difference lemma (cf.~Lemma~\ref{lemma:value_difference}). Next, we prove the LCB sub-optimality Theorem \ref{thm:MDP_upperbound_hoffding}. In the end, we prove the minimax lower bound followed by an analysis of imitation learning with an alternative data coverage assumption.

\subsection{Bellman and Bellman-like equations}
Given a discounted MDP, the Bellman value operator $\cT_\pi$ associated with a policy $\pi$ is defined as
\begin{align}
    \cT_\pi V \coloneqq r_\pi + \gamma P_\pi V.
\end{align}
It is well-known that $V^\pi$ is the unique solution to $\cT_\pi V = V$, which is known as the Bellman equation. 

In addition to $V^\pi$, other quantities in an MDP also follow a Bellman-like equation, which we briefly review here. For discounted occupancy measures, simple algebra gives
\begin{alignat}{2}\label{def:state_occupancy_vector}
    d_\pi & = (1-\gamma)\rho + \gamma d_\pi P_\pi \quad && \Rightarrow \quad d_\pi = (1-\gamma )\rho (I - \gamma P_\pi)^{-1},\\ \label{def:state_action_occupancy_vector}
    d^\pi & = (1-\gamma)\rho^\pi + \gamma d^\pi P^\pi \quad && \Rightarrow \quad d^\pi = (1-\gamma)\rho^\pi (I - \gamma P^\pi)^{-1}.
\end{alignat}

\subsection{Proof of Lemma \ref{lemma:penalty_LCB}}\label{app:proof_penalty_LCB}
The proof is similar to that of Lemma~\ref{lemma:hoeffding-MAB}. For completeness, we include it here.

From the algorithmic design, it is clear (in particular the $Q$ update and the monotonic improvement step) that 
\begin{equation*}
    V_t(s) \in [0, V_{\max}], \qquad \text{for all }s \in \mathcal{S} \text{ and } t \geq 0. 
\end{equation*}
As a result, for a fixed tuple $(s,a,t)$, if $m_t(s,a) = 0$, one has
\begin{align*}
    \left|r(s,a) +\gamma P_{s,a} \cdot V_t  - r_t(s,a) - \gamma P_{s,a}^t \cdot V_{t-1}\right| \leq 1 + \gamma V_{\max} = V_{\max} \leq b_t(s,a).
\end{align*}
When $m_t(s,a) \geq 1$, exploiting the independence between $V_t$ and $P^t_{s,a}$ and using Hoeffding's inequality 
to obtain 
\begin{equation*}
    \mathbb{P}\left(\left|r(s,a)+\gamma P_{s,a}\cdot V_{t}-r_{t}(s,a)-\gamma P_{s,a}^{t}\cdot V_{t-1}\right|\geq V_\max \sqrt{L / m_t(s,a)} \mid m_{t}(s,a)\right)\leq2\exp\left(-2L\right).
\end{equation*}
Since the above inequality holds for any $m_t(s,a)$, one necessarily has 
\begin{equation*}
\mathbb{P}\left(\left|r(s,a)+\gamma P_{s,a}\cdot V_{t}-r_{t}(s,a)-\gamma P_{s,a}^{t}\cdot V_{t-1}\right|\geq b_{t}(s,a)\right)\leq2\exp\left(-2L\right).    
\end{equation*}
Taking a union bound over $s,a$ and $t \in \{0,\dots, T\}$ and setting $\delta_1 = \frac{\delta}{2 S|\cA|(T+1)}$ finishes the proof.

\subsection{Proof of Proposition \ref{prop:monotonicity_data_splitting}}\label{app:proof_monotonicity}
We prove the claims one by one. 
\paragraph{Proof of $V_{t-1} \leq V_t$.} 
The first claim $V_{t-1} \leq V_t$ is directly implied by line~\ref{line:imposing_monotonicity} of Algorithm \ref{alg:OVI-LCB-DS}. 

\paragraph{Proof of $V_t \leq V^{\pi_t}$.}  
For the second claim $V_t \leq V^{\pi_t}$, it suffices to prove that $V_t \leq \cT_{\pi_t}V_t$. Indeed, $V_t \leq \cT_{\pi_t}V_t$ together with the monotonicity of the Bellman's operator yield the conclusion $V_t \leq V^{\pi_t}$. In what follows, we prove $V_t \leq \cT_{\pi_t}V_t$ via induction. 

The base case $V_0 \leq \cT_{\pi_0}V_0$ holds due to zero initialization. Hence from now on, we assume $V_k \leq \cT_{\pi_k}V_k$ for $0 \leq k \leq t-1$ and intend to prove $V_t \leq \cT_{\pi_t}V_t$.
We split the proof into two cases. 

\begin{itemize}
    \item If $V_{t-1}(s) \geq \max_a \{ r_{t-1}(s,a) - b_{t-1}(s,a) + \gamma P^{t-1}_{s,a} \cdot V_{t-1} \}$, the algorithm sets $V_t(s) = V_{t-1}(s)$ and $\pi_t(s) = \pi_{t-1}(s)$. 
Consequently, we have
\begin{align*}
    V_t(s) = V_{t-1}(s) \leq (\cT_{\pi_{t-1}} V_{t-1}) (s) \leq (\cT_{\pi_{t}} V_{t}) (s),
\end{align*}
where the first inequality arises from the induction hypothesis and the last one holds since $V_{t-1} \leq V_{t}$ and $\pi_t(s) = \pi_{t-1}(s)$. 
\item 
If instead, the algorithm sets $Q_{t}(s,a) = r_{t}(s,a) -  b_{t}(s,a) + \gamma P^{t}_{s,a} \cdot V_{t-1} $ with $\pi_t(s) = \argmax_a Q_t(s,a)$ and $V_t(s) = Q_t(s, \pi_t(s))$, then we have 
\begin{align*}
    (\cT_{\pi_t} V_{t}) (s) 
    = & r(s,\pi_t(s)) + \gamma P_{s,\pi_t(s)} \cdot V_t \\
    \geq & r(s,\pi_t(s)) + \gamma P_{s,\pi_t(s)} \cdot V_{t-1} \\
    = & r_{t}(s,\pi_t(s)) -  b_{t}(s,\pi_t(s)) + \gamma  P_{s,\pi_t(s)}^{t} \cdot V_{t-1} \\
    & +  b_{t}(s,\pi_t(s)) + r(s,\pi_t(s))- r_{t}(s,\pi_t(s)) + \gamma ( P_{s,\pi_t(s)} - P^{t}_{s, \pi_t(s)}) \cdot V_{t-1} \\
    = & V_t(s) + b_{t}(s,\pi_t(s)) + r(s,\pi_t(s))- r_{t}(s,\pi_t(s)) + \gamma ( P_{s,\pi_t(s)} - P^{t}_{s, \pi_t(s)}) \cdot V_{t-1}\\
    \geq & V_t(s),
\end{align*}
where the first inequality is due to $V_{t-1} \leq V_t$ and the last inequality holds under the clean event $\cE_{\text{MDP}}$. 
\end{itemize}
This finishes the proof of $V_t \leq \cT_{\pi_t}V_t$ and hence $V_t \leq V^{\pi_t}$. The claim $V^{\pi_t} \leq V^\star$ is trivial to see.

\paragraph{Proof of $Q_t \leq r + \gamma P V_{t-1} \leq r + \gamma P V_t$.} 
Since $V_t \geq V_{t-1}$, we have
\begin{align*}
     r(s,a) + \gamma P_{s,a} \cdot  V_t  & \geq  r(s,a) + \gamma P_{s,a} \cdot  V_{t-1}\\
    & \quad =  r_{t}(s,a) - b_{t}(s,a) + \gamma P^{t}_{s,a} \cdot  V_{t-1} \\
    &\quad \quad + b_{t}(s,a) + r(s,a) - r_{t}(s,a) + \gamma(P_{s,a} - P^{t}_{s,a}) \cdot  V_{t-1}\\
    & \quad \geq Q_t(s,a),
\end{align*}
where the last inequality holds under $\cE_{\text{MDP}}$. 

\paragraph{Proof of $Q^\pi - Q_{t} \leq \gamma P^\pi (Q^\pi - Q_{t-1}) + 2 b_t$.} Let $Q(:,\pi) \in \R^S$ be a vector with elements $Q^\pi(s,\pi(s))$. By definition, one has
\begin{align*}
    & Q^\pi(s,a) - Q_{t}(s,a) \\
    &\quad =  r(s,a)+ \gamma P_{s,a} \cdot V^\pi - r_t(s,a) + b_t(s,a) - \gamma P^t_{s,a} \cdot V_{t-1}\\
    &\quad =  \gamma P_{s,a} \cdot V^\pi - \gamma P_{s,a} \cdot V_{t-1} + b_t(s,a) + r(s,a) - r_t(s,a)+ \gamma (P_{s,a} - P^t_{s,a}) \cdot V_{t-1}\\
     & \quad \leq \gamma P_{s,a} \cdot (Q^\pi(:,\pi) - Q_{t-1}(:,\pi)) + b_t(s,a) + r(s,a) - r_t(s,a)+ \gamma (P_{s,a} - P^t_{s,a}) \cdot V_{t-1}\\
    & \quad \leq  \gamma P_{s,a} \cdot (Q^\pi(:,\pi) - Q_{t-1}(:,\pi)) + 2 b_t(s,a).
\end{align*}
Here, the first inequality comes from the fact that $V_{t-1} \geq \max_a Q_{t-1}(:,a) \geq Q_t(:,\pi)$ and the last inequality again holds under $\cE_{\text{MDP}}$.

\subsection{Proof of Lemma \ref{lemma:value_difference}}\label{app:proof_value_difference}
In view of Proposition~\ref{prop:monotonicity_data_splitting}, one has $V_t  \leq V^{\pi_t}$. Therefore we obtain
\[
\mathbb{E}_{\rho}\left[V^{\pi}(s)-V^{\pi_{t}}(s)\right] \leq \mathbb{E}_{\rho}\left[V^{\pi}(s)-V_{t}(s)\right] \leq \mathbb{E}_{\rho}\left[V^{\pi}(s)-V_{t}^{\mathrm{mid}}(s)\}\right],
\]
where the last inequality arises from the monotonicity imposed by
Algorithm~\ref{alg:OVI-LCB-DS}. Note that $V_{t}^{\mathrm{mid}}(s)=Q_{t}(s,\pi_{t}^{\mathrm{mid}})$
and that $\pi_{t}^{\mathrm{mid}}$ is greedy with respect to
$Q_{t}$. We can continue the upper bound as
\[
\mathbb{E}_{\rho}\left[V^{\pi}(s)-V^{\pi_{t}}(s)\right] \leq
\mathbb{E}_{\rho}\left[Q^{\pi}(s,\pi(s))-Q_{t}(s,\pi_{t}^{\mathrm{mid}})\}\right]
\leq \mathbb{E}_{\rho}\left[Q^{\pi}(s,\pi(s))-Q_{t}(s,\pi(s))\right].
\]
Rewriting using the matrix notation gives
\begin{equation}
\mathbb{E}_{\rho}\left[V^{\pi}(s)-V^{\pi_{t}}(s)\right]\leq\mathbb{E}_{\rho}\left[Q^{\pi}(s,\pi(s))-Q_{t}(s,\pi(s))\right]=\rho^{\pi}(Q^{\pi}-Q_{t}).\label{eq:value-diff-first}
\end{equation}
Now we are ready to apply the third claim in Proposition~\ref{prop:monotonicity_data_splitting} to deduce that on the event $\cE_{\text{MDP}}$:
\begin{align*}
Q^{\pi}-Q_{t} & \leq\gamma P^{\pi}(Q^{\pi}-Q_{t-1})+2 b_{t}\leq\gamma P^{\pi}\left[ \gamma P^{\pi}(Q^{\pi}-Q_{t-2})+2 b_{t-1}\right] +2 b_{t}\\
 & \leq\cdots\\
 & \le\gamma^{t}(P^{\pi})^{t}(Q^{\pi}-Q_{0})+2\sum_{j=1}^{t}(\gamma P^{\pi})^{t-j}b_{j}\\
 & \leq\frac{\gamma^{t}}{1-\gamma}\mathbf{1}+2\sum_{j=1}^{t}(\gamma P^{\pi})^{t-j}b_{j}.
\end{align*}
Here $\mathbf{1}$ denotes the all-one vector with dimension $S|\mathcal{A}|$,
and the last inequality arises from the fact that $Q^{\pi}-Q_{0}=Q^{\pi}\leq(1-\gamma)^{-1}\mathbf{1}$.
Multiplying both sides the of the equation above by $\rho^{\pi}$,
we conclude that
\begin{equation}
\rho^{\pi}(Q^{\pi}-Q_{t})\leq\frac{\gamma^{t}}{1-\gamma}+2\sum_{j=1}^{t}\rho^{\pi}(\gamma P^{\pi})^{t-j}b_{j}=\frac{\gamma^{t}}{1-\gamma}+2\sum_{j=1}^{t}v_{t-j}^{\pi}b_{j},\label{eq:value-diff-second}
\end{equation}
where we use the definition of $v_{k}^{\pi}=\rho^{\pi}(\gamma P^{\pi})^{k}$.
Combine the inequalities~\eqref{eq:value-diff-first} and~\eqref{eq:value-diff-second}
to reach the desired result.

\subsection{Proof of Theorem \ref{thm:MDP_upperbound_hoffding}}\label{app:proof_MDP_upperbound}
Similar to the proof given for contextual bandits, we prove a stronger result than Theorem~\ref{thm:MDP_upperbound_hoffding}. Fix any deterministic expert policy $\pi$.
Assume that the data coverage assumption holds, that is
\begin{align*}
    \max_{s,a} \frac{{d}^\pi(s,a)}{\mu(s,a)} \leq C^\pi. 
\end{align*}
Then for all $C^\pi \geq 1$, Algorithm \ref{alg:OVI-LCB-DS} with $\delta = 1/N$ achieves
\begin{align}
    \E_\cD \left[J(\pi) - J(\hat{\pi})\right]\lesssim \min \left(\frac{1}{1-\gamma}, \sqrt{\frac{SC^\pi}{(1-\gamma)^5 N}}\right).
    \label{eq:MDP-upper-bound-case-1-proof}
\end{align}
In addition, if $1 \leq C^\pi \leq 1+\frac{L\log(N)}{200(1-\gamma)N}$, then we have a tighter performance upper bound
\begin{align}\label{eq:MDP-upper-bound-case-2-proof}
    \E_\cD \left[J(\pi) - J(\hat{\pi})\right] \lesssim \min \left(\frac{1}{1-\gamma}, \frac{S}{(1-\gamma)^4N} \right).
\end{align}
The result in Theorem~\ref{thm:MDP_upperbound_hoffding} can be recovered by taking $\pi=\pi^\star$.

We split the proof into two cases: (1) the general case when $C^\pi\geq 1$ and (2) 
the regime where $C^\pi \leq 1 + L / (200m)$. 

\paragraph{The general case when $C^\pi \geq 1$.}
The proof of the general case follows similar steps as those in the proof of Theorem~\ref{thm:LCB_CB_upper_bound}.
We first decompose the expected sub-optimality into three terms: 
\begin{align*}
&\E_\cD \left[\sum_s \rho(s) [V^\pi(s) - V^{\pi_T}(s)] \right]  \\
&\quad =\E_\cD \left[\sum_s \rho(s) [V^\pi(s) - V^{\pi_T}(s)] \ind \{ \exists t\leq T, m_t(s,\pi(s)) = 0 \}\right] \eqqcolon T_1\\
 & \quad \quad+\E_\cD \left[\sum_s \rho(s) [V^\pi(s) - V^{\pi_T}(s)] \ind \{\forall t\leq T, m_t(s,\pi(s)) \geq 1 \} \ind \{\cE_{\text{MDP}}\} \right] \eqqcolon T_2\\
 & \quad \quad+\E_\cD \left[\sum_s \rho(s) [V^\pi(s) - V^{\pi_T}(s)] \ind \{\forall t\leq T, m_t(s,\pi(s)) \geq 1 \} \ind \{\cE^c_{\text{MDP}}\} \right] \eqqcolon T_3.
\end{align*}
Similar to before, the first term $T_1$ captures the sub-optimality incurred by the missing mass on the expert action $\pi(s)$. The second term $T_2$ is the sub-optimality under the clean event $\cE_{\text{MDP}}$, while the last one $T_3$ denotes the sub-optimality suffered under the complement event  $\cE_{\text{MDP}}^c$, on which the empirical average of Q-function falls outside the constructed confidence interval.

As we will show in subsequent sections, these error terms satisfy the following upper bounds:
\begin{subequations}
\begin{alignat}{2}
T_{1} & \leq \frac{4SC^\pi (T+1)^2}{9(1-\gamma)^2N};\label{eq:missing-mass-MDP}\\
T_{2} & \leq \frac{\gamma^{T}}{1-\gamma} + 32 \frac{1}{(1-\gamma)^2} \sqrt{\frac{LSC^\pi (T+1)}{N}};
\label{eq:hard-term-MDP-crude}\\
T_{3} & \leq V_{\max}\delta.\label{eq:event-MDP}
\end{alignat}
\end{subequations}
Setting $\delta = 1/ N$, $T = \log N / (1 - \gamma)$ and noting that $\gamma^T\leq 1/N$ yield 
that
\begin{align*}
    \E_\cD \left[J(\pi) - J(\hat{\pi})\right]\lesssim  \left( \sqrt{\frac{SC^\pi}{(1-\gamma)^5 N}}+ \frac{ SC^\pi }{(1-\gamma)^4N}\right).
\end{align*}
Note that we always have $ \E_\cD \left[J(\pi) - J(\hat{\pi})\right]\leq \frac{1}{1-\gamma}$. In the interesting regime of $\frac{SC^\pi}{(1-\gamma)^3N}\leq 1$, the first term above always dominates. This gives 
the desired claim~\eqref{eq:MDP-upper-bound-case-1-proof}.

\paragraph{The case when $C^\pi \leq 1+ L/(200m)$.} 
Under this circumstance, the following lemma proves useful.
\begin{lemma}\label{lemma:reduction_imitation}
   For any deterministic policy $\hat \pi$, one has
    \begin{align}
        J(\pi) - J(\hat{\pi}) \leq V_{\max}^2 \mathbb{E}_{s\sim{d}_{\pi}}\left[\ind \left\{ \hat \pi(s)\neq\pi(s)\right\} \right].
    \end{align}
\end{lemma}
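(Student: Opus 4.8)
The plan is to reduce the value gap to a disagreement probability through the performance-difference identity, with the crucial point being that the gap must be expressed against the \emph{expert's} state occupancy $d_\pi$ and the \emph{learned} policy's value function $V^{\hat\pi}$. First I would expand the value difference using the resolvent form of the Bellman equation. Since $V^\pi$ is the unique fixed point of $\cT_\pi$, one has $V^\pi = (I-\gamma P_\pi)^{-1} r_\pi$, while $V^{\hat\pi} = r_{\hat\pi} + \gamma P_{\hat\pi} V^{\hat\pi}$. Subtracting and factoring the resolvent gives
\begin{align*}
V^\pi - V^{\hat\pi} = (I-\gamma P_\pi)^{-1}\left(r_\pi + \gamma P_\pi V^{\hat\pi} - V^{\hat\pi}\right).
\end{align*}
The key observation is that the $s$-th entry of the bracketed vector is exactly $r(s,\pi(s)) + \gamma P_{s,\pi(s)}\cdot V^{\hat\pi} - V^{\hat\pi}(s) = Q^{\hat\pi}(s,\pi(s)) - V^{\hat\pi}(s)$, i.e., the advantage of taking the expert action once at $s$ and then reverting to $\hat\pi$. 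This uses only the definition $Q^{\hat\pi}(s,a) = r(s,a) + \gamma P_{s,a}\cdot V^{\hat\pi}$.

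Next I would left-multiply by the initial distribution $\rho$ and invoke the occupancy-measure identity~\eqref{def:state_occupancy_vector}, namely $\rho(I-\gamma P_\pi)^{-1} = (1-\gamma)^{-1} d_\pi$, to arrive at the performance-difference identity
\begin{align*}
J(\pi) - J(\hat\pi) = \frac{1}{1-\gamma}\sum_{s} d_\pi(s)\left[Q^{\hat\pi}(s,\pi(s)) - V^{\hat\pi}(s)\right].
\end{align*}

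The final step is a pointwise bound on the bracket. When $\hat\pi(s)=\pi(s)$, the term vanishes identically, since for a deterministic policy $V^{\hat\pi}(s) = Q^{\hat\pi}(s,\hat\pi(s)) = Q^{\hat\pi}(s,\pi(s))$. When $\hat\pi(s)\neq\pi(s)$, I would bound the bracket by $V_{\max}$ using the elementary range $0 \le Q^{\hat\pi}, V^{\hat\pi}\le V_{\max}$ recorded after~\eqref{defn:V-max}. Hence the bracket is at most $V_{\max}\,\ind\{\hat\pi(s)\neq\pi(s)\}$ at every state, and substituting this together with $(1-\gamma)^{-1} = V_{\max}$ yields
\begin{align*}
J(\pi) - J(\hat\pi) \leq \frac{V_{\max}}{1-\gamma}\sum_s d_\pi(s)\,\ind\{\hat\pi(s)\neq\pi(s)\} = V_{\max}^2\,\E_{s\sim d_\pi}\!\left[\ind\{\hat\pi(s)\neq\pi(s)\}\right],
\end{align*}
as claimed.

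The only real subtlety—and the place where the argument could go wrong—is getting the two directions right: the identity must pair the expert occupancy $d_\pi$ with the learned policy's value $V^{\hat\pi}$ (equivalently, the advantage $A^{\hat\pi}$). This specific pairing is what forces the summand to vanish exactly on agreement states; the opposite pairing, using $d_{\hat\pi}$ and $V^\pi$, would not collapse on agreement and would not produce a disagreement-only bound. Everything past the identity is routine linear algebra plus the trivial $[0,V_{\max}]$ value bound.
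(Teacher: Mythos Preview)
Your proof is correct and follows essentially the same approach as the paper: both establish the performance-difference identity $J(\pi)-J(\hat\pi)=\frac{1}{1-\gamma}\,\mathbb{E}_{s\sim d_\pi}[Q^{\hat\pi}(s,\pi(s))-Q^{\hat\pi}(s,\hat\pi(s))]$ and then bound the bracket by $V_{\max}\,\ind\{\hat\pi(s)\neq\pi(s)\}$. The only difference is that the paper cites the identity from \citet{kakade2002approximately} directly, whereas you derive it from the resolvent form; the remaining steps are identical.
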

\begin{proof}
In view of the performance difference lemma in~\citet[Lemma 6.1]{kakade2002approximately}, one has
\begin{align*}
J(\pi)-J(\hat \pi) & =\frac{1}{1-\gamma}\mathbb{E}_{s\sim{d}_{\pi}}\left[Q^{\hat \pi}(s,\pi(s))-Q^{\hat \pi}(s,\hat \pi(s))\right]\\
 & =\frac{1}{1-\gamma}\mathbb{E}_{s\sim{d}_{\pi}}\left[\left[Q^{\hat \pi}(s,\pi(s))-Q^{\hat \pi}(s,\hat \pi(s))\right]\ind\left\{ \hat\pi(s)\neq \pi(s)\right\} \right]\\
 & \leq V_{\max}^2\mathbb{E}_{s\sim{d}_{\pi}}\left[\ind\left\{ \hat \pi(s)\neq\pi(s)\right\} \right].
\end{align*}
Here the last  line uses the fact that $Q^{\hat\pi}(s,\pi(s))-Q^{\hat\pi}(s,\hat\pi(s))\leq V_{\max}$.
\end{proof}

Lemma~\ref{lemma:reduction_imitation} links the sub-optimality of a policy to its disagreement with the optimal policy. 
With Lemma~\ref{lemma:reduction_imitation} at hand, we can continue to decompose the expected sub-optimality into:
 \begin{align*}
&\E_\cD \left[\sum_s \rho(s) [V^\pi(s) - V^{\pi_T}(s)] \right]  \\
&\quad\leq  V_{\max}^{2}\E_\cD[\E_{s \sim d_\pi}[ \ind \{ \pi_T(s) \neq \pi(s) \}]] \\
&\quad= V_{\max}^{2}\E_\cD[\E_{s \sim d_\pi}[ [\ind \{ \pi_T(s) \neq \pi(s) \}\ind \{ \exists t\leq T, m_t(s,\pi(s)) = 0 \}]] \eqqcolon T_1'\\
&\quad \quad +  V_{\max}^{2}\E_\cD[\E_{s \sim d_\pi}[ [\ind \{ \pi_T(s) \neq \pi(s) \}\ind \{\forall t\leq T, m_t(s,\pi(s)) \geq 1 \}]] \eqqcolon T_2'
\end{align*}
We bound each term according to
\begin{subequations}
\begin{alignat}{2}
    T_{1}' & \leq \frac{4SC^\pi (T+1)^2}{9(1-\gamma)^2N};\label{eq:missing-term-MDP-small-C}\\
T_{2}' & \lesssim \frac{S C^\pi LT}{(1-\gamma)^2 N}+ \frac{ST^{10}}{(1-\gamma)^2N^9}. \label{eq:hard-term-MDP-small-C}
\end{alignat}
\end{subequations}
The claimed bound~\eqref{eq:MDP-upper-bound-case-2-proof} follows by taking $\delta = 1/N$ and $T = \log N /(1-\gamma)$.

\subsubsection{Proof of the bound~\eqref{eq:missing-mass-MDP} on $T_1$ and the bound~\eqref{eq:missing-term-MDP-small-C} on $T_1'$}\label{app:T_1_MDP_Proof}
Since for any $s \in \cS$, $V^\pi(s) - V^{\pi_T}(s) \leq V_{\max}$ one has
\begin{align*}
    T_1 \leq V_{\max} \E_\cD \left[ \sum_{s} \rho(s)  \ind \{ \exists t\leq T, m_t(s,\pi(s)) = 0 \} \right] = V_{\max}  \sum_{s} \rho(s)  \mathbb{P} \left( \exists t\leq T, m_t(s,\pi(s)) = 0   \right).
\end{align*}
The definition of the normalized occupancy measure \eqref{def:state_action_occupancy} entails $\rho(s) \leq d^\pi(s,\pi(s))$ and thus
\begin{align*}
    \frac{\rho(s)}{\mu(s,\pi(s))} \leq \frac{1}{1-\gamma} \cdot \frac{d^\pi(s,\pi(s))}{\mu(s,\pi(s))} \leq \frac{C^\pi}{1-\gamma}.
\end{align*}
Here the last relation follows from the data coverage assumption. 
Combine the above two inequalities to see that
\begin{align*}
    T_1 & \leq V_{\max}  \sum_{s} \frac{C^\pi}{1-\gamma} \mu(s, \pi(s)) \mathbb{P} \left( \exists t\leq T, m_t(s,\pi(s)) = 0   \right) \\
    & = \frac{C^\pi}{(1-\gamma)^2} \sum_s \mu(s,\pi(s)) \prob\left(\exists t\leq T, m_t(s,\pi(s)) = 0\right) \\
    & \leq \frac{C^\pi}{(1-\gamma)^2} \sum_{t=0}^{T} \sum_s \mu(s,\pi(s)) \prob\left(m_t(s,\pi(s)) = 0\right),
\end{align*}
where in the penultimate line, we identify $V_{\max}$ with $1 / (1- \gamma)$, and the last relation is by the union bound.
Direct calculations yield
\begin{equation*}
    \prob\left(m_t(s,\pi(s)) = 0\right) = (1-\mu(s,\pi(s)))^m,
\end{equation*}
which further implies
\begin{align*}
    T_1  \leq \frac{C^\pi (T+1)}{(1-\gamma)^2} \sum_s \mu(s,\pi(s)) (1-\mu(s,\pi(s)))^m
    \leq \frac{4C^\pi S(T+1)}{9(1-\gamma)^2m} = \frac{4C^\pi S(T+1)^2}{9(1-\gamma)^2N}.
\end{align*}
Here, we have used $\max_{x \in [0,1]} x(1-x)^m \leq 4/(9m)$ and the fact that $m = N/(T+1)$.

The bound~\eqref{eq:missing-term-MDP-small-C} on $T_1'$ follows from exactly the same argument as above, except that we replace $\rho$ with $d^\pi$.

\subsubsection{Proof of the bound~\eqref{eq:hard-term-MDP-crude} on $T_2$}
Lemma \ref{lemma:value_difference} asserts that on the clean event $\cE_{\text{MDP}}$, one has
\begin{align}
    T_2 & \leq \frac{\gamma^T}{1-\gamma} + 2 \sum_{t=1}^{T}  \E_{\cD,\nu^\pi_{T-t}} \left[ b_t(s,\pi(s)) \ind \{ m_t(s,\pi(s)) \geq 1 \} \right] \nonumber\\
    &= \frac{\gamma^T}{1-\gamma} + 2 \sum_{t=1}^{T}  \E_{\cD,\nu^\pi_{T-t}} \left[ V_{\max} \sqrt{\frac{L}{m_t(s, \pi(s))}} \ind \{ m_t(s,\pi(s)) \geq 1 \} \right] \nonumber\\
    &\leq \frac{\gamma^T}{1-\gamma} + 2 \sum_{t=1}^{T}  \E_{\nu^\pi_{T-t}} \left[ 16 V_{\max} \sqrt{\frac{L}{m\mu(s,\pi(s))}}  \right]. \label{eq:MDP-T-2}
\end{align}
Here, we substitute in the definition of $b_t(s,a)$ in the middle line and the last inequality arises from Lemma~\ref{lemma:binomial_inverse_moment_bound} with $c_{1/2} \leq 16$.

By definition of $\nu^\pi_k = \rho^\pi (\gamma P^\pi)^k$, we have $\sum_{k=0}^\infty \nu_{k}^\pi = d^\pi / (1-\gamma)$. Therefore, one has
\begin{align*}
    \sum_{t=1}^{T} \E_{\nu_{T-t}^\pi} \left[ \frac{1}{\sqrt{\mu(s,\pi(s))}}\right] = & \sum_{t=1}^{T} \sum_{s} \nu_{T-t}^\pi(s,\pi(s)) \frac{1}{\sqrt{\mu(s,\pi(s))}}\\
    = & \sum_s \left[ \sum_{t=1}^{T} \nu_{T-t}^\pi(s,\pi(s)) \right] \frac{1}{\sqrt{\mu(s,\pi(s))}}\\
    \leq & \sum_s \frac{d^\pi(s,\pi(s))}{1-\gamma} \frac{1}{\sqrt{\mu(s,\pi(s))}}.
\end{align*}
We then apply the concentrability assumption and the Cauchy--Schwarz inequality to deduce~that  
\begin{align*}
    \sum_{t=1}^{T} \E_{\nu_{T-t}^\pi} \left[ \frac{1}{\sqrt{\mu(s,\pi(s))}}\right] & \leq \sqrt{\frac{C^\pi}{(1-\gamma)^2}} \sum_s \sqrt{d^\pi(s,\pi(s))}\\
    & \leq \sqrt{\frac{C^\pi}{(1-\gamma)^2}} \sqrt{S} \sqrt{\sum_s d^\pi(s,\pi(s))}\\
    & = \frac{\sqrt{SC^\pi}}{1-\gamma}.
\end{align*}
Substitute the above bound into the inequality~\eqref{eq:MDP-T-2} to arrive at the conclusion
\begin{equation*}
    T_2 \leq \frac{\gamma^T}{1-\gamma} + 32 \frac{1}{(1-\gamma)^2} \sqrt{\frac{LSC^\pi}{m}}.
\end{equation*}
The proof is completed by noting that $m = N/(T+1)$.

\subsubsection{Proof of the bound~\eqref{eq:event-MDP} on $T_3$}
It is easy to see that 
\begin{align*}
    \sum_s \rho(s) [V^\pi(s) - V^{\pi_T}(s)] \ind \{\forall s,t, m_t(s,\pi(s)) \geq 1 \} \leq V_{\max},
\end{align*}
which further implies
\begin{align*}
    T_3 \leq V_{\max} \E_\cD[\ind \{\cE^c_{\text{MDP}}\}] = V_{\max} \prob( \cE^c_{\text{MDP}}) \leq V_{\max} \delta.
\end{align*}
Here, the last bound relies on Lemma \ref{lemma:penalty_LCB}.

\subsubsection{Proof of the bound~\eqref{eq:hard-term-MDP-small-C} on $T_2'$}

Partition the state space into the following two disjoint
sets:
\begin{subequations}
\begin{align}
\mathcal{S}_{1} & \coloneqq\left\{ s\mid  d_\pi(s)<\frac{2 C^{\pi}L}{m}\right\} ,\label{eq:defn-S-1-MDP}\\
\mathcal{S}_{2} & \coloneqq\left\{ s\mid d_\pi(s)\geq\frac{2 C^{\pi}L}{m}\right\},\label{eq:defn-S-2-MDP}
\end{align}
\end{subequations}
In words, the set $\mathcal{S}_{1}$ includes the
states that are less important in evaluating the performance of LCB. 
We can then decompose
the term $T_{2}'$ accordingly:
\begin{align*}
T_{2}' & =V_{\max}^2\sum_{s\in\mathcal{S}_{1}}d_\pi(s) \E_\cD[\ind \{ \pi_T(s) \neq \pi(s) \}\ind \{\forall t, m_t(s,\pi(s)) \geq 1 \}]\eqqcolon T_{2,1}\\
 & \quad+ V_{\max}^2\sum_{s\in\mathcal{S}_{2}}d_\pi(s)\E_\cD[\ind \{ \pi_T(s) \neq \pi(s) \}\ind \{\forall t, m_t(s,\pi(s)) \geq 1 \}] \eqqcolon T_{2,2}.
\end{align*}
The proof is completed by observing the following two upper bounds: 
\begin{align*}
    T_{2,1}  \leq \frac{2 S C^\pi LT}{(1-\gamma)^2 N}, \qquad \text{and} \qquad
    T_{2,2}  \lesssim \frac{S}{(1-\gamma)^2} \left( \frac{T}{N}\right)^9.
\end{align*}

\paragraph{Proof of the bound on $T_{2,1}$. }
We again use the basic fact that 
\[
\E_\cD[\ind \{ \pi_T(s) \neq \pi(s) \}\ind \{\forall s,t, m_t(s,\pi(s)) \geq 1 \}]\leq 1
\]
to reach
\[
T_{2,1}\leq V_\max^2 \sum_{s\in\mathcal{S}_{1}}d_\pi(s)\leq\frac{2SC^{\pi}L}{(1-\gamma)^2m},
\]
where the last inequality hinges on the definition of $\mathcal{S}_{1}$ given in~\eqref{eq:defn-S-1-MDP}, namely for any $s\in\mathcal{S}_{1}$, one
has $d_\pi(s)<\frac{2 C^{\pi}L}{m}$.
Identifying $m$ with $N/(T+1)$ concludes the proof.

\paragraph{Proof of the bound on $T_{2,2}$. } 

Equivalently, we can write $T_{2,2}$ as 
\[
T_{2,2}=V_{\max}^2 \sum_{s\in\mathcal{S}_{2}}d_\pi(s)\mathbb{P}\left(\pi_{T}(s)\neq\pi(s),\; m_{t}(s,\pi(s))\geq1 \; \forall t \right).
\]
By inspecting Algorithm~\ref{alg:OVI-LCB-DS}, one can realize the following inclusion
\[
\{\pi_T(s)\neq\pi(s)\}\subseteq\{\pi_0(s) \neq \pi(s)\}\cup\{\exists0\leq t\leq T-1\text{ and }\exists a\neq\pi(s),Q_{t+1}(s,a)\geq Q_{t+1}(s,\pi(s))\}.
\]
Indeed, if $\pi_0(s) = \pi(s)$ and for all $t$, $Q_{t+1}(s, \pi(s)) > \max_{a \neq \pi(s)} Q_{t+1}(s,a)$, LCB would select the expert action in the end, i.e., $\pi_T(s) = \pi(s)$. 
Therefore, we can upper bound $T_{2,2}$ as 
\begin{align*}
T_{2,2} & \leq V_{\max}^2\sum_{s\in\mathcal{S}_{2}}d_\pi(s)\mathbb{P}\left(\pi_0(s) \neq \pi(s),m_{t}(s,\pi(s))\geq1 \; \forall t\right)\eqqcolon\beta_{1}\\
 & +V_{\max}^2\sum_{s\in\mathcal{S}_{2}}d_\pi(s)\mathbb{P}\left(\exists t\leq T-1, \exists a\neq\pi(s),Q_{t+1}(s,a)\geq Q_{t+1}(s,\pi(s)),m_{t}(s,\pi(s))\geq1 \; \forall t\right)\eqqcolon\beta_{2}.
\end{align*}
In the sequel, we bound $\beta_{1}$ and $\beta_{2}$ in the reverse
order.

\subparagraph{Bounding $\beta_2$.}

Fix a state $s\in\mathcal{S}_{2}$. In view of the data coverage assumption, one has 
\begin{align}\label{eq:MDP-prob-lower}
    \mu(s, \pi(s)) \geq \frac{1}{C^\pi} d_\pi(s) \geq \frac{1}{C^\pi}\frac{2 C^\pi L}{m} = \frac{2 L}{m}.
\end{align}
In contrast, for any $a \neq \pi(s)$, since $C^\pi \leq 1+ \frac{L}{200m}$, we have  
\begin{align}\label{eq:MDP-prob-upper}
    \mu(s,a) \leq \sum_{a \neq \pi(s)}\mu(s,a) \leq 1-\frac{1}{C^\pi} \leq \frac{L}{200m},
\end{align}
where the middle inequality reuses the concentrability assumption. 
One has $\mu(s, \pi(s)) \gg \mu(s,a)$ for any 
non-expert action $a$. As a result, the expert action is pulled more frequently than the others.
It turns out that under such circumstances, the LCB algorithm picks the expert action with high probability. We shall make this intuition precise below. 

The bounds~\eqref{eq:MDP-prob-lower} and~\eqref{eq:MDP-prob-upper} together with Chernoff's bound give
\begin{align*}
    \prob \left(  m_t(s,a) \leq \frac{5L}{200} \right) & \geq 1 -  \exp \left(-\frac{L}{200} \right); \\
    \prob \left(  m_t(s,\pi(s)) \geq L \right) & \geq 1 -  \exp \left(-\frac{L}{4} \right).
\end{align*}
These allow us to obtain an upper bound for the function $Q_{t+1}$ evaluated at non-expert actions and a lower bound on $Q_{t+1}(s,\pi(s))$. More precisely, when $m_t(s,a) \leq \frac{5L}{200}$, we have
\begin{align*}
    Q_{t}(s,a) & = r_t(s,a) - b_t(s,a) + \gamma P_{s,a}^t \cdot V_{t-1}\\ 
    & = r_t(s,a) -  V_{\max}\sqrt{\frac{L}{m_t(s,a)\vee 1}} + \gamma P_{s,a}^t \cdot V_{t-1} \\
    & \leq 1 -  V_{\max} \sqrt{\frac{L}{5L/200}} + \gamma V_{\max} \\
    & \leq - 5 V_{\max}.
\end{align*}
Here we used the fact that $L \geq 70$.
Now we turn to lower bounding the function $Q_{t}$ evaluated at the optimal action. When $m_t(s,\pi(s)) \geq L$, one has
\begin{align*}
    Q_{t}(s,\pi(s)) & = r_t(s,\pi(s)) -  V_{\max}\sqrt{\frac{L}{m_t(s,\pi(s))}} + \gamma P_{s,\pi(s)}^t \cdot V_{t-1} \geq  - V_{\max}.
\end{align*}

To conclude, if both $m_t(s,a) \leq \frac{5L}{200}$ and $m_t(s,\pi(s)) \geq L$ hold, we must have $Q_{t}(s, a)< Q_{t}(s, \pi(s))$. 
Therefore  we can deduce that 
\begin{align*}
 & \mathbb{P}\left(\exists0\leq t\leq T\text{ and }\exists a\neq\pi(s),Q_{t}(s,a)\geq Q_{t}(s,\pi(s)),m_{t}(s,\pi(s))\geq1 \; \forall t\right)\\
 & \quad\le\sum_{0\leq t\leq T}\mathbb{P}\left(\exists a\neq\pi(s),Q_{t}(s,a)\geq Q_{t}(s,\pi(s)),m_{t}(s,\pi(s))\geq1 \; \forall t\right)\\
 & \quad\leq\sum_{0\leq t\leq T-1}\left\{ (|\mathcal{A}|-1)\exp\left(-\frac{L}{200}\right)+\exp\left(-\frac{1}{4}  L\right)\right\} \\
 & \quad\leq T|\mathcal{A}|\exp\left(-\frac{L}{200}\right),
\end{align*}
which further implies
\begin{align*}
\beta_{2} & \leq V_{\max}^2\sum_{s\in\mathcal{S}_{2}}d_\pi(s)T|\mathcal{A}|\exp\left(-\frac{L}{200}\right)\\
 & \leq TV_{\max}|\cA|\cdot \frac{1}{1-\gamma}\exp\left(-\frac{L}{200}\right)\\
 & \lesssim Tm^{-9}.
\end{align*}

\subparagraph{Bounding $\beta_1$.}
In fact, the analysis of $\beta_2$ has revealed that with high probability, $\pi(s)$ is the most played arm among all actions. More precisely, we have
\begin{align*}
\beta_{1} & \leq V_{\max}^2\sum_{s\in\mathcal{S}_{2}}d_\pi(s)\mathbb{P}\left(\pi_{0}(s)\neq\pi(s)\right)\\
 & \leq V_{\max}^2\sum_{s\in\mathcal{S}_{2}}d_\pi(s)\left\{ \mathbb{P}\left(\max_{a}m_{0}(s,a)\geq\frac{5L}{200}\right)+\mathbb{P}\left(m_{0}(s,\pi(s))\leq L\right)\right\} \\
 & \leq V_{\max}^2|\mathcal{A}|\exp\left(-\frac{L}{200}\right)\lesssim \frac{1}{(1-\gamma)^2m^{-9}}.
\end{align*}

\noindent Combine the bounds on $\beta_1$ and $\beta_2$ to arrive at the claim on $T_{2,2}$.

\subsection{Proof of Theorem~\ref{thm:MDP_lower}}\label{app:proof_MDP_lower}
Similar to the proof of the lower bound for contextual bandits, we split the proof into three cases: (1) $C^\star = 1$, (2) $C^\star \geq 2$, and (3) $C^\star \in(1, 2)$. For $C^\star = 1$, we adapt the lower bound from episodic imitation learning~\cite{rajaraman2020toward} to the discounted case. For both $C^\star \in(1, 2)$ and $C^\star\geq 2$, we rely on the construction of the MDP in the paper~\citet{lattimore2012pac}, which reduces the policy learning problem in MDP to a bandit problem. The key difference is that in our construction, we need to carefully design the initial distribution $\rho$ to incorporate the effect of $C^\star$ in the lower bound.

\paragraph{The case when $C^\star = 1$.}
In this case we have $\mu(s, a) = {d}^\star(s, a)$ for all $(s, a)$ pairs, which is the imitation learning setting. We adapt the lower bound given in \citet{rajaraman2020toward} for episodic imitation learning to the discounted case and obtain the following lemma:

\begin{lemma} \label{lem:IL_LB_MDP_lower}
When $C^\star=1$, one has 
 \begin{align}\label{eq:hard_MDP_lower_C_1}
\inf_{\hat \pi} \sup_{(\rho, \mu, P, R) \in \mathsf{MDP}(1)}  \E_\cD[J(\pi^{\star})-J(\hat{\pi})]  \gtrsim \min \left\{ \frac{1}{1-\gamma}, \frac{S}{ (1-\gamma)^2N} \right\}.
 \end{align}
\end{lemma}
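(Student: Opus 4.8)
The plan is to port the episodic imitation-learning lower bound of \citet{rajaraman2020toward} to the infinite-horizon discounted setting, exploiting that $C^\star=1$ forces $\mu=d^\star$: the data only ever exposes the expert's (optimal) action at each sampled state, so the learner receives \emph{no information} about the optimal action at any state that fails to appear in $\cD$. The hard instance I would build is a layered ``cliff'' MDP with effective horizon $H\asymp 1/(1-\gamma)$: a collection of $\Theta(S)$ decision states, each equipped with two actions, where the correct action $a^\star(s)$ keeps the agent inside the rewarding region (earning reward at every subsequent step) while the other action sends it to an absorbing zero-reward trap. The identity of $a^\star(s)$ is a hidden bit $v_s\in\{1,2\}$, and I would place a uniform prior over $\bm v=(v_s)_s$; the initial distribution $\rho$ and the transitions are chosen so that, under the optimal policy, the discounted occupancy $d^\star$ spreads over these states in the way needed to make the counting below work, and so that $(\rho,\mu,P,R)\in\mathsf{MDP}(1)$ with $\mu=d^\star$ by construction.

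With the instance in place, I would establish three ingredients. First, an \emph{indistinguishability} step: since $\mu=d^\star$, the likelihood of $\cD$ depends on $v_s$ only through the states actually observed, so conditioned on the sample the posterior of $v_s$ at any unobserved state is uniform; hence any estimator errs at such a state with probability at least $1/2$. Second, a \emph{missing-mass} estimate (a coupon-collector computation) that lower-bounds the expected occupancy carried by states whose optimal action is never revealed by the $N$ draws; calibrating the number of decision states and their occupancies is what produces the $S/N$ factor in the interesting regime. Third, an \emph{error-propagation} accounting: a single wrong action at a decision state forfeits the entire discounted tail of reward, so the per-mistake cost is $\Theta(1/(1-\gamma))$.

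Combining these via an averaging (Bayes-risk / Le Cam) argument over the prior on $\bm v$, the expected sub-optimality is at least (cost per error) $\times$ (expected missing occupancy) $\times\ \tfrac12$, which I would arrange to equal $\Theta\!\big(\tfrac{S}{(1-\gamma)^2N}\big)$; the trivial cap $J(\pi^\star)-J(\hat\pi)\le V_{\max}=1/(1-\gamma)$ supplies the other branch of the minimum and governs the small-sample regime $N\lesssim S/(1-\gamma)$. I expect the main obstacle to be the horizon conversion done \emph{jointly} with error propagation: in the episodic proof a mistake at layer $h$ costs the remaining $H-h$ steps, and reproducing the correct power of $1/(1-\gamma)$ in the discounted model requires simultaneously (i) keeping $\mu=d^\star$ exactly while shaping $d^\star$ so the undetermined mass scales like $S/N$ rather than decaying exponentially in $N/S$, and (ii) showing that a mistake at a high-occupancy state genuinely loses $\Theta(1/(1-\gamma))$ of discounted value. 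Getting both factors to multiply into $1/(1-\gamma)^2$, rather than $1/(1-\gamma)$ or $1/(1-\gamma)^3$, is the delicate bookkeeping at the heart of the adaptation.
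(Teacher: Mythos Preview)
Your plan matches the paper's approach: the same ``cliff'' construction (optimal action renews into $\rho$ with reward~$1$, any other action falls into an absorbing zero-reward state), the same Bayesian setup with a uniform prior over $\pi^\star$, and the same indistinguishability observation that, conditioned on $\cD$, the optimal action at any unvisited state remains uniform. The paper also calibrates $\rho$ with most states having mass $\zeta=\tfrac{1}{N+1}$ to make the missing-mass calculation give the $S/N$ factor, exactly as you anticipate.

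The one place where the paper is sharper than your sketch is the accounting step you flag as delicate. The product ``(cost per error) $\times$ (expected missing occupancy) $\times \tfrac12$'' is not literally how the argument goes, because in the cliff MDP a single mistake terminates the reward stream, so errors at different unobserved states are not additive. The paper instead works with the stopping time $\tau=$ first visit to a state \emph{not} seen in $\cD$, proves
\[
J(\pi^\star)-\E[J(\hat\pi)] \;\ge\; \Bigl(1-\tfrac{1}{|\cA|}\Bigr)\,\E\!\left[\frac{\gamma^{\tau}}{1-\gamma}\right],
\]
and then applies Markov's inequality at the effective horizon $H=\lfloor 1/\log(1/\gamma)\rfloor$ to reduce to bounding $\Pr[\tau\le H]\gtrsim\min\{1,\,SH/N\}$, which is the coupon-collector step. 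This stopping-time device is precisely the tool that resolves the ``horizon conversion jointly with error propagation'' obstacle you identified, and is what pins down the $1/(1-\gamma)^2$ power cleanly.
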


\noindent We defer the proof to Appendix~\ref{app:proof_lem_IL_lower}, which follows exactly the analysis by~\citet{rajaraman2020toward} except for changing the setting from episodic to discounted.

\paragraph{The case when $C^\star\geq 2$.}
When $C^\star\geq 2$, we intend to show that
\begin{align}\label{eq:MDP-lower-bound-C-2}
\inf_{\hat \pi} \sup_{(\rho, \mu, P, R) \in \mathsf{MDP}(C^\star)}  \E_\cD[J(\pi^{\star})-J(\hat{\pi})]\gtrsim\min\left(\frac{1}{1-\gamma}, \sqrt{\frac{SC^\star}{(1-\gamma)^3 N}}\right).
\end{align}
We adopt the following construction of the hard MDP instance from the work~\citet{lattimore2012pac}. 

\subparagraph{Construction of hard instances.}
Consider the MDP  which consists of $S/4$ replicas of MDPs in Figure~\ref{fig:hard_MDP_lower} and an extra state $s_{-1}$. The total number of states is $S+1$.  
For each replica,  we have four states  $ s_0,s_1,s_\oplus,s_\ominus$. 
There is only one action, say $a_1$, in all the states except $s_1$, which has two actions $a_1, a_2$. 
The rewards are all deterministic. 
In addition, the transitions for states $s_0, s_\oplus, s_\ominus$ are shown in the diagram. 
More specifically, we have $\mathbb{P}(s^j_\oplus \mid s^j_1, a_1) = \mathbb{P}(s^j_\ominus \mid s^j_1, a_1) = 1/2$ and $\mathbb{P}(s^j_\oplus \mid s^j_1, a_2) = 1/2 + v_j \delta$, and  $\mathbb{P}(s^j_\ominus \mid s^j_1, a_2) = 1/2 - v_j \delta$. Here $v_j \in \{-1, +1\}$ is the design choice associated with the $j$-th replica and $\delta \in [0,1/4]$ will be specified later. Clearly, if $v_j = 1$, the optimal action at $s_1^j$ is $a_2$, otherwise, the optimal one is $a_1$. Under the extra state $s_{-1}$, there is only one action with reward $0$ which transits to itself with probability $1$. 
We use $s_i^j$ to denote state $i$ in $j$-th replica, where $j\in [S/4]$. 
Based on the description above, the only parameter in this MDP is the transition dynamics associated 
with the state $s_1^j$. We will later specify how to set these for each $s_1^j$.
\begin{figure}[t]
    \centering
    \includegraphics[width=0.7\linewidth]{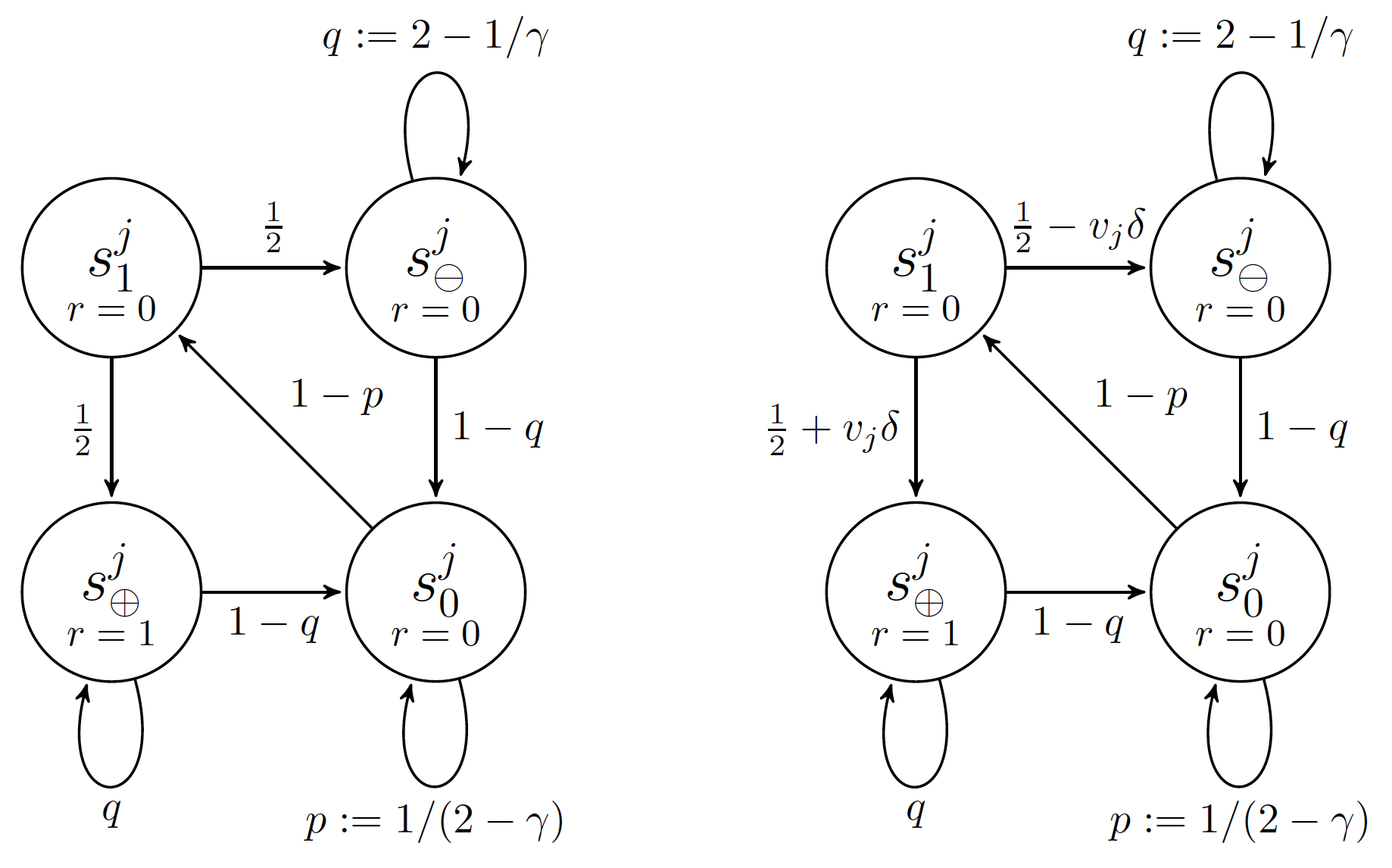}
    \caption{Illustration of one replica in the hard $\mathsf{MDP}_h$. The left plot shows the transition probabilities from $(s_1^j, a_1)$ and the right plot shows them from $(s_1^j, a_2)$.}
    \label{fig:hard_MDP_lower}
\end{figure}
The single replica has the following important properties: 
\begin{enumerate}
\item The probabilities $p, q$ are designed such that the three states $s_0, s_\ominus, s_\oplus$ are mostly absorbing, while any action in $s_1$ will lead to immediate transition to $s_\oplus$ or $s_\ominus$.
\item The state $s_\oplus$ is the only state that gives reward $1$, which helps reduce the MDP problem to a bandit one: the MDP only depends on the choice of transition probabilities at state $s_1^j$; once a policy reaches state $s_1$ it should choose the action most likely to lead to
state $\oplus$ whereupon it will either be rewarded or punished (visit state $\oplus$ or $\ominus$). Eventually, it will return to state $1$ where the whole process repeats. 
\end{enumerate}

We also need to specify the initial distribution $\rho_{0}$ and the behavior distribution $\mu_{0}$. 
When $C^\star \geq 2$, we set the initial distribution $\rho_0$ to be uniformly distributed on the state $s_0$ in all the $S/4$ replicas, i.e., $\forall j\in[S/4], \rho_0(s_0^j) = 4/S$. 
From  ${d}^\star = (1-\gamma)\rho(I-\gamma P^{\pi^\star})^{-1}$ we can derive ${d}^\star$ as follows:
\begin{align*}
{d}^\star(s_0^j) &= 
\frac{8}{(2+\gamma)S},  \qquad {d}^\star(s_1^j)  = \frac{8\gamma(1-\gamma)}{(2-\gamma)(2+\gamma)S}\in\left[\frac{1-\gamma}{S}, \frac{4(1-\gamma)}{S}\right],\\
{d}^\star(s_\oplus^j) &= \frac{\gamma(\frac{1}{2}\ind \{v_j =-1\} + (\frac{1}{2}+\delta) \ind \{v_j = 1\})}{2(1-\gamma)}\cdot {d}^\star(s_1^j),  \\
   {d}^\star(s_\ominus^j) &= \frac{\gamma(\frac{1}{2}\ind \{v_j =1\} + (\frac{1}{2}-\delta) \ind \{v_j = -1\})}{2(1-\gamma)}\cdot {d}^\star(s_1^j), \qquad {d}^\star(s_{-1}) =0.
\end{align*} 
This allows us to construct the behavior distribution $\mu_0$ as follows: 
\begin{align*}
\mu_0(s_0^j) &= \frac{{d}^\star(s_0^j)}{C^\star},  \qquad \mu_0(s_1^j, a_2)  = \frac{{d}^\star(s_1^j)}{C^\star}, \qquad \mu_0(s_1^j, a_1)  = {d}^\star(s_1^j)\cdot \left(1-\frac{1}{C^\star}\right)\\
\mu_0(s_\oplus^j) &= \frac{3}{4}\cdot \frac{\gamma}{2(1-\gamma)C^\star}\cdot {d}^\star(s_1^j),   \qquad   \mu_0(s_\ominus^j) =  \frac{1}{2}\cdot \frac{\gamma}{2(1-\gamma)c^\star}\cdot {d}^\star(s_1^j), \\
\mu_0(s_{-1}) &= 1 - \sum_j (\mu_0(s_0^j) + \mu_0(s_1^j) + \mu_0(s_\oplus^j)+\mu_0(s_\ominus^j))
\end{align*} 
It is easy to check that for any $v_j \in \{-1,1\}$, $\delta \in [0,1/4]$, one has $\mu_0(s_{-1})>0$, and more importantly
\[
(\rho_{0},\mu_{0},P, R)\in\mathsf{MDP}(C^{\star}).
\]
Since in this construction of MDP, the reward distribution is deterministic and fixed, and we only need to change the transition dynamics $P$, which is governed by the choice of $\delta$ and ${v_j}_{1\leq k \leq S/4}$. Hence we write the loss/sub-optimality of a policy $\pi$ w.r.t.~a particular design of $P$ as 
\[
\mathcal{L}(\pi;P)= J_{P}(\pi^{\star})-J_{P}(\pi) .
\]
Our target then becomes
\begin{align*}
\inf_{\hat{\pi}}\sup_{(\rho_0, \mu_0, P, R) \in \mathsf{MDP}(C^\star)}\mathbb{E}[\mathcal{L}(\hat \pi;P)]\gtrsim\min\left(\frac{1}{1-\gamma}, \sqrt{\frac{SC^\star}{(1-\gamma)^3 N}}\right).
\end{align*}

It remains to construct a set of transition probabilities (determined by $\delta$ and $\bm{v}$) that are nearly
indistinguishable given the data. Similar to the construction in the lower bound for contextual bandits, we leverage
the Gilbert-Varshamov lemma (cf.~Lemma~\ref{lem:V_G}) to obtain a set
$\mathcal{V}\subseteq\{-1,1\}^{S/4}$ that obeys (1) $|\mathcal{V}|\ge\exp(S/32)$
and (2) $\|\bm{v}_{1}-\bm{v}_{2}\|_{1}\geq S/8$ for any $\bm{v}_{1},\bm{v}_{2}\in\mathcal{V}$
with $\bm{v}_{1}\neq\bm{v}_{2}$. 
Each element $\bm{v}\in\mathcal{V}$ is mapped to a transition probability at $s_1^j$ such that the probability  of transiting to $s_\oplus^j$
associated with $(s_1^j,a_{2})$ is $\frac{1}{2}+v_{j}\delta$.
We denote the resulting set of transition probabilities as $\cP$.
We record a useful characteristic of this family $\cP$ of transition dynamics below, which results from the second property of the set $\mathcal{V}$.
\begin{lemma}\label{lem:MDP_fano}
For any policy $\pi$ and any two different transition probabilities
$P_{1},P_{2}\in\mathcal{P}$, the following holds:
\[
\mathcal{L}(\pi;P_{1})+\mathcal{L}(\pi;P_{2})\geq \frac{\delta}{32(1-\gamma)}.
\]
\end{lemma}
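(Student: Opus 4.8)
The plan is to reduce the claim to a per-replica accounting of sub-optimality and then invoke the packing guarantee on $\mathcal{V}$. Since the $S/4$ replicas do not communicate (each of $s_0^j, s_1^j, s_\oplus^j, s_\ominus^j$ transitions only within replica $j$, and $s_{-1}$ is an absorbing null state), both $J_P(\pi^\star)$ and $J_P(\pi)$ decompose additively across replicas, and the only state at which a policy makes a genuine choice is $s_1^j$. Writing $P_1, P_2$ for the kernels associated with $\bm{v}_1, \bm{v}_2 \in \mathcal{V}$, the defining feature of the construction is that the optimal action at $s_1^j$ is $a_2$ when $v_j = +1$ and $a_1$ when $v_j = -1$; hence at every coordinate $j$ where $\bm{v}_1$ and $\bm{v}_2$ disagree, the $P_1$-optimal and $P_2$-optimal actions at $s_1^j$ differ, so the fixed policy $\pi$ must disagree with the optimal action under at least one of $P_1, P_2$.

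First I would quantify the cost of a single wrong action. Applying the performance difference lemma \citep[Lemma 6.1]{kakade2002approximately} with respect to kernel $P$,
\[
\mathcal{L}(\pi;P) = J_P(\pi^\star) - J_P(\pi) = \frac{1}{1-\gamma}\,\mathbb{E}_{s\sim d^\star}\!\left[Q^{\pi}(s,\pi^\star(s)) - Q^{\pi}(s,\pi(s))\right],
\]
and since the action gap vanishes away from $\{s_1^j\}$, the right-hand side equals $\frac{1}{1-\gamma}\sum_j d^\star(s_1^j)\,[Q^{\pi}(s_1^j,\pi^\star(s_1^j)) - Q^{\pi}(s_1^j,\pi(s_1^j))]$. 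At $s_1^j$ the two actions differ only in the probability of landing in $s_\oplus^j$ versus $s_\ominus^j$, and that probability gap is exactly $\delta$, so whenever $\pi(s_1^j)$ is sub-optimal the bracket equals $\gamma\delta\,[V^{\pi}(s_\oplus^j) - V^{\pi}(s_\ominus^j)]$. Because $s_\oplus^j$ is the unique reward-$1$ state and is mostly absorbing while $s_\ominus^j$ carries reward $0$, a short calculation with the prescribed self-transition probabilities shows $V^{\pi}(s_\oplus^j) - V^{\pi}(s_\ominus^j) \gtrsim \frac{1}{1-\gamma}$ uniformly over $\pi$ (this gap depends on $\pi$ only through its action at $s_1^j$ of the same replica). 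Combining this with the occupancy lower bound $d^\star(s_1^j) \geq \frac{1-\gamma}{S}$ from the construction, every mistaken replica contributes at least $\frac{1}{1-\gamma}\cdot\frac{1-\gamma}{S}\cdot\gamma\delta\cdot\frac{c}{1-\gamma} \gtrsim \frac{\gamma\delta}{S(1-\gamma)}$ to $\mathcal{L}(\pi;P)$.

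Next I would add the two losses. By the packing property $\|\bm{v}_1 - \bm{v}_2\|_1 \geq S/8$, the kernels $P_1, P_2$ disagree on at least $S/16$ replicas, and by the previous paragraph $\pi$ is wrong at $s_1^j$ under $P_1$ or under $P_2$ for each such $j$. Summing the per-replica contributions across both losses therefore yields
\[
\mathcal{L}(\pi;P_1) + \mathcal{L}(\pi;P_2) \gtrsim \frac{S}{16}\cdot\frac{\gamma\delta}{S(1-\gamma)} = \frac{\gamma\delta}{16(1-\gamma)} \geq \frac{\delta}{32(1-\gamma)},
\]
where the final inequality uses $\gamma \geq 1/2$; tracking the absolute constants through the value-gap estimate gives precisely the stated bound.

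The main obstacle I anticipate is the value-gap estimate $V^{\pi}(s_\oplus^j) - V^{\pi}(s_\ominus^j) \gtrsim \frac{1}{1-\gamma}$: it must hold with an explicit constant and for \emph{every} policy $\pi$, not only the optimal one, since the performance difference lemma evaluates $Q^\pi$. This requires solving the two-state recursion between $s_\oplus^j$ and $s_\ominus^j$ (with their return transitions to $s_1^j$) for the specific absorbing probabilities $p, q$ in Figure~\ref{fig:hard_MDP_lower}, and verifying that $\pi$'s choices in other replicas cannot shrink the gap --- which follows from the non-communication of replicas. The remaining steps, namely the performance difference decomposition and the Gilbert--Varshamov counting, are routine.
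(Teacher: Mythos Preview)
Your proposal is correct and follows essentially the same three-part structure as the paper: reduce to a per-replica value gap at $s_1^j$, lower bound that gap by $\frac{\gamma\delta}{2(1-\gamma)}$ when $\pi$ picks the wrong action, and finish with the Gilbert--Varshamov packing count. The only organizational difference is that you invoke the performance difference lemma and then need $V^\pi(s_\oplus^j)-V^\pi(s_\ominus^j)\gtrsim\frac{1}{1-\gamma}$ uniformly in $\pi$, whereas the paper works directly with $V^\star(s_1^j)-V^\pi(s_1^j)$ and uses the trivial monotonicity $V^\star\geq V^\pi$ to reduce to $V^\star(s_\oplus^j)-V^\star(s_\ominus^j)$ instead. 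Your ``main obstacle'' therefore has two clean resolutions: either note (as you suggest) that subtracting the two Bellman equations at $s_\oplus^j,s_\ominus^j$ gives $V^\pi(s_\oplus^j)-V^\pi(s_\ominus^j)=\frac{1}{1-\gamma q}=\frac{1}{2(1-\gamma)}$ \emph{independently of $\pi$} (the downstream $V^\pi(s_1^j)$ term cancels), or sidestep it entirely via the paper's $V^\star\geq V^\pi$ trick.
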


\subparagraph{Application of Fano's inequality. }
We are now ready to apply Fano's inequality, that is 
\[
\inf_{\hat{\pi}}\sup_{P\in\mathcal{P}} \mathbb{E}[\mathcal{L}(\hat \pi;P)]\geq\frac{\delta}{64(1-\gamma)}\left(1-\frac{N\max_{i\neq j}\mathsf{KL}\left(\mu_0\otimes P_{i}\|\mu_0\otimes P_{j}\right)+\log2}{\log|\mathcal{P}|}\right).
\]
It remains to controlling $\max_{i\neq j}\mathsf{KL}\left(\mu_0\otimes P_{i}\|\mu_0\otimes P_{j}\right)$
and $\log|\mathcal{P}|$. For the latter quantity, we have 
\[
\log|\mathcal{P}|=\log|\mathcal{V}|\geq S/32,
\]
where the inequality comes from the first property of the set $\mathcal{V}$.
With regards to the KL divergence, one has
\[
\max_{i\neq j}\mathsf{KL}\left(\mu_0\otimes P_{i}\|\mu_0\otimes P_{j}\right)\leq\frac{4(1-\gamma)}{SC^\star}\cdot \frac{S}{4}\cdot16\delta^{2} = \frac{16(1-\gamma)\delta^2}{C^\star},
\]
since $\mu_0(s_1^j, a_2)\in[\frac{1-\gamma}{SC^\star}, \frac{4(1-\gamma)}{SC^\star}]$.
As a result, we conclude that as long as 
\[
\frac{c_3(1-\gamma) N\delta^2}{SC^\star }\leq 1
\]
for some universal constant $c_3$,
one has 
\[
\inf_{\hat{\pi}}\sup_{P}\mathbb{E}[[\mathcal{L}(\hat \pi;P)]\gtrsim\frac{\delta}{1-\gamma}.
\]
To finish the proof, we can set $\delta=\sqrt{\frac{SC^\star}{c_3(1-\gamma)N}}$
when $\sqrt{\frac{SC^\star}{c_3(1-\gamma)N}}<\frac{1}{4}$ and $\delta=\frac{1}{4}$
otherwise. This yields the desired lower bound~\eqref{eq:MDP-lower-bound-C-2}.

\paragraph{The case when $C^\star\in (1, 2)$.}

We intend to show that when $C^\star\in (1, 2)$,
\begin{equation}
\inf_{\hat \pi} \sup_{(\rho, \mu, P, R) \in \mathsf{MDP}(C^\star)}  \E_\cD[J(\pi^{\star})-J(\hat{\pi})]\gtrsim \min\left(\frac{C^\star-1}{1-\gamma}, \sqrt{\frac{S(C^\star-1)}{(1-\gamma)^3 N}}\right).\label{eq:MDP-lower-bound-C-1-2}
\end{equation}
The proof is similar to that of the previous case but with a different construction for $\rho_{0}$ and~$\mu_{0}$. 

\subparagraph{Construction of the hard instance.}
Let $\rho_0(s_0^j) = 4(C^\star-1)/S$, $\rho_0(s_{-1}) = 2-C^\star$. 
From  ${d}^\star = (1-\gamma)\rho(I-\gamma P^{\pi^\star})^{-1}$ we can derive ${d}^\star$ as follows.
\begin{align*}
{d}^\star(s_0^j) &= 
\frac{8(C^\star-1)}{(2+\gamma)S},  \qquad {d}^\star(s_1^j)  = \frac{8\gamma(1-\gamma)(C^\star-1)}{(2-\gamma)(2+\gamma)S}\in\left[\frac{(1-\gamma)(C^\star-1)}{S}, \frac{4(1-\gamma)(C^\star-1)}{S}\right],\\
{d}^\star(s_\oplus^j) &= \frac{\gamma(\frac{1}{2}\ind \{v_j =-1\} + (\frac{1}{2}+\delta) \ind \{v_j = 1\})}{2(1-\gamma)}\cdot {d}^\star(s_1^j),    \\
{d}^\star(s_\ominus^j) &= \frac{\gamma(\frac{1}{2}\ind \{v_j =1\} + (\frac{1}{2}-\delta) \ind \{v_j = -1\})}{2(1-\gamma)}\cdot {d}^\star(s_1^j), \qquad {d}^\star(s_{-1}) =2-C^\star.
\end{align*} 
This allows us to construct the behavior distribution $\mu_0$ as follows 
\begin{align*}
\mu_0(s_0^j) &= \frac{{d}^\star(s_0^j)}{C^\star},  \qquad \mu_0(s_1^j, a_1)  = \mu_0(s_1^j, a_2)  = \frac{{d}^\star(s_1^j)}{C^\star}\\
\mu_0(s_\oplus^j) &= \frac{3}{4}\cdot \frac{\gamma}{2(1-\gamma)}\cdot {d}^\star(s_1^j),   \qquad   \mu_0(s_\ominus^j) =  \frac{1}{2}\cdot \frac{\gamma}{2(1-\gamma)}\cdot {d}^\star(s_1^j), \\
\mu_0(s_{-1}) &= 1 - \sum_j (\mu_0(s_0^j) + \mu_0(s_1^j) + \mu_0(s_\oplus^j)+\mu_0(s_\ominus^j))
\end{align*} 
Again, one can check that for any $v_j \in \{-1,1\}$ and $\delta \in [0,1/4]$, we have $\mu_0(s_{-1}) > 0$ and 
\[
(\rho_{0},\mu_{0},P, R)\in\mathsf{MDP}(C^\star).
\]

We use the same family $\cP$ of transition probabilities as before. Following the same proof as Lemma~\ref{lem:MDP_fano} and noting that the initial distribution is multiplied by an extra $C^\star-1$ factor, we know that for any policy $\pi$, and any two different distributions $P_1, P_2\in\mathcal{P}$,
\[
\mathcal{L}(\pi;P_{1})+\mathcal{L}(\pi;P_{2})\geq \frac{(C^\star-1)\delta}{32(1-\gamma)}.
\] 

\subparagraph{Application of Fano's inequality. }

Now we are ready to apply Fano's inequality, that is 
\[
\inf_{\hat{\pi}}\sup_{P\in\mathcal{P}} \mathbb{E}[\mathcal{L}(\hat \pi;P)]\geq\frac{\delta}{64(1-\gamma)}\left(1-\frac{N\max_{i\neq j}\mathsf{KL}\left(\mu_0\otimes P_{i}\|\mu_0\otimes P_{j}\right)+\log2}{\log|\mathcal{P}|}\right).
\]
Now the KL divergence satisfies
\[
\mathsf{KL}(\mu_0\otimes P_{i}\|\mu_0\otimes P_{j})\leq\frac{4(1-\gamma)(C^\star-1)}{SC^\star}\cdot \frac{S}{4}\cdot16\delta^{2} = \frac{16(1-\gamma)(C^\star-1)\delta^2}{C^\star}.
\] 
Here the first inequality comes from that $\mu_0(s_1^j) = \frac{c_2(1-\gamma)(C^\star-1)}{SC^\star}$ for some constant $c_2\in[1, 4]$. 
As a result, we conclude that as long as 
\[
\frac{c_3(1-\gamma)(C^\star-1) N\delta^2}{SC^\star }\leq 1
\]
for some universal constant $c_3$,
one has 
\[
\inf_{\hat{\pi}}\sup_{P\in\mathcal{P}}\mathbb{E}[\mathcal{L}(\pi;P)]\gtrsim\frac{(C^\star-1)\delta}{1-\gamma}.
\]
To finish the proof, we can set $\delta=\sqrt{\frac{SC^\star}{c_3(1-\gamma)(C^\star-1)N}}$
when $\sqrt{\frac{SC^\star}{c_3(1-\gamma)(C^\star-1)N}}<\frac{1}{4}$, and $\delta=\frac{1}{4}$
otherwise. This yields the desired lower bound~\eqref{eq:MDP-lower-bound-C-1-2}.

\paragraph{Putting the pieces together.}

Now we are in position to summarize and simplify the three established
lower bounds~\eqref{eq:hard_MDP_lower_C_1}, \eqref{eq:MDP-lower-bound-C-2},
and \eqref{eq:MDP-lower-bound-C-1-2}. 

When $C^{\star}=1$, the claim in Theorem~\ref{thm:MDP_lower} is identical to the
bound~\eqref{eq:hard_MDP_lower_C_1}. 

When $C^{\star}\geq2$, we have from the bound~{\eqref{eq:MDP-lower-bound-C-2}
that }
\[
\inf_{\hat{\pi}}\sup_{P} \mathbb{E}[\mathcal{L}(\hat \pi;P)]\gtrsim\min\left(\frac{1}{1-\gamma}, \sqrt{\frac{SC^\star}{(1-\gamma)^3 N}}\right)\asymp\min\left(\frac{1}{1-\gamma}, \sqrt{\frac{S(C^\star-1)}{(1-\gamma)^3 N}}\right).
\]
Further notice that 
\[
\sqrt{\frac{S(C^{\star}-1)}{(1-\gamma)^3N}}\geq\sqrt{\frac{S}{(1-\gamma)^4N}}\geq\min\left(\frac{1}{1-\gamma},\frac{S}{(1-\gamma)^2N}\right).
\]
The claimed lower bound in Theorem~\ref{thm:MDP_lower}
arises.

In the end, when $C^{\star}\in(1,2)$, we know from the bounds~\eqref{eq:hard_MDP_lower_C_1}
and \eqref{eq:MDP-lower-bound-C-1-2} that
\begin{align*}
\inf_{\hat{\pi}}\sup_{P}\mathbb{E}[\mathcal{L}(\hat \pi;P)]&\gtrsim\max\left\{ \min\left(\frac{1}{1-\gamma},\frac{S}{(1-\gamma)^2N}\right),\min\left(\frac{C^{\star}-1}{1-\gamma},\sqrt{\frac{S(C^{\star}-1)}{(1-\gamma)^3N}}\right)\right\} \\ &\asymp\min\left(\frac{1}{1-\gamma}, \frac{S}{(1-\gamma)^2N} + \sqrt{\frac{S(C^\star-1)}{(1-\gamma)^3N}}\right),
\end{align*}
which completes the proof.

\subsubsection{Proof of Lemma~\ref{lem:MDP_fano}}

By definition, one has
\begin{align*}
\mathcal{L}(\pi;P_{1})+\mathcal{L}(\pi;P_{2}) & =J_{P_{1}}(\pi^{\star})-J_{P_{1}}(\pi)+J_{P_{2}}(\pi^{\star})-J_{P_{2}}(\pi)\\
 & =\sum_{j=1}^{S/4}\rho_{0}(s_{0}^{j})\left(V_{P_{1}}^{\star}(s_{0}^{j})-V_{P_{1}}^{\pi}(s_{0}^{j})+V_{P_{2}}^{\star}(s_{0}^{j})-V_{P_{2}}^{\pi}(s_{0}^{j})\right),
\end{align*}
where we have ignored the state $s_{-1}$ since it has zero rewards. 
Our proof consists of three steps. We first connect the value difference
$V_{P_{1}}^{\star}(s_{0}^{j})-V_{P_{1}}^{\pi}(s_{0}^{j})$ at $s_{0}^{j}$
to that $V_{P_{1}}^{\star}(s_{1}^{j})-V_{P_{1}}^{\pi}(s_{1}^{j})$
at $s_{1}^{j}$. Then, we further link the value difference at $s_{1}^{j}$
to the difference in transition probabilities, i.e., $\delta$ in
our design. In the end, we use the property of the set $\mathcal{V}$
to conclude the lower bound. 

\paragraph{Step 1. }

Since at state $s_0^j$, we only have one action $a_1$ with $r(s_0^j, a_1) = 0$, from the definition of value function one has
\[
V^\pi_{P_1}(s_0^j) = \sum_{i=0}^{\infty}\gamma^{i+1}(1-p)p^{i} V_{P_{1}}^{\pi}(s_{1}^{j}),
\]
for any policy $\pi$. 
Thus we have
\[
V_{P_{1}}^{\star}(s_{0}^{j})-V_{P_{1}}^{\pi}(s_{0}^{j})=\sum_{i=0}^{\infty}\gamma^{i+1}(1-p)p^{i}\left(V_{P_{1}}^{\star}(s_{1}^{j})-V_{P_{1}}^{\pi}(s_{1}^{j})\right)>\frac{1}{4}\left(V_{P_{1}}^{\star}(s_{1}^{j})-V_{P_{1}}^{\pi}(s_{1}^{j})\right),
\]
where we have used the fact that (assuming $\gamma\geq1/2$)
\[
\sum_{i=0}^{\infty}\gamma^{i+1}(1-p)p^{i}=\frac{1}{2}\gamma\geq\frac{1}{4}.
\]
 The same conclusion holds for $P_{2}$. Therefore we can obtain the
following lower bound
\[
\mathcal{L}(\pi;P_{1})+\mathcal{L}(\pi;P_{2})\geq\frac{1}{S}\sum_{j=1}^{S/4}\left(V_{P_{1}}^{\star}(s_{1}^{j})-V_{P_{1}}^{\pi}(s_{1}^{j})+V_{P_{2}}^{\star}(s_{1}^{j})-V_{P_{2}}^{\pi}(s_{1}^{j})\right).
\]

\paragraph{Step 2.}

Without loss of generality, we assume that under $P_{1}$, $\mathbb{P}(s_{\oplus}^{j}\mid s_{1}^{j},a_{2})=\frac{1}{2}+\delta$,
i.e., $v_{j}=+1$. Clearly, in this case, $a_{2}$ is the optimal
action at $s_{1}^{j}$. If the policy $\pi$ chooses the sub-optimal
action (i.e., $a_{1}$) at $s_{1}^{j}$, then we have 
\begin{align*}
V_{P_{1}}^{\star}(s_{1}^{j})-V_{P_{1}}^{\pi}(s_{1}^{j}) & =\gamma\left(\left(\frac{1}{2}+\delta\right)V_{P_{1}}^{\star}\left(s_{\oplus}^{j}\right)+\left(\frac{1}{2}-\delta\right)V_{P_{1}}^{\star}\left(s_{\ominus}^{j}\right)-\frac{1}{2}V_{P_{1}}^{\pi}\left(s_{\oplus}^{j}\right)-\frac{1}{2}V_{P_{1}}^{\pi}\left(s_{\ominus}^{j}\right)\right)\\
 & \geq\gamma\delta\left(V_{P_{1}}^{\star}\left(s_{\oplus}^{j}\right)-V_{P_{1}}^{\star}\left(s_{\ominus}^{j}\right)\right)\\
 & \geq\gamma\delta\sum_{i=0}^{\infty}\gamma^{i}q^{i}= \frac{\gamma\delta}{1-\gamma q}=\frac{\gamma\delta}{2(1-\gamma)}.
\end{align*}
On the other hand, if $\pi(s_{1}^{j})$ is not the optimal action ($a_{1}$
in this case), we have the trivial lower bound $V_{P_{1}}^{\star}(s_{1}^{j})-V_{P_{1}}^{\pi}(s_{1}^{j})\geq0$.
As a result, we obtain
\[
V_{P_{1}}^{\star}(s_{1}^{j})-V_{P_{1}}^{\pi}(s_{1}^{j})\geq\frac{\gamma\delta}{2(1-\gamma)}1\left\{ \pi(s_{1}^{j})\neq\pi_{P_{1}}^{\star}(s_{1}^{j})\right\} ,
\]
which implies
\begin{align*}
\mathcal{L}(\pi;P_{1})+\mathcal{L}(\pi;P_{2}) & \geq\frac{1}{S}\cdot\frac{\gamma\delta}{2(1-\gamma)}\sum_{j=1}^{S/4}\left(1\left\{ \pi(s_{1}^{j})\neq\pi_{P_{1}}^{\star}(s_{1}^{j})\right\} +1\left\{ \pi(s_{1}^{j})\neq\pi_{P_{2}}^{\star}(s_{1}^{j})\right\} \right)\\
 & \geq\frac{1}{S}\cdot\frac{\gamma\delta}{2(1-\gamma)}\sum_{j=1}^{S/4}1\left\{ \pi_{P_{1}}^{\star}(s_{1}^{j})\neq\pi_{P_{2}}^{\star}(s_{1}^{j})\right\} .
\end{align*}

\paragraph{Step 3.}

In the end, we use the second property of the set $\mathcal{V}$,
namely for any $\bm{v}_{i}\neq\bm{v}_{j}$ in $\mathcal{V}$, one
has $\|\bm{v}_{i}-\bm{v}_{j}\|_{1}\geq S/8$. An immediate
consequence is that 
\[
\sum_{j=1}^{S/4}1\left\{ \pi_{P_{1}}^{\star}(s_{1}^{j})\neq\pi_{P_{2}}^{\star}(s_{1}^{j})\right\} =\|\bm{v}_{P_{1}}-\bm{v}_{P_{2}}\|_{1}\geq\frac{S}{8}.
\]

Taking the previous three steps collectively completes the proof.

\subsubsection{Proof of Lemma~\ref{lem:IL_LB_MDP_lower}}\label{app:proof_lem_IL_lower}
In the case of $C^\star = 1$, we have ${d}^\star = \mu$ which is the imitation learning setting. We adapt the information-theoretic lower bound for the episodic MDPs given in the work~\citet[Theorem 6]{rajaraman2020toward} to the discounted setting.

\paragraph{Notations and Setup:}
Let $\mathcal{S}(\cD)$  be the set of all states that are observed in dataset $\cD$. When $C^\star = 1$, we  know the optimal policy $\pi^\star (s)$ at all states $s \in \mathcal{S} (\mathcal{D})$ visited in the dataset $\cD$.
We define $\Pi_{\mathrm{mimic}} (\cD)$  as the family of deterministic policies which always take the optimal action on each state visited in $\cD$, namely,
\begin{equation} \label{eq:Pi.mimic.DA}
    \Pi_{\mathrm{mimic}} (\cD) \coloneqq \Big\{
    \forall s \in \mathcal{S}  (\cD),\ \pi (s) = \pi^\star (s) \Big\},
\end{equation} 
Informally, $\Pi_{\mathrm{mimic}} (\cD)$ is the family of  policies which are ``compatible'' with the dataset  collected by the learner.

Define $\mathbb{M}_{\mathcal{S}, \mathcal{A}}$ as the family of MDPs over state space $\mathcal{S}$ and action space $\mathcal{A}$. We proceed by by lower bounding the Bayes expected suboptimality. That is, we aim at finding a distribution $\mathcal{P}$ over MDPs supported on $\mathbb{M}_{\mathcal{S} , \mathcal{A}}$ such that,
\begin{align*}
    \mathbb{E}_{\mathsf{MDP} \sim \mathcal{P}} \Big[ J(\pi^\star) - \mathbb{E}_{\cD} \left[ J (\hat{\pi})\right]\Big] \gtrsim \min \left\{ \frac{1}{1-\gamma}, \frac{S}{ (1-\gamma)^2N} \right\},
\end{align*}
where $\hat{\pi}$ is a function of dataset $\cD$.

\paragraph{Construction of the distribution $\mathcal{P}$:} 
We first determine the distribution of the optimal policy, and then we design $\mathcal{P}$ such that conditioned on the optimal policy, the distribution is deterministic. We  let the distribution of the optimal policy be uniform over all deterministic policies. That is, for each  $s \in \mathcal{S}$, $\pi^\star (s) \sim \mathrm{Unif} (\mathcal{A})$.  For every $\pi^\star$, we construct an MDP instance in  in Figure~\ref{fig:det:Nsim0:LB:repeat}. Hence the distribution over MDPs comes from the randomness in $\pi$.

For a fixed optimal policy $\pi^\star$, the MDP instance  $\mathsf{MDP} [\pi^\star]$ is determined as follows: we initialize with  a fixed initial distribution over states $\rho = \{ \zeta,\cdots,\zeta, 1 {-}(S{-}2) \zeta, 0\}$ where $\zeta = \frac{1}{N+1}$. Let the last state be a special state $b$ which we refer to as the ``bad state''. At each state $s \in \mathcal{S} \setminus \{ b \}$, choosing the optimal action renews the state in the initial distribution $\rho$ and gives a reward of $1$, while any other choice of action deterministically induces a transition to the bad state $b$ and offers zero reward. In addition, the bad state is absorbing and dispenses no reward regardless of the choice of action. That is,
\begin{equation}
    P (\cdot\mid s,a) =
    \begin{cases}
    \rho, \qquad &s \in \mathcal{S} \setminus \{ b \},\ a = \pi^\star (s) \\
    \delta_b, &\text{otherwise},
    \end{cases}
\end{equation}
and the reward function of the MDP is given by
\begin{equation}
    r (s,a) = \begin{cases} 1, \qquad & s \in \mathcal{S} \setminus \{ b \},\ a = \pi^\star (s),\\
    0, & \text{otherwise}.
    \end{cases}
\end{equation}

\begin{figure}[t]
\centering
    \scalebox{1}{
    \begin{tikzpicture}[observed/.style={circle, draw=black, fill=black!10, thick, minimum size=10mm},
    good/.style={circle, fill = citeColor!20, draw=gray, thick, minimum size=11mm},
    bad/.style={circle, fill = linkColor!20, draw=gray, thick, minimum size=11mm},
    squarednode/.style={rectangle, draw=red!60, fill=red!5, very thick, minimum size=10mm},
    treenode/.style={rectangle, draw=none, thick, minimum size=10mm},
    rootnode/.style={rectangle, draw=none, thick, minimum size=10mm},
    squarednode/.style={rectangle, draw=none, fill=citeColor!20, very thick, minimum size=7mm},]
    \node[good] (s1) at (0,0) {$1$};
    \node[good] (s2) at (2,0) {$2$};
    \node[good] (s3) at (4.5,0) {\footnotesize${S -1}$};
    \node (s5) at (3.25,0) {$\dots$};
    \node[bad] (s4) at (7,0) {$b$};
    \node (n1) at (0,1.8) {$\sim \rho$};
    \node (n2) at (2,1.8) {$\sim \rho$};
    \node (n3) at (4.5,1.8) {$\sim \rho$};
    \node (t1) at (0.5,0.9) {\footnotesize $\pi^\star(1)$};
    \node (t2) at (2.5,0.9) {\footnotesize $\pi^\star(2)$};
    \node (t3) at (5.3,0.9) {\footnotesize $\pi^\star(S-1)$};
    \draw[->, very thick, >=stealth, linkColor!80] (s1.south east) to [out=-50,in=220,looseness=0.7] (s4.south west);
    \draw[->, very thick, >=stealth, linkColor!80] (s2.south east) to [out=-40,in=210,looseness=0.7] (s4.south west);
    \draw[->, very thick, >=stealth, linkColor!80] (s3.east) to (s4.west);
    \draw[->, very thick, >=stealth, linkColor!80] (s4.north) to [out=70,in=20,looseness=4] (s4.east);
    \draw[->, very thick, >=stealth, citeColor!80] (s1.north) to (n1.south);
    \draw[->, very thick, >=stealth, citeColor!80] (s2.north) to (n2.south);
    \draw[->, very thick, >=stealth, citeColor!80] (s3.north) to (n3.south);
    \end{tikzpicture}}
\caption{The hard MDP instance for the case $C^\star = 1$. Upon playing the optimal (blue) action at any state except $b$, the learner returns to a new state according to initial distribution $\rho = \{ \zeta,{\cdots},\zeta,1 {-} (S{-}2) \zeta, 0\}$ where $\zeta {=} \frac{1}{N+1}$. Any other choice of action (red) deterministically transitions the state to $b$.}
\label{fig:det:Nsim0:LB:repeat}
\end{figure}
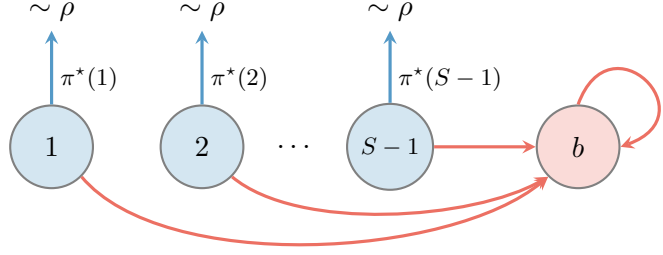

Under this construction, it is easy to see that $J_{\mathsf{MDP}}(\pi^\star(\mathsf{MDP})) = 1/(1-\gamma)$ since the optimal action always acquires reward $1$ throughout the trajectory. Thus the Bayes risk can be written as
\begin{equation} \label{eq:Rlowerbound-1}
    \mathbb{E}_{\mathsf{MDP} \sim \mathcal{P}} \Big[ \frac{1}{1-\gamma} - \mathbb{E} \Big[ J_{\mathsf{MDP}} (\widehat{\pi} (\cD))\Big]\Big].
\end{equation}

\paragraph{Understanding the conditional distribution.}
Now we study the conditional distribution of the MDP given the observed dataset $\mathcal{D}$. We start from the conditional distribution of the optimal policy.
We present the following lemma without proof.
\begin{lemma}[{\citet[Lemma A.14]{rajaraman2020toward}}] \label{lemma:cond-is-mimic}
Conditioned on the dataset $\cD$ collected by the learner, the optimal policy $\pi^\star$ is distributed $\sim \mathrm{Unif} (\Pi_{\mathrm{mimic}} (\cD))$. In other words, at each state visited in the dataset, the optimal action is fixed. At the remaining states, the optimal action is sampled uniformly from $\mathcal{A}$.
\end{lemma}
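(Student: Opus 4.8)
The plan is to compute the posterior distribution of $\pi^\star$ given $\cD$ directly via Bayes' rule, exploiting the fact that in the regime $C^\star = 1$ the behavior distribution coincides with the optimal occupancy, $\mu = d^\star$. Recall that by construction the prior over optimal policies is the uniform product measure, namely $\pi^\star(s) \sim \mathrm{Unif}(\mathcal{A})$ independently across $s \in \cS$, so that $\mathbb{P}(\pi^\star = \pi) = |\mathcal{A}|^{-S}$ for every deterministic policy $\pi$. Since the MDP instance $\mathsf{MDP}[\pi^\star]$ is a deterministic function of $\pi^\star$, it suffices to show that the likelihood $\mathbb{P}(\cD \mid \pi^\star = \pi)$ is constant over all $\pi$ that agree with the observed actions and zero otherwise; combining this with the flat prior immediately yields the uniform posterior on $\Pi_{\mathrm{mimic}}(\cD)$.

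The crux is therefore to evaluate the likelihood. First I would observe that in the bad-state MDP of Figure~\ref{fig:det:Nsim0:LB:repeat}, playing the optimal action renews the state according to $\rho$, so that every step taken by the optimal policy lands in a fresh draw from $\rho$. Consequently the state marginal is $\rho$ at all times and the optimal occupancy simplifies to $d^\star(s,a) = \rho(s)\, \ind\{a = \pi^\star(s)\}$. Because the data are drawn i.i.d.\ from $\mu = d^\star$, each sample $(s_i, a_i, r_i, s_i')$ satisfies $s_i \sim \rho$, $a_i = \pi^\star(s_i)$, $r_i = 1$, and $s_i' \sim \rho$, whence
\[
\mathbb{P}(\cD \mid \pi^\star = \pi) = \prod_{i=1}^N \rho(s_i)\, \ind\{a_i = \pi(s_i)\}\, \ind\{r_i = 1\}\, \rho(s_i').
\]
This product is nonzero precisely when $\pi(s_i) = a_i$ for every observed index $i$, i.e.\ when $\pi \in \Pi_{\mathrm{mimic}}(\cD)$, in which case it equals the constant $\prod_{i} \rho(s_i)\rho(s_i')$ that does not depend on $\pi$.

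Putting these together, Bayes' rule gives $\mathbb{P}(\pi^\star = \pi \mid \cD) \propto \ind\{\pi \in \Pi_{\mathrm{mimic}}(\cD)\}$, which after normalization is exactly the uniform law on $\Pi_{\mathrm{mimic}}(\cD)$. Finally, I would unpack the structure of $\Pi_{\mathrm{mimic}}(\cD)$: by definition it fixes $\pi(s) = \pi^\star(s)$ at each visited state $s \in \mathcal{S}(\cD)$ and places no constraint on the unvisited states, so the uniform measure on this set is equivalent to declaring the optimal action deterministic on visited states and independently $\mathrm{Unif}(\mathcal{A})$ on the remaining ones---precisely the assertion of the lemma. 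The main obstacle, and the step demanding the most care, is the likelihood invariance: I must verify that the data-generating mechanism depends on $\pi$ only through its values at visited states, which hinges on the renewal property that the next-state distribution following an optimal action is $\rho$ and hence carries no information about $\pi$ at unvisited states.
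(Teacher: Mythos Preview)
The paper does not actually prove this lemma; it states it without proof and cites \citet{rajaraman2020toward}. Your proposal is therefore strictly more than what the paper provides, and it is correct: the Bayes-rule computation you outline goes through exactly as you describe, since in $\mathsf{MDP}[\pi]$ the optimal occupancy satisfies $d^\pi(s,a)=\rho(s)\ind\{a=\pi(s)\}$ (the renewal structure makes the state marginal equal to $\rho$ at every time), and hence the likelihood of the i.i.d.\ dataset depends on $\pi$ only through the indicators $\ind\{a_i=\pi(s_i)\}$. The one point you might state a bit more explicitly is that $\mu$ itself varies with the hypothesized $\pi$ (because $\mu=d^\star$ and the MDP is $\mathsf{MDP}[\pi]$), but you effectively use this when you write $\mu(s_i,a_i)=\rho(s_i)\ind\{a_i=\pi(s_i)\}$, so the argument is complete.
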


Now we define the conditional distribution of the MDPs given the dataset $\cD$ collected by the learner as below.
\begin{definition} \label{def:PDA}
Define $\mathcal{P} (\cD)$ as the distribution of $\mathsf{MDP}$ conditioned on the observed dataset $\cD$. In particular, $\pi^\star \sim \mathrm{Unif} (\Pi_{\mathrm{mimic}} (\cD))$ and $\mathsf{MDP} = \mathsf{MDP} [\pi^\star]$.
\end{definition}

\noindent From Lemma~\ref{lemma:cond-is-mimic} and the definition of $\mathcal{P} (\cD)$ in Definition~\ref{def:PDA}, applying Fubini's theorem gives
\begin{equation} \label{eq:Rlowerbound-2}
    \mathbb{E}_{\mathsf{MDP} \sim \mathcal{P}} \Big[ \frac{1}{1-\gamma} - \mathbb{E}_{\cD} \left[ J (\widehat{\pi})\right]\Big] = \mathbb{E}_\cD \left[ \mathbb{E}_{\cM \sim \mathcal{P} } \left[ \frac{1}{1-\gamma} - J (\widehat{\pi} )\right] \right].
\end{equation}

\paragraph{Lower bounding the Bayes Risk.}

Next we relate the Bayes risk to the first time the learner visits a state unobserved in $\cD$.

\begin{lemma}\label{lemma:hatvalue-UB}
In the trajectory induced by the  infinite-horizon MDP and policy, 
define the stopping time $\tau$ as the first time that the learner encounters a state $s \ne b$ that has not been visited in $\cD$ at time $t$. That is,
\begin{equation}
    \tau = \begin{cases}
    \inf \{ t : s_t \not\in \mathcal{S} (\cD) \cup \{ b \} \} \quad & \exists t : s_t \not\in \mathcal{S} (\cD) \cup \{ b \}\\
    +\infty & \text{otherwise}.
    \end{cases}
\end{equation}
Then, conditioned on the dataset $\cD$ collected by the learner,
\begin{equation}
    \mathbb{E}_{\mathsf{MDP} \sim \mathcal{P}(\cD)} \Big[ J (\pi^\star) - \mathbb{E} \left[ J (\widehat{\pi})\right]\Big] \ge \left( 1 - \frac{1}{|\mathcal{A}|} \right) \mathbb{E}_{\mathsf{MDP} \sim \mathcal{P}(\cD)}\left[ \mathbb{E}_{\widehat{\pi}(\cD)} \left[ \frac{\gamma^\tau}{1-\gamma}\right] \right]
\end{equation}
\end{lemma}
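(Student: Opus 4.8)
The plan is to reduce both sides to a single random time---the first moment $T$ at which $\hat\pi$ plays a suboptimal action---and then relate $T$ to the stopping time $\tau$ via the unobserved state the trajectory first lands on. First I would record two elementary facts about the hard instance $\mathsf{MDP}[\pi^\star]$ of Figure~\ref{fig:det:Nsim0:LB:repeat}: since every optimal action yields reward $1$ and renews the state according to $\rho$, following $\pi^\star$ gives $J(\pi^\star) = 1/(1-\gamma)$; and since any suboptimal action transitions to the absorbing zero-reward state $b$, the trajectory under $\hat\pi$ collects reward $1$ at each step up to (but not including) $T$ and $0$ thereafter. Writing $\mathbb{E}$ for the joint expectation over $\mathsf{MDP}\sim\mathcal{P}(\cD)$ and the induced trajectory, this yields $\mathbb{E}[J(\hat\pi)] = \mathbb{E}[(1-\gamma^{T})/(1-\gamma)]$ and hence
\begin{equation*}
    \mathbb{E}_{\mathsf{MDP}\sim\mathcal{P}(\cD)}\big[J(\pi^\star) - \mathbb{E}[J(\hat\pi)]\big] = \frac{\mathbb{E}[\gamma^{T}]}{1-\gamma},
\end{equation*}
so it suffices to prove $\mathbb{E}[\gamma^{T}] \ge (1 - 1/|\cA|)\,\mathbb{E}[\gamma^{\tau}]$, with the convention $\gamma^{\infty}=0$.

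Second, I would argue that on the event $\{\tau<\infty\}$ the learner necessarily plays the (known) optimal action at every state visited strictly before $\tau$: had it played a suboptimal action at some $t<\tau$, it would have entered $b$ and remained there, so no unobserved state would ever be reached and $\tau$ would equal $\infty$. Consequently, on $\{\tau<\infty\}$ the first suboptimal action occurs no earlier than $\tau$, i.e.\ $T\ge\tau$, and it can occur exactly at $\tau$ only through the learner's choice at the first unobserved state $s_\tau\notin\mathcal{S}(\cD)\cup\{b\}$.

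The crux is the behavior at $s_\tau$. Conditioning on $\cD$ and on the history $\mathcal{H}_\tau=(s_0,a_0,\dots,s_\tau)$ up to arriving at $s_\tau$, I would invoke Lemma~\ref{lemma:cond-is-mimic}: under $\mathcal{P}(\cD)$ the optimal action $\pi^\star(s_\tau)$ at the unobserved state $s_\tau$ is uniform on $\cA$, while $\hat\pi$ is a deterministic function of $\cD$ alone. Since $s_\tau$ is visited for the first time at $\tau$ (for $t<\tau$ the state $s_t$ lies in $\mathcal{S}(\cD)$ and is therefore distinct from $s_\tau$), the value $\pi^\star(s_\tau)$ was never used in generating $\mathcal{H}_\tau$ and is independent of it. Hence $\mathbb{P}(\hat\pi(s_\tau)\neq\pi^\star(s_\tau)\mid\mathcal{H}_\tau)=1-1/|\cA|$; on this sub-event $T=\tau$, so $\gamma^{T}=\gamma^{\tau}$, while on its complement we merely use $\gamma^{T}\ge 0$. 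This gives $\mathbb{E}[\gamma^{T}\mid\mathcal{H}_\tau]\ge(1-1/|\cA|)\gamma^{\tau}$ on $\{\tau<\infty\}$. Taking expectations and noting that $\{\tau=\infty\}$ contributes a nonnegative amount to $\mathbb{E}[\gamma^T]$ and nothing to $\mathbb{E}[\gamma^\tau]$, I obtain $\mathbb{E}[\gamma^{T}]\ge(1-1/|\cA|)\mathbb{E}[\gamma^{\tau}]$, which combined with the identity above finishes the proof.

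The main obstacle I anticipate is making the conditional-independence step fully rigorous, since $\tau$ is a stopping time: I must justify that conditioning on $\mathcal{H}_\tau$ (equivalently, decomposing over $\{\tau=k\}$ and the associated length-$k$ histories) leaves $\pi^\star(s_\tau)$ uniform. This hinges on the fact that the law of the trajectory up to $\tau$ depends on $\pi^\star$ only through its values on $\mathcal{S}(\cD)$---fixed by $\cD$---together with the i.i.d.\ renewal draws from $\rho$; the ``first visit'' property of $\tau$ is precisely what guarantees that no information about $\pi^\star(s_\tau)$ has leaked into $\mathcal{H}_\tau$.
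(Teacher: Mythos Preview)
Your proof is correct and follows essentially the same route as the paper. The paper introduces $\tau_b$, the first hitting time of $b$, shows $\frac{1}{1-\gamma} - \mathbb{E}[J(\hat\pi)] \geq \mathbb{E}\big[\gamma^{\tau_b-1}/(1-\gamma)\big]$, and then factors $\mathbb{P}[\tau_b = t+1,\,\tau = t,\,s_t = s]$ using the uniform posterior of $\pi^\star(s)$ on unobserved states; your random time $T$ is exactly $\tau_b - 1$, and your conditioning on $\mathcal{H}_\tau$ is simply the stopping-time packaging of the same decomposition over $\{\tau = t,\, s_\tau = s\}$.
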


\noindent We defer the proof to the end of this section.

\medskip
Plugging the result of Lemma~\ref{lemma:hatvalue-UB} into equality \eqref{eq:Rlowerbound-2}, we obtain
\begin{align*}
    \mathbb{E}_{\mathsf{MDP} \sim \mathcal{P}} \Big[ J (\pi^\star) - \mathbb{E} \left[ J (\widehat{\pi})\right]\Big] &\ge \left( 1 - \frac{1}{|\mathcal{A}|} \right) \mathbb{E}_{\cD} \left[ \mathbb{E}_{\mathsf{MDP} \sim \mathcal{P} (\cD)} \left[ \mathbb{E}_{\widehat{\pi}(\cD)} \left[ \frac{\gamma^\tau}{1-\gamma} \right] \right] \right], \\
    &\overset{(i)}{\ge} \left( 1 - \frac{1}{|\mathcal{A}|} \right) \frac{1}{2(1-\gamma)} \mathbb{E}_{\cD} \left[ \mathbb{E}_{\mathsf{MDP} \sim \mathcal{P} (\cD)} \left[ \mathrm{Pr}_{\widehat{\pi}(\cD)} \Big[ \tau \le \lfloor \frac{1}{\log(1/\gamma)} \rfloor \Big] \right] \right], \\
    &= \left( 1 - \frac{1}{|\mathcal{A}|} \right) \frac{1}{2(1-\gamma)} \mathbb{E}_{\cM \sim \mathcal{P}} \left[ \mathbb{E}_{\cD} \left[ \mathrm{Pr}_{\widehat{\pi}(\cD)} \Big[ \tau \le \lfloor \frac{1}{\log(1/\gamma)} \rfloor \Big] \right] \right], \label{eq:Rlowerbound-3}
\end{align*}
where $(i)$ uses Markov's inequality. Lastly we bound the probability  that we visit a state unobserved in the dataset before time $\lfloor \frac{1}{\log(1/\gamma)} \rfloor$. For any policy $\widehat{\pi}$, from a similar proof as~\citet[Lemma A.16]{rajaraman2020toward} we have
\begin{equation}
    \mathbb{E}_{\mathsf{MDP} \sim \mathcal{P}} \left[ \mathbb{E}_{\cD} \left[ \mathrm{Pr}_{\widehat{\pi}} \Big[ \tau \le \lfloor \frac{1}{\log(1/\gamma)}\rfloor \Big] \right] \right] \gtrsim \min \left\{ 1, \frac{S }{\log(1/\gamma)N} \right\}.
\end{equation}  Therefore,
\begin{align*}
    \mathbb{E}_{\mathsf{MDP} \sim \mathcal{P}} \Big[ J (\pi^\star) - \mathbb{E} \left[ J (\widehat{\pi})\right]\Big] &\gtrsim \left( 1 - \frac{1}{|\mathcal{A}|} \right) \frac{1}{\log(1/\gamma)} \min \left\{ 1, \frac{S}{(1-\gamma)N} \right\} \\
    &\geq \left( 1 - \frac{1}{|\mathcal{A}|} \right) \frac{\gamma}{1-\gamma} \min \left\{ 1, \frac{S}{(1-\gamma)N} \right\}
\end{align*}
Here we use the fact that  $\log(x)\leq x-1$. Since $1 - \frac{1}{|\mathcal{A}|} \geq 1/2$ for $|\mathcal{A}| \ge 2$, the final result follows.

\paragraph{Proof of Lemma~\ref{lemma:hatvalue-UB}.}
To facilitate the analysis, we define an auxiliary random variable $\tau_b$ to be the first time the learner encounters the state $b$. If no such state is encountered, $\tau_b$ is defined as $+\infty$. Formally,
\begin{equation*}
    \tau_b = \begin{cases} \inf \{ t : s_t = b \}, &\exists t : s_t = b, \\
   +\infty, &\text{otherwise}.
    \end{cases}
\end{equation*}
Conditioned on the observed dataset $\cD$, we have 
\begin{align}
    \frac{1}{1-\gamma} - \mathbb{E}_{\cM \sim \mathcal{P} (\cD)} \left[  J (\widehat{\pi})\right] &= \frac{1}{1-\gamma} - \mathbb{E}_{\cM \sim \mathcal{P} (\cD)} \left[ \mathbb{E}_{\widehat{\pi}} \left[ \sum\nolimits_{t=0}^\infty \gamma^t {r} (s_t,a_t) \right]\right] \\
    &\ge \mathbb{E}_{\cM \sim \mathcal{P} (\cD)} \left[ \mathbb{E}_{\widehat{\pi}} \left[ \frac{\gamma^{\tau_b-1}}{1-\gamma} \right]\right] \label{eq:regretbound}
\end{align}
where the last inequality follows from the fact that ${r}$ is bounded in $[0,1]$, and the state $b$ is absorbing and always offers $0$ reward. Fixing the dataset $\cD$ and the optimal policy $\pi^\star$ (which determines the MDP $\mathsf{MDP} [\pi^\star]$), we study $\mathbb{E}_{\widehat{\pi} (\cD)} \left[ \frac{\gamma^{\tau_b-1}}{1-\gamma}\right]$ and try to relate it to $\mathbb{E}_{\widehat{\pi} (\cD)} \left[ \frac{\gamma^{\tau}}{1-\gamma}\right]$. Note that for any $t $ and state $s \in \mathcal{S}$,
\begin{align*}
    \mathrm{Pr}_{\widehat{\pi}} \left[ \tau_b = t + 1, \tau = t, s_t = s\right] &= \mathrm{Pr}_{\widehat{\pi}} \left[ \tau_b = t + 1 \mid \tau = t, s_t = s \right] \mathrm{Pr}_{\widehat{\pi}} \left[ \tau = t, s_t = s \right]\\
    &= \Big( 1 - \ind\{\widehat{\pi}(s)= \pi^\star(s)\}
    \Big) \mathrm{Pr}_{\widehat{\pi}} \left[ \tau = t, s_t = s \right].
\end{align*}
In the last equation, we use the fact that the learner must play an action other than $\pi^\star (s_t)$ to visit $b$ at time $t+1$. Next we take an expectation with respect to the randomness of $\pi^\star$ which conditioned on $\cD$ is drawn from $\mathrm{Unif} (\Pi_{\mathrm{mimic}} (\cD))$. Note that $\mathsf{MDP}[\pi^\star]$ is also determined conditioning on $\pi^\star$. 
Observe that the dependence of the second term $\mathrm{Pr}_{\widehat{\pi}} \left[ \tau = t, s_t = s \right]$ on $\pi^\star$ comes from the probability computed with the underlying MDP chosen as $\mathsf{MDP} [\pi^\star]$. However it only depends on the characteristics of $\mathsf{MDP} [\pi^\star]$ on the observed states in $\mathcal{D}$. On the other hand, the first term $\left( 1 - \ind\{\widehat{\pi}(s)= \pi^\star(s)\} \right)$ depends only on $\pi^\star(s)$, where $s$ is an unobserved state. Thus the two terms are independent. By taking expectation with respect to the randomness of $\pi^\star \sim \mathrm{Unif} (\Pi_{\mathrm{mimic}} (\cD))$ and $\mathsf{MDP} = \mathsf{MDP} [\pi^\star]$, we have
\begin{align*}
    &\mathbb{E}_{\cM \sim \mathcal{P} (\cD)} \Big[ \mathrm{Pr}_{\widehat{\pi}(\cD)} \left[ \tau_b = t + 1, \tau = t, s_t = s \right] \Big] \nonumber\\
    &= \mathbb{E}_{\cM \sim \mathcal{P} (\cD)} \Big[  1 - \ind\{\widehat{\pi}(s)= \pi^\star(s)\} \Big] \ \mathbb{E}_{\cM \sim \mathcal{P} (\cD)} \Big[ \mathrm{Pr}_{\widehat{\pi}} \left[ \tau = t, s_t = s \right] \Big] \\
    &= \left( 1 - \frac{1}{|\mathcal{A}|} \right) \mathbb{E}_{\cM \sim \mathcal{P} (\cD)} \Big[ \mathrm{Pr}_{\widehat{\pi}} \left[ \tau = t, s_t = s \right] \Big]
\end{align*}
where in the last equation, we use the fact that conditioned on $\cD$ either $(i)$ $s = b$, in which case $\tau \ne t$ and both sides are $0$, or (ii) if $s \ne b$, then $\tau = t$ implies that the state $s$ visited at time $t$ must not be observed in $\cD$, so $\pi^\star (s) \sim \mathrm{Unif} (\mathcal{A})$. Using the fact that $\mathrm{Pr}_{\widehat{\pi}} \left[ \tau_b = t + 1, \tau = t, s_t = s\right] \le \mathrm{Pr}_{\widehat{\pi}} \left[ \tau_b = t + 1 , s_t = s\right]$ and summing over $s \in \mathcal{S}$ results in the inequality,
\begin{equation*}
    \mathbb{E}_{\cM \sim \mathcal{P} (\cD)} \Big[ \mathrm{Pr}_{\widehat{\pi}} \left[ \tau_b = t + 1 \right] \Big] \ge \left( 1 - \frac{1}{|\mathcal{A}|} \right) \mathbb{E}_{\cM \sim \mathcal{P} (\cD)} \Big[ \mathrm{Pr}_{\widehat{\pi}} \left[ \tau = t \right] \Big].
\end{equation*}
Multiplying both sides by $\frac{\gamma^t}{1-\gamma}$ and summing over $t=1,\cdots,\infty$,
\begin{equation*}
    \mathbb{E}_{\cM \sim \mathcal{P} (\cD)} \Big[ \mathbb{E}_{\widehat{\pi}} \left[ \frac{\gamma^{\tau_b-1}}{1-\gamma} \right] \Big] \ge \left( 1 - \frac{1}{|\mathcal{A}|} \right) \mathbb{E}_{\cM \sim \mathcal{P} (\cD)} \Big[ \mathbb{E}_{\widehat{\pi}} \left[ \frac{\gamma^{\tau}}{1-\gamma} \right] \Big].
\end{equation*}
here we use the fact that the initial distribution $\rho$ places no mass on the bad state $b$. Therefore, $\mathrm{Pr}_{\widehat{\pi}(D)} \left[ \tau_b = 1 \right] = \rho (b) = 0$. This equation in conjunction with \eqref{eq:regretbound} completes the proof.

\subsection{Imitation learning in discounted MDPs}

In Theorem~\ref{thm:CB_IL_lower}, we have shown that imitation learning has a worse rate than LCB even in the contextual bandit case when $C^\star \in(1, 2)$. 
In this section, we show that if we change the concentrability assumption from density ratio to conditional density ratio, behavior cloning continues to work in certain regime. This also shows that behavior cloning works when $C^\star = 1$ in the discounted MDP case.

\begin{theorem}
Assume the expert policy $\pi^\star$ is deterministic and that $\max \frac{(1-\gamma) d^{*}(a|s)}{\mu(a|s)} \leq C^\star$ for some $C^\star\in[1, 2)$. 
 We consider a variant of behavior cloning policy: 
\begin{align}        \Pi_{\text{mimic}} = \{ \pi \in \Pi_{\text{det}}: \forall s \in \cD, \pi(\cdot\mid s) = \argmax_a N(s, a)\}.
    \end{align}
 Here $ \pi \in \Pi_{\text{det}}$ refers to the set of all deterministic policies.   Then for any $\hat \pi \in \Pi_{\text{mimic}} $, we have
    \begin{align*}
        \E_\cD[ J(\pi^\ast) - J(\hat{\pi})] \lesssim \frac{S}{C_0 N (1-\gamma)^2},
    \end{align*}
where    $C_0 = 1-\exp\left(-\mathsf{KL}\left(\frac{1}{2} \| \frac{1}{C^\star}\right)\right)$.
\end{theorem}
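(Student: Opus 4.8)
The plan is to reduce the value sub-optimality to the probability that behavior cloning disagrees with $\pi^\star$ on a state, and then to control that disagreement probability state-by-state via a ``most played arm'' concentration argument. First I would invoke Lemma~\ref{lemma:reduction_imitation} with the competing policy taken to be $\pi^\star$, which gives
\begin{align*}
J(\pi^\star) - J(\hat\pi) \leq V_{\max}^2\, \mathbb{E}_{s\sim d_{\pi^\star}}\!\left[\ind\{\hat\pi(s)\neq\pi^\star(s)\}\right].
\end{align*}
Taking expectation over the dataset and recalling $V_{\max}=(1-\gamma)^{-1}$ yields
\begin{align*}
\mathbb{E}_{\cD}[J(\pi^\star)-J(\hat\pi)] \leq \frac{1}{(1-\gamma)^2}\sum_s d^\star(s)\,\mathbb{P}\big(\hat\pi(s)\neq\pi^\star(s)\big),
\end{align*}
so the $1/(1-\gamma)^2$ factor is already accounted for and the task reduces to bounding the per-state error probabilities, weighted by the optimal occupancy $d^\star(s)$.

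For a fixed state $s$, the crucial point is that the conditional concentrability assumption forces the behavior policy to place conditional mass at least $1/C^\star$ on the expert action, i.e.\ $\mu(\pi^\star(s)\mid s)\geq 1/C^\star > 1/2$ since $C^\star<2$. This is exactly the ingredient that is absent under marginal concentrability and that underlies the failure of behavior cloning in Proposition~\ref{thm:CB_IL_lower}; here it guarantees that the expert action is, in expectation, the strict majority action. I would then reuse the most-played-arm analysis of Proposition~\ref{thm.imitation_bandit_bound}: conditioning on the visitation count $N(s)=n$, the number of observations of $\pi^\star(s)$ is $\mathrm{Binomial}(n,\mu(\pi^\star(s)\mid s))$, and $\{\hat\pi(s)\neq\pi^\star(s)\}$ is contained in the event that this count falls below $n/2$. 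A Chernoff bound then gives a conditional failure probability at most $\exp(-n\,\mathsf{KL}(\tfrac12\|\mu(\pi^\star(s)\mid s)))\leq \exp(-n\,\mathsf{KL}(\tfrac12\|\tfrac{1}{C^\star}))$, the last step using monotonicity of $q\mapsto\mathsf{KL}(\tfrac12\|q)$ for $q>\tfrac12$. Averaging over $N(s)\sim\mathrm{Binomial}(N,\mu(s))$ converts the per-sample decay into the closed form
\begin{align*}
\mathbb{P}\big(\hat\pi(s)\neq\pi^\star(s)\big) \leq \big(1-\mu(s)\,C_0\big)^N, \qquad C_0 = 1-\exp\!\big(-\mathsf{KL}(\tfrac12\|\tfrac1{C^\star})\big),
\end{align*}
where the unobserved-state case $N(s)=0$ is automatically captured, the bound being $1$ there.

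It then remains to show $\sum_s d^\star(s)\,(1-C_0\mu(s))^N \lesssim S/(C_0 N)$. The elementary maximization $\max_{x\in[0,1]} x(1-C_0 x)^N \lesssim \tfrac{1}{C_0 N}$ handles a single $\mu(s)$, so the remaining work is to pass from the occupancy weights $d^\star(s)$ produced by the reduction lemma to the behavior marginal $\mu(s)$ appearing in the visitation counts; I would extract the dominance $d^\star(s)\lesssim\mu(s)$ on visited states from the concentrability assumption (this is where the $(1-\gamma)$-normalization in the assumption enters), after which summing over the $S$ states produces the claimed $S/(C_0 N(1-\gamma)^2)$.

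The main obstacle is precisely this last step. The most-played-arm engine only requires conditional coverage of the optimal action, whereas controlling the $d^\star$-weighted sum additionally requires state-marginal coverage, and the summand $x(1-C_0 x)^N$ is non-monotone in $x$, so the prefactor $d^\star(s)$ and the exponent $\mu(s)$ cannot be decoupled naively. Arranging the two coverage requirements so that they combine cleanly—leaving no spurious factor of $C^\star$ and no extra power of $1/(1-\gamma)$, and recovering exactly $S/(C_0 N)$—is the delicate part of the argument.
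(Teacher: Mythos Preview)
Your approach is essentially the same as the paper's: reduce via Lemma~\ref{lemma:reduction_imitation} to a $d^\star$-weighted disagreement probability, apply the most-played-arm Chernoff bound conditionally on $N(s)$ to obtain $\exp\!\big(-N(s)\,\mathsf{KL}(\tfrac12\|\tfrac1{C^\star})\big)$, average over $N(s)\sim\mathrm{Binomial}(N,\mu(s))$ to get $(1-C_0\mu(s))^N$, and close with $\max_{x}x(1-C_0x)^N\lesssim 1/(C_0N)$ summed over $S$ states. The step you flag as the main obstacle---replacing the $d^\star(s)$ weight by the behavior marginal $\mu(s)$ so that the same variable plays both the prefactor and the binomial parameter---is precisely where the paper's proof writes a single symbol $p(s)$ for both without further justification, so you have correctly identified the one point that is not fully argued.
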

 
\begin{proof}
    Define the following population loss:
    \begin{align}
        \cL(\hat{\pi}, \pi^\ast) =\E_\cD[\E_{s \sim d_{\star}}[ 1 \{ \hat\pi(s) \neq \pi^\ast(s) \}]].
    \end{align}
 From Lemma~\ref{lemma:reduction_imitation}, we know that  it suffices to control the population loss $\mathcal{L}(\hat \pi, \pi^\star)$. From a similar argument as in~\citet{rajaraman2020toward}, we know that when $C^\star=1$, the expected suboptimality of $\hat \pi$ is upper bounded by $\min(\frac{1}{1-\gamma}, \frac{S}{(1-\gamma)^2N})$.  

When $C^\star\in(1, 2)$, the contribution to the indicator loss can be decomposed into two parts: (1) the loss incurred due to the states not included in $\cD$ whose expected value is upper bounded by $S / N$; (2) the loss incurred due to states the states for which the optimal action is not the most frequent in $\cD$. Conditioned on $N(s)$ and from $\mu( \pi^\star(s)|s)\geq d^{\star}( \pi^\star(s)|s) / C^\star = 1/C^\star$ the probability of not picking the optimal action is upper bounded by $\exp(-N(s) \cdot \mathsf{KL}\left(\mathrm{Bern}\left(\frac{1}{2}\right) \| \mathrm{Bern}\left(\frac{1}{C^\star}\right)\right))$ using Chernoff's inequality. We have 
\begin{align}
    & \E[\mathcal{L}(\hat \pi, \pi^\star)] \\
    & \quad = \E_{s \sim d_{\star}, \mathcal{D}}[  1 \{ \hat\pi(s) \neq \pi^\star(s) \}] \nonumber \\ 
    & \quad \leq  \E_{s\sim d_{\star}, \mathcal{D}}[\prob(N(s)=0)]+ \E_{s\sim d_{\star}}\E_{ \mathcal{D}}[\prob(\hat \pi(s) \neq \pi^\star(s)) \mid N(s)\geq 1]  \nonumber \\ 
    & \quad \lesssim  \frac{S}{N} + \E_{s\sim d_{\star}}\E_ {\mathcal{D}}\left[\exp\left(-N(s)\cdot \mathsf{KL}\left(\mathrm{Bern}\left(\frac{1}{2}\right) \| \mathrm{Bern}\left(\frac{1}{C^\star}\right)\right)\right) \mid   N(s)\geq 1)\right] \nonumber \\ 
    & \quad \lesssim \frac{S}{N} + \sum_{s} p(s) \sum_{n=1}^N  {N \choose n} \exp\left(-n\cdot \mathsf{KL}\left(\mathrm{Bern}\left(\frac{1}{2}\right) \| \mathrm{Bern}\left(\frac{1}{C^\star}\right)\right)  \right) p(s)^n (1-p(s))^{N-n} \nonumber \\ 
    & \quad \leq \frac{S}{N} +\sum_{s} p(s)\left(1-p(s)\left(1-\exp\left(-\mathsf{KL}\left(\mathrm{Bern}\left(\frac{1}{2}\right) \| \mathrm{Bern}\left(\frac{1}{C^\star}\right)\right)\right)\right)\right)^N.
\end{align}
Denote $C_0 = 1-\exp\left(-\mathsf{KL}\left(\mathrm{Bern}\left(\frac{1}{2}\right) \| \mathrm{Bern}\left(\frac{1}{C^\star}\right)\right)\right)$. Note that $\max_{x\in[0, 1]} x(1-C_0x)^N \leq \frac{1}{C_0(N+1)}(1-\frac{1}{N+1})^N \leq \frac{4}{9C_0N}$. Thus we have $\E[\mathcal{L}(\hat \pi, \pi^\star)]\leq \frac{4S}{9C_0N}$. We then use Lemma~\ref{lemma:reduction_imitation} to conclude that the final sub-optimality is upper bounded by $\frac{S}{C_0 N (1-\gamma)^2}$.
\end{proof}

\section{LCB in episodic Markov decision processes}\label{app:episodic_MDP}
The aim of this section is to illustrate the validity of Conjecture \ref{cnj:discounted_MDP} in episodic MDPs. In Section \ref{app:episodic_model}, we give a brief review of episodic MDPs, describing the batch dataset and offline RL objective in this setting, and introducing additional notation. We then present a variant of the VI-LCB algorithm (Algorithm \ref{alg:episodic-VI-LCB}) for episodic MDPs and state its sub-optimality guarantees in Section \ref{app:episodic_VILCB}. In Section \ref{app:properties_VI-LCB}, we show that the proposed penalty captures a confidence interval and prove a value difference lemma for Algorithm \ref{alg:episodic-VI-LCB}. Section \ref{app:episodic_proof} is devoted to the proof of the sub-optimality upper bound. In Section \ref{app:episodic_C_moderate}, we give an alternative sub-optimality decomposition as an attempt to obtain a tight dependency on $C^\star$ in the regime $C^\star \in [1,2)$. We analyze the sub-optimality in this regime in a special example provided in Section \ref{app:episodic_example}.

\subsection{Model and notation} \label{app:episodic_model}
\paragraph{Episodic MDP.}
We consider an episodic MDP described by a tuple $(\cS, \cA, \cP, \cR, \rho, H)$, where $\cS = \{\cS_h\}_{h=1}^H$ is the state space, $\cA$ is the action space, $\cP = \{P_h\}_{h=1}^H$ is the set of transition kernels with $P_h: \cS_h \times \cA \mapsto \Delta(\cS_{h+1})$, $\cR = \{R_h\}_{h=1}^H$ is the set of reward distributions $R_h: \cS_h \times \cA \rightarrow \Delta([0,1])$ with $r: \cS \times \cA \mapsto [0,1]$ as the expected reward function, $\rho: \cS_1 \rightarrow \Delta(\cS_1)$ is the initial distribution, and $H$ is the horizon. To streamline our analysis, we assume that $\{\cS_h\}_{h=1}^H$ partition the state space $\cS$ and are disjoint.

\paragraph{Policy and value functions.} Similar to the discounted case, we consider deterministic policies $\pi: \cS \mapsto \cA$ that map each state to an action. For any $h \in \{1, \dots, H\}$, $s \in \cS_h$, and $a \in \cA_h$, the value function $V^\pi_h: \cS \mapsto \mathbb{R}$ and Q-function $Q^\pi_h: \cS \times \cA \mapsto \mathbb{R}$ are respectively defined as
\begin{align*}
    V_h^\pi(s) &\coloneqq \E \left[\sum_{i = h}^H r_i \middle| s_h = s, a_i = \pi(s_i) \text{ for } i \geq h\right],\\ 
    Q_h^\pi(s,a) & \coloneqq \E \left[\sum_{i = h}^H r_i \middle| s_h = s, a_h = a, a_i = \pi(s_i) \text{ for } i \geq h+1\right].
\end{align*}
Since we assume that the set of state in different levels are disjoint, we drop the subscript $h$ when it is it clear from the context. The expected value of a policy $\pi$ is defined analogously to the discounted case:
\begin{align*}
    J(\pi) \coloneqq \E_{s \sim \rho}[V^\pi_1(s)].
\end{align*}
It is well-known that a deterministic policy $\pi^\star$ exists that maximizes the value function from any state.

\paragraph{Episodic occupancy measures.} We define the (normalized) state occupancy measure $d_\pi: \cS \mapsto [0,H]$ and state-action occupancy measure $d^\pi: \cS \times \cA \mapsto [0,H]$ as 
\begin{align}\label{def:episodic_occupancy}
    d_\pi(s) \coloneqq \frac{1}{H}\sum_{h=1}^H \prob_h(s_h = s; \pi), \quad \text{and} \quad d^\pi(s,a) \coloneqq  \frac{1}{H} \sum_{h=1}^H \prob_h(s_h = s, a_h = a; \pi),
\end{align}
where we overload notation and write $\prob_h(s_h = s; \pi)$ to denote the probability of visiting state $s_h = s$ (and similarly $s_h = s, a_h = a$) at level $h$ after executing policy $\pi$ and starting from $s_1 \sim \rho(\cdot)$.

\paragraph{Batch dataset.} The batch dataset $\cD$ consists of tuples $(s, a, r, s')$, where $r = r(s,a)$ and $s' \sim P(\cdot\mid s,a)$. As in the discounted case, we assume that $(s,a)$ pairs are generated i.i.d.~according to a data distribution $\mu$, unknown to the agent. We denote by $N(s,a) \geq 0$ the number of times a pair $(s,a)$ is observed in $\cD$ and by $N = |\cD|$ the total number of samples.

\paragraph{The learning objective.} Fix a deterministic policy $\pi$. The expected sub-optimality of policy $\hat{\pi}$ computed based on dataset $\cD$ competing with policy $\pi$ is defined as
\begin{align}
    \E_\cD \left[J(\pi) - J(\hat{\pi})\right].
\end{align}

\paragraph{Assumption on dataset coverage.} Equipped with the definitions for occupancy densities in episodic MDPs, we define the concentrability coefficient in the episodic case analogously: given a deterministic policy $\pi$, $C^\pi$ is the smallest constant satisfying
\begin{align}
    \frac{d^\pi(s,a)}{\mu(s,a)} \leq C^\pi \qquad \forall s \in \cS, a \in \cA.
\end{align}

\paragraph{Matrix notation.} We adopt a matrix notation similar to the one described in Section \ref{sec:LCB_MDP}.

\paragraph{Bellman equations.} Given any value function $V: \cS_{h+1} \mapsto \R$, the Bellman value operator at each level $h \in \{1, \dots, H\}$
\begin{align}\label{def:bellman_value_operator_episodic}
    \cT_h V = r_h + P_h V.
\end{align}
We write $(\cT_h V)(s,a) = r_h(s,a) + (P_h V)(s,a)$ for $\cS \in \cS_h, a \in \cA$.

\subsection{Episodic value iteration with LCB}\label{app:episodic_VILCB}
Algorithm \ref{alg:episodic-VI-LCB} presents a pseudocode for value iteration with LCB in the episodic setting. As in the classic value iteration in episodic MDPs, this algorithm computes values and policy through a backward recursion starting at $h = H$ with the distinction of subtracting penalties when computing the Q-function. This algorithm can be viewed as an instance of Algorithm \ref{alg:episodic-VI-LCB} of \citet{jin2020pessimism}. 

\begin{algorithm}[t]
\caption{Episodic value iteration with LCB}
\begin{algorithmic}[1]
\State \textbf{Inputs:} Batch dataset $\cD$.
\State $\hat{V}_{H+1} \mapsfrom 0$.
\For{$h = H-1, \dots, 1$}
\For{$s \in \cS_h, a \in \cA$}
\If{$N(s,a) = 0$}
\State Set $r(s,a) = 0$.
\State Set the empirical transition vector $\hat{P}_{s,a}$ randomly.
\State Set the penalty $b(s,a) = H\sqrt{L}$.
\Else 
\State Set $r(s,a)$ according to dataset.
\State Compute the empirical transition vector $\hat{P}_{s,a}$ according to dataset.
\State Set the penalty $b(s,a) = H \sqrt{ L/N(s,a)}$, where $L = 2000 \log(2S|\cA|/\delta)$.
\EndIf
\State Compute $\hat{Q}_h(s,a) \mapsfrom r(s,a) - b(s,a) + \hat{P}_{s,a} \cdot \hat{V}_{h+1}$.
\State Compute $\hat{V}_h(s) \mapsfrom \max_a \hat{Q}_h(s,a)$ and $\hat{\pi}(s) \in \argmax_a \hat{Q}_h(s,a)$.
\EndFor
\EndFor
\State \textbf{Return:} $\hat{\pi}$.
\end{algorithmic}
\label{alg:episodic-VI-LCB}
\end{algorithm}

In the following theorem, we provide an upper bound on the expected sub-optimality of the policy returned by Algorithm \ref{alg:episodic-VI-LCB}. The proof is presented in Appendix \ref{app:episodic_proof}.

\begin{theorem}[LCB sub-optimality, episodic MDP]\label{thm:MDP_episodic_upper} Consider an episodic MDP and assume that 
\begin{align*}
    \frac{d^\pi(s,a)}{\mu(s,a)} \leq C^\pi \qquad \forall s \in \cS, a \in \cA
\end{align*}
holds for an arbitrary deterministic policy $\pi$. Set $\delta = 1/N$ in Algorithm \ref{alg:episodic-VI-LCB}. Then, for all $C^\pi \geq 1$, one has
\begin{align*}
    \E_\cD [J(\pi) - J(\hat \pi)] \lesssim \min \left\{H, \widetilde{O}\left(H^2\sqrt{\frac{SC^\pi}{N}}\right) \right\}.
\end{align*}
In addition, if $1 \leq C^\pi \leq 1+L/(200N)$, then we have a tighter performance guarantee
\begin{align*}
    \E_\cD [J(\pi) - J(\hat \pi)] \lesssim \min \left\{H, \widetilde{O}\left(H^2 \frac{S}{N}\right) \right\}.
\end{align*}
\end{theorem}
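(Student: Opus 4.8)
The plan is to mirror the proof of Theorem~\ref{thm:MDP_upperbound_hoffding} for the discounted case, with the episodic level structure playing the role of the data-splitting device. As in the proofs for contextual bandits and discounted MDPs, I would bound $\E_\cD\big[\sum_s\rho(s)(V_1^\pi(s)-V_1^{\hat\pi}(s))\big]$ by splitting it into three pieces according to the backward recursion: a missing-mass term $T_1$ collecting states $s$ for which the expert action $\pi(s)$ is never observed (i.e.\ $N(s,\pi(s))=0$); a main term $T_2$ on a clean event $\cE$; and a residual $T_3$ on $\cE^c$, where
\[
\cE\coloneqq\Big\{\forall h,\ s\in\cS_h,\ a:\ \big|(P_{s,a}-\hat P_{s,a})\cdot\hat V_{h+1}\big|\le b(s,a)\Big\}.
\]

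The first observation, which removes the need for data splitting, is that the disjoint-level assumption on $\{\cS_h\}_{h=1}^H$ makes $\hat V_{h+1}$ (a function only of samples whose state lies in $\cS_{h+1}\cup\cdots\cup\cS_H$) independent of the empirical transition vector $\hat P_{s,a}$ for $s\in\cS_h$ (built only from samples at $s$). A Hoeffding bound conditional on the counts, followed by a union bound over the $S|\cA|$ pairs, then yields $\prob(\cE)\ge 1-\delta$ with $b(s,a)=H\sqrt{L/(N(s,a)\vee1)}$ — the episodic analogue of Lemma~\ref{lemma:penalty_LCB}, obtained directly on the full dataset and avoiding the extra $\sqrt S$ that a union bound over all value functions would cost. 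On $\cE$ I would establish the episodic forms of Proposition~\ref{prop:monotonicity_data_splitting} and Lemma~\ref{lemma:value_difference}: backward induction gives pessimism $\hat V_h\le V_h^{\hat\pi}$ (so $J(\pi)-J(\hat\pi)\le\E_\rho[V_1^\pi-\hat V_1]$), and no monotonic-update step is needed since exact backward recursion already supplies it. Together with the one-step inequality $Q_h^\pi-\hat Q_h\le P_h(Q_{h+1}^\pi-\hat Q_{h+1})+2b$, which uses $\hat V_{h+1}(s')\ge\hat Q_{h+1}(s',\pi(s'))$ and telescopes over $h=1,\dots,H$, this gives under $\cE$
\[
J(\pi)-J(\hat\pi)\ \le\ 2\sum_{h=1}^H\E_{\nu_h^\pi}\big[b(s,\pi(s))\big],\qquad \nu_h^\pi(s,a)\coloneqq\prob_h(s_h=s,a_h=a;\pi).
\]

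To bound $T_2$ I would plug in the penalty, apply Lemma~\ref{lemma:binomial_inverse_moment_bound} to control $\E_\cD[\sqrt{1/(N(s,\pi(s))\vee1)}]\lesssim 1/\sqrt{N\mu(s,\pi(s))}$, use $\sum_{h=1}^H\nu_h^\pi(s,\pi(s))=H\,d^\pi(s,\pi(s))$ and the coverage bound $\mu(s,\pi(s))\ge d^\pi(s,\pi(s))/C^\pi$, and finish with Cauchy--Schwarz over the $S$ states together with $\sum_s d^\pi(s,\pi(s))=1$; this delivers $T_2\lesssim H^2\sqrt{SC^\pi L/N}$. The missing-mass term is handled as in Section~\ref{app:T_1_MDP_Proof}: bounding the per-state loss by $H$, using $\max_x x(1-x)^N\le 4/(9N)$ and $d^\pi(s,\pi(s))\le C^\pi\mu(s,\pi(s))$ gives the lower-order $T_1\lesssim H^2 SC^\pi/N$, while $T_3\le H\delta$ follows from $\prob(\cE^c)\le\delta$. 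Taking $\delta=1/N$ and combining, capped by the trivial bound $H$, yields the first claim $\widetilde O(H^2\sqrt{SC^\pi/N})$.

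For the sharper rate when $1\le C^\pi\le 1+L/(200N)$, I would invoke the episodic reduction-to-imitation inequality $J(\pi)-J(\hat\pi)\le H^2\,\E_{s\sim d_\pi}[\ind\{\hat\pi(s)\ne\pi(s)\}]$ (the analogue of Lemma~\ref{lemma:reduction_imitation}) and then run the $\beta_1,\beta_2$ argument used for $T_2'$ in the discounted proof: since $C^\pi-1=O(L/N)$ forces $\sum_{a\ne\pi(s)}\mu(s,a)\le L/(200N)$ while $\mu(s,\pi(s))\gtrsim L/N$ on the important states, a Chernoff bound gives $N(s,a)\lesssim L$ for all $a\ne\pi(s)$ and $N(s,\pi(s))\gtrsim L$ with high probability, whence the penalty gap $b(s,a)-b(s,\pi(s))\gtrsim H$ dominates the bounded value and reward gaps and LCB selects $\pi(s)$ except with probability $N^{-\Omega(1)}$, producing the $\widetilde O(H^2 S/N)$ bound. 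The step requiring genuine care, and where I expect to spend the most effort, is the independence argument underpinning $\cE$: one must verify that the backward recursion never couples $\hat V_{h+1}$ with the samples defining $\hat P_{s,a}$ at level $h$, as this is exactly what keeps the dependence on $S$ linear without data splitting. Note that the deeper difficulty of sharpening $\sqrt{C^\pi}$ to $\sqrt{C^\pi-1}$ in the intermediate regime does not arise for the looser bound proved here; it is treated separately via the value-gap decomposition in the example of Section~\ref{app:episodic_example}.
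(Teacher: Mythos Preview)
Your proposal is correct and follows essentially the same route as the paper: the clean event via level-wise independence (Lemma~\ref{lemma:clean_event_episodic_MDP}), the value-difference bound $J(\pi)-J(\hat\pi)\le 2H\,\E_{d^\pi}[b(s,a)]$ (Lemma~\ref{lemma:value_difference_episodic}, which the paper quotes from \citet{jin2020pessimism} rather than re-deriving), the inverse-moment plus Cauchy--Schwarz step, and the imitation-learning reduction for small $C^\pi$ all match. The only cosmetic difference is that for the general $C^\pi\ge 1$ case the paper uses a two-term split (clean event vs.\ complement) and lets the inverse-moment Lemma~\ref{lemma:binomial_inverse_moment_bound} absorb the $N(s,\pi(s))=0$ contribution directly, so your separate missing-mass term $T_1$ is redundant there (though harmless).
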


We make the following conjecture that the sub-optimality rate smoothly transitions from $1/N$ to $1/\sqrt{N}$ as $C^\pi$ increases from 1 to 2.
\begin{conjecture}\label{cnj:MDP_episodic} Assume as in Theorem \ref{thm:MDP_episodic_upper}. If $1 \leq C^\pi \leq 2$, then policy $\hat \pi$ returned by Algorithm \ref{alg:episodic-VI-LCB} obeys
\begin{align*}
    \E_\cD[J(\pi) - J(\hat \pi)] \lesssim \min \left\{H, \widetilde{O}\left(H^2 \sqrt{\frac{S(C^\pi-1)}{N}}\right) \right\}.
\end{align*}
\end{conjecture}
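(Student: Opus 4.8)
The plan is to import the four-level sub-optimality decomposition developed for contextual bandits (Section~\ref{subsec:CB_upper_anlaysis}) into the episodic setting, driving everything through the value-difference machinery that underlies Theorem~\ref{thm:MDP_episodic_upper} (the episodic analogue of Lemma~\ref{lemma:value_difference}), and then to graft on a fifth, value-gap-based splitting that is absent in the bandit case. First I would handle the three ``easy'' contributions essentially verbatim: the missing-mass term over states with $N(s,\pi(s))=0$ (bounded by $\lesssim H^2 SC^\pi/N$ as in the bound on $T_1$), the complement-of-clean-event term (bounded by $H\delta$), and the light-occupancy term collecting states with $d_\pi(s)\lesssim C^\pi L/N$. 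For the heavy states I would split on whether $\mu(s,\pi(s))<10\,\bar{\mu}(s)$ or $\mu(s,\pi(s))\ge 10\,\bar{\mu}(s)$. The first case is controlled by transplanting the contextual-bandit estimate, since the mass bound $\sum_{s}\rho(s)\lesssim C^\pi-1$ from~\eqref{eq:prob-upper-bound-CB} together with a Cauchy--Schwarz step already delivers the target $\widetilde{O}(H^2\sqrt{S(C^\pi-1)/N})$; all the new difficulty is concentrated in the well-covered heavy states.

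For those well-covered states I would introduce the value gap $\Delta(s):=Q^\star_h(s,\pi^\star(s))-\max_{a\ne\pi^\star(s)}Q^\star_h(s,a)\ge 0$ (taking $\pi=\pi^\star$ for clarity; the general competing policy is analogous) and partition them by thresholding $\Delta(s)$ at $\tau\asymp H\sqrt{(C^\pi-1)L/N}$. On the small-gap piece $\{\Delta(s)<\tau\}$ I would bound the per-state sub-optimality directly by the gap: even if $\hat\pi(s)\ne\pi^\star(s)$, the value loss routed through the value-difference lemma is at most $\Delta(s)<\tau$, so the aggregate over these states is at most of order $\tau$, which is dominated by $\sqrt{S(C^\pi-1)/N}$. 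On the large-gap piece $\{\Delta(s)\ge\tau\}$ the goal is to show that Algorithm~\ref{alg:episodic-VI-LCB} selects $\pi^\star(s)$ with probability $1-O(N^{-10})$, so this piece is negligible; here one uses that $\mu(s,\pi^\star(s))$ is large (forcing $N(s,\pi^\star(s))\gtrsim N\mu(s,\pi^\star(s))$ while the competing counts are tiny), so that on the clean event the penalty ordering $b(s,\pi^\star(s))\ll b(s,a)$ plus a gap exceeding $\tau$ tips $\hat Q_h(s,\pi^\star(s))>\hat Q_h(s,a)$ for every $a\ne\pi^\star(s)$.

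The hard part, and the reason the contextual-bandit argument does not transfer, is error propagation through the backward recursion. The score $\hat Q_h(s,a)=r(s,a)-b(s,a)+\hat P_{s,a}\cdot\hat V_{h+1}$ carries the pessimistic downstream estimate $\hat V_{h+1}$, whose bias $V^\star_{h+1}-\hat V_{h+1}$ is an \emph{aggregate} of downstream penalties and is itself of order $\sqrt{C^\pi SL/N}$ up to horizon factors. Crucially this bias does \emph{not} cancel between the optimal and a sub-optimal action: the comparison $\hat Q_h(s,\pi^\star(s))-\hat Q_h(s,a)$ contains the term $(P_{s,\pi^\star(s)}-P_{s,a})\cdot(V^\star_{h+1}-\hat V_{h+1})$, which can erode the true gap by a $\sqrt{C^\pi}$-scale amount and is exactly what a naive analysis turns back into a $\sqrt{C^\pi}$ (rather than $\sqrt{C^\pi-1}$) dependence. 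The plan is therefore to prove a \emph{pointwise} control of this transition-weighted pessimism bias showing that it is itself $O(\tau)$, so that it only corrupts the selection on states whose gap is already below $\tau$ (and hence already accounted for by the small-gap bound). Establishing such a pointwise bound --- as opposed to the aggregate value-difference bound that suffices for Theorem~\ref{thm:MDP_episodic_upper} --- while exploiting the data-splitting independence of $\hat P_{s,a}$ and $\hat V_{h+1}$, and then calibrating $\tau$ jointly so that the small-gap and large-gap estimates meet at $\widetilde{O}(H^2\sqrt{S(C^\pi-1)/N})$, is the crux; it is precisely this step that currently forces one to retreat to the explicit $H=3$ example rather than a general argument.
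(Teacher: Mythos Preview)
This statement is a \emph{conjecture}; the paper does not prove it. What the paper offers (Appendix~\ref{app:episodic_C_moderate}--\ref{app:episodic_example}) is exactly the partial program you outline: split on the clean event, then on whether $\mu(s,\pi(s))\le 9\bar\mu(s)$, bound the poorly-covered piece $\beta_1$ via the value-difference lemma plus Cauchy--Schwarz and the mass bound $\sum_s\bar\mu(s)\le (C^\pi-1)/C^\pi$ to land on $H^2\sqrt{S(C^\pi-1)L/N}$, and leave the well-covered piece $\beta_2$ as the open part, verified only on the $H=3$ toy. Your five-level decomposition and value-gap thresholding are the same ideas; you have correctly located the obstruction in the cross term $(P_{s,\pi^\star(s)}-P_{s,a})\cdot(V^\star_{h+1}-\hat V_{h+1})$, which is precisely the ``technical hurdle'' the paper flags in Section~\ref{sec:conjecture}.

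Two points where your sketch diverges from what the paper actually does in the $H=3$ example, and where your general argument would break as written. First, the threshold the paper uses is on the \emph{occupancy-weighted} gap $\rho(s)g_1(s)\gtrsim\sqrt{(C^\star-1)L/N}$, not on $\Delta(s)$ alone; this is what makes the small-gap contribution sum to the target without an extra $S$ or $H$ factor. Second, and more substantively, your small-gap step asserts that the per-state loss is at most $\Delta(s)$, but in an MDP the loss at $s$ is $Q^\star_h(s,\pi^\star(s))-Q^{\hat\pi}_h(s,\hat\pi(s))$, not $Q^\star_h(s,\pi^\star(s))-Q^\star_h(s,\hat\pi(s))$. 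The paper closes this gap in the example by first proving, level by level from $h=H$ backward, that on the well-covered states $\hat\pi$ is optimal at all deeper levels with high probability (the event $\cE_o$), so that $V^{\hat\pi}_{h+1}=V^\star_{h+1}$ and the loss at level $h$ genuinely collapses to $g_h(s)\cdot\ind\{\hat\pi(s)\neq\pi^\star(s)\}$. Your plan needs this inductive ``downstream optimality'' step explicitly; without it, the small-gap bound is not justified, and with it, the large-gap step must control the propagated bias at \emph{every} level, which is exactly why a general proof remains open.
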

We present our attempt in proving the above conjecture in part in Appendix \ref{app:episodic_C_moderate} followed by an example in Appendix \ref{app:episodic_example}.

\subsection{Properties of Algorithm \ref{alg:episodic-VI-LCB}}\label{app:properties_VI-LCB}
In this section, we prove two properties of Algorithm \ref{alg:episodic-VI-LCB}. We first prove that the penalty captures the Q-function lower confidence bound. Then, we prove a value difference lemma.

\paragraph{Clean event in episodic MDPs.} Define the following clean event
\begin{align}\label{def:clean_event_EMDP}
    \cE_{\text{EMDP}} \coloneqq \left\{ \forall h, \forall s \in \cS_h, \forall a: \; \; \middle| r(s,a) + P_{s,a} \cdot \hat{V}_{h+1} - \hat{r}(s,a) -  \hat{P}_{s,a} \cdot \hat{V}_{h+1} \middle| \leq b_h(s,a)  \right\},
\end{align}
where $\hat{V}_{H+1} = 0$. In the following lemma, we show that the penalty used in Algorithm \ref{alg:episodic-VI-LCB} captures the confidence interval of the empirical expectation of the Q-function.
\begin{lemma}[Clean event probability, episodic MDP]\label{lemma:clean_event_episodic_MDP} One has $\prob(\cE_{\text{EMDP}}) \geq 1 - \delta.$
\end{lemma}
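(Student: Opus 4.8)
The proof parallels that of Lemma~\ref{lemma:penalty_LCB} and Lemma~\ref{lemma:hoeffding-MAB}, with one structural difference that I would highlight. The plan is to fix a single triple $(h,s,a)$ with $s \in \cS_h$, control the deviation $|r(s,a) + P_{s,a}\cdot \hat{V}_{h+1} - \hat{r}(s,a) - \hat{P}_{s,a}\cdot \hat{V}_{h+1}|$ by $b_h(s,a)$ with high probability, and then take a union bound over all $S|\cA|$ triples, using that the levels $\{\cS_h\}_{h=1}^H$ partition $\cS$. The base case is immediate: at $h=H$ the vector $\hat{V}_{H+1}=0$ is deterministic.

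First, when $N(s,a) = 0$ the bound holds deterministically. Since $\hat{V}_{h+1} \in [0, H]$, $r(s,a) \in [0,1]$, $\hat{r}(s,a) = 0$, and both inner products lie in $[0,H]$, the deviation is at most $1 + H$, which is dominated by the penalty $b_h(s,a) = H\sqrt{L}$ because $\sqrt{L} = \sqrt{2000\log(2S|\cA|/\delta)} \geq 2$.

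The positive-count case $N(s,a)\geq 1$ is where I would exploit the \emph{disjointness of the levels}, which here plays the role that data splitting played in the discounted case (Algorithm~\ref{alg:OVI-LCB-DS}). The key observation is that $\hat{V}_{h+1}$ is a deterministic function only of samples whose source state lies in $\bigcup_{h'\geq h+1}\cS_{h'}$, whereas $\hat{P}_{s,a}$ and $\hat{r}(s,a)$ depend only on samples with source pair $(s,a)$, whose source state is in $\cS_h$. Conditioning on the source-state/action assignment of every sample (which fixes all counts, in particular $N(s,a)=n$), the reward/next-state pairs become independent across samples, so the collection defining $\hat{V}_{h+1}$ is independent of the $n$ pairs $\{(r_i,s_i'): (s_i,a_i)=(s,a)\}$. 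Thus, conditionally on $\hat{V}_{h+1}=v\in[0,H]^{|\cS_{h+1}|}$ and $N(s,a)=n$, the quantity $\hat{r}(s,a) + \hat{P}_{s,a}\cdot v$ is the empirical average of $n$ i.i.d.\ variables $r_i + v(s_i')$ taking values in $[0, 1+H]$ with mean $r(s,a) + P_{s,a}\cdot v$. Hoeffding's inequality then yields, with $\epsilon = b_h(s,a)=H\sqrt{L/n}$,
\[
\prob\!\left(\left|\hat{r}(s,a) + \hat{P}_{s,a}\cdot v - r(s,a) - P_{s,a}\cdot v\right| \geq b_h(s,a) \,\middle|\, N(s,a)=n,\, \hat{V}_{h+1}=v\right) \leq 2\exp\!\left(-\frac{2 H^2 L}{(1+H)^2}\right) \leq 2\exp\!\left(-\tfrac{L}{2}\right),
\]
where the last step uses $(1+H)^2 \leq 4H^2$. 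Since this bound is uniform over $v$ and $n$, it survives removal of the conditioning, so the single-triple failure probability is at most $2\exp(-L/2)$.

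Finally I would union bound over the $S|\cA|$ triples $(h,s,a)$, giving total failure probability at most $2S|\cA|\exp(-L/2)$; substituting $L = 2000\log(2S|\cA|/\delta)$ makes this $\leq \delta M^{-999}\leq\delta$ with $M=2S|\cA|/\delta$, which establishes $\prob(\cE_{\text{EMDP}})\geq 1-\delta$. The only genuinely new ingredient relative to the discounted proof is the independence argument in the third step, and the main point to get right is that the level partition grants the needed independence between $\hat{V}_{h+1}$ and $\hat{P}_{s,a}$ \emph{for free}, which is precisely what avoids the extra $\sqrt{S}$ factor that a naive union bound over all value functions would incur; everything else is a direct Hoeffding-plus-union-bound computation.
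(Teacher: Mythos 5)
Your proof is correct and follows essentially the same route as the paper's: handle the $N(s,a)=0$ case deterministically, exploit the independence of $\hat{V}_{h+1}$ and $(\hat{r}(s,a),\hat{P}_{s,a})$ granted by the disjoint level structure to apply Hoeffding's inequality conditionally on $N(s,a)$, and union bound over the $S|\cA|$ admissible triples. You spell out the conditioning argument and the constant bookkeeping more explicitly than the paper does, but the substance is identical.
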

\begin{proof}
The proof is analogous to the proof of Lemma \ref{lemma:penalty_LCB}. Fix a tuple $(s,a,h)$. If $N(s,a) = 0$, it is immediate that 
\begin{align*}
    | r(s,a) + P_{s,a} \cdot \hat{V}_{h+1} - \hat{r}(s,a) -  \hat{P}_{s,a} \cdot \hat{V}_{h+1} | \leq H\sqrt{L}.
\end{align*}
When $N(s,a) \geq 1$, we exploit the independence of $\hat{V}_{h+1}$ and $\hat{P}_{s,a}$ (thanks to the disjoint state space at each step $h$) and conclude by Hoeffding's inequality that for any $\delta_1 \in (0,1)$
\begin{align*}
    \prob \left(| r(s,a) + P_{s,a} \cdot \hat{V}_{h+1} - \hat{r}(s,a) +  P_{s,a} \cdot \hat{V}_{h+1} | \geq H \sqrt{\frac{2\log(2/\delta_1)}{N(s,a)}}\right) \leq \delta_1.
\end{align*}
The claim follows by taking a union bound over $s \in \cS_h, a \in \cA, h \in [H]$ and setting $\delta_1 = \delta/(S|\cA|)$.
\end{proof}

\paragraph{Value difference lemma.}
The following lemma bounds the sub-optimality of Algorithm \ref{alg:episodic-VI-LCB} by expected bonus. This result is similar to Theorem 4.2 in~\citet{jin2020pessimism}. We present the proof for completeness.
\begin{lemma}[Value difference for Algorithm \ref{alg:episodic-VI-LCB}]\label{lemma:value_difference_episodic} Let $\pi$ be an arbitrary policy. On the event $\cE_{\text{EMDP}}$, the policy $\hat{\pi}$ returned by Algorithm \ref{alg:episodic-VI-LCB} satisfies
\begin{align*}
    J(\pi) - J(\hat{\pi}) \leq 2H \E_{d^\pi}\left[b(s,a)\right].
\end{align*}
\end{lemma}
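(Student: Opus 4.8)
The plan is to establish the value difference lemma (Lemma~\ref{lemma:value_difference_episodic}) for the episodic setting by combining the pessimism property of LCB (which shows $\hat{V}_h \leq V^{\hat\pi}_h$) with a backward recursion that telescopes the per-step penalty contributions along the occupancy measure of the competing policy $\pi$. The key structural facts I would mobilize are the clean event $\cE_{\text{EMDP}}$ from~\eqref{def:clean_event_EMDP} (which happens with probability $\geq 1-\delta$ by Lemma~\ref{lemma:clean_event_episodic_MDP}) and the Bellman operator $\cT_h V = r_h + P_h V$ from~\eqref{def:bellman_value_operator_episodic}. This closely mirrors Theorem~4.2 of~\citet{jin2020pessimism}, so the argument is essentially a clean re-derivation.

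First I would prove the pessimism claim that $\hat{V}_h(s) \leq V^{\hat\pi}_h(s)$ for every $h$ and $s$, by backward induction on $h$ from $h = H+1$ (where both sides vanish). For the inductive step, using the definition $\hat{Q}_h(s,a) = r(s,a) - b(s,a) + \hat{P}_{s,a}\cdot\hat{V}_{h+1}$ and $\hat{\pi}(s) = \argmax_a \hat{Q}_h(s,a)$, I would write
\[
\hat{V}_h(s) = \hat{Q}_h(s,\hat\pi(s)) = r(s,\hat\pi(s)) - b(s,\hat\pi(s)) + \hat{P}_{s,\hat\pi(s)}\cdot\hat{V}_{h+1}.
\]
On the clean event $\cE_{\text{EMDP}}$ the penalty dominates the estimation error, so $-b(s,a) + \hat P_{s,a}\cdot\hat V_{h+1} \leq P_{s,a}\cdot\hat V_{h+1}$, and the induction hypothesis $\hat V_{h+1}\leq V^{\hat\pi}_{h+1}$ upgrades this to $\hat V_h(s)\leq r(s,\hat\pi(s)) + P_{s,\hat\pi(s)}\cdot V^{\hat\pi}_{h+1} = V^{\hat\pi}_h(s)$. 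This yields $J(\hat\pi) \geq \E_\rho[\hat V_1]$, so it remains to upper bound $J(\pi) - \E_\rho[\hat V_1]$.

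Next I would control the gap between the true value of $\pi$ and the pessimistic estimate by a second backward recursion on the pointwise difference $V^\pi_h(s) - \hat V_h(s)$. Since $\hat\pi$ is greedy for $\hat Q_h$, one has $\hat V_h(s) \geq \hat Q_h(s,\pi(s))$, and therefore
\[
V^\pi_h(s) - \hat V_h(s) \leq Q^\pi_h(s,\pi(s)) - \hat Q_h(s,\pi(s)) = 2b(s,\pi(s)) + \big(r + P V^\pi_{h+1} - r + b - \hat P\cdot\hat V_{h+1} - b\big),
\]
where the clean-event bound turns the reward/transition error into at most $2b(s,\pi(s))$ plus the propagated term $P_{s,\pi(s)}\cdot(V^\pi_{h+1} - \hat V_{h+1})$. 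Telescoping this inequality down the trajectory of $\pi$ and averaging against $\rho$ collapses the propagated terms into an expectation over the occupancy measure $d^\pi$; each level contributes $2b$, and summing the $H$ levels (absorbing the $1/H$ normalization in the definition~\eqref{def:episodic_occupancy} of $d^\pi$ against the $H$ levels) produces the stated bound $J(\pi) - J(\hat\pi) \leq 2H\,\E_{d^\pi}[b(s,a)]$. The main obstacle — really just a bookkeeping subtlety rather than a deep difficulty — is matching the normalization constants: the factor $H$ arises because $d^\pi$ in~\eqref{def:episodic_occupancy} is normalized by $1/H$ while the telescoped sum accumulates one penalty per level over $H$ levels, and I would need to be careful that the greedy/pessimism inequalities are applied at the correct level indices so that the propagated differences align with the visitation probabilities $\prob_h(s_h=s,a_h=a;\pi)$.
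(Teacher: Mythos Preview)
Your proposal is correct and follows essentially the same approach as the paper: both arguments hinge on (i) pessimism $\hat V_h \le V^{\hat\pi}_h$ from the clean event, (ii) the greedy inequality $\hat V_h(s)\ge \hat Q_h(s,\pi(s))$, and (iii) telescoping the resulting per-step bound $2b(s,\pi(s))$ along the trajectory of $\pi$. The only cosmetic difference is that the paper packages steps (i)--(iii) via the self-consistency error $\iota_h = \cT_h\hat V_{h+1} - \hat Q_h$ and invokes \citet[Lemma~A.1 and Lemma~5.1]{jin2020pessimism}, whereas you re-derive the same inequalities directly by backward induction.
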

\begin{proof}
Define the following self-consistency error
\begin{align*}
    \iota_h(s,a) = \cT_h \hat{V}_{h+1} (s,a) - \hat{Q}_h(s,a),
\end{align*}
where $\cT_h$ is the Bellman value operator defined in \eqref{def:bellman_value_operator_episodic}. Let $\pi'$ be an arbitrary policy. By \citet[Lemma A.1]{jin2020pessimism}, one has 
\begin{align}\label{eq:sub-opt-difference}
    \begin{split}
        \hat{V}_1(s) - V^{\pi'}_1(s) & = \sum_{h=1}^H \E [\hat{Q}_h(s_h, \hat{\pi}(s_h)) - \hat{Q}_h(s_h,\pi'(s_h)) \mid s_1 = s] \\
        & \quad - \sum_{h=1}^H  \E [ \iota_h(s_h,\pi'(s_h))  \mid s_1 = s]
    \end{split}
\end{align}

Setting $\pi' \mapsfrom \pi$ in \eqref{eq:sub-opt-difference} gives
\begin{align}\notag 
    V^\pi_1(s) - \hat{V}_1(s) & = \sum_{h=1}^H  \E [ \iota_h(s_h,\pi(s_h))  \mid s_1 = s] - \sum_{h=1}^H \E [\hat{Q}_h(s_h, \hat{\pi}(s_h)) - \hat{Q}_h(s_h,\pi(s_h)) \mid s_1 = s]\\\label{eq:sub-opt-1}
    & \quad \leq \sum_{h=1}^H  \E [ \iota_h(s_h,\pi(s_h))  \mid s_1 = s],
\end{align}
where the last line uses the fact that $\hat{\pi}(s)$ maximizes $\hat{Q}_h(s,a)$.

We apply \eqref{eq:sub-opt-difference} once more, this time setting $\pi' \mapsfrom \hat{\pi}$:
\begin{align}\notag 
    \hat{V}_1(s) - V^{\hat{\pi}}_1(s) & = \sum_{h=1}^H \E [\hat{Q}_h(s_h, \hat{\pi}(s_h)) - \hat{Q}_h(s_h,\hat{\pi}(s_h)) \mid s_1 = s] - \sum_{h=1}^H  \E [ \iota_h(s_h,\hat{\pi}(s_h))  \mid s_1 = s]\\\label{eq:sub-opt-2}
    & \quad \leq - \sum_{h=1}^H  \E [ \iota_h(s_h,\pi'(s_h))  \mid s_1 = s].
\end{align}
Adding \eqref{eq:sub-opt-1} and \eqref{eq:sub-opt-2}, we have 
\begin{align}\notag 
    V^\pi_1(s) - V^{\hat{\pi}}_1(s) & = V^\pi_1(s) - \hat{V}_1(s) + \hat{V}_1(s) - V^{\hat{\pi}}_1(s)\\\label{eq:sub-opt-3}
    & \leq \sum_{h=1}^H  \E [ \iota_h(s_h,\pi(s_h))  \mid s_1 = s] - \sum_{h=1}^H  \E [ \iota_h(s_h,\pi'(s_h))  \mid s_1 = s].
\end{align}
By \citet[Lemma 5.1]{jin2020pessimism}, conditioned on $\cE_{\text{EMDP}}$, we have
\begin{align*}
    0 \leq \iota_h(s,a) \leq 2 b_h(s,a) \quad \forall s,a,h.
\end{align*}
The proof is completed by applying the above bound in \eqref{eq:sub-opt-3} and taking an expectation with respect to $\rho$
\begin{align*}
    \E_\rho[V^\pi_1(s) - V^{\hat{\pi}}_1(s)] & \leq 2 \sum_{h=1}^H \E[b_h(s_h,\pi(s_h))]\\
    & \quad = 2 \sum_{h=1}^H P_h(s_h;\pi) b_h(s_h,\pi(s_h)) = 2H \E_{d^\pi}[b(s,a)],
\end{align*}
where the last equation hinges on the definition of occupancy measure for episodic MDPs given in \eqref{def:episodic_occupancy}.
\end{proof}

\subsection{Proof of Theorem \ref{thm:MDP_episodic_upper}}\label{app:episodic_proof}
The proof follows a similar decomposition argument as in Theorem \ref{thm:MDP_upperbound_hoffding}. Nonetheless, we present a complete proof for the reader's convenience.

We divide the proof into two parts and separately analyze the general case $C^\pi \geq 1$ and $C^\star \leq 1+ L/(200N)$ since the techniques used in the proof of these two claims are rather distinct.

\paragraph{The general case when $C^\pi \geq 1$.} We decompose the expected sub-optimality into two terms
\begin{align}\label{eq:EMDP_general_C_decomposition}
    \begin{split}
        \E_\cD \left[ \sum_s \rho(s) [V^\pi_1(s) - V^{\hat{\pi}}_1(s)] \right] & = \E_\cD \left[ \sum_s \rho(s) [V^\pi_1(s) - V^{\hat{\pi}}_1(s)] \ind \{ \cE_{\text{EMDP}}\}\right] \eqqcolon T_1\\
        & \quad + \E_\cD \left[ \sum_s \rho(s) [V^\pi_1(s) - V^{\hat{\pi}}_1(s)] \ind \{ \cE^c_{\text{EMDP}}\}\right] \eqqcolon T_2.
    \end{split}
\end{align}
The first term $T_1$ captures the sub-optimality under the clean event $\cE_{\text{EMDP}}$ whereas $T_2$ represents the sub-optimality suffered when the constructed confidence interval via the penalty function falls short of containing the empirical Q-function estimate. We will prove in subsequent sections that $T_1$ and $T_2$ are bounded according to:
\begin{subequations}
\begin{align}
T_{1} & \leq 32 H^2 \sqrt{\frac{SC^\pi L}{N}}
\label{eq:hard-term-EMDP-crude}\\
T_{2} & \leq H\delta.\label{eq:event-EMDP}
\end{align}
\end{subequations}
Taking the above bounds as given for the moment and setting $\delta = 1/N$, we conclude that 
\begin{align*}
    \E_\cD [J(\pi) - J(\hat{\pi})] \lesssim \min\left(H, 32 H^2 \sqrt{\frac{SC^\pi L}{N}}\right).
\end{align*}

\paragraph{The case when $C^\pi \leq 1+L/(200N)$.} To obtain faster rates in this regime, we resort to directly analyzing the policy sub-optimality instead of bounding the value sub-optimality (such as by Lemma \ref{lemma:value_difference_episodic}). It is useful to connect the sub-optimality of a policy to whether it disagrees with the optimal policy at each state. The following lemma due to \citet[Theorem 2.1]{ross2010efficient} provides such a connection.

\begin{lemma}\label{lemma:performance_difference_episodic} For any deterministic policies $\pi, \hat{\pi}$, one has
\begin{align*}
    J(\pi) - J(\hat{\pi}) \leq H^2 \E_{s \sim d_\pi} [\ind\{\pi(s) \neq \hat{\pi}(s)\}].
\end{align*}
\end{lemma}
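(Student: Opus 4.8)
The plan is to invoke the performance difference identity specialized to the episodic setting and then combine it with the boundedness of the $Q$-function and the disagreement indicator, exactly mirroring the discounted argument behind Lemma~\ref{lemma:reduction_imitation}. First I would recall the performance difference lemma: for any two deterministic policies $\pi$ and $\hat{\pi}$,
\begin{align*}
    J(\pi) - J(\hat{\pi}) = \sum_{h=1}^H \mathbb{E}_{s \sim \prob_h(\cdot\,;\pi)}\left[Q^{\hat{\pi}}_h(s, \pi(s)) - Q^{\hat{\pi}}_h(s, \hat{\pi}(s))\right],
\end{align*}
where the expectation is over the state visited at level $h$ when rolling out $\pi$ starting from $s_1 \sim \rho$. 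This identity follows from a standard telescoping argument across the horizon, using the fact that $V^{\hat{\pi}}_h(s) = Q^{\hat{\pi}}_h(s, \hat{\pi}(s))$ for a deterministic policy, so that the advantage of $\pi(s)$ under $\hat{\pi}$ is precisely $Q^{\hat{\pi}}_h(s, \pi(s)) - Q^{\hat{\pi}}_h(s, \hat{\pi}(s))$. This is the content of \citet[Theorem 2.1]{ross2010efficient}.

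Next I would observe that whenever $\pi(s) = \hat{\pi}(s)$ the summand vanishes identically, so I may insert the indicator $\ind\{\pi(s) \neq \hat{\pi}(s)\}$ into every term without altering the value. Since the per-step rewards lie in $[0,1]$ and the remaining horizon is at most $H$, each $Q$-function is bounded in $[0,H]$, whence $Q^{\hat{\pi}}_h(s, \pi(s)) - Q^{\hat{\pi}}_h(s, \hat{\pi}(s)) \leq H$ pointwise. This produces
\begin{align*}
    J(\pi) - J(\hat{\pi}) \leq H \sum_{h=1}^H \mathbb{E}_{s \sim \prob_h(\cdot\,;\pi)}\left[\ind\{\pi(s) \neq \hat{\pi}(s)\}\right].
\end{align*}

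Finally I would convert the sum over levels into the normalized occupancy measure $d_\pi$. Because the level state spaces $\{\cS_h\}$ are disjoint, each state belongs to a unique level, and the definition in~\eqref{def:episodic_occupancy} gives $\sum_{h=1}^H \prob_h(s_h = s; \pi) = H\, d_\pi(s)$. Summing the previous display over $s$ therefore introduces a second factor of $H$, yielding $J(\pi) - J(\hat{\pi}) \leq H^2\, \mathbb{E}_{s \sim d_\pi}[\ind\{\pi(s) \neq \hat{\pi}(s)\}]$, as claimed. There is no genuine analytic difficulty here—the result is a direct consequence of the performance difference identity—so the only real care needed is bookkeeping the two independent sources of the factor $H$: one from the pointwise $Q$-function bound, and one from the $1/H$ normalization built into $d_\pi$, ensuring the final exponent is $H^2$ rather than $H$.
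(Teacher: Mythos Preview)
Your proposal is correct and matches the paper's intent exactly: the paper does not reproduce a proof but simply attributes the lemma to \citet[Theorem~2.1]{ross2010efficient}, and what you have written is precisely the standard derivation behind that result---performance difference identity, insertion of the disagreement indicator, the pointwise $Q$-bound of $H$, and the $1/H$ normalization in $d_\pi$ supplying the second factor of $H$.
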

We apply Lemma \ref{lemma:performance_difference_episodic} to bound the sub-optimality and further decompose it based on whether any samples are observed on each state $s$.
\begin{align*}
    & \E_\cD [ \rho(s) [V^\pi_1(s) - V^{\hat \pi}_1(s)]]\\
    & \quad \leq H^2 \E_\cD \E_{d_\pi} [\ind\{\pi(s) \neq \hat{\pi}(s)\}] \\
    & \quad = H^2 \E_\cD \E_{d_\pi} [\ind\{\pi(s) \neq \hat{\pi}(s)\} \ind \{ N(s,\pi(s)) = 0\}] \eqqcolon T_1'\\
    & \quad \quad + H^2 \E_\cD \E_{d_\pi} [\ind\{\pi(s) \neq \hat{\pi}(s)\} \ind \{ N(s,\pi(s)) \geq 1\}] \eqqcolon T_2'.
\end{align*}
In a similar manner to the proof of Theorem \ref{thm:MDP_upperbound_hoffding}, we prove the following bounds on $T_1'$ and $T_2'$: 
\begin{subequations}
\begin{alignat}{2}
    T_{1}' & \leq H^2  \frac{4C^\pi}{N};\label{eq:missing-term-EMDP-small-C}\\
    T_{2}' & \lesssim \frac{2SC^\pi H^2 L}{N} +H^2 \frac{|\cA|}{N^9}. \label{eq:hard-term-EMDP-small-C}
\end{alignat}
\end{subequations}
\subsubsection{Proof the bound \eqref{eq:hard-term-EMDP-crude} on $T_1$}
By the value difference Lemma \ref{lemma:value_difference_episodic}, one has
\begin{align*}
    \E_\cD[\sum_s \rho(s) [V^\pi(s) - V^{\hat \pi}(s)] \ind\{\cE_{\text{EMDP}} \}]  & \leq 2 H \sum_{s,a} d^\pi(s,a) \E_\cD[b(s,a)]\\
    & \leq 2 H \sum_{s,a} d^\pi(s,a) H \E_\cD \left[ \sqrt{\frac{L}{N(s,a) \vee 1}} \right]\\
    & \leq 32 H^2 \sum_{s,a} d^\pi(s,a) \left[ \sqrt{\frac{L}{N \mu(s,a)}} \right],
\end{align*}
where the last inequality uses the bound on inverse moments of binomial random variables given in \ref{lemma:binomial_inverse_moment_bound} with $c_{1/2} \leq 16$. We then apply the concentrability assumption and the Cauchy-Schwarz inequality to conclude that
\begin{align*}
    T_1 & \leq 32 H^2 \sum_{s,a} \sqrt{d^\pi(s,a)} \sqrt{HC^\pi \mu(s,a)} \left[ \sqrt{\frac{L}{N \mu(s,a)}} \right]\\
    & \leq 32H^2 \sqrt{\frac{C^\pi L H}{N}} \sum_{s} \sqrt{d^\pi(s,\pi(s))}
    \leq 32 H^2 \sqrt{\frac{SC^\pi L}{N}}.
\end{align*}

\subsubsection{Proof of the bound \eqref{eq:event-EMDP} on $T_2$}
We use a argument similar to that in the proof of \eqref{eq:event-MDP}. First, observe that $\sum_s \rho(s) [V_1^\pi(s) - V^{\hat{\pi}}(s)] \leq H$. Consequently, in light of Lemma \ref{lemma:clean_event_episodic_MDP} one can conclude 
\begin{align*}
    T_3 \leq H \E_\cD [\ind \{ \cE_{\text{EMDP}}^c\}] = H \prob(\cE_{\text{EMDP}}^c) \leq H \delta.
\end{align*}

\subsubsection{Proof of the bound \eqref{eq:missing-term-EMDP-small-C} on $T_1'$}
We have 
\begin{align*}
    T_1' \leq H^2 \E_{d_\pi} \E_{\cD}[\ind \{ N(s,\pi(s)) = 0\}] \leq H^2 \E_{d_\pi} \prob(N(s,\pi(s)) = 0).
\end{align*}
It follows from the concentrability assumption $d^\pi(s,\pi(s))/\mu(s, \pi(s)) \leq C^\pi$ that
\begin{align*}
    T_1 \leq H^2 \sum_s C^\pi \mu(s,\pi(s)) \prob(N(s,\pi(s)) = 0)
    & = H^2 C^\pi \sum_s \mu(s,\pi(s)) (1-\mu(s,\pi(s))^N.
\end{align*}
Note that $\max_{x \in [0,1]} x (1-x)^N \leq 4/(9N)$. We thus conclude that
\begin{align*}
    T_1 \leq H^2 C^\pi \sum_s \mu(s,\pi(s)) (1-\mu(s,\pi(s))^N \leq H^2  \frac{4C^\pi}{9N}.
\end{align*}

\subsubsection{Proof of the bound \eqref{eq:hard-term-EMDP-small-C} on $T_2'$}
We prove the bound on $T_2’$ by partitioning the states based on how much they are occupied under the target policy. Define the following set:
\begin{align}\label{eq:defn-S-1-EMDP}
    \mathcal{O}_{1} & \coloneqq\left\{ s\mid  d_\pi(s)<\frac{2 C^{\pi}L}{N}\right\}. 
\end{align}
We can then decompose $T_2'$ according to whether state $s$ belongs to $\cO_1$:
\begin{align*}
    T_2' & = H^2 \sum_{s \in \cO_1 } d_\pi(s) \E_\cD[\ind \{ \hat{\pi}(s) \neq \pi(s) \}\ind \{N(s,\pi(s)) \geq 1 \}]\eqqcolon T_{2,1}\\
    & \quad+ H^2 \sum_{s \not \in\cO_1}d_\pi(s)\E_\cD[\ind \{ \hat\pi(s) \neq \pi(s) \}\ind \{N(s,\pi(s)) \geq 1 \}] \eqqcolon T_{2,2}.
\end{align*}
Here, $T_{2,1}$ captures the sub-optimality due to the less important states under the target policy. We will shortly prove the following bounds on these two terms:
\begin{align*}
    T_{2,1}  \leq \frac{2SC^\pi H^2 L}{N} \qquad \text{and} \qquad
    T_{2,2}  \lesssim H^2 \frac{|\cA|}{N^9}.
\end{align*}

\paragraph{Proof of the bound on $T_{2,1}$.} Since $\E_\cD[\ind \{ \hat{\pi}(s) \neq \pi(s) \}\ind \{N(s,\pi(s)) \geq 1 \}] \leq 1$, it follows immediately that
\begin{align*}
    T_{2,1} \leq H^2 \sum_{s \in \cS_1} d_\pi(s) \leq \frac{2SC^\pi H^2 L}{N},
\end{align*}
where the last inequality relies on the definition of $\cO_1$ provided in \eqref{eq:defn-S-1-EMDP}.

\paragraph{Proof of the bound on $T_{2,2}$.} The term $T_{2,2}$ is equal to 
\[
T_{2,2}=H^2\sum_{s \not \in\mathcal{O}_{1}}d_\pi(s)\mathbb{P}\left(\hat\pi(s)\neq\pi(s),\; N(s,\pi(s))\geq1\right).
\]
We subsequently show that the probability $\mathbb{P}\left(\hat\pi(s)\neq\pi(s),\; N(s,\pi(s))\geq1\right)$ is small. Fix a state $s \not \in \cO_1$ and let $h$ be the level to which $s$ belongs. The concentrability assumption along with the constraint on $d_\pi(s)$ implies the following lower bound on $\mu(s,\pi(s))$:
\begin{align}\label{eq:EMDP-prob-lower}
    \mu(s, \pi(s)) \geq \frac{1}{C^\pi} d_\pi(s) \geq \frac{1}{C^\pi} \frac{2 C^\pi L}{N} = \frac{2 L}{N}.
\end{align}
On the other hand, by the concentrability assumption and using $C^\pi \leq 1+ \frac{L}{200N}$, the following upper bound holds for $\mu(s, a \neq \pi(s))$:
\begin{align}\label{eq:EMDP-prob-upper}
    \mu(s,a) \leq \sum_{a \neq \pi(s)}\mu(s,a) \leq 1-\frac{1}{C^\pi} \leq \frac{L}{200N},
\end{align}
The above bounds suggest that the target action is likely to be included in the dataset more frequently than the rest of the actions for $s \not \in \cO_1$. We will see shortly that in this scenario, the LCB algorithm picks the target action with high probability. 
The bounds~\eqref{eq:EMDP-prob-lower} and~\eqref{eq:EMDP-prob-upper} together with Chernoff's bound give
\begin{align*}
    \prob \left(  N(s,a \neq \pi(s)) \leq \frac{5L}{200} \right) & \geq 1 -  \exp \left(-\frac{L}{200} \right); \\
    \prob \left(  N(s,\pi(s)) \geq L \right) & \geq 1 -  \exp \left(-\frac{L}{4} \right).
\end{align*}
We can thereby write an upper bound $\hat{Q}_h(s,a \neq \pi(s))$ and a lower bound on $\hat{Q}_h(s,\pi(s))$. In particular, when $N(s,a) \leq \frac{5L}{200}$, one has
\begin{align*}
    \hat{Q}_{h}(s,a) & = r_h(s,a) - b_h(s,a) + \hat{P}_{s,a} \cdot \hat{V}_{h+1}\\ 
    & = r_h(s,a) -  H \sqrt{\frac{L}{N(s,a)\vee 1}} + \hat{P}_{s,a} \cdot \hat{V}_{h+1}\\
    & \leq 1 -  H\sqrt{\frac{L}{5L/200}} +H \leq - 4 H,
\end{align*}
where we used the fact that $L \geq 70$. When $N(s,\pi(s)) \geq L$, one has
\begin{align*}
    \hat{Q}_h(s,\pi(s)) & = r_h(s,\pi(s)) -  H\sqrt{\frac{L}{N(s,\pi(s))}} + \hat{P}_{s,\pi(s)} \cdot V_{h+1} \geq  - H.
\end{align*}
Note that if both $N(s,a \neq \pi(s)) \leq \frac{5L}{200}$ and $N(s,\pi(s)) \geq L$ hold, we must have $\hat{Q}_h(s, a \neq \pi(s))< \hat{Q}_h(s,\pi(s))$. 
Therefore, we deduce that 
\begin{align*}
 \mathbb{P}\left(\hat\pi(s)\neq\pi(s),\; N(s,\pi(s))\geq1\right)
 & \leq  (|\mathcal{A}|-1)\exp\left(-\frac{L}{200}\right)+\exp\left(-\frac{1}{4}  L\right)  
  \leq |\mathcal{A}|\exp\left(-\frac{L}{200}\right),
\end{align*}
which further implies
\begin{align*}
T_{2,2} & \leq H^2 \sum_{s\not \in\mathcal{O}_{1}}d_\pi(s) |\mathcal{A}|\exp\left(-\frac{L}{200}\right)
  \leq H^2 |\cA| \exp\left(-\frac{L}{200}\right)
  \lesssim H^2 |\cA| N^{-9}.
\end{align*}

\subsection{The case of $C^\pi \in [1,2)$}\label{app:episodic_C_moderate}
In this section, we present an attempt in obtaining tight bounds on the LCB algorithm for episodic MDPs in the regime $C^\pi \in [1,2)$. We start with a decomposition similar to the one given in \eqref{eq:EMDP_general_C_decomposition}.
\begin{align*}
    \E_\cD \left[\sum_s \rho(s) [V^\pi_1(s) - V^{\hat{\pi}}(s)] \right] & = \E_\cD \left[\sum_s \rho(s) [V^\pi_1(s) - V^{\hat{\pi}}(s)] \ind \{ \cE_{\text{EMDP}}\}\right] \eqqcolon T_1\\
    & \quad + \E_\cD \left[\sum_s \rho(s) [V^\pi_1(s) - V^{\hat{\pi}}(s)] \ind \{ \cE_{\text{EMDP}}^c\}\right] \eqqcolon T_2.
\end{align*}
An upper bound on the term $T_2$ is already proven in \eqref{eq:event-EMDP}. We follow a different route for bounding the term $T_1$. For any state $s \in \cS$, define 
\begin{align}
    \bar{\mu}(s) \coloneqq \sum_{a \neq \pi(s)} \mu(s,a)
\end{align}
to be the total mass on actions not equal to the target policy $\pi(s)$. Consider the following set:
\begin{align}\label{eq:C_moderate_set1}
    \cB & \coloneqq \{s \mid \mu(s,\pi(s)) \leq  9 \bar{\mu}(s)\}.
\end{align}
The set $\cB$ includes the states for which the expert action is drawn more frequently under the data distribution. We then decompose $T_1$ based on whether state $s$ belongs to $\cB$
\begin{align}
    T_2 & = \E_\cD \left[\sum_{s \in \cB} \rho(s) [V^\pi_1(s) - V^{\hat{\pi}}(s)] \ind \{ \cE_{\text{EMDP}}\} \right] \eqqcolon \beta_1\\ \label{eq:beta2_EMDP}
    & \quad + \E_\cD \left[\sum_{s \not \in \cB} \rho(s) [V^\pi_1(s) - V^{\hat{\pi}}(s)] \ind \{ \cE_{\text{EMDP}}\} \right] \eqqcolon \beta_2.
\end{align}
We prove the following bound on $\beta_1$:
\begin{align}\label{eq:EMDP_C_moderate_beta1}
    \beta_1 \leq 136 H^2 \sqrt{\frac{S(C^\pi-1)L}{N}}.
\end{align}
We \emph{conjecture} that $\beta_2$ is bounded similarly:
\begin{align}\label{eq:EMDP_C_moderate_beta2}
    \beta_2 \lesssim H^2 \sqrt{\frac{S(C^\pi-1)L}{N}}.
\end{align}
We demonstrate our conjecture on $\beta_2$ in a special episodic MDP case with $H=3, |\cS_h| = 2$, and $|\cA| =2$ in Appendix \ref{app:episodic_example}.

\paragraph{Proof of the bound \eqref{eq:EMDP_C_moderate_beta1} on $\beta_1$.}
By Lemma \ref{lemma:clean_event_episodic_MDP}, it follows that
\begin{align*}
    \beta_1 & = \E_\cD[\sum_{s \in \cB} \rho(s) [V^\pi(s) - V^{\hat \pi}(s)] \ind\{\cE_{\text{EMDP}} \}]\\
    & \leq 2 H \sum_{s \in \cB} d^\pi(s,\pi(s)) \E_\cD[b(s,\pi(s))]\\
    & \leq 2 H \sum_{s \in \cB} d^\pi(s,\pi(s)) H \E_\cD \left[ \sqrt{\frac{L}{N(s,\pi(s)) \vee 1}} \right]\\
    & \leq 32 H^2 \sum_{s \in \cB} d^\pi(s,\pi(s))  \left[ \sqrt{\frac{L}{N \mu(s,\pi(s))}} \right]
\end{align*}
In the first inequality, we substituted the definition of penalty and the second inequality arises from Lemma \ref{lemma:binomial_inverse_moment_bound} with $c_{1/2} \leq 16$. We then apply the concentrability assumption to bound $d^\pi(s,\pi(s)) \leq C^\pi \mu(s,\pi(s))$ and thereby conclude
\begin{align*}
    \beta_1 & \leq 32 H^2 \sum_{s \in \cB} C^\pi \mu(s,\pi(s)) \left[ \sqrt{\frac{L}{N \mu(s,\pi(s))}} \right]\\
    & = 32 C^\pi H^2 \sqrt{\frac{L}{N}} \sum_{s \in \cB} \sqrt{\mu(s,\pi(s))}\\
    & \leq 32 C^\pi H^2 \sqrt{\frac{L S}{N}} \sqrt{\sum_{s \in \cB} \mu(s,\pi(s))},
\end{align*}
where the last line is due to Cauchy-Schwarz inequality. We continue the bound relying on the definition of $\cB$
\begin{align}\label{eq:beta1_EMDP_term1}
    \beta_1 \leq 32 C^\pi H^2 \sqrt{\frac{L S}{N}} \sqrt{\sum_{s} \mu(s,\pi(s)) \ind\{ \mu(s,\pi(s)) \leq 9 \bar{\mu}(s)\} } 
    & \leq 32 C^\pi H^2 \sqrt{\frac{L S}{N}} \sqrt{\sum_{s} 9\bar{\mu}(s)}.
\end{align}
It is easy to check that the concentrability assumption implies the following bound on the total mass over the actions not equal to $\pi(s)$ 
\begin{align*}
    \sum_{s} \bar{\mu} \leq \frac{C^\pi - 1}{C^\pi}.
\end{align*}
Substituting the above bound to \eqref{eq:beta1_EMDP_term1} and bounding $C^\pi \leq 2$ yields
\begin{align*}
    \beta_1 & \leq 136 H^2 \sqrt{\frac{S(C^\pi-1)L}{N}}.
\end{align*}

\subsection{Analysis of LCB for a simple episodic MDP}\label{app:episodic_example}
We consider an episodic MDP with $H = 3$, $\cS_1 = \{1,2\}$, $\cS_2 = \{3,4\}$, $\cS_3 = \{5,6\}$, and $\cA = \{1,2\}$, where we assume without loss of generality that action 1 is optimal in all states. We are interested in bounding the $\beta_2$ term defined in \eqref{eq:beta2_EMDP} when $C^\pi \in [1,2)$:
\begin{equation}
    \beta_2 = \E_\cD \left[\sum_{s: \mu(s,\pi^\star(s)) \geq 9 \bar{\mu}(s)} \rho(s) [V^\pi_1(s) - V^{\hat{\pi}}_{1}(s)] \ind \{ \cE_{\text{EMDP}}\} \right].
\end{equation}
Note that $\beta_2$ captures sub-optimality in states for which $\mu(s,\pi(s)) > 9 \bar{\mu}(s)$. To illustrate the key ideas and avoid clutter, we consider the following setting:
\begin{enumerate}
    \item Competing with the optimal policy $\pi(s) = \pi^\star(s) = 1$ and thus the concentrability assumption $d^\star(s,a) \leq C^\star \mu(s,a)$ for all $s \in \cS, a \in \cA$;
    \item $\mu(s,1) \geq 9 \mu(s,2)$ for all $s \in \cS$;
    \item $N(s,a) = N \mu(s,a) \geq 1$ for all $s \in \cS, a \in \cA$.
    \item We assume that the rewards are deterministic and consider an implementation of Algorithm \ref{alg:episodic-VI-LCB} with deterministic rewards. In particular, at level $H$ this implementation of VI-LCB sets $\hat{Q}_H$ according to
    \begin{align*}
        \hat{Q}_H(s,a) = \begin{cases}
        0 & \quad N(s,a) = 0;\\
        r(s,a) & \quad N(s,a) \geq 1.
        \end{cases}
    \end{align*}
\end{enumerate}

\paragraph{Outline of the proof.} Let us first give an outline for the sub-optimality analysis of the episodic VI-LCB Algorithm \ref{alg:episodic-VI-LCB} in this example. We begin by showing that the concentrability assumption in conjunction with $\mu(s,1) \geq 9 \mu(s,2)$ dictates certain bounds on the penalties. Afterward, we argue that the episodic VI-LCB algorithm finds the optimal policy at levels 2 and 3 with high probability. This result allows writing the sub-optimality as an expectation over the product of the gap $g_1(s) = Q^\star_1(s,1) - Q^\star_1(s,2)$ and the probability that the agent chooses the wrong action, i.e., $\prob(\hat{\pi}(s) \neq 1)$. Consequently, if for state $s$ the gap $g_1(s)$ is small, the sub-optimality incurred by that state is also small. On the other hand, when the gap is large, we prove via Hoeffding's inequality that $\prob(\hat{\pi}(s) \neq 1)$ is negligible.

\paragraph{Bounds on penalties.} The setting introduced above dictates the following bounds on penalties
\begin{subequations}
\begin{align}\label{eq:penalty_difference}
    b_h(s,2) - b_h(s,1) & \geq \frac{1}{3} b_h(s,2) + b_h(s,1),\\\label{eq:bounds_on_b_s_1}
    3\sqrt{\frac{LC^\star}{N (\bar{d}(s,1)+C^\star - 1)}} & \leq b_h(s,1) \leq 3\sqrt{\frac{L C^\star }{N\bar{d}^\star(s,1)}},
\end{align}
\end{subequations}
whose proofs can be found at the end of this subsection.

\paragraph{VI-LCB policy in each level.} The main idea for a tight sub-optimality bound is to directly compare $\hat{Q}_h(s,1)$ to $\hat{Q}_h(s,2)$ at every level. Specifically, we first determine the conditions under which $\E [\hat{Q}_h(s,1) - \hat{Q}_h(s,2)]> 0$ and then show $\hat{Q}_h(s,1) > \hat{Q}_h(s,2)$ with high probability via a concentration argument. It turns out that these conditions depend on the value of the sub-optimality gap associated with a state defined as 
\begin{align}\label{def:gap}
    g_h(s) \coloneqq Q^\star_h(s,1) - Q^\star_h(s,2) \geq 0 \quad \forall s \in \cS, \forall h \in \{1, 2, 3\}.
\end{align}
We start the analysis at level 3 going backwards to level 1.

\begin{itemize}[leftmargin=*]

    \item \textbf{Level 3.} Since $N(s,a) \geq 1$ and the rewards are deterministic, the value function computed by VI-LCB algorithm is equal to $V^\star_3$ and action 1 is selected for both states 5 and 6, i.e.,
    \begin{align}
        \hat{V}_3 = V^\star_3.
    \end{align}
    \item \textbf{Level 2.} We first show that $\hat{Q}_2(s,1)$ is greater than $\hat{Q}_2(s,2)$ in expectation 
    \begin{align}\notag 
        \E[\hat{Q}_2(s,1) - \hat{Q}_2(s,2)]
        = & \E[r(s,1) - b_2(s,1) + \hat{P}_{s,1} \cdot V_3^\star - r(s,2) + b_2(s,2) - \hat{P}_{s,2} \cdot V_3^\star]\\ \notag 
        = & b_2(s,2) - b_2(s,1) + g_2(s) \\ \label{eq:level_2_expectation_bound}
        \geq & \frac{1}{3}b_2(s,2) + b_2(s,1) + g_2(s) \geq \frac{1}{3}b_2(s,2) \geq 0,
    \end{align}
    where we used the bound on $b_2(s,2) - b_2(s,1)$ given in \eqref{eq:penalty_difference}. By the concentration inequality in Lemma \ref{lemma:hoeffding_on_difference_empirical_average} we then show  $\hat{Q}_2(s,1) \geq \hat{Q}_2(s,2)$ with high probability:
    \begin{align}\notag 
        \prob( \hat{Q}_2(s,2) - \hat{Q}_2(s,1) \geq 0) & \leq \exp \left( -6 \frac{N(s,1)N(s,2) \E^2[\hat{Q}_2(s,1) - \hat{Q}_2(s,2)]}{N(s,1) + N(s,2)}\right) \\ \notag 
        & \leq \exp \left( -1.8 N(s,2) \left( \frac{1}{3}\right)^2 b^2_2(s,2)\right)\\ \label{eq:level_2_high_prob}
        & = \exp \left( -0.8 N(s,2)  \frac{L}{N(s,2)}\right) \lesssim \frac{1}{N^{160}},
    \end{align}
    where in the second inequality we used $N(s,2) \leq 1/9 N(s,1)$ as well as the bound given in~\eqref{eq:level_2_expectation_bound} and the last inequality holds for $c_1 \geq 1$ and $\delta = 1/N$. 
    \item \textbf{Level 1.} Define the following event
    \begin{align}\label{eq:event_Eo}
        \cE_o = \{\hat\pi(s) = 1, \; \forall s \in \cS_2\},
    \end{align}
    which refers to the event that action 1 is chosen for all states at level 2. Conditioned on $\cE_o$, the Q-function computed by VI-LCB in level 1 is given by
    \begin{align*}
        \begin{split}
            \hat{Q}_1(s,a)
        = r(s,a) - b_1(s,a) & + \hat{P}(3\mid s,a)[r(3,1) -b_2(3,1)+ \hat{P}_{3,1} V^\star_3]\\
        & + \hat{P}(4\mid s,a)[r(4,1) -b_2(4,1)+ \hat{P}_{4,1} V^\star_3].
        \end{split}
        \qquad \forall s \in \cS_1, a \in \cA.
    \end{align*}
    Taking the expectation with respect to the data randomness, one has for any $s \in \cB$ that
    \begin{align*}
        & \E[\hat{Q}_1(s,1) - \hat{Q}_1(s,2)]\\
        & \quad = [b_1(s,2) - b_1(s,1)] + [P(3|s,2) - P(3|s,1)]b_2(3,1) + [P(4|s,2) - P(4|s,1)]b_2(4,1) + g_1(s)\\
        & \quad = [b_1(s,2) - b_1(s,1)] + [P(3|s,1) - P(3|s,2)][b_2(4,1) - b_2(3,1)] + g_1(s),
    \end{align*}
    where the last equation uses $P(3\mid s,a) = 1-P(4\mid s,a)$. We continue the analysis assuming that $p \coloneqq P(3\mid s,1) - P(3\mid s,2) \geq 0$; the other case can be shown similarly. Using $p \geq 0$ and $b_2(4,1) \geq 0$ together with the penalty bound of \eqref{eq:penalty_difference}, we see that
    \begin{align*}
        \E[\hat{Q}_1(s,1) - \hat{Q}_1(s,2)] \geq \frac{1}{3}b_1(s,2)+ b_1(s,1) - p b_2(3,1) + g_1(s).
    \end{align*}
    We proceed by applying \eqref{eq:bounds_on_b_s_1} on $b_1(s,1)$ and $b_1(3,1)$
    \begin{align}\label{eq:episodic_step1}
    \begin{split}
        & \E[\hat{Q}_1(s,1) - \hat{Q}_1(s,2)] \geq \frac{1}{3}b_1(s,2) + 3 \sqrt{\frac{LC^\star}{N(d^\star(s,1)+ C^\star - 1)}} - 3p  \sqrt{\frac{LC^\star}{Nd^\star(3,1)}} + g_1(s).
    \end{split}
    \end{align}
    Note that $d^\star(s,1) = \rho(s)/3$ and $3 d^\star(3,1) = \rho(s) P(3|s,1) + \rho(2) P(3|s,2) \geq \rho(s) P(3|s,1) \geq \rho(s)p$. Substituting these quantities into \eqref{eq:episodic_step1}, we obtain 
    \begin{align*}
        \E[\hat{Q}_1(s,1) - \hat{Q}_1(s,2)] & \geq \frac{1}{3}b_1(s,2) + 3 \sqrt{\frac{LC^\star}{N(\rho(s)/3 + C^\star - 1)}} - 3p \sqrt{\frac{LC^\star}{N\rho(s)p/3}} + g_1(s)\\
        & \geq \frac{1}{3}b_1(s,2) + 3 \sqrt{\frac{LC^\star}{N(\rho(s)/3 + C^\star - 1)}} - 3 \sqrt{\frac{LC^\star}{N\rho(s)/3}} + g_1(s),
    \end{align*}
    where the last inequality uses $p \leq 1$. Observe that 
    \begin{align*}
         \frac{1}{\sqrt{\rho(s)/3}} - \frac{1}{\sqrt{\rho(s)/3 + C^\star - 1}} & = \frac{\sqrt{\rho/3 + C^\star - 1}- \sqrt{\rho/3}}{\sqrt{\rho(s)/3(\rho(s)/3 + C^\star-1)}}
          \leq 3\frac{\sqrt{C^\star-1}}{\rho(s)}.
    \end{align*}
    This implies
    \begin{align}\label{eq:gap_bound}
        \rho(s) g_1(s) \geq 9 \sqrt{\frac{2(C^\star-1)L}{N}} \quad \Rightarrow \quad \E[\hat{Q}_1(s,1) - \hat{Q}_1(s,2)]
        & \geq \frac{1}{3} b_1(s,2).
    \end{align}
    Then, a similar argument to \eqref{eq:level_2_high_prob} proves that $\hat{Q}(s,1) > \hat{Q}(s,2)$ with high probability:
    \begin{align}\label{eq:level_3_high_prob} 
        \prob( \hat{Q}_1(s,2) - \hat{Q}_1(s,1) \geq 0) & \lesssim \frac{1}{N^{160}}.
    \end{align}
\end{itemize}

\paragraph{Sub-optimality bound.} We are now ready to compute the sub-optimality. Decompose the sub-optimality based on whether event $\cE_o$ defined in \eqref{eq:event_Eo} has occurred and use the fact that we assumed $\mu(s,1) \geq 9 \mu(s,2)$ for all $s \in \cS$
\begin{align*}
    \beta_2 & = \E_\cD \left[\sum_{s: \mu(s,\pi^\star(s)) \geq 9 \bar{\mu}(s)} \rho(s) [V^\pi_1(s) - V^{\hat{\pi}}_{1}(s)] \ind \{ \cE_{\text{EMDP}}\} \right]\\
    & \quad \leq \E_{\cD, \rho} \left[ [V^\star(s) - V^{\hat{\pi}}(s)] \ind \{ \cE_o\}\right] + \E_{\cD, \rho} \left[ [V^\star(s) - V^{\hat{\pi}}(s)] \ind \{ \cE^c_o\}\right]\\
    & \quad \lesssim \E_{\cD, \rho} \left[ [V^\star(s) - V^{\hat{\pi}}(s)] \ind \{ \cE_o\}\right] + \frac{3}{N^{160}}.
\end{align*}
Here, the second line is by $\ind \{ \cE_{\text{EMDP}}\} \leq 1$ and the last line follows from $V^\star(s) - V^{\hat \pi}(s) \leq 3$ and the probability of the complement event $\cE^c_o$ given in \eqref{eq:level_2_high_prob}.

Conditioned on the event $\cE_o$, LCB-VI algorithm chooses the optimal action from every state at levels 2 and 3 and hence $V^{\hat{\pi}}_2 = V^\star_2$ and we get
\begin{align*}
    & \E_{\cD, \rho} \left[ [V^\star(s) - V^{\hat{\pi}}(s)] \ind \{ \cE_o\}\right] \\
    & \quad = \sum_{s} \rho(s) \E_\cD[[Q^\star(s,1) - Q^{\hat{\pi}}(s, \hat{\pi}(s))] \ind \{\cE_o\}]\\
    & \quad = \sum_{s} \rho(s) \E_\cD[ r(s,1) + P_{s,1}\cdot V_2^\star - r(s,\hat{\pi}(s)) - P_{s,\hat{\pi}(s)}\cdot V_2^\star ]\\
    & \quad = \sum_{s} \rho(s) \E_\cD[ \left(r(s,1) + P_{s,1}\cdot V_2^\star - r(s,2) - P_{s,2}\cdot V_2^\star\right) \ind \{\hat{\pi}(s) \neq 1\}].
\end{align*}
By definition, we have $g_1(s) = r(s,1) + P_{s,1}\cdot V_2^\star - r(s,2) - P_{s,2}\cdot V_2^\star$. Therefore, 
\begin{align*}
    \mathbb{E}_{\cD, \rho} \left[ V^\star(s) - V^{\hat{\pi}}(s) \ind \{ \cE_o\}\right] & \leq \sum_{s} \rho(s) g(s) \E_\cD[\ind \{\hat{\pi}(s) \neq 1\}]\\
    & = \sum_{s} \rho(s) g(s) \prob(\hat{Q}(s,2) - \hat{Q}(s,1) \geq 0).
\end{align*}
We decompose the sub-optimality based on whether $ \rho(s)g_1(s)$ is large
\begin{align*}
    \E_\cD[J(\pi^\star) - J(\hat{\pi})] & \leq \sum_{s} \rho(s) g(s) \prob(\hat{Q}(s,2) - \hat{Q}(s,1) \geq 0) \ind \left\{  \rho(s)g(s) \leq 9 \sqrt{\frac{2(C^\star-1)L}{N}} \right\} \eqqcolon \tau_1\\
    & \quad + \sum_{s} \rho(s) g_1(s) \prob(\hat{Q}(s,2) - \hat{Q}(s,1) \geq 0) \ind \left\{  \rho(s)g_1(s) > 9 \sqrt{\frac{2(C^\star-1)L}{N}} \right\} \eqqcolon \tau_2\\
    & \quad + \frac{3}{N^{160}}.
\end{align*}
The first term is bounded by
\begin{align*}
    \tau_1 \leq \sum_{s} 9 \sqrt{\frac{2(C^\star-1)L}{N}} = 18 \sqrt{\frac{2(C^\star-1)L}{N}}.
\end{align*}
The second term is bounded using \eqref{eq:level_3_high_prob}
\begin{align*}
    \tau_2 \lesssim \frac{3}{N^{160}}.
\end{align*}
Combining the bounds yields the following sub-optimality bound
\begin{align*}
    \beta_2 \lesssim   \sqrt{\frac{(C^\star-1)L}{N}} + \frac{1}{N^{160}}.
\end{align*}

\paragraph{Proof of inequality \eqref{eq:penalty_difference}.} From $\mu(s,1) \geq 9 \mu(s,2)$, one has $N(s,1) \geq 9 N(s,2)$ implying $b_h(s,2) \geq 3 b_h(s,1)$. Therefore, we conclude that 
\begin{align*}
    b_h(s,2) - b_h(s,1) & = \frac{1}{2}(b_h(s,2) - b_h(s,1)) + \frac{1}{2}(b_h(s,2) - b_h(s,1))
    \geq \frac{1}{3} b_h(s,2) + b_h(s,1).
\end{align*}

\paragraph{Proof of inequality \eqref{eq:bounds_on_b_s_1}.} The concentrability assumption implies the following bound on $\mu(s,1)$
\begin{align*}
    \frac{\bar{d}(s,1)}{C^\star} \leq \mu(s,1) \leq  \frac{\bar{d}(s,1)}{C^\star} + 1 - \frac{1}{C^\star},
\end{align*}
The upper bound is based on the fact that the probability mass of at least $1/C^\star$ is distributed on the optimal actions with a remaining mass of $1 - 1/C^\star$. Applying the above bounds to $b_h(s,1)$, gives
\begin{align*}
    3\sqrt{\frac{LC^\star}{N (\bar{d}(s,1)+C^\star - 1)}} \leq b_h(s,1) = 3\sqrt{\frac{L}{N\mu(s,1)}} \leq 3\sqrt{\frac{L C^\star }{N\bar{d}^\star(s,1)}}.
\end{align*}

\section{Auxiliary lemmas}
This section collects a few auxiliary lemmas that are useful in the analysis of LCB. 

We begin with a simple extension of the conventional Hoeffding bound to the two-sample case.
\begin{lemma} \label{lemma:hoeffding_on_difference_empirical_average}{Let $X_1,\dots, X_n$ be i.i.d.~in range $[0,1]$ with average $\E [X]$ and $Y_1, \dots, Y_m$ be i.i.d.~in range $[0,1]$ with average $\E[ Y]$. Further assume that $\{X_i\}$ and $\{Y_j\}$ are independent. Then for any $\epsilon$ such that $\epsilon + \mathbb{E}[Y] - \mathbb{E}[X] \geq 0$, we have
\begin{align*} 
    \prob \left(\frac{1}{n}\sum_i X_i - \frac{1}{m} \sum_j Y_j > \epsilon \right) \leq \exp \left( - 2 \frac{(mn) (\epsilon + \E [Y ]- \E [X])^2 }{m+n}\right).
\end{align*}}
\end{lemma}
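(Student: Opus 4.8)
The plan is to reduce this two-sample statement to the standard one-sided Hoeffding inequality for a sum of $n+m$ independent, bounded, zero-mean random variables. First I would abbreviate $\bar X \coloneqq \frac{1}{n}\sum_i X_i$ and $\bar Y \coloneqq \frac{1}{m}\sum_j Y_j$, and center the event: since $\E[\bar X - \bar Y] = \E[X] - \E[Y]$, the quantity
\[
t \coloneqq \epsilon + \E[Y] - \E[X]
\]
is exactly the deviation of $\bar X - \bar Y$ above its own mean, and the hypothesis $\epsilon + \E[Y] - \E[X] \geq 0$ guarantees $t \geq 0$ so that a one-sided tail bound is the right tool. Thus the target probability equals $\prob\bigl(\bar X - \bar Y - \E[\bar X - \bar Y] \geq t\bigr)$.

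Next I would write the centered quantity as a single sum of independent terms,
\[
\bar X - \bar Y - \E[\bar X - \bar Y] = \sum_{i=1}^n \frac{1}{n}\bigl(X_i - \E[X]\bigr) + \sum_{j=1}^m \Bigl(-\frac{1}{m}\Bigr)\bigl(Y_j - \E[Y]\bigr),
\]
which is legitimate because $\{X_i\}$ and $\{Y_j\}$ are mutually independent. Each summand is a bounded zero-mean variable: since $X_i \in [0,1]$, the term $\frac{1}{n}(X_i - \E[X])$ ranges over an interval of length $1/n$, and similarly each $-\frac{1}{m}(Y_j - \E[Y])$ ranges over an interval of length $1/m$. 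The sum of squared range-lengths is therefore
\[
n \cdot \frac{1}{n^2} + m \cdot \frac{1}{m^2} = \frac{1}{n} + \frac{1}{m} = \frac{m+n}{mn}.
\]

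Finally I would invoke the classical Hoeffding inequality $\prob(W - \E[W] \geq t) \leq \exp\bigl(-2t^2 / \sum_k (b_k - a_k)^2\bigr)$ with the pooled collection of $n+m$ variables, which gives
\[
\prob\bigl(\bar X - \bar Y - \E[\bar X - \bar Y] \geq t\bigr) \leq \exp\left(-\frac{2t^2}{(m+n)/(mn)}\right) = \exp\left(-\frac{2mn\,t^2}{m+n}\right).
\]
Substituting $t = \epsilon + \E[Y] - \E[X]$ yields precisely the claimed bound. There is no genuine obstacle here; this is a routine argument. The only points requiring mild care are verifying the nonnegativity condition is what makes the one-sided bound applicable (so that we do not accidentally bound a probability by something larger than $1$ in the trivial regime), and correctly accounting for the range lengths $1/n$ and $1/m$ of the two families when assembling the Hoeffding denominator.
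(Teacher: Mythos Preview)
Your proof is correct and follows essentially the same route as the paper's: both arguments center $\bar X-\bar Y$, view it as a sum of $n+m$ independent bounded variables, and apply the one-sided Hoeffding inequality. The only cosmetic difference is that the paper first rescales by $mn$ (so the individual ranges become $m$ and $n$, giving $\sum(b_k-a_k)^2=nm^2+mn^2=nm(m+n)$), whereas you keep the normalized form with ranges $1/n$ and $1/m$; the resulting exponents coincide.
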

\begin{proof}
It is easily seen that 
\begin{align*}
    & \prob \left(\sum_{i=1}^n m X_i - \sum_{j=1}^m n Y_j > mn \epsilon \right) \\
    = & \prob \left(\sum_{i=1}^n (m X_i - m\E [X]) - \sum_{j=1}^m (n Y_j - \E [Y]) > mn (\epsilon + \E [Y] - \E [X]) \right) \\
    \leq & \exp \left( - 2 \frac{(mn)^2 (\epsilon + \E [Y] - \E [X])^2 }{nm(m+n)}\right)\\
    = & \exp \left( - 2 \frac{(mn) (\epsilon + \E [Y] - \E [X])^2 }{m+n}\right),
\end{align*}
 where the inequality is based on Hoeffding's inequality on independent random variables.
\end{proof}

The next lemma provides useful bounds for the inverse moments of a binomial random variable. 
\begin{lemma}[Bound on binomial inverse moments] \label{lemma:binomial_inverse_moment_bound}  
Let $n \sim \Binomial(N, p)$. For any $k \geq 0$, there exists a constant $c_k$ depending only
on $k$ such that 
\begin{align*}
    \E \Big[ \frac{1}{(n \vee 1)^k} \Big] \leq \frac{c_k}{(Np)^k},
\end{align*}
where $c_k = 1 + k2^{k+1} + k^{k+1} + k \Big(\frac{16(k+1)}{e}\Big)^{k+1}$.
\end{lemma}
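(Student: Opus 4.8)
The plan is to bound the expectation by splitting the sample space according to whether $n$ is at least half of its mean $\mu \coloneqq Np$. I would write
\[
\E\Big[\tfrac{1}{(n\vee1)^k}\Big] = \E\Big[\tfrac{1}{(n\vee1)^k}\mathbf{1}\{n>\mu/2\}\Big] + \E\Big[\tfrac{1}{(n\vee1)^k}\mathbf{1}\{n\le \mu/2\}\Big].
\]
On the first (bulk) event one has $n\vee 1 > \mu/2$ deterministically, so $1/(n\vee1)^k \le (2/\mu)^k$ and the first term is at most $2^k/\mu^k$, already of the desired order. Everything then reduces to controlling the lower-tail term, where $n$ is atypically small compared to its mean.

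For the tail term I would use the crude pointwise bound $1/(n\vee1)^k\le 1$ (valid since $n\vee1\ge1$ and $k\ge0$), which collapses the second term to the tail probability $\prob(n\le\mu/2)$. The multiplicative Chernoff bound for the binomial lower tail gives $\prob(n\le\mu/2)\le e^{-\mu/8}$. The final maneuver is to trade this exponential decay for the polynomial factor $\mu^{-k}$ that we are aiming for: since $\max_{x\ge0}x^k e^{-x/8}=(8k/e)^k$, we get $e^{-\mu/8}\le (8k/e)^k\mu^{-k}$. Combining the two pieces yields
\[
\E\Big[\tfrac{1}{(n\vee1)^k}\Big]\le \frac{2^k+(8k/e)^k}{\mu^k},
\]
which is exactly of the claimed form $c_k/(Np)^k$. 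I would emphasize that this argument needs no case split on the size of $Np$: both estimates hold for every $\mu>0$, and the truncation at $1$ is handled automatically by the $n\vee1$ inside the tail bound (in particular the atom at $n=0$ is absorbed into $\prob(n\le\mu/2)$).

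I expect the only subtlety — rather than a genuine obstacle — to be matching the specific constant $c_k=1+k2^{k+1}+k^{k+1}+k(16(k+1)/e)^{k+1}$ stated in the lemma, since the clean two-region argument above already delivers the strictly smaller constant $2^k+(8k/e)^k$. To reproduce the stated shape one can instead start from the layer-cake identity $\E[(n\vee1)^{-k}]=k\int_1^\infty \prob(n<u)\,u^{-k-1}\,\mathrm{d}u$, split the integral at $u=\mu/2$, and apply a per-$u$ Chernoff estimate $\prob(n<u)\le \exp(-(\mu-u)^2/(2\mu))$ on the tail; integrating this against $u^{-k-1}$ produces $\Gamma$-type factors of order $k^{k+1}$ and $(16(k+1)/e)^{k+1}$, which accounts for the exact form of $c_k$. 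Either route is elementary and rests on the same exponential-versus-polynomial tradeoff, so I would present the two-region split as the main proof and simply remark that the stated (non-tight) constant follows from the looser integral bookkeeping.
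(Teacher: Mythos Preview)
Your two-region argument is correct and complete: the split at $n>\mu/2$ versus $n\le\mu/2$, the pointwise bound $(2/\mu)^k$ on the bulk, the Chernoff lower-tail estimate $e^{-\mu/8}$, and the exponential-to-polynomial conversion $e^{-\mu/8}\le(8k/e)^k\mu^{-k}$ all go through exactly as you describe, and you are right that no separate treatment of small $Np$ is needed. Your constant $2^k+(8k/e)^k$ is genuinely smaller than the one printed in the lemma.

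The paper, however, does \emph{not} prove it this way. It first disposes of $p\le 1/N$ trivially, and for $p>1/N$ it introduces an auxiliary function $g_N$ that agrees with $x\mapsto x^{-k}$ for $x\ge 1/N$ and is linear below, then writes a bias--variance style decomposition
\[
\E\Big[\tfrac{N^k}{(n\vee1)^k}\Big]\le \Big|\E\Big[\tfrac{N^k}{(n\vee1)^k}-g_N(\hat p)\Big]\Big|+\big|\E[g_N(p)-g_N(\hat p)]\big|+g_N(p),
\]
and controls the middle term via the mean value theorem (bounding $|g_N'|$) together with the binomial variance and a Chernoff tail, producing the four summands in the stated $c_k$. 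This route is adapted from an estimator-bias argument (Jiao et al.), which explains its shape; your direct split is more elementary, shorter, and yields a tighter constant, so from a purely analytic standpoint it is the better proof of this particular inequality. Your speculative layer-cake explanation for the form of the paper's $c_k$ is not what they actually do, but that is a harmless aside.
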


\begin{proof}
The proof is adapted from that of Lemma 21 in \citet{jiao2018minimax}. 

To begin with, when $p \leq 1/N$, the statement is clearly true for $c_k = 1$. 
Hence we focus on the case when $p > 1/N$. 
We define a useful helper function $g_N(p)$ to be
\begin{align*}
    g_N(p) \coloneqq \begin{cases}
    \frac{1}{p^k}, \qquad & p \geq \frac{1}{N},\\
    N^k - k N^{k+1} (p - \frac{1}{N}), \qquad & 0 \leq p <  \frac{1}{N}.
    \end{cases}
\end{align*}
Further denote $\hat{p} \coloneqq n/N$. The proof relies heavily on the following decomposition, which is an direct application of the triangle inequality:
\begin{align}
    \E \Big[ \frac{N^k}{(n \vee 1)^k}\Big] & \leq \Big|\E \Big[\frac{N^k}{(n \vee 1)^k} - g_N(\hat{p}) \Big] \Big| + |\E[g_N(p) - g_N(\hat{p})] | + g_N(p). \label{eq:inverse-moment-decomposition}
\end{align}
This motivates us to take a closer look at the helper function $g_{N}(p)$. Simple algebra reveals that 
\begin{align*}
    g_N(p) \leq \frac{1}{p^k} \quad \text{and} \quad g_N(\hat{p}) - \frac{N^k}{(n \vee 1)^k} = k N^{k} \ind \{\hat{p} = 0\}.
\end{align*}
Substitute these two facts back into the decomposition~\eqref{eq:inverse-moment-decomposition} to reach
\begin{align*}
    \E \Big[ \frac{N^k}{(n \vee 1)^k}\Big] & \leq k N^k (1-p)^N + \frac{1}{p^k} + |\E[g_N(p) - g_N(\hat{p})]|.
\end{align*}
It remains to bound the term $|\E[(g_N(p) - g_N(\hat{p}))^2]|$. To this goal, one has
\begin{align*}
    |\E[(g_N(p) - g_N(\hat{p}))^2]| & \leq |\E[(g_N(p) - g_N(\hat{p}))^2 1\{ \hat{p} \geq p/2 \}]| +  |\E[(g_N(p) - g_N(\hat{p}))^2 1\{ \hat{p} \geq p/2 \}]| \\
    & \stackrel{(i)}{\leq} \sup_{\xi \geq p/2} |g_N'(\xi)|^2 \E[(p - \hat{p})^2] + \sup_{\xi > 0} |g_N'(\xi)|^2p^2 \prob(\hat{p} \leq p/2)\\
    & \stackrel{(ii)}{\leq} \frac{k^2}{(p/2)^{2k+2}} \frac{p(1-p)}{N} + k^2 N^{2k+2} p^2 e^{-Np/8}.
\end{align*}
Here the inequality (i) follows from the mean value theorem, and the last one (ii) uses the derivative calculation as well as the tail bound for binomial random variables; see e.g., Exercise 4.7 in \citet{mitzenmacher2017probability}. 
As a result, we conclude that 
\begin{alignat*}{2}
    \E \Big[ \frac{N^k}{(n \vee 1)^k}\Big] & \leq k N^k (1-p)^N + \frac{1}{p^k} + \sqrt{\E[(g_N(p) - g_N(\hat{p}))^2]}\\
    & \leq k N^k (1-p)^N + \frac{1}{p^k} +\frac{k}{(p/2)^{k+1}} \sqrt{\frac{p(1-p)}{N}} + k N^{k+1} p e^{-Np/16}\\
    & \leq k N^k (1-p)^N + \frac{1}{p^k} +\frac{k 2^{k+1}}{p^k} + k N^{k+1} p e^{-Np/16},
\end{alignat*}
where the last inequality holds since $p \geq 1 / N$. 
Consequently, we have 
\begin{alignat*}{2}
    \E \Big[ \frac{(Np)^k}{(n \vee 1)^k}\Big] & \leq  1 + k2^{k+1} + k (Np)^k (1-p)^N + k (Np)^{k+1} e^{-Np/16}.
\end{alignat*}
Note that the following two bounds hold: 
\begin{align*}
    \max_p k (Np)^k (1-p)^N &\leq k \Big(N \frac{k}{N+k}\Big)^k \Big(1-\frac{k}{k+N}\Big)^N
    \leq k^{k+1}, \\
    (Np)^k e^{-Np/16} &\leq \Big(\frac{16k}{e}\Big)^k.
\end{align*}
The proof is now completed. 
\end{proof}

The last lemma, due to Gilbert and Varshamov~\cite{gilbert1952comparison, varshamov1957estimate}, is useful for constructing hard instances 
in various minimax lower bounds. 
\begin{lemma}\label{lem:V_G}
There exists
a subset $\mathcal{V}$ of $\{-1,1\}^{S}$ such that (1)
$|\mathcal{V}|\geq\exp(S/8)$ and (2) for any $v_{i},v_{j}\in\mathcal{V}$,
$v_{i}\neq v_{j}$, one has $\|v_{i}-v_{j}\|_{1}\geq\frac{S}{2}$. 
\end{lemma}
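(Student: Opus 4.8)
The plan is to translate the $\ell_1$-separation requirement into a Hamming-distance requirement and then invoke a greedy (Gilbert-type) covering argument. First I would observe that for any $v_i, v_j \in \{-1,1\}^S$ differing in exactly $d$ coordinates, each disagreeing coordinate contributes $|1-(-1)| = 2$ to the $\ell_1$ distance, so $\|v_i - v_j\|_1 = 2\, d_{\mathrm{H}}(v_i,v_j)$, where $d_{\mathrm{H}}$ denotes the Hamming distance. Consequently, condition (2) is equivalent to requiring that every pair of distinct codewords in $\mathcal{V}$ satisfy $d_{\mathrm{H}}(v_i,v_j) \geq S/4$. It therefore suffices to exhibit a binary code of minimum Hamming distance $S/4$ with at least $\exp(S/8)$ codewords.

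To build such a code, I would proceed greedily: initialize $\mathcal{V} = \emptyset$, and repeatedly add to $\mathcal{V}$ any point of $\{-1,1\}^S$ whose Hamming distance to every already-selected point is at least $S/4$, stopping once no such point remains. Upon termination, every point of the cube lies within Hamming distance strictly less than $S/4$ of some chosen codeword; equivalently, the Hamming balls of radius $r \coloneqq \lceil S/4\rceil - 1$ centered at the points of $\mathcal{V}$ cover $\{-1,1\}^S$. Writing $|B(r)| = \sum_{k=0}^{r}\binom{S}{k}$ for the cardinality of such a ball, this covering property immediately yields $|\mathcal{V}| \cdot |B(r)| \geq 2^S$, that is, $|\mathcal{V}| \geq 2^S / |B(r)|$.

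The remaining step is a volume bound on the Hamming ball. Since $r < S/4 \leq S/2$, the standard entropy estimate $\sum_{k=0}^{\alpha S}\binom{S}{k} \leq 2^{S H(\alpha)}$ (valid for $\alpha \leq 1/2$, with $H(\alpha) = -\alpha \log_2 \alpha - (1-\alpha)\log_2(1-\alpha)$ the binary entropy in bits) gives $|B(r)| \leq 2^{S H(1/4)}$. Combining with the previous bound,
\[
|\mathcal{V}| \geq 2^{(1 - H(1/4)) S}.
\]
A direct computation gives $H(1/4) = 2 - \tfrac{3}{4}\log_2 3 \approx 0.811$, so that $(1 - H(1/4))\,\ln 2 \approx 0.131 > 1/8$; hence $|\mathcal{V}| \geq \exp\!\big((1-H(1/4))\,(\ln 2)\, S\big) \geq \exp(S/8)$, which establishes (1) and completes the proof.

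This argument is elementary and I do not anticipate a genuine obstacle; the only point requiring care is the numerical verification that $(1 - H(1/4))\ln 2$ exceeds the target exponent $1/8$, which the greedy bound clears with a small margin. I note that an alternative probabilistic (expurgation) proof would also succeed: sample $M$ codewords uniformly, bound $\Pr[d_{\mathrm{H}} < S/4] \leq \exp(-S/8)$ via Hoeffding's inequality applied to a $\mathrm{Binomial}(S,1/2)$ variable, and delete one endpoint of each too-close pair. However, the expurgation step loses a constant factor and attains only $\tfrac{1}{2}\exp(S/8)$ unless the constants are tuned, whereas the greedy covering bound reaches the stated constant directly.
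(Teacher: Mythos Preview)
Your proof is correct: the reduction from $\ell_1$-separation to Hamming distance, the greedy covering argument yielding $|\mathcal{V}| \geq 2^S/|B(r)|$, the entropy bound on the ball volume, and the numerical check $(1-H(1/4))\ln 2 \approx 0.131 > 1/8$ are all sound. The paper itself does not supply a proof of this lemma; it simply attributes the statement to Gilbert and Varshamov and cites the original references, so your argument in fact fills in what the paper leaves to the literature. What you have written is precisely the classical Gilbert bound (the greedy/covering formulation), so there is nothing to compare methodologically---you are supplying the standard proof that the paper chose to omit.
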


\end{document}